\newtheorem{theorem}{Theorem}[section]
\newtheorem{corollary}[theorem]{Corollary}
\newtheorem{lemma}[theorem]{Lemma}
\newtheorem{definition}[theorem]{Definition}
\newtheorem{remark}[theorem]{Remark}
\newtheorem{assumption}{Assumption}
\DeclareMathOperator*{\E}{\mathbb{E}}
\DeclareMathOperator*{\poly}{poly}
\newcommand{\R}{\mathbb{R}}
\newcommand{\Var}{\operatorname{Var}}
\newcommand{\eps}{\varepsilon}
\newcommand{\norm}[1]{\left\lVert #1\right\rVert}
\newcommand{\nnorm}[1]{\lVert #1\rVert}
\newcommand{\abs}[1]{\left|#1\right| }
\newcommand{\wh}{\widehat}
\newcommand{\wt}{\widetilde}
\newcommand{\ol}{\overline}
\newcommand{\grad}{\nabla}
\newcommand{\KL}{\mathrm{KL}}
\newcommand{\tuple}[1]{\left(#1\right)}
\newcommand{\set}[1]{\left\{#1\right\}}
\renewcommand{\d}{\operatorname{d}}
\newcommand{\zo}{\{0, 1\}}
\newcommand{\Prb}[2][]{ \ifthenelse{\isempty{#1}}
  {\Pr\left[#2\right]}
  {\Pr_{#1}\left[#2\right]} }
\newcommand{\Ex}[2][]{ \ifthenelse{\isempty{#1}}
  {\E\left[#2\right]}
  {\E_{#1}\left[#2\right]} }
\newcommand{\Vari}[2][]{ \ifthenelse{\isempty{#1}}
  {\mathbf{Var}\left[#2\right]}
  {\mathop{\mathbf{Var}}_{#1}\left[#2\right]} }
\newcommand{\nPrb}[2][]{ \ifthenelse{\isempty{#1}}
  {\Pr[#2]}
  {\Pr_{#1}[#2]} }
\newcommand{\nEx}[2][]{ \ifthenelse{\isempty{#1}}
  {\E[#2]}
  {\E_{#1}[#2]} }
\newcommand{\TV}{\mathrm{TV}}
\title{Posterior Sampling by Combining Diffusion Models with Annealed Langevin Dynamics}
\author{%
  Zhiyang Xun \\
  UT Austin \\
  \texttt{zxun@cs.utexas.edu}
    \And
  Shivam Gupta \\
  UT Austin\thanks{Now at Google DeepMind.} \\
  \texttt{shivamgupta@utexas.edu}
  \And
  Eric Price \\
  UT Austin \&  Microsoft Research\\
  \texttt{ecprice@cs.utexas.edu}
}
\begin{document}

\maketitle

\begin{abstract}
    Given a noisy linear measurement $y = Ax + \xi$ of a distribution $p(x)$, and a good approximation to the prior $p(x)$, when can we sample from the posterior $p(x \mid y)$? Posterior sampling provides an accurate and fair framework for tasks such as inpainting, deblurring, and MRI reconstruction, and several heuristics attempt to approximate it. Unfortunately, approximate posterior sampling is computationally intractable in general.
    
    To sidestep this hardness, we focus on (local or global) log-concave distributions $p(x)$. In this regime, Langevin dynamics yields posterior samples when the exact scores of $p(x)$ are available, but it is brittle to score--estimation error, requiring an MGF bound (sub‑exponential error). By contrast, in the unconditional setting, diffusion models succeed with only an $L^2$ bound on the score error. We prove that combining diffusion models with an \emph{annealed} variant of Langevin dynamics achieves conditional sampling in polynomial time using merely an $L^4$ bound on the score error.
\end{abstract}

\section{Introduction}

Diffusion models are currently the leading approach to generative
modeling of images. Diffusion models are based on learning the
``smoothed scores'' $s_{\sigma^2}(x)$ of the
modeled distribution $p(x)$.  Such scores can be approximated from
samples of $p(x)$ by optimizing the score matching objective~\cite{hyvarinen2005estimation}; and
given good $L^2$-approximations to the scores, $p(x)$ can be
efficiently sampled using an SDE~\cite{song2019ncsn, ho2020denoising, song2021sde} or an ODE~\cite{song2020denoising}.

Much of the promise of generative modeling lies in the prospect of
applying the modeled $p(x)$ as a \emph{prior}: combining it with some
other information $y$ to perform a search over the manifold of plausible
images.  Many applications, including MRI reconstruction, deblurring, and
inpainting, can be formulated as \emph{linear measurements}
\begin{align}
  y = Ax + \xi \qquad \text{for}\qquad \xi \sim \mathcal{N}(0, \eta^2 I_m)  
  \label{eq:measurement}
\end{align}
for some (known) matrix $A \in \mathbb{R}^{m \times d}$.  \emph{Posterior
  sampling}, or sampling from $p(x \mid y)$, is a natural and useful
goal.  When aiming to reconstruct $x$ accurately, it is 2-competitive
with the optimal in any metric~\cite{jalal2021robust} and satisfies fairness guarantees
with respect to protected classes~\cite{jalal2021fairness}.

Researchers have developed a number of heuristics to approximate
posterior sampling using the smoothed scores, including DPS~\cite{chung2023diffusion},
particle filtering methods~\cite{wu2023practical,dou2024diffusion}, DiffPIR~\cite{zhu2023denoising}, and second-order approximations~\cite{rout2024beyond}. Unfortunately, unlike for unconditional
sampling, these methods do not converge efficiently and robustly to the
posterior distribution.  In fact, a lower bound shows that \emph{no}
algorithm exists for efficient and robust posterior sampling in
general~\cite{pmlr-v235-gupta24a}.  But the lower bound uses an adversarial, bizarre
distribution $p(x)$ based on one-way functions; actual image manifolds
are likely much better behaved.  Can we find an algorithm for provably
efficient, robust posterior sampling for relatively nice distributions
$p$?  That is the goal of this paper: we describe conditions on $p$ under which efficient, robust posterior sampling is possible.

A close relative to diffusion model sampling is \emph{Langevin dynamics}, which is a different
method for sampling that uses an SDE involving the \emph{unsmoothed} score $s_0$.
Unlike diffusion, Langevin dynamics is in general \emph{slow} and
\emph{not robust} to errors in approximating the score.  To be
efficient, Langevin dynamics needs stronger conditions, like that
$p(x)$ is log-concave and that the score estimation error satisfies an
MGF bound (meaning that large errors are exponentially unlikely).

However, Langevin dynamics adapts very well to posterior sampling: it
works for posterior sampling under exactly the same conditions as it
does for unconditional sampling.  The difference from diffusion models
is that the \emph{unsmoothed} conditional score $s_0(x \mid y)$ can be computed
from the unconditional score $s_0(x)$ and the explicit measurement model $p(y \mid x)$, while the
\emph{smoothed} conditional score (which diffusion needs) cannot be easily computed.

So the current state is: diffusion models are efficient and robust for unconditional sampling, but essentially always inaccurate or inefficient for posterior sampling.  No algorithm for posterior sampling is efficient and robust in general.  Langevin dynamics is efficient for log-concave distributions, but still not robust.
Can we make a robust algorithm for this case?

\begin{quote}
\emph{Can we do posterior sampling with \emph{log-concave} $p(x)$ and $L^p$-accurate scores?}
\end{quote}

\subsection{Our Results}

Our first result answers this in the affirmative.  \cref{alg:annealing_estimated_score} uses a diffusion
model for initialization, followed by an \emph{annealed} version of Langevin
dynamics, to do posterior sampling for log-concave $p(x)$ with just
$L^4$-accurate scores.  Annealing is necessary here; see \cref{sec:plain_langevin} for why standard Langevin
dynamics would not suffice in this setting.

\begin{assumption}[$L^4$ score accuracy]
    \label{assumption:m4}
    The score estimates $ \widehat{s}_{\sigma^2}(x) $ of the smoothed distributions $ p_{\sigma^2}(x) = p(x) * \mathcal{N}(0, \sigma^2 I_d)$ have finite $ L^4 $ error, i.e.,
    \[
    \E_{p_{\sigma^2}(x)}[\|\widehat{s}_{\sigma^2}(x) - s_{\sigma^2}(x)\|^4] \leq \varepsilon_{\text{score}}^4 < \infty.
    \]
\end{assumption}


\begin{restatable}[Posterior sampling with global log-concavity]{theorem}{globalMain}
    \label{thm:global_main}
        Let $p(x)$ be an $\alpha$-strongly log-concave distribution over $\R^d$ with $L$-Lipschitz score.
        For any $0 < \eps < 1$,
        there exist $K_1 = \poly(d, m, \frac{\|A\|}{\eta\sqrt{\alpha}}, \frac{1}{\eps})$ and  $K_2 = \poly(d, m, \frac{\|A\|}{\eta\sqrt{\alpha}}, \frac{1}{\eps}, \frac{L}{\alpha})$ such that: if $\eps_{\text{score}} \le \frac{\sqrt{\alpha}}{K_1}$, then there exists an algorithm that 
    takes $K_2$ iterations to sample from a distribution $\wh p(x \mid y)$ with   \[
     \Ex{\TV(\wh p (x \mid y), p(x \mid y)) } \le \eps. 
     \]
    \end{restatable}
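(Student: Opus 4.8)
Two structural facts drive the plan. First, $p(x\mid y)\propto p(x)\exp\!\big(-\tfrac{1}{2\eta^2}\|Ax-y\|^2\big)$ is the product of the $\alpha$-strongly log-concave prior with a log-concave Gaussian factor, so it is itself $\alpha$-strongly log-concave, with score that is $(L+\|A\|^2/\eta^2)$-Lipschitz. Second, the \emph{unsmoothed} conditional score splits exactly, $s_0(x\mid y)=s_0(x)-\tfrac{1}{\eta^2}A^\top(Ax-y)$, so an estimate of a prior score immediately yields an estimate of the corresponding conditional score with the \emph{same} additive error; the catch is that \cref{assumption:m4} only supplies estimates $\widehat s_{\sigma^2}$ of the \emph{smoothed} priors $p_{\sigma^2}$, while plain Langevin run from far off stationarity amplifies a biased drift along its own (possibly atypical) trajectory --- which is exactly why it needs an MGF bound (\cref{sec:plain_langevin}). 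The idea is to anneal along the path $\pi_k(x)\propto p_{\sigma_k^2}(x)\exp\!\big(-\tfrac{1}{2\eta^2}\|Ax-y\|^2\big)$ for a geometric schedule $\sigma_0>\cdots>\sigma_N$: each $\pi_k$ is strongly log-concave (since $p_{\sigma^2}$ is a convolution of an $\alpha$-strongly log-concave density with a Gaussian, hence still strongly log-concave, and the likelihood only sharpens it), and its score $s_{\sigma_k^2}(x)-\tfrac{1}{\eta^2}A^\top(Ax-y)$ \emph{is} available from the diffusion model's output.

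First I would fix the schedule and the two ends. Choose $\sigma_0=\poly$ large enough that $p_{\sigma_0^2}$ agrees with the Gaussian $\mathcal N(\mu_p,\sigma_0^2 I+\Sigma_p)$ up to a $(1\pm\eps)$ pointwise factor --- a statement purely about smoothing $p$ versus smoothing its Gaussian approximation, and so harmless --- so that $\pi_0$ is within an $O(\eps)$ density ratio of an explicit Gaussian (equivalently, obtain a warm start for $\pi_0$ by running the diffusion model, which is robust under only $L^2$ error, to sample $p_{\sigma_0^2}$); choose $\sigma_N$ polynomially small so that $p_{\sigma_N^2}$ is within $\eps$ of $p$ in TV and hence $\pi_N$ is $\eps$-close to $p(x\mid y)$. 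With a geometric ratio $\sigma_k^2/\sigma_{k+1}^2=1+1/\poly$ over $N=\poly$ steps, consecutive targets satisfy $\chi^2(\pi_{k+1}\,\|\,\pi_k)=\poly$, so a sample from $\pi_k$ is a polynomially warm start for $\pi_{k+1}$. At level $k$ I run $\poly$ steps of discretized Langevin for $\pi_k$ from the previous level's output; because $\pi_k$ has condition number $\poly(d,m,\|A\|/(\eta\sqrt\alpha),L/\alpha)$ throughout the schedule, from a polynomially warm start this mixes in $\poly$ steps up to $\eps/N$ TV error, plus the standard discretization error for a well-conditioned log-concave target.

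The crux is the score-error accounting. Running level $k$'s Langevin with the biased drift rather than the exact one, from the same warm start, a Girsanov / Pinsker estimate bounds the extra TV error by roughly $\big(\int_0^{T_k}\E_{\rho_t}\big[\|\widehat s_{\sigma_k^2}-s_{\sigma_k^2}\|^2\big]\,dt\big)^{1/2}$, where $\rho_t$ is the law of the \emph{ideal} Langevin diffusion toward $\pi_k$; since that diffusion contracts, $\rho_t$ inherits a polynomial $L^2$ density ratio to $\pi_k$, and by H\"older one converts this trajectory-$L^2$ bound into the $L^4$ score bound against a nearby measure --- it is precisely this single H\"older step that costs the exponent $4$ instead of $2$ and obviates an MGF bound. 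Adding the two endpoint errors and the $O(\sigma_N\cdot\poly)$ bias from using $s_{\sigma_N^2}$ in place of $s_0$ at the last level, and summing over the $\poly$ levels, gives total expected TV error $\le\eps$ whenever $\eps_{\text{score}}\le\sqrt\alpha/K_1$, with $K_1,K_2$ of the stated polynomial form.

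The main obstacle is to make this quantitative without an exponential loss. The delicate point is transferring \cref{assumption:m4}, which is stated against the \emph{unconditional} smoothed law $p_{\sigma^2}$, to the posterior-reweighted laws $\pi_k$ (or $\rho_t$) that the algorithm actually visits: a crude $L^\infty$ bound on $d\pi_k/dp_{\sigma_k^2}$ is an inverse normalizing constant $1/Z_k=1/\E_{p_{\sigma_k^2}}[e^{-\|Ax-y\|^2/2\eta^2}]$ that can be exponentially small, so the argument must exploit the annealed structure --- for instance, interposing a polynomial-length tempering of the likelihood at the high-noise end so that every consecutive pair of targets is simultaneously polynomially close and the change of measure needed to invoke \cref{assumption:m4} at each step is polynomially bounded --- together with the averaging over $y$ in the statement to discard atypical measurements. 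A secondary bookkeeping obstacle is that the strong-log-concavity constant of $p_{\sigma^2}$ degrades to $\alpha/(1+\alpha\sigma^2)$ at large $\sigma$, which must be kept under control by bounding $\sigma_0$ polynomially and, where needed, absorbing the loss into $K_2$; everything else --- log-concavity of the path, warm-start bootstrapping, and discretization --- is routine once the constants are in place.
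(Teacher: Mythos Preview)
Your annealing path is different from the paper's, and the obstacle you correctly flag---transferring \cref{assumption:m4} from $p_{\sigma^2}$ to the posterior-tilted laws $\pi_k$---is precisely what your path cannot sidestep but the paper's path does. The paper does not anneal over the smoothing level of the prior. It keeps the prior fixed at $p$ throughout and instead anneals the \emph{measurement noise}: it builds a decreasing schedule $\eta_1>\cdots>\eta_N=\eta$ together with coupled auxiliary observations $y_i=y_{i+1}+\mathcal N(0,(\eta_i^2-\eta_{i+1}^2)I_m)$, so that each $y_i\sim Ax+\mathcal N(0,\eta_i^2 I_m)$; the intermediate targets are $p(x\mid y_i)$, and every Langevin step uses only the \emph{unsmoothed} estimate $\widehat s_0$ (smoothed scores appear only to produce the initial $X_1\sim p$ via the diffusion SDE). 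The payoff is that, by construction, $\E_{y_i}[\,p(x\mid y_i)\,]=p(x)$ exactly for every $i$, so $\E_{y_i}\E_{x\sim p(x\mid y_i)}\big[\|\widehat s_0(x)-s_0(x)\|^4\big]=\E_{x\sim p}\big[\|\widehat s_0(x)-s_0(x)\|^4\big]\le\eps_{\text{score}}^4$ with no change of measure at all. The short within-step excursion from $p(\cdot\mid y_i)$ toward $p(\cdot\mid y_{i+1})$ is handled by a Girsanov comparison against the \emph{stationary} Langevin for $p(\cdot\mid y_i)$, whose Radon--Nikodym second moment is kept $O(1)$ by the admissible schedule; Cauchy--Schwarz against that second moment is the single H\"older step that converts the $L^4$ assumption into the needed trajectory-$L^2$ bound.

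On your path $\pi_k\propto p_{\sigma_k^2}(x)\,e^{-\|Ax-y\|^2/2\eta^2}$, averaging over $y$ does \emph{not} return $p_{\sigma_k^2}$: the observed $y$ comes from the true marginal $\int p(x)\,p(y\mid x)\,dx$, not from $\int p_{\sigma_k^2}(x)\,p(y\mid x)\,dx$, so you are left needing a bound on $p(y)/Z_k(y)$ (equivalently $\chi^2(\pi_k\,\|\,p_{\sigma_k^2})$) uniformly along the schedule, and you do not supply one. Your suggested patch---``interposing a polynomial-length tempering of the likelihood at the high-noise end''---is exactly the paper's annealing over $\eta$; once you adopt it, there is no remaining reason to anneal over $\sigma$ as well, and the argument collapses to the paper's.
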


For precise bounds on the polynomials, see \cref{thm:global_main_extended}.  To understand the parameters, $\frac{\|A\|}{\eta \sqrt{\alpha}}$ should be viewed as the signal-to-noise ratio of the measurement.

\paragraph{Local log-concavity.}  Global log-concavity, as required by
Theorem~\ref{thm:global_main}, is simple to state but a fairly strong condition.  In fact, Algorithm~\ref{alg:annealing_estimated_score} only needs a \emph{local} log-concavity
condition.


As motivation, consider MRI reconstruction.  Given the MRI measurement
$y$ of $x$, we would like to get as accurate an estimate $\wh{x}$ of
$x$ as possible.  We expect the image distribution $p(x)$ to concentrate around a low-dimensional manifold.  We also know that
existing compressed sensing methods (e.g., the
LASSO~\cite{tibshirani1996regression,candes2006stable}) can give a fairly accurate
reconstruction $x_0$; not \emph{as} accurate as we are hoping to
achieve with the full power of our diffusion model for $p(x)$, but
still pretty good.  Then \emph{conditioned on $x_0$}, we know
basically where $x$ lies on the manifold; if the manifold is well
behaved, we only really need to do posterior sampling on a single
branch of the manifold.  The posterior distribution on this branch can
be log-concave even when the overall $p(x)$ is not.

In the theorem below, we suppose we are given a Gaussian measurement $x_0 = x + \mathcal{N}(0, \sigma^2 I_d)$ for some $\sigma$, and that the distribution $p$ is nearly log-concave in a ball polynomially larger than $\sigma$.   We can then converge to $p(x \mid x_0, y)$.

\begin{restatable}[Posterior sampling with local log-concavity]{theorem}{gaussianMain}
    \label{thm:gaussian_main}
    For any $\eps, \tau, R, L > 0$, suppose $p(x)$ is a distribution over $\R^d$ such that
    \[
            \Prb[x' \sim p]{\forall x \in B(x', R): -L I_d \preceq \grad^2\log p(x) \preceq (\tau^2 / R^2)I_d} \ge 1 - \eps.
        \]
        Then, there exist $K_1, K_2 = \poly(d, m, \frac{\|A\|\sigma}{\eta}, \frac{1}{\eps})$ and  $K_3 = \poly(d, m, \frac{\|A\|\sigma}{\eta}, \frac{1}{\eps}, {L}\sigma^2)$ such that: Given a Gaussian measurement $x_0 = x + \mathcal{N}(0, \sigma^2 I_d)$ of $x \sim p$ with $\sigma \le R / (K_1 + 2\tau)$. If $\eps_{\text{score}} \le \frac{1}{K_2\sigma}$, then there exists an algorithm that 
    takes $K_3$ iterations to sample from a distribution $\wh p(x \mid x_0, y)$ such that   \[
     \Ex[y, x_0]{\TV(\wh p (x \mid x_0, y), p(x \mid x_0, y)) } \lesssim \eps. 
     \]
\end{restatable}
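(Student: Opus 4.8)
The plan is to reduce the local log-concavity setting to the global one (Theorem~\ref{thm:global_main}) by conditioning on the additional Gaussian measurement $x_0$ and working in a ball $B(x_0, R')$ for an appropriate $R' = \Theta(R)$. First, I would observe that the posterior $p(x \mid x_0, y) \propto p(x)\, p(x_0 \mid x)\, p(y \mid x)$, and the factor $p(x_0 \mid x) \propto \exp(-\|x - x_0\|^2/(2\sigma^2))$ contributes $-\frac{1}{\sigma^2} I_d$ to the Hessian of the log-density. Hence on the event (call it $\mathcal{E}$) that $\nabla^2 \log p(x) \preceq (\tau^2/R^2) I_d$ throughout $B(x', R)$ around the true $x'$, and provided $\sigma$ is small enough that $\tau^2/R^2 - 1/\sigma^2 \le -\alpha'$ for some $\alpha' = \Theta(1/\sigma^2)$ (which is where the hypothesis $\sigma \le R/(K_1 + 2\tau)$ is used — it forces $1/\sigma^2 \gg \tau^2/R^2$), the conditional log-density is $\alpha'$-strongly log-concave \emph{on a ball of radius $\approx R$ around $x_0$}. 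The measurement $y = Ax + \xi$ only makes the log-density more concave (the $A^\top A/\eta^2$ term is PSD), so it does not hurt; it does, however, change the signal-to-noise parameter, and tracking how $A$ and $\eta$ enter is why the polynomial bounds in Theorem~\ref{thm:gaussian_main} involve $\|A\|\sigma/\eta$ rather than $\|A\|/(\eta\sqrt{\alpha})$: here $\sqrt{\alpha'} \approx 1/\sigma$.

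Next I would show that, with probability $\ge 1 - O(\eps)$ over $x_0$, essentially all of the posterior mass of $p(x \mid x_0, y)$ lies inside $B(x_0, R')$ with $R' \le R/2$, say. This follows because strong log-concavity with parameter $\Theta(1/\sigma^2)$ forces subgaussian concentration of $p(x \mid x_0, y)$ at scale $O(\sigma\sqrt{d})$ around its mode, which in turn lies within $O(\sigma\sqrt{d})$ of $x_0$; since $\sigma\sqrt{d} = O(R/K_1)$ by hypothesis (after absorbing $d$ into $K_1$), the tail outside $B(x_0, R')$ is $\le \eps$. On this ball the score $s_{\sigma^2}(x \mid x_0, y)$ — at every smoothing level the algorithm uses — agrees with the score of a genuinely $\alpha'$-strongly log-concave, $L'$-Lipschitz distribution ($L' = L + O(1/\sigma^2) + \|A\|^2/\eta^2$), so Theorem~\ref{thm:global_main} (or rather its proof, run on the truncated/restricted posterior) applies with $\alpha \leftarrow \alpha' = \Theta(1/\sigma^2)$, $L \leftarrow L'$, and the given $\eps_{\text{score}}$; note $\eps_{\text{score}} \le 1/(K_2\sigma) = \sqrt{\alpha'}/K_2'$ matches the required bound $\eps_{\text{score}} \le \sqrt{\alpha}/K_1$ from the global theorem. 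The iteration count $K_3 = \poly(d, m, \|A\|\sigma/\eta, 1/\eps, L\sigma^2)$ is exactly what the global bound gives after substituting these values, since $L/\alpha \mapsto L'/\alpha' = \Theta(L\sigma^2) + O(1) + O(\|A\|^2\sigma^2/\eta^2)$.

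Finally I would assemble the TV bound: condition on $\mathcal{E}$ and on the good event for $x_0$ (total failure probability $O(\eps)$), apply the restricted-domain version of Theorem~\ref{thm:global_main} to get TV error $\le \eps$ there, and bound the contribution of the bad events by $O(\eps)$ in the outer expectation over $(y, x_0)$. The main obstacle I anticipate is the restriction-to-a-ball step: Theorem~\ref{thm:global_main} as stated assumes \emph{global} $\alpha$-strong log-concavity and $L$-Lipschitz score, but here those properties hold only on $B(x', R)$ for the \emph{true} $x'$, which the algorithm does not know — so one must either (i) argue that the annealed Langevin trajectory, initialized by the diffusion model near the posterior mode, never leaves $B(x_0, R')$ with high probability (using a confinement/hitting-time argument for strongly log-concave targets, plus a union bound over the $\poly$ many iterations), or (ii) formally define a modified distribution $\tilde p$ that equals $p(x \mid x_0, y)$ on the ball and is smoothly extended to be globally strongly log-concave outside, show $\TV(\tilde p, p(\cdot \mid x_0, y))$ is small, and run the global theorem on $\tilde p$. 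Option (ii) is cleaner but requires checking that the smoothed scores of the extension are $L^4$-close to $\wh s_{\sigma^2}$ — which holds because the two distributions differ only on a region of mass $\le \eps$, at the cost of a mild adjustment to $K_2$. Handling the smoothing levels $\sigma^2$ used inside the algorithm (the local log-concavity must survive Gaussian smoothing by $\mathcal{N}(0,\sigma^2 I)$, which it does since convolution preserves the upper Hessian bound up to the ball shrinking slightly) is a routine but necessary check.
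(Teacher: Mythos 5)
Your high-level route is the same as the paper's: run the annealed sampler with $p_{x_0}(x)=p(x\mid x+\mathcal N(0,\sigma^2 I_d)=x_0)$ as the prior, observe that the Gaussian factor adds $-\sigma^{-2}I_d$ to the Hessian so the conditional is $\Theta(1/\sigma^2)$-strongly log-concave on the ball where the local assumption holds (\cref{lem:Gaussian_implies_well}), that the posterior concentrates within radius $\approx\sigma(\sqrt d+\sqrt{\log(1/\delta)})$ of $x_0$ (\cref{lem:posterior_concentration_around_measurement}), and that the conditional scores carry exactly the same $L^4$ error since the extra term is an explicitly computable Gaussian correction (\cref{lem:two_gaussian_result}). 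Where your proposal has a genuine gap is the step you yourself flag as the main obstacle: you offer confinement (your option (i)) and a globally log-concave extension (your option (ii)) as \emph{alternative} fixes, and you justify (ii) by claiming the extension's scores are $L^4$-close to $\wh s$ ``because the two distributions differ only on a region of mass $\le\eps$.'' That inference is invalid: an $L^p$ score-error bound does not transfer between TV-close distributions — this is precisely the failure mode the paper formalizes (\cref{lem:plain_langevin}) and the reason annealing is needed at all.

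What the paper actually does is combine both ingredients rather than choose between them: the extension $\wt p$ of \cref{lem:global_extension} agrees with the posterior \emph{exactly} on $B(\theta,R)$ (so scores coincide pointwise there and no transfer is needed), the Langevin trajectory is shown to stay inside the ball with high probability (\cref{lem:outside_R_lemma,lem:TV_P,lem:TV_P_wtP}), and the score error along the \emph{running} process is controlled by a second-moment bound on the Girsanov Radon--Nikodym derivative between consecutive annealing legs (\cref{lem:radon_nikodym_derivative,lem:main_error_lemma}), which is only available because the admissible schedule (\cref{def:admissible}) keeps each leg short. For the same reason there is no ``restricted version of \cref{thm:global_main}'' to invoke: in the paper the logical order is the reverse of your plan — the locally well-conditioned analysis (\cref{lem:main_lemma,lem:main_modified_lemma}) is proved first and \cref{thm:global_main} is obtained from it via \cref{lem:global_strong_implies_centered}. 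So your plan would need to reproduce essentially that entire analysis rather than cite the global theorem. Finally, your worry that local log-concavity must survive the algorithm's internal smoothing is moot: smoothed scores of $p_{x_0}$ are used only for the diffusion initialization, which requires no log-concavity, while the annealed Langevin legs use the unsmoothed conditional score.
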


\begin{figure}
    \centering
    \makebox[\textwidth][c]{%
        \begin{subfigure}[b]{0.5\textwidth}
            \centering
            \includegraphics[width=\textwidth]{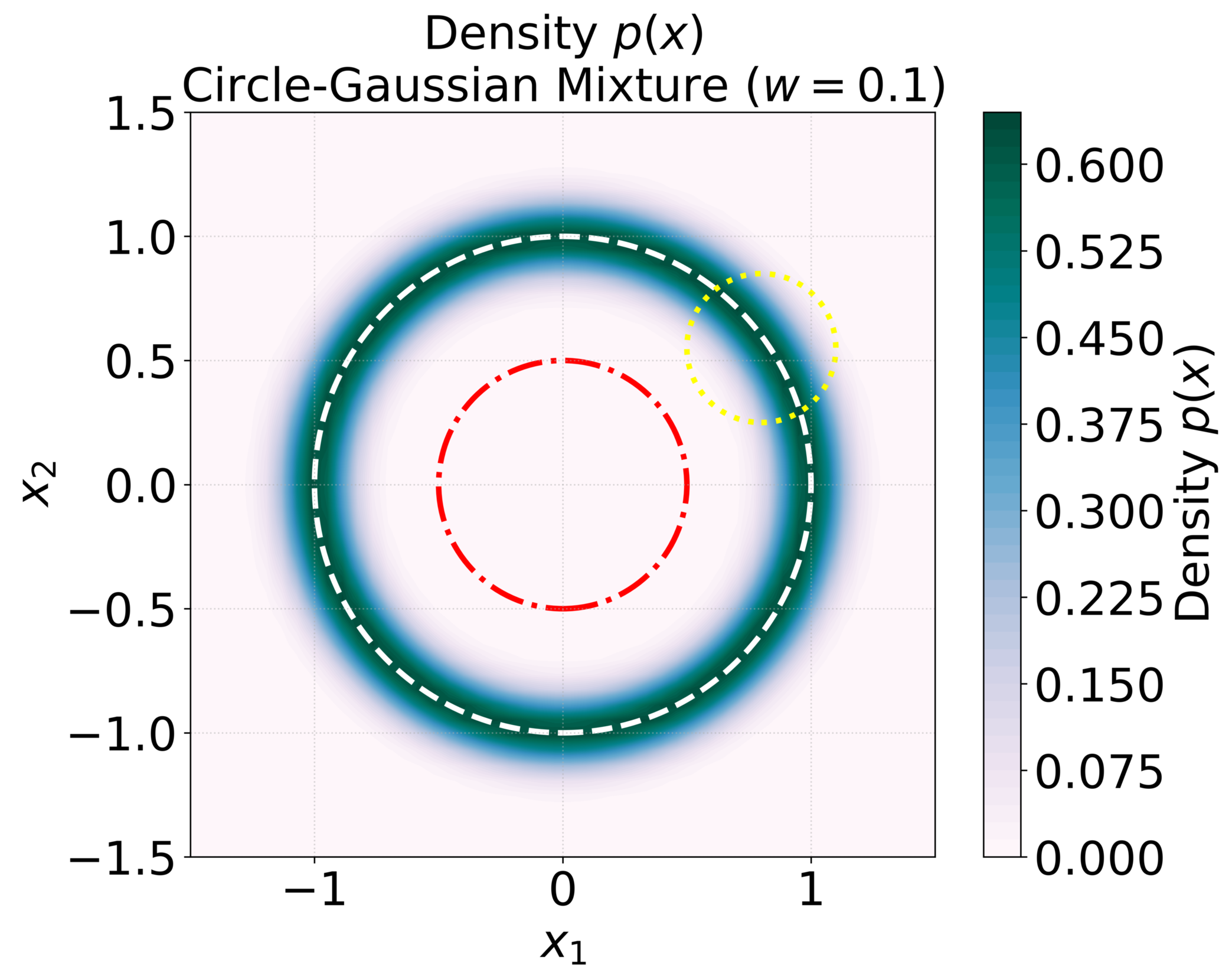}
            \caption{Density of $p$, the uniform distribution over the unit circle (white), convolved with $\mathcal{N}(0, w^2I_2)$.}
            \label{fig:sub1}
        \end{subfigure}
        \hspace{1em}
        \begin{subfigure}[b]{0.5\textwidth}
            \centering
            \includegraphics[width=\textwidth]{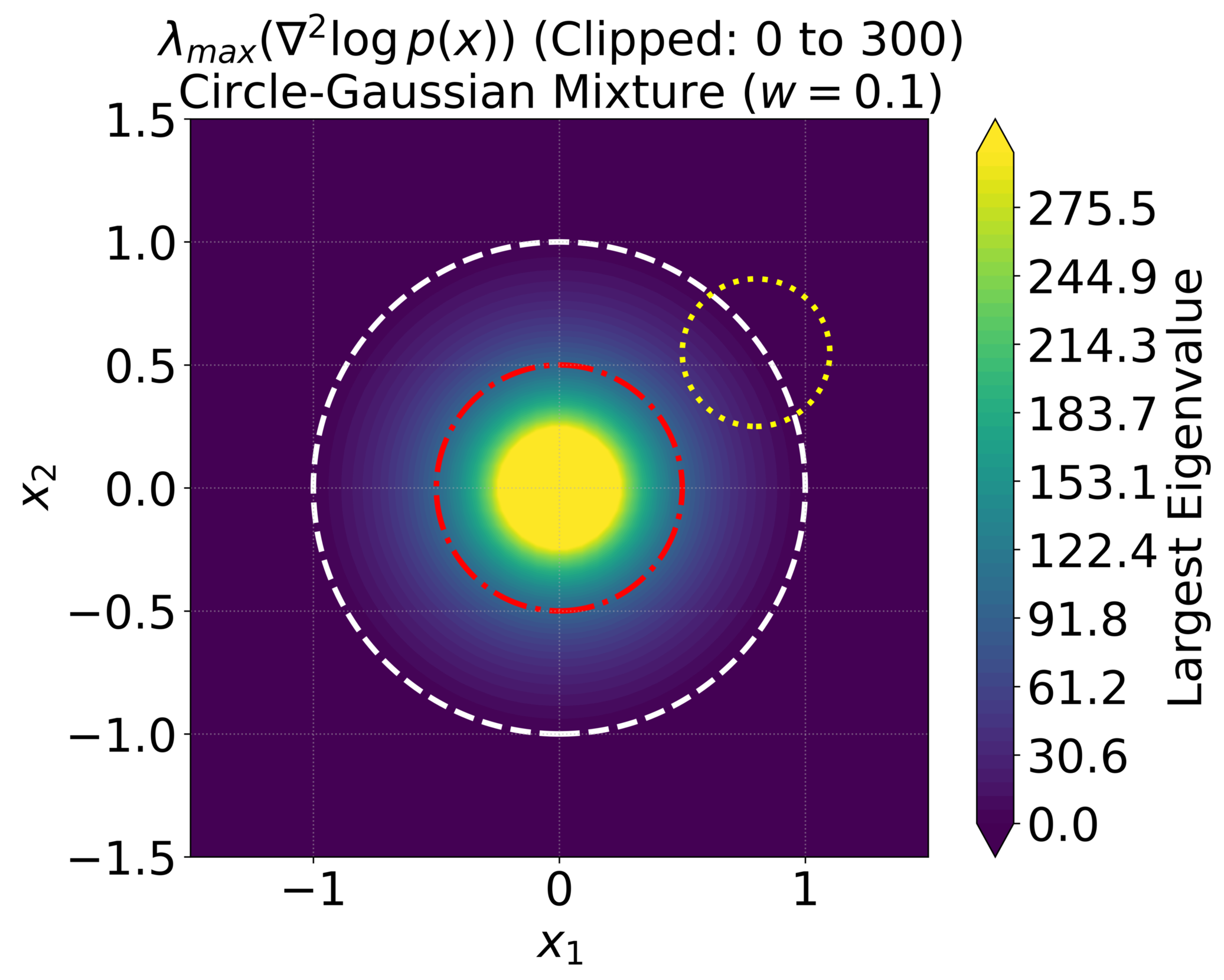}
            \caption{ $\lambda_{\max}(\nabla^2 \log p(x))$ reaches $\Omega(1 / w^4)$ near the center, demonstrating strong non-log-concavity.}
            
            \label{fig:sub2}
        \end{subfigure}
    }
        \caption{
        A ``locally nearly log-concave'' distribution suitable for \cref{thm:gaussian_main}: uniform on the unit circle plus $\mathcal{N}(0, w^2 I_2)$.  The Hessian's largest eigenvalue is much smaller near the bulk of the density than it is globally.  Specifically, for $\|A\| w / \eta = O(1)$, a Gaussian measurement $\tilde{x}$ with $\sigma \le cw$ and $\varepsilon_{\text{score}} \le cw^{-1}$ for small enough $c > 0$ enables sampling from $p(x \mid y, \tilde{x})$.
    }
      \label{fig:ring}
\end{figure}

If $p$ is globally log-concave, we can set $\sigma = \infty$ so $x_0$ is independent of $x$ and recover Theorem~\ref{thm:global_main}; but if we have local information then this just needs local log-concavity. For precise bounds and a detailed discussion of the algorithm, see~\cref{sec:Gaussian}.

The largest eigenvalue of $\nabla^2 \log p(x)$ quantifies the extent to which the distribution departs from log-concavity at a given point. 
In Figure~\ref{fig:ring}, we show an instance of a locally nearly log-concave distribution: $x$ is uniformly on the unit circle plus $\mathcal{N}(0, w^2 I_2)$.  This distribution is very far from globally log-concave, but it is nearly log-concave within a $w$-width band of the unit circle.
See \cref{sec:ring} for details.

\begin{table}[b]
\centering
\setlength{\tabcolsep}{5pt}
\renewcommand{\arraystretch}{1.18}
\footnotesize
\caption{Summary of theorems and corresponding algorithms.}
\vspace{1mm}
\label{tab:summary}

\renewcommand\tabularxcolumn[1]{m{#1}}

\begin{tabularx}{\linewidth}{
  |>{\centering\arraybackslash}m{.15\linewidth}|
   >{\centering\arraybackslash}X
  | >{\centering\arraybackslash}X
  | >{\centering\arraybackslash}m{.15\linewidth}|
}
\Xhline{1pt}
\makecell[c]{\textbf{Theorem}} &
\makecell[c]{\textbf{Setting}} &
\makecell[c]{\textbf{Method}} &
\makecell[c]{\textbf{Target}} \\
\Xhline{0.7pt}

\cref{thm:global_main} &
Global log-concavity &
\cref{alg:annealing_estimated_score} &
$ p(x \mid y) $ \\
\hline

\cref{thm:gaussian_main} &
Local log-concavity with a Gaussian measurement $x_0$ &
Run \cref{alg:annealing_estimated_score} using $p(x \mid x_0)$ as the prior (\cref{alg:gaussian_main}) &
$ p(x \mid x_0, y) $ \\
\hline

\cref{cor:compressed_sensing} &
Local log-concavity with an arbitrary noisy measurement $x_0$ &
Run \cref{alg:gaussian_main} but replace $x_0$ with $x'_0 = x_0 + \mathcal{N}(0,\sigma^2 I_d)$ (\cref{alg:compressed_sensing}) &
small
$ \|x - x_0\|$ 
\\
\Xhline{1pt}
\end{tabularx}

\vspace{2mm}
\small
\end{table}

\paragraph{Compressed Sensing.} In compressed sensing, one would like to estimate $x$ as accurately as possible from $y$.  There are many algorithms under many different structural assumptions on $x$, most notably the LASSO if $x$ is known to be approximately sparse~\cite{tibshirani1996regression,candes2006stable}.  The LASSO does not use much information about the structure of $p(x)$, and one can hope for significant improvements when $p(x)$ is known.  Posterior sampling is known to be near-optimal for compressed sensing: if any algorithm achieves $r$ error with probability $1-\delta$, then posterior sampling achieves at most $2r$ error with probability $1-2\delta$.  But, as we discuss above, posterior sampling cannot be efficiently computed in general.

We can use Theorem~\ref{thm:gaussian_main} to construct a competitive compressed sensing algorithm under a ``local'' log-concavity condition on $p$.  Suppose we have a naive compressed sensing algorithm (e.g., the LASSO) that recovers the true $x$ to within $R$ error; and $p$ is usually log-concave within an $R \cdot \poly$ ball; then if \emph{any exponential time} algorithm can get $r$ error from $y$, our algorithm gets $2r$ error in polynomial time.

\begin{restatable}[Competitive compressed sensing]{corollary}{compressedSensing}
    \label{cor:compressed_sensing}
     Consider attempting to accurately reconstruct $x$ from $y = Ax + \xi$.  
    Suppose that:
    \begin{itemize}
        \item Information theoretically (but possibly requiring exponential time or using exact knowledge of $p(x)$), it is possible to recover $\wh{x}$ from $y$ satisfying $\norm{\wh{x} - x} \leq r$ with probability $1-\delta$ over $x \sim p$ and $y$.
        \item We have access to a ``naive'' algorithm that recovers $x_0$ from $y$ satisfying $\norm{x_0 - x} \leq R$ with probability $1-\delta$ over $x \sim p$ and $y$.
        \item For $R' = R \cdot \poly(d,m,\frac{\norm{A}R}{\eta}, \frac{1}{\delta})$,
        \[
            \Prb[x' \sim p]{\forall x \in B(x', R'): -L I_d \preceq \grad^2\log p(x) \preceq 0} \ge 1 - \delta.
        \]
    \end{itemize}
    Then we give an algorithm that recovers $\wh{x}$ satisfying $\norm{\wh{x} - x} \leq 2 r$ with probability $1 - O(\delta)$, in $\poly(d,m,\frac{\norm{A}R}{\eta}, \frac{1}{\delta})$ time, under Assumption~\ref{assumption:m4} with $\eps_{\text{score}} < \frac{1}{\poly(d,m,\frac{\norm{A}R}{\eta}, \frac{1}{\delta}, LR^2) R}$.
\end{restatable}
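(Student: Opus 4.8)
The plan is to convert the arbitrary ``naive'' reconstruction $x_0$ into a genuine Gaussian measurement so that \cref{thm:gaussian_main} applies, and then invoke the classical fact that posterior sampling is $2$-competitive for reconstruction. Concretely, run the naive algorithm on $y$ to obtain $x_0$; on an event $G$ of probability at least $1-\delta$ we have $\|x_0-x\|\le R$. The difficulty is that $x_0$ is an essentially arbitrary function of $y$, not of the form $x+\mathcal{N}(0,\sigma^2 I_d)$ required by \cref{thm:gaussian_main}. So, following \cref{alg:compressed_sensing}, perturb it: set $x_0'=x_0+\mathcal{N}(0,\sigma^2 I_d)$ with $\sigma$ chosen large enough that $R/\sigma=O(\delta)$ (compatibility with \cref{thm:gaussian_main} is checked below), and write $\tilde x_0=x+\mathcal{N}(0,\sigma^2 I_d)$ for the ``idealized'' measurement obtained from the same Gaussian draw. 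Conditioned on $(x,y)$, the laws of $x_0'$ and $\tilde x_0$ are Gaussians with common covariance $\sigma^2 I_d$ and means at distance $\|x_0-x\|$, which is $\le R$ on $G$; hence their total variation distance is $O(R/\sigma)$ on $G$ and at most $1$ off $G$, so $\TV\bigl((x,y,x_0'),(x,y,\tilde x_0)\bigr)\le O(R/\sigma)+\delta=O(\delta)$. Since the algorithm is a randomized function of the pair $(x_0',y)$ — resp.\ $(\tilde x_0,y)$ — and the reconstruction-error event merely compares its output to $x$, the probability of error in the real experiment differs by at most $O(\delta)$ from that in the idealized experiment that feeds in $\tilde x_0$.

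Next, analyze that idealized experiment, where $\tilde x_0=x+\mathcal{N}(0,\sigma^2 I_d)$ is precisely the measurement model of \cref{thm:gaussian_main} with $\tau=0$, $\eps=\delta$, and that theorem's radius instantiated as $R'$. Its hypotheses hold: (i) the local log-concavity condition is the third bullet verbatim (with $\tau=0$ giving the upper bound $\preceq 0$); (ii) we need $\sigma\le R'/K_1$ with $K_1=\poly(d,m,\|A\|\sigma/\eta,1/\delta)$, but since $R/\sigma=O(\delta)$ we have $\|A\|\sigma/\eta=O(\|A\|R/(\eta\delta))$, so $K_1=\poly(d,m,\|A\|R/\eta,1/\delta)$ and the inequality holds once the polynomial in $R'=R\cdot\poly(d,m,\|A\|R/\eta,1/\delta)$ is taken large enough; (iii) the score requirement $\eps_{\text{score}}\le 1/(K_2\sigma)$ with $K_2=\poly(d,m,\|A\|\sigma/\eta,1/\delta)$ becomes a bound of the form $\eps_{\text{score}}<1/\bigl(\poly(d,m,\|A\|R/\eta,1/\delta,LR^2)\,R\bigr)$, which is exactly the corollary's assumption. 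The iteration count $K_3=\poly(d,m,\|A\|\sigma/\eta,1/\delta,L\sigma^2)$ is polynomial in the stated quantities, so the procedure runs in polynomial time. Thus \cref{thm:gaussian_main} produces a sample $\wh x$ from a distribution $\wh p(x\mid\tilde x_0,y)$ with $\E_{y,\tilde x_0}\bigl[\TV(\wh p(x\mid\tilde x_0,y),p(x\mid\tilde x_0,y))\bigr]\lesssim\delta$.

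Finally, run the $2$-competitiveness argument against the posterior $p(\cdot\mid\tilde x_0,y)$. Let $x^*(y)$ be the information-theoretic estimator of the first bullet, so $\Pr_{x,y}[\|x^*(y)-x\|>r]\le\delta$; ignoring $\tilde x_0$, it is also an estimator from $(\tilde x_0,y)$ with the same guarantee. Replace $\wh p$ by the exact posterior, at a cost of $O(\delta)$ by the previous step. Conditioned on $(\tilde x_0,y)$, the true $x$ and the sample $\wh x$ are i.i.d.\ from $p(\cdot\mid\tilde x_0,y)$, so the triangle inequality gives $\{\|\wh x-x\|>2r\}\subseteq\{\|\wh x-x^*(y)\|>r\}\cup\{\|x-x^*(y)\|>r\}$, whence $\Pr[\|\wh x-x\|>2r\mid\tilde x_0,y]\le 2\Pr_{z\sim p(\cdot\mid\tilde x_0,y)}[\|z-x^*(y)\|>r]$. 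Taking expectations over $(\tilde x_0,y)$ and using that $(\tilde x_0,y,z)$ with $z\sim p(\cdot\mid\tilde x_0,y)$ is distributed identically to $(\tilde x_0,y,x)$, the right-hand side equals $2\Pr_{x,y}[\|x^*(y)-x\|>r]\le 2\delta$. Adding the $O(\delta)$ losses from the approximate posterior, from the reduction to $\tilde x_0$, and from the event $G^c$, the output $\wh x$ satisfies $\|\wh x-x\|\le 2r$ with probability $1-O(\delta)$.

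The Gaussian total-variation estimate and the union bound of the last step are routine; the crux is the calibration of $\sigma$. It must be a large enough (growing with $1/\delta$) multiple of $R$ so that the injected noise statistically swamps the arbitrary error of $x_0$ — making $x_0'$ indistinguishable from a true Gaussian measurement — yet small enough relative to $R'$ that the ball on which $p$ is assumed log-concave remains polynomially wider than $\sigma$. This is exactly why the corollary must hypothesize log-concavity of $p$ on balls of the inflated radius $R'=R\cdot\poly(d,m,\|A\|R/\eta,1/\delta)$ rather than merely on balls of radius $R$.
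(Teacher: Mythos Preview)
Your proposal is correct and follows the paper's overall strategy: perturb the naive estimate by Gaussian noise of scale $\sigma\asymp R/\delta$, reduce to \cref{thm:gaussian_main}, and finish with the $2$-competitiveness of posterior sampling. Two organizational differences are worth noting. First, you couple the real input $(x_0',y)$ to the idealized input $(\tilde x_0,y)$ in total variation \emph{before} invoking \cref{thm:gaussian_main}, so that the theorem is applied once, verbatim, to a genuine Gaussian measurement; the paper instead re-runs the analysis behind \cref{thm:gaussian_main} with the shifted center $x_0'$ (its \cref{lem:compressed_sampling}) and defers the coupling to the end. Second, and more substantively, you run the $2$-competitiveness argument directly against the enriched observation $(\tilde x_0,y)$, observing that the information-theoretic estimator $x^*(y)$ from the first bullet is already an estimator from $(\tilde x_0,y)$; the paper instead invokes $2$-competitiveness against $y$ alone and then proves a separate lemma (\cref{lem:extra_gaussian_no_hurt}) showing that further conditioning on a Gaussian measurement of scale $\sigma\ge r/(2\delta)$ can only degrade the reconstruction guarantee by $O(\delta)$. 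Your route is shorter and avoids that auxiliary lemma; the paper's route isolates a reusable statement about when side information is harmless for posterior-sampling reconstruction.
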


\begin{figure}
  \centering
  \scalebox{0.7}{
  \begin{tikzpicture}[scale=0.9]
\draw[ultra thick,name path=middle] plot[smooth cycle] coordinates 
{(-4.5, -1.5) (-3.3,0) (-4, 0.4) (-4.5, 1.5) (-4, 1.7) (-3.5, 1.8) (-3, 0.7) (-2, -0.5) (-1.8, -1.5) (-1.7,-1) (-2, 0.5) (-2.2, .7) (-2, 1.5) (-1, 2.1) (0, 2.1) (1, 1.8) (2, 1.55) (3, 1.4) (4, .7) (4.3, 0) (4, -1.1) (3.2, -1.4) (3.2, -.8) (3.6, 0) (3, .9) (2.5, 1.05) (1, .7) (0.5, .7) (0, 1.3) (-1, 1) (-1.2, 0) (-1.1, -1) (-1.24, -2) (-2, -2.4) (-2.6, -2) (-2.3, -1) (-3, -.6) (-3.3, -.7) (-3.7, -1) (-4, -1.5)};

\draw [ultra thick, black!30!green] (-.2, -2.3) -- (2, 2.3);
\draw [ultra thick, black!30!green] (.5, -2.3) -- (2.7, 2.3);
\node[black!30!green] (y) at (2.3, 2.4) {$y$};
\node (x) at (-.5, 2.5) {$p(x)$};

\filldraw[color=black!20!purple] (1.5, 0.4) circle (.1) node[below left] {$x_0$};
\draw[dashed,thick,color=black!20!purple] (1.5, 0.4) circle (1.6) ;
\filldraw[color=white!30!blue] (.8, 1.3) circle (.1) node[left] {$x_1$};
\filldraw[color=black!10!red] (1.7, 1.3) circle (.1) node[left] {$\wh{x}$};
\end{tikzpicture}
}
  \caption{\cref{cor:compressed_sensing} sampling process.  Given the distribution $p(x)$
    and measurement $y$, we (1) start with a warm start estimate $x_0$, which may not lie on the effective manifold containing $p(x)$;
    (2) use the diffusion process to sample from $p(x)$ in a ball
    around $x_0$, getting $x_1$ on the manifold but not matching $y$;
    and finally (3) use annealed Langevin dynamics to converge to
    $p(x \mid y)$.  This works if $p(x)$ is \emph{locally} close to
    log-concave, even if it is \emph{globally} complicated.
    See~\cref{sec:compressed_sensing} for a more detailed discussion.
    }
  \label{fig:newalg}
\end{figure}
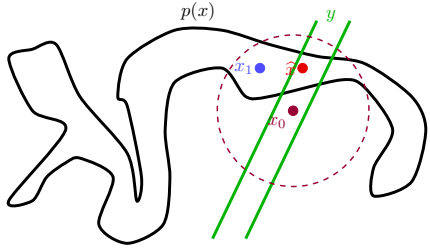

That is, we can go from a decent warm start to a near-optimal reconstruction, so long as the distribution is locally log-concave, with radius of locality depending on how accurate our warm start is.
To our knowledge this is the first known guarantee of this kind.  Per the lower bound~\cite{pmlr-v235-gupta24a}, such a guarantee would be impossible without any warm start or other assumption.

Figure~\ref{fig:newalg} illustrates the sampling process of~\cref{cor:compressed_sensing}.  The initial estimate $x_0$ may lie well outside the bulk of $p(x)$; with just an $L^4$ error bound, the unsmoothed score at $x_0$ could be extremely bad.  
We add a bit of spherical Gaussian noise to $x_0$, then treat this as a \emph{spherical} Gaussian measurement of $x$, i.e., $x + \mathcal{N}(0, R I)$; for spherical Gaussian measurements, the posterior $p(x \mid x_0)$ \emph{can} be sampled robustly and efficiently using the diffusion SDE.  We take such a sample $x_1$, which now won't be too far outside the distribution of $p(x)$, then use $x_1$ as initialization for annealed Langevin dynamics to sample from $p(x \mid y)$.  The key part of our paper is that this process will never evaluate a score with respect to a distribution far from the distribution it was trained on, so the process is robust to error in the score estimates.

We summarize our results in \cref{tab:summary}.

\begin{algorithm}[t]
    \caption{Sampling from \( p(x \mid Ax + \mathcal{N}(0, \eta^2 I_m) = y) \)}
    \label{alg:annealing_estimated_score}
    \begin{algorithmic}[1]
    \Function{PosteriorSampler}{$p: \R^d \to \R$ , $ y \in \mathbb{R}^m $, $ A \in \mathbb{R}^{m \times d} $, $ \eta \in \mathbb{R} $}
    \State Let \( \eta_1 > \eta_2 > \dots > \eta_N = \eta\) and $T_1, \dots, T_{N-1}$  be an admissible schedule.
    \State Initialize \( y_N = y \)
    \For{$i = N-1$ down to $1$}
        \State \( y_i = y_{i+1} + \mathcal{N}(0, (\eta_i^2 - \eta_{i+1}^2) I_m) \)
    \EndFor
        \State Sample \( {X}_1 \sim p(x) \) \label{line:init}
    \Comment{Approximately, using the diffusion SDE~\eqref{eq:diffusion_SDE}}
    \For{$i = 1$ to $N-1$}
        \State Let $\wh{s}_{i+1}$ be the estimated score function  for \( {s}_{i+1}(x) = \nabla \log p(x \mid y_{i+1}) \).
        \State Initialize $x_0 = X_i$.
        \State Simulate the SDE for time \( T_i \):
        \begin{equation}
            \label{eq:alg2_process}
              \mathrm{d}x_t = \wh{s}_{i+1}(x_t^{(h)}) \, \mathrm{d}t + \sqrt{2} \, \mathrm{d}B_t
        \end{equation}
            \State Here, \( x_t^{(h)} = x_{h \cdot \lfloor t / h \rfloor} \) is the discretized \( x_t \), where $h$ is a small enough step size.
        \State Set  \( {X}_{i+1} \gets x_{T_i} \)
    \EndFor
    
    \State \textbf{Return:} \( {X}_N \) as an approximation of $ p(x \mid Ax + \mathcal{N}(0, \eta^2 I_m) = y)$.
    \EndFunction    
    \end{algorithmic}
    \end{algorithm}
    
\section{Notation and Background}

We consider $x \sim p(x)$ over $\R^d$.  The ``score function'' $s(x)$ of $p$ is $\grad \log p(x)$.  The ``smoothed score function'' $s_{\sigma^2}(x)$ is the score of $p_{\sigma^2}(x) = p(x) * \mathcal{N}(0, \sigma^2 I_d)$.

\paragraph{Unconditional sampling.} There are several ways to sample from $p$ using the scores.  Langevin dynamics is a classical MCMC method that considers the following overdamped Langevin Stochastic Differential Equation (SDE):
\begin{equation} \label{eq:langevin_sde_general}
    dX_t = s(X_t) dt + \sqrt{2} dB_t,
\end{equation}
where $B_t$ is standard Brownian motion. The stationary distribution of this SDE is $p$, and discretized versions of it, such as the Unadjusted Langevin Algorithm (ULA), are known to converge rapidly to $p(x)$ when $p(x)$ is strongly log-concave~\cite{Dal17}.  One can replace the true score $s(x)$ with an approximation $\wh{s}$, as long as it satisfies a (fairly strong) MGF condition
\begin{equation} \label{eq:mgf_error}
    \E_{x \sim p(x)}\left[\exp\left(\|s(x) - \wh{s}(x)\|^2 / \eps_{mgf}^2\right)\right] < \infty, \quad \text{for some } \eps_{mgf} > 0.
\end{equation}

 In particular, \cite{YW22} showed that Langevin dynamics needs an MGF bound for convergence, and an $L^p$-accurate score estimator for any $1 \le p < \infty$ is insufficient.

An alternative approach, used by diffusion models, is to involve the smoothed scores.  Starting from $x_0 \sim \mathcal{N}(0, I_d)$, one can follow a different SDE~\cite{anderson1982reverse}:
\begin{equation}
\label{eq:diffusion_SDE}
dX_t = (X_t + 2 s_{\sigma_t^2}(X_t)) dt +  \sqrt{2} dB_t
\end{equation}
for a particular smoothing schedule $\sigma_t$; the result $x_T$ is exponentially close (in $T$) to being drawn from $p(x)$.  This also has efficient discretizations~\cite{chen2022sampling, CCSW, benton2023linear}, does not require log-concavity, and only requires an $L^2$ guarantee such as~\cite{chen2022sampling}
\[
 \E_{x \sim p_{\sigma^2}(x)}\left[\|s_{\sigma^2}(x) - \wh{s}_{\sigma^2}(x)\|^2\right] < \eps^2
    \]
to accurately sample from $p(x)$.  One can also run a similar ODE with similar guarantees but faster~\cite{ODEflow}.

\paragraph{Posterior sampling.} Now, in this paper we are concerned with \emph{posterior sampling}: we observe a noisy linear measurement $y \in \R^m$ of $x$, given by
\[
  y = Ax + \xi \qquad \text{for}\qquad \xi \sim \mathcal{N}(0, \eta^2 I_m),
\]
and want to sample from $p(x \mid y)$.  The unsmoothed score $s_y(x) := \grad_x \log p(x \mid y)$ is easily computed by Bayes' rule:
\[
\grad_x \log p(x \mid y) =  \grad_x \log p(x) + \grad_x \log p(y \mid x) = s(x) + \frac{A^\top (y - Ax)}{\eta^2}.
\]
Thus we can run the Langevin SDE~\eqref{eq:langevin_sde_general} with the same properties: if $p(x \mid y)$ is strongly log-concave and the score estimate satisfies the MGF error bound~\eqref{eq:mgf_error}, it will converge quickly and accurately.


Naturally, researchers have looked to diffusion processes for more general and robust posterior sampling methods. The main difficulty is that the smoothed score of the posterior involves $\nabla_x \log p(y \mid x_{\sigma_t^2})$ rather than the tractable unsmoothed term $\nabla_x \log p(y \mid x)$. Because the smoothed score is hard to evaluate exactly, a range of approximation techniques has been proposed \cite{boys2024tmpd,chung2023diffusion,meng2025dmps,rout2024beyond,song2023pigdm,wang2023ddnm}. One prominent example is the DPS algorithm \cite{chung2023diffusion}. Other methods include Monte Carlo/MCMC-inspired approximations \cite{cardoso2024mcgdiff,dou2024diffusion,wu2024pnpdm,ekstromkelvinius2025ddsmc}, singular value decomposition and transport tilting \cite{kawar2021snips,kawar2022ddrm,wang2023ddnm,bruna2024tilted}, and schemes that combine corrector steps with standard diffusion updates \cite{chen2023admmilp,chung2022mri,chung2022manifold,kamilov2023pnp,li2024ddc,song2024resample,song2022medicalinverse,zhu2023denoising,arvinte2021deepjsense,xu2024dpnp,renaud2024pnpula,rout2023psld}. 
These approaches have shown strong empirical performance, and several provide guarantees under additional structure of the linear measurement; however, general guarantees for fast and robust posterior sampling remain limited beyond these restricted regimes.

Several recent studies \cite{jalal2021robust,zhang2025daps,kawar2021snips} use various annealed versions of the Langevin SDE as a key component in their diffusion-based posterior sampling method and achieve strong empirical results. Still, these methods provide no theoretical guidance on two key aspects: how to design the annealing schedule and why annealing improves robustness. None of these approaches come with correctness guarantees for the overall sampling procedure.

\newcommand{\rnd}{\operatorname{rnd}}


\paragraph{Comparison with Computational Lower Bounds.} 
Recent work of \cite{pmlr-v235-gupta24a} shows that it is actually \textit{impossible} to achieve a general algorithm that is guaranteed fast and robust: there is an exponential computational gap between unconditional diffusion and posterior sampling. 
Under standard cryptographic assumptions, they construct a distribution $p$ over $\R^d$ such that
\begin{enumerate}
    \item One can efficiently obtain an $L^p$-accurate estimate of the smoothed score of $p$, so diffusion models can sample from $p$.
    \item Any sub-exponential time algorithm that takes $y = Ax + \mathcal{N}(0, \eta^2 I_m)$ as input and outputs a sample from the posterior $p(x \mid y)$ fails on most $y$ with high probability.
\end{enumerate}
    Our algorithm shows that, once an additional noisy observation $\tilde x$ that is close to $x$ is provided, then we can efficiently sample from $p(x \mid y, \tilde x)$, circumventing the impossibility result. 

To illustrate why the extra observation helps, consider the following simplified version of the hardness instance: \[
   p := q * \mathcal{N}(0, \sigma^2 I_d), \quad   q(x) := \frac{1}{2^{d/2}} \sum_{s \in \zo^{d/2}} \delta((s, f(s)) - x).
\]
Here, $f:\zo^{d/2}\to \zo^{d/2}$ is a one‑way permutation --- it takes exponential time to compute $f^{-1}(x)$ for most $x \in \zo^{d / 2}$. $\delta(\cdot)$ is the Dirac delta function, and we choose $\sigma\ll d^{-1/2}$. Thus, $p(x)$ is a mixture of $2^{d/2}$ well‑separated Gaussians centered at the points $(s,f(s))$.


Assume we observe \[
    y = Ax + \mathcal{N}(0, \eta^2 I_d), \quad A = \begin{pmatrix}0 & I_{d/2} \end{pmatrix}, \quad \sigma \ll \eta \ll {d}^{-1/2},
\]
and let $\operatorname{rnd}(y)$ denote the vertex of $\zo^d$ closest to $y$. Then the posterior $p(x \mid y)$ is approximately a Gaussian centered at $(f^{-1}(\operatorname{rnd}(y)),\operatorname{rnd}(y))$ with covariance $\sigma^2 I_d$.
 Generating a single sample would therefore reveal $f^{-1}(\operatorname{rnd}(y))$, which requires $\exp(\Omega(d))$ time.


However, suppose we have a coarse estimate $x_0$ satisfying $\|x_0 - x\| < 1 / 3$ (e.g., obtained by compressed sensing). Then, $x_0$ uniquely identifies the correct $(s, f(s))$ with $f(s) = \rnd(y)$, and the remaining task is just sampling from a Gaussian. Therefore, this hard instance becomes easy once we have localized the task and does not contradict our~\cref{thm:gaussian_main}.

We are able to handle the hard instance above well because it is exactly the type of distribution our approach is designed for:
despite its complex global structure, it exhibits well-behaved local properties. This gives an important conceptual takeaway from our work: the hardness of posterior sampling may only lie in localizing $x$ within the exponentially large high-dimensional space.



Therefore, although posterior sampling is an intractable task in general, it is still possible to design a robust, provably correct posterior sampling algorithm --- once we have localized the distribution. 
We view our work as a first step towards this goal. 








\section{Techniques}
\label{sec:techniques}
The algorithm we propose is clean and simple, but the proof is quite involved. Before we dive into the details, we provide a high-level overview of the intuitions behind the algorithm, concentrating on the illustrative case where the \emph{prior} density $p(x)$ is $\alpha$-strongly log-concave. Under this assumption, every posterior density $p(x\mid y)$ is \emph{also} $\alpha$-strongly log-concave. Therefore, posterior sampling could, in principle, be performed using classical Langevin dynamics.

The challenge arises because we lack access to the exact posterior score $s_y(x)$. We only possess an estimator derived from an estimate $\widehat{s}(x)$ of the \emph{prior} score $s(x)$:
\[
\widehat{s}_y(x) \;:=\; \widehat{s}(x) \;+\; \frac{A^{\!\top}(y-Ax)}{\eta^{2}}.
\]\cref{assumption:m4} implies an $L^4$ accuracy of $\wh s_y$ on average, but how do we use this to support Langevin dynamics, which demands exponentially decaying error tails?



\subsection{Score Accuracy: Langevin Dynamics vs. Diffusion Models}
\begin{quote}
\textit{Why can diffusion models succeed with merely $L^2$-accurate scores, whereas Langevin dynamics require MGF accuracy?}
\end{quote}

Both diffusion models and Langevin dynamics utilize SDEs. The $L^2$ error in the score-dependent drift term relates directly to the KL divergence between the true process (using $s(x)$) and the estimated process (using $\widehat{s}(x)$). Consequently, bounding the $L^2$ score error with respect to the current distribution $\wh p_t$ controls the KL divergence.

Diffusion models leverage this property effectively. The forward process transforms data into a Gaussian, and the reverse generative process starts exactly from this Gaussian. At any time $t$, suppose $\wh p_t$ is close to $p_{\sigma_t^2}$, then \[
    \E_{x_t \sim \wh{p}_t}[\|s_{\sigma_t^2}(x_t) - \widehat{s}_{\sigma_t^2}(x_t)\|^2] \approx     \E_{x_t \sim p_{\sigma_t^2}}[\|s_{\sigma_t^2}(x_t) - \widehat{s}_{\sigma_t^2}(x_t)\|^2] \le \eps_{\text{score}}^2 
\]
by the $L^2$ accuracy assumption. This keeps the process \textit{close} to the ideal process, ensuring overall small error. 

Langevin dynamics, by contrast, often starts from an arbitrary, not predefined initial distribution $p_{\text{initial}}$. An $L^p$ score accuracy guarantee with respect to $p_{\text{target}}$ alone does not ensure accuracy for points $x_t$ that are not on the distributional manifold of $p_{\text{target}}$ (consider running Langevin starting from $x_0$ in~\cref{fig:newalg}). Therefore, a stronger MGF error bound is needed to prevent this from happening.

        
        \subsection{Adapting Langevin Dynamics for Posterior Sampling}
        
        While we can only use Langevin-type dynamics for posterior sampling, we possess a source of effective starting points: we can sample $x_{0}\sim p(x)$ efficiently using the unconditional diffusion model. Intuitively, $x_{0}$ already lies on the data manifold. The score estimator $\widehat{s}_y(x)$ initially satisfies:
        \[ \E_{x_0 \sim p(x)}[\|s_y(x_0) - \widehat{s}_y(x_0)\|^2] = \E_{x_0 \sim p(x)}[\|s(x_0) - \widehat{s}(x_0)\|^2] \le \eps_{\text{score}}^2. \]
        As the dynamics evolves, the distribution $p(x_t)$ transitions from $p(x)$ towards $p(x \mid y)$. If $x_t$ converges to $p(x \mid y)$, we again expect reasonable accuracy on average:
        \begin{equation*}
        \E_{y} [\E_{x_t \sim p(x \mid y)}[\|s_y(x_t) - \widehat{s}_y(x_t)\|^2]] = \E_y[ \E_{x_t \sim p(x \mid y)}[\|s(x_t) - \widehat{s}(x_t)\|^2]] \le \eps_{\text{score}}^2.
        \end{equation*}

        Hence the estimator is accurate at the start and at convergence.  
The open question concerns the \emph{intermediate} segment of the trajectory: does \(x_t\) wander into regions where the prior score \(\widehat{s}(x)\) is unreliable?  Ideally, the time-marginal of \(x_t\), averaged over \(y\), remains close to \(p(x)\) throughout.

        

\subsection{Annealing via Mixing Steps}

\begin{figure}[t]
  \centering
  \begin{subfigure}{0.49\linewidth}
    \includegraphics[width=\linewidth]{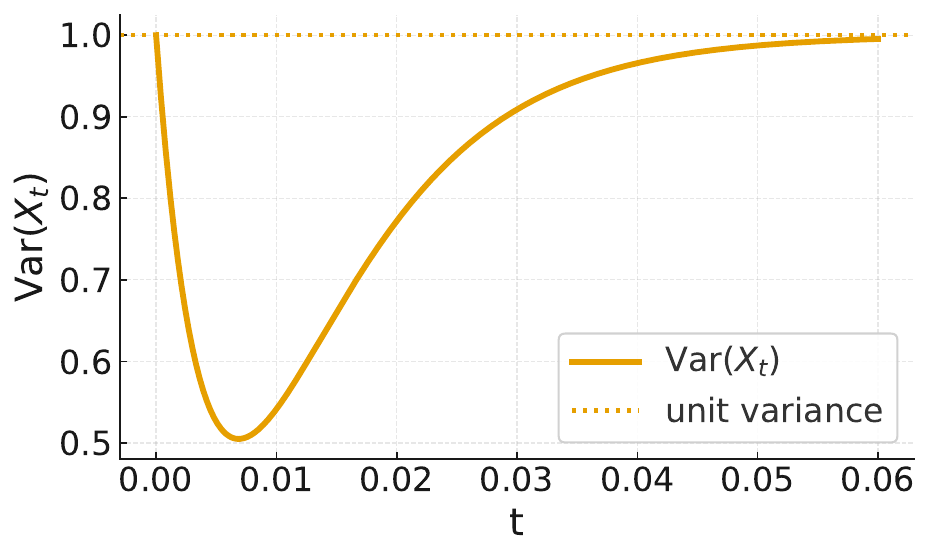}
    \caption{$\mathrm{Var}(X_t)$ as a function of $t$.}
    \label{fig:var_curve}
  \end{subfigure}
  \hfill
  \begin{subfigure}{0.49\linewidth}
    \includegraphics[width=\linewidth]{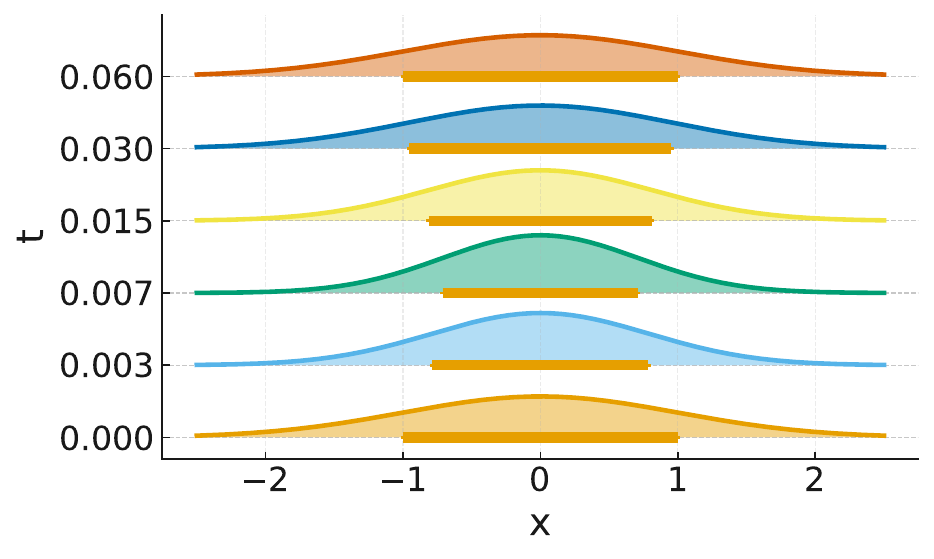}
    \caption{$X_t \sim \mathcal{N} \bigl(0, \mathrm{Var}(X_t)\bigr)$ at different times $t$.}
    \label{fig:ridgeline}
  \end{subfigure}

  \caption{
  Let $p=\mathcal{N}(0,1)$ and $y=x+\mathcal{N}(0,0.01)$. Starting from $X_0\sim p$, run the Langevin SDE
  $
    \mathrm{d}X_t = s_y(X_t)\,\mathrm{d}t + \sqrt{2}\,\mathrm{d}B_t.
  $
  Averaging over $y$, the marginal of $X_t$ remains Gaussian; its variance first contracts and then returns toward the prior.  There is an intermediate time $t^*$ where $X_{t^*}$ has a constant factor lower variance; in high dimensions, this means $X_{t^*}$ is concentrated on an exponentially small region of $p$, so an $L^p$ bound on score error under $p$ does not effectively control the error under $X_{t^*}$.
  See~\cref{sec:plain_langevin_app} for details.
  }
  \label{fig:var+ridge}
\end{figure}

In fact, even though $x_0$ and $x_\infty$ both have marginal $p(x)$, so the score estimate $\widehat{s}(x)$ is accurate on average at those times, this is \emph{not} true at intermediate times.  In~\cref{fig:var+ridge}, we illustrate this with a simple Gaussian example: $x_0$ and $x_\infty$ have distribution $\mathcal{N}(0, I)$ while $x_t$ has marginal $\mathcal{N}(0, cI)$ for a constant $c < 1$.  An $L^p$ error bound under $x \sim \mathcal{N}(0, I)$ does not give an $L^2$ error bound under $x \sim \mathcal{N}(0, cI)$, which means Langevin dynamics may not converge to the right distribution.
A very strong accuracy guarantee like the MGF bound is needed here. 


However, consider the case where the target posterior $p(x \mid y)$ is very close to the initial prior $p(x)$, such as when the measurement noise $\eta$ is very large (low signal-to-noise ratio). Langevin dynamics between close distributions typically converges rapidly. This suggests a key insight: if the required convergence time $T$ is short, the process $x_t$ might not deviate substantially from its initial distribution $p(x_0)$. In such short-time regimes, an $L^2$ score error bound relative to $p(x_0)$ could potentially suffice to control the dynamics. While $p(x)$ itself is already a good approximation for $p(x \mid y)$ when $\eta$ is very large, this motivates a general strategy.

Instead of a single, potentially long Langevin run from $p(x)$ to $p(x \mid y)$, we introduce an annealing scheme using multiple \emph{mixing steps}. Given the measurement parameters $(A, \eta, y)$, we construct a decreasing noise schedule $\eta_1 > \eta_2 > \dots > \eta_N = \eta$. Correspondingly, we generate a sequence of auxiliary measurements $y_1, y_2, \dots, y_N=y$ such that each $y_i$ is distributed as $Ax + \mathcal{N}(0, \eta_i^2 I_m)$ and $y_i$ is appropriately coupled to $y_{i+1}$ (specifically, $y_i \sim \mathcal{N}(y_{i+1}, (\eta_i^2 - \eta_{i+1}^2) I_m)$ conditional on $y_{i+1}$). This creates a sequence of intermediate posterior distributions $p(x \mid y_i)$.

An \emph{admissible schedule} (formally defined in~\cref{def:admissible}) ensures that:
\begin{itemize}
    \item $\eta_1$ is sufficiently large, making $p(x \mid y_1)$ close to the prior $p(x)$.
    \item Consecutive $\eta_i$ and $\eta_{i+1}$ are sufficiently close, making $p(x \mid y_i)$ close to $p(x \mid y_{i+1})$.
\end{itemize}

Our algorithm proceeds as follows:
\begin{enumerate}
    \item Start with a sample $X_0 \sim p(x)$. Since $\eta_1$ is large, $p(x)$ is close to $p(x \mid y_1)$, so $X_0$ serves as an approximate sample $X_1 \sim \wh p(x \mid y_1)$.
    \item For $i = 1$ to $N-1$: Run Langevin dynamics for a short time $T_i$, starting from the previous sample $X_i \sim \wh p(x \mid y_i)$, targeting the next posterior $p(x \mid y_{i+1})$ using the score $\widehat{s}_{y_{i+1}}(x)$. Let the result be $X_{i+1} \sim \wh p(x \mid y_{i+1})$.
    \item The final sample $X_N \sim \wh p(x \mid y_N)$ approximates a draw from the target posterior $p(x \mid y)$.
\end{enumerate}

The core idea behind this annealing scheme is to actively control the process distribution $p(x_t)$, ensuring it remains on the manifold of the prior $p(x)$.
By design, each mixing step $i \to i+1$ connects two statistically close intermediate posteriors, $p(x \mid y_i)$ and $p(x \mid y_{i+1})$. 
This closeness guarantees that a short Langevin run $T_i$ can mix them, and this short duration prevents $p(x_t)$ from drifting significantly away from the step's starting distribution $\wh p(x \mid y_i)$, and we can then argue that \[
        \E_{y_i} [\E_{x_t \sim \wh p(x \mid y_i)}[\|s_{y_i}(x_t) - \widehat{s}_{y_i}(x_t)\|^2]] \approx \E_{y_i}[ \E_{x_t \sim p(x \mid y_i)}[\|s(x_t) - \widehat{s}(x_t)\|^2]] \le \eps_{\text{score}}^2.
\]

This contrasts fundamentally with a single long Langevin run, where $x_t$ could venture far "off-manifold" into regions of poor score accuracy.  By inserting frequent checkpoints that re-anchor the process, our annealing method substitutes such strong assumptions with structural control: the frequent ``checkpoints'' $p(x \mid y_i)$ ensure the process is repeatedly localized to regions where the $L^4$ accuracy suffices. While error is incurred in each step, maintaining proximity to the manifold keeps this error small. The overall approach hinges on demonstrating that these small, per-step errors accumulate controllably across all $N$ steps.

This strategy, however, requires rigorous analysis of three key technical challenges:
\begin{enumerate}
    \item How to bound the required convergence time $T_i$ for the transition from $p(x \mid y_i)$ to $p(x \mid~y_{i+1})$? In particular, what happens when $p$ only has local strong log-concavity?
    \item How to bound the error incurred during a single mixing step of duration $T_i$, given the $L^4$ score error assumption on the prior score estimate?
    \item How to ensure the total error accumulated across all $N$ mixing steps remains small?
\end{enumerate}

Addressing these questions forms the core of our proof.
\paragraph{Proof Organization.}
In~\cref{sec:global_convergence_time}, we show that for globally strongly log-concave distributions $p$, Langevin dynamics converges rapidly from $p(x \mid y_i)$ to $p(x \mid y_{i+1})$. We extend this convergence analysis to locally strongly log-concave distributions in~\cref{sec:local_convergence_time}. In~\cref{sec:error_control}, we provide bounds on the errors incurred by score errors and discretization in Langevin dynamics.
In~\cref{sec:admissible_noise_schedule}, we show how to design the noise schedule to control the accumulated error of the full process.
In~\cref{sec:analysis_and_application}, we conclude the analysis for~\cref{alg:annealing_estimated_score}, and apply it to establish the main theorems.


\begin{figure}
    \centering
    \hspace{-0.65cm}
    \begin{subfigure}[b]{0.32\textwidth}
        \includegraphics[width=1.1\linewidth]{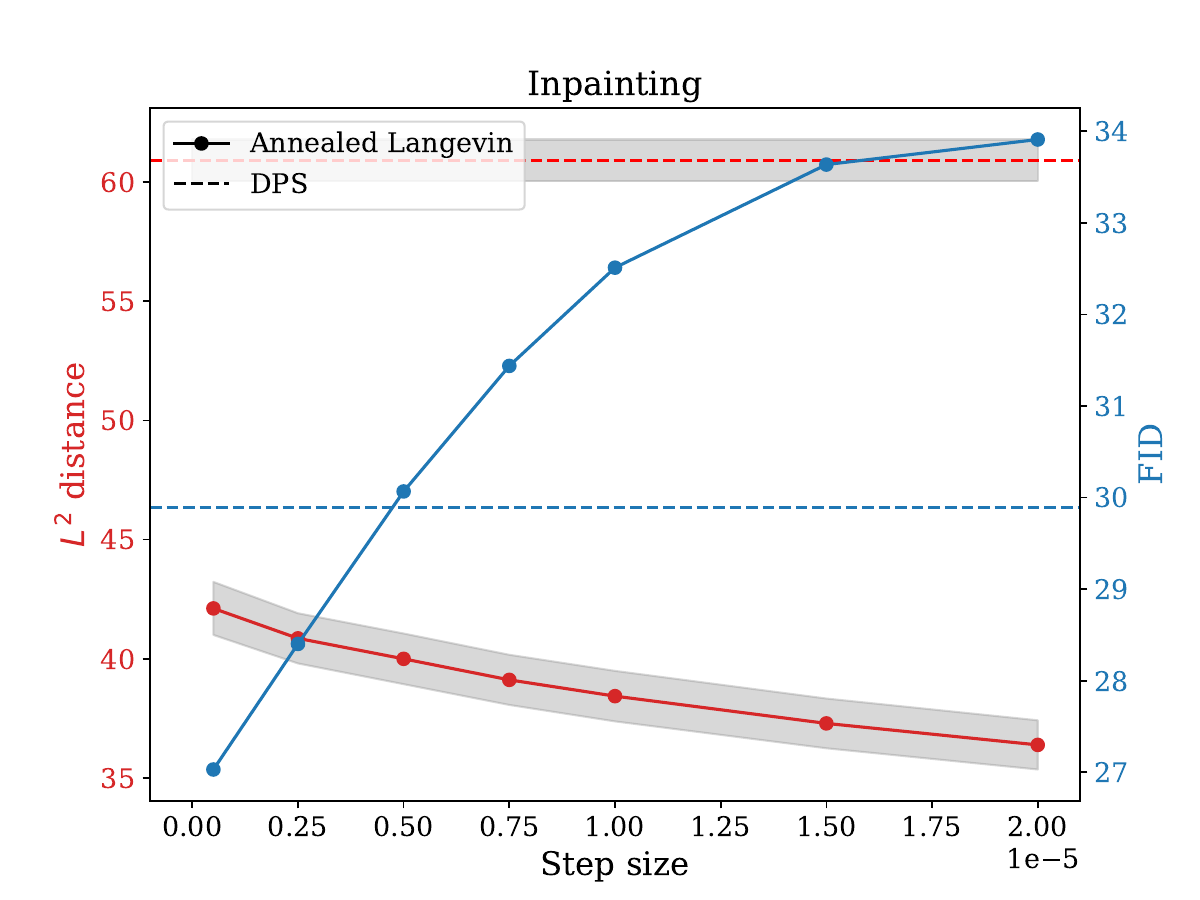}
        \label{fig:enter-label}
    \end{subfigure}
    \hspace{0.2cm}
    \begin{subfigure}[b]{0.32\textwidth}
        \includegraphics[width=1.1\linewidth]{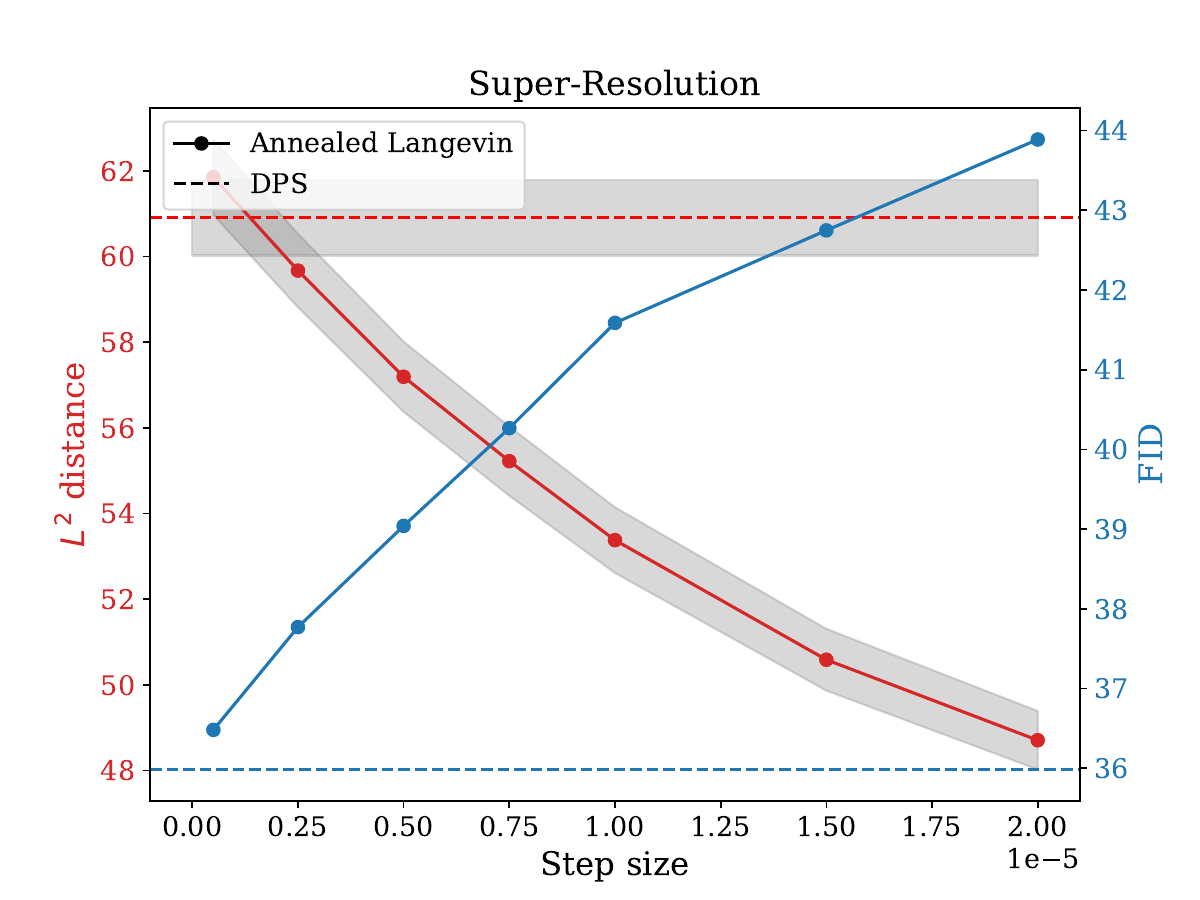}
        \label{fig:enter-label}
    \end{subfigure}
    \hspace{0.3cm}
    \begin{subfigure}[b]{0.32\textwidth}
        \includegraphics[width=1.1\linewidth]{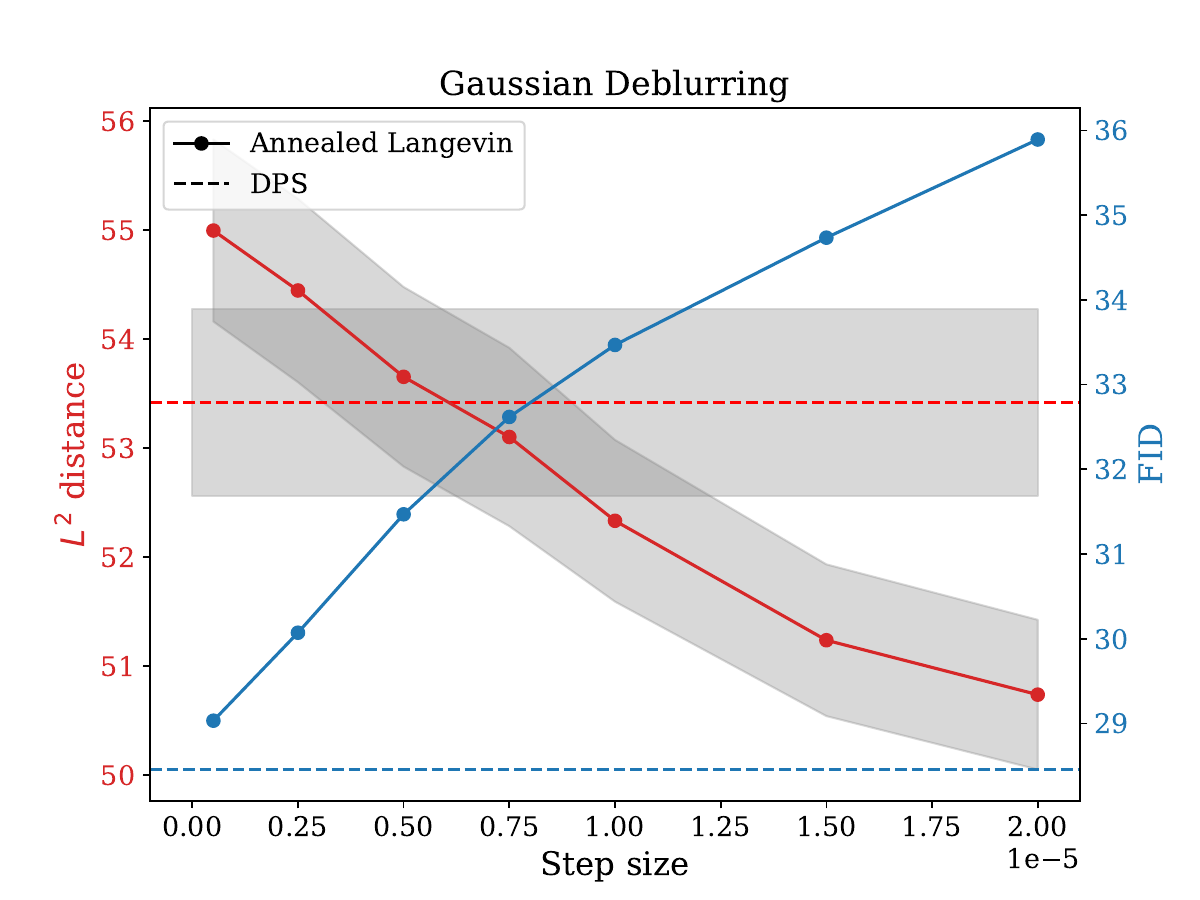}
        \label{fig:enter-label}
    \end{subfigure}
    \hspace{-0.5cm}
    \caption{For each of the three settings (inpainting, super-resolution, and Gaussian deblurring), we plot the $L^2$ distance between samples obtained by our annealed Langevin method and the ground truth samples in \textcolor{red}{red}. We plot the FID of the distribution obtained by running annealed Langevin in \textcolor{blue}{blue}. We plot the baseline $L^2$ distance and FID for samples obtained by the DPS algorithm using red and blue dashed lines.}
    \label{fig:experiment_plot}
\end{figure}

\section{Experiments}

\begin{figure}
    \centering
    \begin{subfigure}[b]{0.24\textwidth}
        \includegraphics[width=0.9\linewidth]{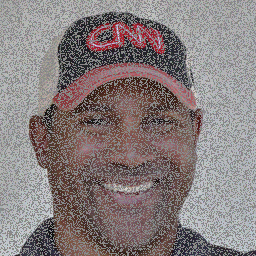}
        \label{fig:enter-label}
        \caption{Input}
    \end{subfigure}
    \begin{subfigure}[b]{0.24\textwidth}
        \includegraphics[width=0.9\linewidth]{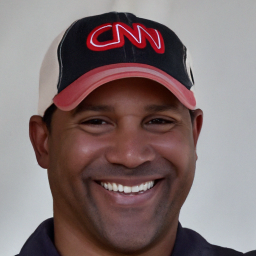}
        \label{fig:enter-label}
        \caption{DPS}
    \end{subfigure}
    \begin{subfigure}[b]{0.24\textwidth}
        \includegraphics[width=0.9\linewidth]{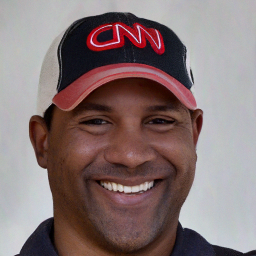}
        \label{fig:enter-label}
        \caption{Ours}
    \end{subfigure}
    \begin{subfigure}[b]{0.24\textwidth}
        \includegraphics[width=0.9\linewidth]{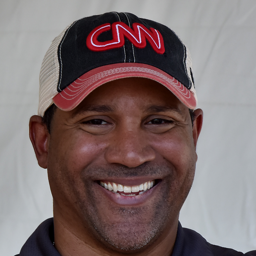}
        \label{fig:enter-label}
        \caption{Ground Truth}
    \end{subfigure}
    \caption{A set of samples for the inpainting task.}
    \label{fig:inpainting_samples}
\end{figure}
\begin{figure}
    \centering
    \begin{subfigure}[b]{0.24\textwidth}
        \includegraphics[width=0.9\linewidth]{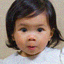}
        \label{fig:enter-label}
        \caption{Input}
    \end{subfigure}
    \begin{subfigure}[b]{0.24\textwidth}
        \includegraphics[width=0.9\linewidth]{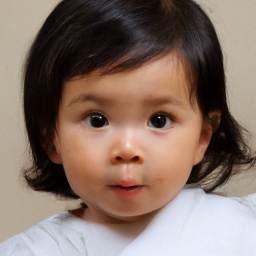}
        \label{fig:enter-label}
        \caption{DPS}
    \end{subfigure}
    \begin{subfigure}[b]{0.24\textwidth}
        \includegraphics[width=0.9\linewidth]{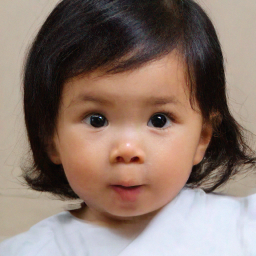}
        \label{fig:enter-label}
        \caption{Ours}
    \end{subfigure}
    \begin{subfigure}[b]{0.24\textwidth}
        \includegraphics[width=0.9\linewidth]{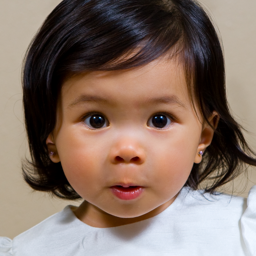}
        \label{fig:enter-label}
        \caption{Ground Truth}
    \end{subfigure}
    \caption{A set of samples for the super-resolution task.}
    \label{fig:superresolution_samples}
\end{figure}

To validate our theoretical analysis and assess real-world performance, we study three inverse problems on FFHQ–$256$~\cite{ffhq}: inpainting, $4\times$ super-resolution, and Gaussian deblurring. Experiments use 1k validation images and the pre-trained diffusion model from \cite{chung2023diffusion}. Forward operators are specified as in \cite{chung2023diffusion}: inpainting masks $30\%\text{–}70\%$ of pixels uniformly at random; super-resolution downsamples by a factor of $4$; deblurring convolves the ground-truth with a Gaussian kernel of size $61\times61$ (std.\ $3.0$). We first obtain initial reconstructions $x_0$ via Diffusion Posterior Sampling (DPS)~\cite{dou2024diffusion}, then refine them with our annealed Langevin sampler to draw samples close to $p(x\mid x_0,y)$. To control runtime, we sweep the step size while keeping the annealing schedule fixed.

For each step size, we report the per-image $L^2$ distance to the ground truth and the FID of the resulting sample distribution (Figure~\ref{fig:experiment_plot}). Across all three tasks, increasing the time devoted to annealed Langevin decreases $L^2$ but increases FID; in the inpainting setting, when the step size is sufficiently small, our method surpasses DPS on both metrics. Qualitatively, our reconstructions better preserve ground-truth attributes compared to DPS (Figures~\ref{fig:inpainting_samples} and \ref{fig:superresolution_samples}). All experiments were run on a cluster with four NVIDIA A100 GPUs and required roughly two hours per task.

\section*{Acknowledgments}
This work is supported by the NSF AI Institute for Foundations of Machine Learning (IFML). ZX is supported by NSF Grant CCF-2312573 and a Simons Investigator Award (\#409864, David Zuckerman).

%
%

\bibliographystyle{alpha}
\bibliography{main}

\appendix
\newpage
\section{Langevin Convergence Between Strongly Log-concave Distributions}

\label{sec:global_convergence_time}

In this section, we study the following problem. Let \( p \) be a probability distribution on \( \mathbb{R}^d \), and let \( A \in \mathbb{R}^{m \times d} \) be a matrix. For a sequence of parameters \( \eta_i > \eta_{i+1} \) satisfying  
\[
\eta_i^2 = (1 + \gamma_i) \eta_{i+1}^2,
\]  
consider two random variables \( y_i \) and \( y_{i+1} \) defined as follows. First, draw \( x \sim p \). Then, generate  
\[
y_{i+1} = Ax + N(0, \eta_{i+1}^2 I_m),
\]  
and further perturb it by  
\[
y_i = y_{i+1} + N(0, (\eta_i^2 - \eta_{i+1}^2) I_m).
\]  
Define the score function  
\[
s_{i+1}(x) = \nabla_x \log p(x \mid y_{i+1}).
\]  
We analyze the following SDE:  
\begin{equation}
\label{eq:p_continuous_process}
    \mathrm{d}x_t = s_{i+1}(x_t) \, \mathrm{d}t + \sqrt{2} \, \mathrm{d}B_t, \quad x_0 \sim p(x \mid y_i).
\end{equation} 
This is the ideal (no discretization, no score estimation error) version of the process~\eqref{eq:alg2_process} that we actually run.
Our goal is to establish the following lemma.

\begin{lemma}
    \label{lem:main_convergence_lemma}
    Suppose the prior distribution \( p(x) \) is \( \alpha \)-strongly log-concave. Then, running the process \eqref{eq:p_continuous_process} for time  
    \[
    T = O\left(\frac{m\gamma_i + \log(\lambda / \eps)}{\alpha}\right)
    \]  
    ensures that  
    \[
    \Prb[y_i, y_{i+1}]{\TV(x_T, p(x \mid y_{i+1})) \le \eps} \geq 1 - \frac{1}{\lambda}.
    \]
\end{lemma}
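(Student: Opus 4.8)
The plan is to reduce the lemma to a bound on the \emph{expected} KL divergence between the initialization $p(\cdot\mid y_i)$ and the target $p(\cdot\mid y_{i+1})$, and then use exponential contraction of Langevin dynamics toward a strongly log-concave stationary law. Since \eqref{eq:p_continuous_process} is the idealized process (no discretization or score error), this is entirely a continuous-time argument.

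First I would observe that $\pi:=p(\cdot\mid y_{i+1})$ is $\alpha$-strongly log-concave: $\log p(x\mid y_{i+1})=\log p(x)-\|y_{i+1}-Ax\|^2/(2\eta_{i+1}^2)+\mathrm{const}$, and subtracting the convex quadratic only pushes the Hessian further below $-\alpha I_d$. By Bakry--\'Emery, $\pi$ satisfies a log-Sobolev inequality with constant $\alpha$, so the process \eqref{eq:p_continuous_process}, whose stationary distribution is $\pi$, obeys $\KL(x_t\,\|\,\pi)\le e^{-2\alpha t}\,\KL(x_0\,\|\,\pi)$ with $x_0\sim p(\cdot\mid y_i)$. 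Writing $D_i:=\KL(p(\cdot\mid y_i)\,\|\,p(\cdot\mid y_{i+1}))$, Pinsker's inequality gives $\TV(x_T,\pi)\le\sqrt{\tfrac12 e^{-2\alpha T}D_i}$, and Markov's inequality bounds $D_i\le\lambda\,\E[D_i]$ off an event of probability $1/\lambda$. So it suffices to show $\E_{y_i,y_{i+1}}[D_i]\le m\gamma_i\cdot\poly\!\bigl(\tfrac{\|A\|}{\eta\sqrt\alpha}\bigr)$; then taking $T=\Theta\!\bigl(\tfrac1\alpha\log\tfrac{\lambda\,\E[D_i]}{\eps^2}\bigr)$ and using $\log\E[D_i]=O(m\gamma_i)$ up to the signal-to-noise factor yields $T=O\!\bigl(\tfrac{m\gamma_i+\log(\lambda/\eps)}{\alpha}\bigr)$ as claimed.

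The core is bounding $\E[D_i]$. Because $x\to y_{i+1}\to y_i$ is a Markov chain, $p(x\mid y_i)=\E_{y'\sim p(y_{i+1}\mid y_i)}[\,p(x\mid y')\,]$ is a mixture, so by convexity of KL in its first argument $D_i\le\E_{y'\sim p(y_{i+1}\mid y_i)}\!\bigl[\KL(p(\cdot\mid y')\,\|\,p(\cdot\mid y_{i+1}))\bigr]$. Now $\{p(\cdot\mid y)\}_y$ is an exponential family in the natural parameter $\theta=y/\eta_{i+1}^2$ with sufficient statistic $Ax$ (the $\|Ax\|^2$ piece folds into the base measure), so the inner KL equals the Bregman divergence of the log-partition function $\Psi$, and $\nabla^2\Psi(\theta)=A\operatorname{Cov}_{p(\cdot\mid y)}(x)A^\top\preceq(\|A\|^2/\alpha)I_m$ by Brascamp--Lieb applied to the $\alpha$-strongly-log-concave $p(\cdot\mid y)$. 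Hence $\KL(p(\cdot\mid y')\,\|\,p(\cdot\mid y_{i+1}))\le\tfrac{\|A\|^2}{2\alpha\eta_{i+1}^4}\|y'-y_{i+1}\|^2$, and since $y_{i+1},y'$ are i.i.d.\ given $y_i$ this gives $\E[D_i]\le\tfrac{\|A\|^2}{\alpha\eta_{i+1}^4}\,\E\!\bigl[\Tr\operatorname{Cov}(y_{i+1}\mid y_i)\bigr]$. Finally I would bound $\operatorname{Cov}(y_{i+1}\mid y_i)$ via the law of total covariance, conditioning also on $x$: the within-$x$ term is the explicit Gaussian quantity $\tfrac{\gamma_i}{1+\gamma_i}\eta_{i+1}^2 I_m$, and the between-$x$ term is $(\tfrac{\gamma_i}{1+\gamma_i})^2 A\operatorname{Cov}(x\mid y_i)A^\top\preceq\gamma_i^2(\|A\|^2/\alpha)I_m$ (Brascamp--Lieb again), so $\Tr\operatorname{Cov}(y_{i+1}\mid y_i)\le m\gamma_i(\eta_{i+1}^2+\gamma_i\|A\|^2/\alpha)$ and $\E[D_i]\le m\gamma_i\tfrac{\|A\|^2}{\alpha\eta_{i+1}^2}\bigl(1+\gamma_i\tfrac{\|A\|^2}{\alpha\eta_{i+1}^2}\bigr)$; as $\eta_{i+1}\ge\eta$, this is of the required form.

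The hard part will be this middle step — recognizing $\{p(\cdot\mid y)\}_y$ as an exponential family and controlling its Bregman divergence uniformly through the $\operatorname{Cov}\preceq I_d/\alpha$ estimate; this is exactly where strong log-concavity is genuinely used (once for the target's log-Sobolev constant, and again for Brascamp--Lieb on every intermediate posterior $p(\cdot\mid y)$ and on $p(\cdot\mid y_i)$), and it is what converts the qualitative fact ``$y_i$ is a mild noising of $y_{i+1}$'' into a quantitative KL bound scaling with $m\gamma_i$ and the signal-to-noise ratio. A small point to dispatch first is that $D_i<\infty$ almost surely: the log-density ratio is only quadratic in $x$ while $p(\cdot\mid y_i)$ is sub-Gaussian, so applying Markov's inequality to $D_i$ is legitimate.
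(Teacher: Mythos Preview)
Your approach is correct and genuinely different from the paper's. The paper bounds $\chi^2\bigl(p(\cdot\mid y_i)\,\|\,p(\cdot\mid y_{i+1})\bigr)$ with high probability via the factorization $\chi^2+1=\E_{x\sim p(\cdot\mid y_i)}\bigl[\tfrac{p(y_i\mid x)}{p(y_{i+1}\mid x)}\bigr]\cdot\tfrac{p(y_{i+1})}{p(y_i)}$, controlling each factor separately using only the Gaussian noise structure and Markov's inequality (\cref{lem:expectation_of_noise_ratio} through \cref{lem:chi_squared_whp}), and then invokes $\chi^2$-contraction of Langevin dynamics (\cref{lem:Dal17}). This yields $\log\chi^2\le C(m\gamma_i+\log\lambda)$ with a universal constant, independent of both the prior and the signal-to-noise ratio $\rho_i=\|A\|/(\eta_{i+1}\sqrt\alpha)$. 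Your route---log-Sobolev KL contraction, convexity of KL under the mixture representation $p(\cdot\mid y_i)=\E_{y'}[p(\cdot\mid y')]$, the exponential-family Bregman bound, and Brascamp--Lieb---is arguably cleaner and in most regimes actually gives a \emph{smaller} $T$ (you obtain $\log\E[D_i]=O(\log(m\gamma_i\rho_i^2))$ rather than $O(m\gamma_i)$), at the cost of the residual $\log\rho_i$ term you already flag. The conceptual difference is where strong log-concavity enters: the paper uses it only for the Langevin contraction rate, keeping the divergence bound prior-agnostic, whereas you invoke it a second time via Brascamp--Lieb to control the divergence itself. That separation is what lets the paper reuse its $\chi^2$ bound unchanged in the local-log-concavity extension of \cref{sec:local_convergence_time}; your Brascamp--Lieb step would first require passing to the globally log-concave surrogate $\widetilde p$ there.
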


    
\subsection{$\chi^2$-divergence Between Distributions}
In this section, our goal is to bound ${\chi^2\left(p(x \mid y_i) \,\|\, p(x \mid y_{i+1}) \right)}$. Since the posterior distributions can be expressed as
\[
p(x \mid y_i) = \frac{p(y_i \mid x) p(x)}{p(y_i)} , \quad 
p(x \mid y_{i+1}) = \frac{p(y_{i+1}   \mid x ) p(x)}{p(y_{i+1}  )}.
\]
The $\chi^2$ divergence is 
\begin{align*}
    {\chi^2\left(p(x \mid y_i) \,\|\, p(x \mid y_{i+1}) \right)}
&= \Ex[x \sim p(x \mid y_i)]{ \frac{p(x \mid y_i)}{p(x \mid y_{i+1} )}} - 1\\
&= \Ex[x \sim p(x \mid y_i)]{{\frac{p(y_i\mid x ) }{p(y_{i+1 }  \mid x)} \cdot \frac{p(y_{i+1}  )}{p(y_i)}}} - 1\\
&= \Ex[x \sim p(x \mid y_i)]{{\frac{p(y_i\mid x ) }{p(y_{i+1 }  \mid x)}}  } \cdot \frac{p(y_{i+1}   )}{p(y_i)} 
- 1.
\end{align*}

We bound the term $\Ex[x \sim p(x \mid y_i)]{{\frac{p(y_i\mid x ) }{p(y_{i+1 }  \mid x)}}  }$ first.

\begin{lemma}
\label{lem:expectation_of_noise_ratio}
    We have \[
       \Ex[x, y_i, y_{i+1}]{{\frac{p(y_i\mid x ) }{p(y_{i+1 }  \mid x)}}  }   = 1    
    \]
\end{lemma}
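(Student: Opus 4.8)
The plan is to establish the identity \emph{conditionally on $x$}, which is both cleaner and slightly stronger. Fix an arbitrary $x \in \R^d$ and consider the conditional law of $(y_{i+1}, y_i)$ given $x$. By construction, given $x$ we first draw $y_{i+1} \sim \mathcal{N}(Ax, \eta_{i+1}^2 I_m)$ and then set $y_i = y_{i+1} + \mathcal{N}(0, (\eta_i^2 - \eta_{i+1}^2) I_m)$ with fresh noise independent of $x$. Hence the conditional joint density factors as
\[
p(y_{i+1}, y_i \mid x) \;=\; p(y_{i+1} \mid x)\, p(y_i \mid y_{i+1}, x) \;=\; p(y_{i+1} \mid x)\, p(y_i \mid y_{i+1}),
\]
where the factor $p(y_i \mid y_{i+1})$ does not depend on $x$.

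Substituting this into the expectation, the $p(y_{i+1} \mid x)$ appearing in the joint density cancels the $p(y_{i+1} \mid x)$ in the denominator of the integrand, leaving
\[
\Ex[y_{i+1}, y_i \mid x]{\frac{p(y_i \mid x)}{p(y_{i+1} \mid x)}} \;=\; \int\!\!\int p(y_i \mid y_{i+1})\, p(y_i \mid x)\,\d y_{i+1}\,\d y_i .
\]
Since the integrand is nonnegative I may apply Tonelli and integrate over $y_{i+1}$ first. For fixed $y_i$, the map $y_{i+1} \mapsto p(y_i \mid y_{i+1})$ is, by symmetry of the Gaussian density in its mean and its argument, exactly the density of $\mathcal{N}(y_i, (\eta_i^2 - \eta_{i+1}^2) I_m)$ evaluated at $y_{i+1}$, so it integrates to $1$. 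What remains is $\int p(y_i \mid x)\,\d y_i = 1$. As this holds for every $x$, averaging over $x \sim p$ gives $\Ex[x, y_i, y_{i+1}]{p(y_i \mid x)/p(y_{i+1} \mid x)} = 1$.

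I do not expect a genuine obstacle: the statement reduces to a one-line Fubini-plus-Gaussian-symmetry computation. The only two points deserving a sentence of care are (i) that the coupling $y_i = y_{i+1} + (\text{independent noise})$ is precisely what makes $p(y_i \mid y_{i+1}, x)$ independent of $x$, so that the denominator cancels cleanly \emph{before} any integration; and (ii) the interchange of the $y_{i+1}$- and $y_i$-integrals, which is legitimate because the integrand is nonnegative. An equally short alternative is to write out the full triple integral over $(x, y_i, y_{i+1})$ directly and integrate $y_{i+1}$ out first, which produces the same cancellation.
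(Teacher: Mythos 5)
Your proof is correct and is essentially the paper's argument in different notation: conditioning on $x$ and cancelling $p(y_{i+1}\mid x)$ against the joint density, then integrating out $y_{i+1}$ via the symmetry of the Gaussian convolution kernel, is exactly what the paper does after substituting $Z_1=y_{i+1}-Ax$, $Z_2=y_i-Ax$ and writing $p_{Z_1,Z_2}(z_1,z_2)=p_{Z_1}(z_1)f(z_2-z_1)$. No gap; the two proofs coincide step for step.
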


\begin{proof}
    Let $Z_1 = y_{i+1} - Ax$, and let $Z_2 = y_i - Ax$. 
    Then we have  \[
        \Ex[x, y_i, y_{i+1}]{{\frac{p(y_i\mid x ) }{p(y_{i+1 }  \mid x)}}  }  = \Ex[Z_1, Z_2]{{\frac{p(Z_2) }{p(Z_1)}}  } = \iint\frac{p_{Z_2}(z_2)}{p_{Z_1}(z_1)}\cdot p_{Z_1, Z_2}(z_1, z_2) \d z_1 \d z_2.
    \]
    Note that \[
        p_{Z_1, Z_2}(z_1, z_2) = p_{Z_1}(z_1) \cdot f(z_2 - z_1),
    \]
    where $f$ is the density function for $N(0, (\eta_i^2 - \eta_{i+1}^2) I_m)$. Therefore, 
    \begin{align*}
            \iint\frac{p_{Z_2}(z_2)}{p_{Z_1}(z_1)}\cdot p_{Z_1, Z_2}(z_1, z_2) \d z_1 \d z_2 &= \iint p_{Z_2}(z_2) \cdot f(z_2 - z_1) \d z_1 \d z_2 \\
            &= \int  p_{Z_2}(z_2)  \tuple{\int f(z_2 - z_1) \d z_1 }\d z_2.
    \end{align*}
    Since $f$ is a density function, its integral over $\R^m$ is $1$. This gives that \[
        \int  p_{Z_2}(z_2)  \tuple{\int f(z_2 - z_1) \d z_1 }\d z_2 = \int  p_{Z_2}(z_2) \d z_2 = 1.
    \]
    Hence, \[
       \Ex[x, y_i, y_{i+1}]{{\frac{p(y_i\mid x ) }{p(y_{i+1 }  \mid x)}}  }   = 1.
    \]
\end{proof}

\begin{corollary}
    \label{cor:whp_noise_ratio}
    For any $\lambda > 1$, we have \[
        \Prb[y_i, y_{i+1}]{\Ex[x \sim p(x \mid y_i)]{{\frac{p(y_i\mid x ) }{p(y_{i+1 }  \mid x)}}} \le  \lambda} \ge 1 -  \frac{1}{\lambda}.
    \]
\end{corollary}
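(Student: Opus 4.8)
The plan is to apply Markov's inequality to the exact identity of Lemma~\ref{lem:expectation_of_noise_ratio}. Write
\[
  R(x,y_i,y_{i+1}) \;:=\; \frac{p(y_i\mid x)}{p(y_{i+1}\mid x)} \;\ge\; 0,
  \qquad
  \Phi(y_i,y_{i+1}) \;:=\; \Ex[x\sim p(x\mid y_i)]{R(x,y_i,y_{i+1})},
\]
so that $\Phi$ is a nonnegative random variable depending only on $(y_i,y_{i+1})$, and the event in the statement is exactly $\{\Phi\le\lambda\}$. First I would note that in the coupling of this section $y_i$ is $y_{i+1}$ perturbed by independent Gaussian noise, so $x\to y_{i+1}\to y_i$ is a Markov chain; conditioning on the pair $(y_i,y_{i+1})$ therefore reduces the joint expectation of $R$ to its inner posterior average. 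Combined with the tower rule and Lemma~\ref{lem:expectation_of_noise_ratio}, this gives
\[
  \Ex[y_i,y_{i+1}]{\Phi(y_i,y_{i+1})} \;=\; \Ex[x,y_i,y_{i+1}]{R(x,y_i,y_{i+1})} \;=\; 1 .
\]
Markov's inequality then yields, for every $\lambda>1$,
\[
  \Prb[y_i,y_{i+1}]{\Phi(y_i,y_{i+1}) > \lambda} \;<\; \frac{\Ex[y_i,y_{i+1}]{\Phi}}{\lambda} \;=\; \frac1\lambda,
\]
which is the claimed bound.

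\textbf{Main obstacle.} I do not expect a genuine obstacle here: the substantive content is already packaged in Lemma~\ref{lem:expectation_of_noise_ratio}, and this corollary only promotes that exact expectation identity to the per-$(y_i,y_{i+1})$ high-probability statement that the $\chi^2$ computation above requires (the companion factor $p(y_{i+1})/p(y_i)$ in the expansion of $\chi^2\!\left(p(x\mid y_i)\,\|\,p(x\mid y_{i+1})\right)$ being controlled by a separate estimate). The only points requiring care are that $R\ge 0$, so Markov applies with no integrability hypothesis, and the bookkeeping of correctly identifying the conditional law of $x$ given the pair $(y_i,y_{i+1})$ in the forward coupling — immediate from the chain structure $x\to y_{i+1}\to y_i$.
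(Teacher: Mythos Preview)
Your approach is exactly the paper's --- reduce to Lemma~\ref{lem:expectation_of_noise_ratio} via the tower rule, then apply Markov --- but the tower step does not give what you claim. The chain $x\to y_{i+1}\to y_i$ yields $p(x\mid y_i,y_{i+1})=p(x\mid y_{i+1})$, \emph{not} $p(x\mid y_i)$, so conditioning on $(y_i,y_{i+1})$ produces
\[
\E_{x,y_i,y_{i+1}}[R]\;=\;\E_{y_i,y_{i+1}}\!\Bigl[\E_{x\sim p(\cdot\mid y_{i+1})}[R]\Bigr],
\]
which is a different inner average from your $\Phi=\E_{x\sim p(\cdot\mid y_i)}[R]$. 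There is no reason these should agree, and in general they do not.

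Take $p=\mathcal N(0,1)$, $A=1$, $\eta_{i+1}^2=1$, $\eta_i^2=2$. A direct Gaussian calculation gives
\[
\Phi(y_i,y_{i+1})\;=\;\frac{\sqrt3}{2}\,\exp\!\Bigl(\tfrac16\, y_i^2 - y_i y_{i+1} + y_{i+1}^2\Bigr),
\]
and when this exponent is diagonalized in independent standard normals its largest eigenvalue is $\tfrac14+\tfrac12\sqrt{11/12}\approx 0.73>\tfrac12$. Hence $\E_{y_i,y_{i+1}}[\Phi]=+\infty$ and Markov's inequality is unavailable; in the same example $\Pr[\Phi>\lambda]$ decays only like $\lambda^{-c}$ with $c=1/(2\cdot 0.73)<1$, so the stated $1/\lambda$ tail itself fails for large $\lambda$. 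The paper's one-line proof makes the identical leap. If you want a Markov-type argument here, the quantity for which the tower identity is legitimate is $\E_{x\sim p(\cdot\mid y_{i+1})}[R]$, not $\Phi$.
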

\begin{proof}
    By~\cref{lem:expectation_of_noise_ratio}, we have \[
    \Ex[y_i, y_{i+1}]{\Ex[x \sim p(x \mid y_i)]{{\frac{p(y_i\mid x ) }{p(y_{i+1 }  \mid x)}}}} = \Ex[x, y_i, y_{i+1}]{{\frac{p(y_i\mid x ) }{p(y_{i+1 }  \mid x)}}  }   = 1.
    \]
    Applying Markov's inequality gives the result.
\end{proof}

Now we bound $\frac{p(y_{i+1})}{p(y_i)}$. To make the lemma more self-contained, we abstract this a little bit.

\begin{lemma}
\label{lem:density_ratio_lemma}
Let $\eta_1 > \eta_2$ be two positive numbers, and let $X \in \mathbb{R}^d$ be an arbitrary random variable. Define $Y_1 = X + Z_1$ and $Y_2 = Y_1 + Z_2$, where $Z_1 \sim N(0, \eta_1^2 I_d)$ and $Z_2 \sim N(0, \eta_2^2 I_d)$. Then,
\[
        \E_{Y_1, Y_2 \mid \norm{Z_1} \le t}\left[\frac{p(Y_1)}{p(Y_2)}\right] \le  \frac{1}{\Prb{\norm{Z_1} \le t }} \cdot \exp \tuple{ O\tuple{ \frac{d\eta_2^2}{\eta_1^2} + \frac{t^2\eta_2^2}{\eta_1^4}}} .
\]
where $p(Y_1)$ and $p(Y_2)$ are the densities of $Y_1$ and $Y_2$, respectively.
\end{lemma}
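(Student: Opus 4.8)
\emph{Proof plan.} In the lemma, ``$p(Y_1)$'' and ``$p(Y_2)$'' denote two different functions: write $p$ for the density of $Y_1$ (which is $P_X * \mathcal N(0,\eta_1^2 I_d)$, where $P_X$ is the law of $X$) and $q$ for the density of $Y_2=X+Z_1+Z_2$ (which is $P_X*\mathcal N(0,\sigma^2 I_d)$ with $\sigma^2:=\eta_1^2+\eta_2^2$). It suffices to show
\[
B \ :=\ \E_{X,Z_1,Z_2}\!\left[\frac{p(Y_1)}{q(Y_2)}\,\1\{\|Z_1\|\le t\}\right]\ \le\ \exp\!\left(O\!\left(\frac{d\eta_2^2}{\eta_1^2}+\frac{t^2\eta_2^2}{\eta_1^4}\right)\right),
\]
since dividing both sides by $\Pr[\|Z_1\|\le t]$ turns the left side into $\E_{Y_1,Y_2\mid\|Z_1\|\le t}[p(Y_1)/q(Y_2)]$ and produces exactly the claimed bound. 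I would do this in three steps.

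\textbf{Step 1 (swap $p$ for $q$, integrate out $Z_2$).} Comparing the two Gaussian kernels pointwise, for all $w,x$ we have $\mathcal N(w;x,\eta_1^2 I_d)=\mathcal N(w;x,\sigma^2 I_d)\,(\sigma^2/\eta_1^2)^{d/2}\exp\!\big(-\|w-x\|^2(\tfrac1{2\eta_1^2}-\tfrac1{2\sigma^2})\big)\le (\sigma^2/\eta_1^2)^{d/2}\mathcal N(w;x,\sigma^2 I_d)$; integrating against $P_X$ gives $p(w)\le(1+\eta_2^2/\eta_1^2)^{d/2}q(w)\le e^{d\eta_2^2/(2\eta_1^2)}q(w)$. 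Since $Z_2\mid Y_1\sim\mathcal N(Y_1,\eta_2^2 I_d)$ independently of the rest, conditioning on $(X,Z_1)$ and applying this bound yields
\[
B\ \le\ e^{d\eta_2^2/(2\eta_1^2)}\ \E_{X,Z_1}\!\left[\1\{\|Z_1\|\le t\}\ \E_{z\sim\mathcal N(0,\eta_2^2 I_d)}\!\left[\frac{q(Y_1)}{q(Y_1+z)}\right]\right].
\]

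\textbf{Step 2 (weak log-concavity of $q$).} Writing $q(y)\propto e^{-\|y\|^2/(2\sigma^2)}g(y/\sigma^2)$ with $g(\theta)=\int e^{\langle\theta,x\rangle}e^{-\|x\|^2/(2\sigma^2)}\,dP_X(x)$ log-convex (a Laplace-transform/MGF argument), we get $\nabla^2\log q\succeq-\tfrac1{\sigma^2}I_d$, i.e.\ $y\mapsto\log q(y)+\tfrac{\|y\|^2}{2\sigma^2}$ is convex. Its first-order lower bound rearranges to the pointwise estimate $q(u)/q(u+z)\le\exp\!\big(-\langle\nabla\log q(u),z\rangle+\tfrac{\|z\|^2}{2\sigma^2}\big)$. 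Because $\eta_2^2<\sigma^2$, the expectation over $z\sim\mathcal N(0,\eta_2^2 I_d)$ is a convergent Gaussian integral equal to $(1+\eta_2^2/\eta_1^2)^{d/2}\exp\!\big(\tfrac{\eta_2^2\sigma^2}{2\eta_1^2}\|\nabla\log q(u)\|^2\big)$, which (using $\sigma^2\le2\eta_1^2$) is at most $\exp\!\big(\tfrac{d\eta_2^2}{2\eta_1^2}+\eta_2^2\|\nabla\log q(u)\|^2\big)$. Plugging into Step 1,
\[
B\ \le\ e^{d\eta_2^2/\eta_1^2}\ \E_{X,Z_1}\!\left[\1\{\|Z_1\|\le t\}\,\exp\!\big(\eta_2^2\|\nabla\log q(Y_1)\|^2\big)\right].
\]

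\textbf{Step 3 (control the posterior score on $\{\|Z_1\|\le t\}$).} This is where the truncation matters. Since $\nabla\log q(y)=\tfrac1{\sigma^2}\big(\E[X'\mid Y_2=y]-y\big)$ with $Y_2=X'+\mathcal N(0,\sigma^2 I_d)$, $X'\sim P_X$, and on $\{\|Z_1\|\le t\}$ the point $Y_1$ is within $t$ of its own generator $X\sim P_X$, the posterior mean $\E[X'\mid Y_2=Y_1]$ is pulled toward $Y_1$: any $P_X$-mass at distance $D$ from $Y_1$ can displace it by only $D\,e^{-\Omega(D^2/\sigma^2)}$ relative to the weight $e^{-O(t^2/\sigma^2)}$ of the mass within distance $t$. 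Making this quantitative, conditioned on $\{\|Z_1\|\le t\}$ one gets a sub-Gaussian tail $\Pr[\|\nabla\log q(Y_1)\|>\tfrac t{\sigma^2}+s]\le\exp(-\Omega(\sigma^2 s^2))$ for $s\gtrsim\sqrt d/\sigma$, together with the matching $\chi^2$-type concentration below that scale. As $\eta_2^2<\sigma^2\le2\eta_1^2$, the exponential-moment parameter $\eta_2^2$ lies strictly inside the window where this MGF is finite, and one obtains $\E[\1\{\|Z_1\|\le t\}\exp(\eta_2^2\|\nabla\log q(Y_1)\|^2)]\le\exp\!\big(O(\tfrac{t^2\eta_2^2}{\eta_1^4}+\tfrac{d\eta_2^2}{\eta_1^2})\big)$ — the first term from the deterministic shift $t/\sigma^2\le t/\eta_1^2$, the second from the sub-Gaussian fluctuation. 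Combining the three displays proves the bound on $B$, hence the lemma.

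\textbf{Main obstacle.} Steps 1 and 2 are routine Gaussian computations; the crux is Step 3 — converting the purely geometric statement ``$Y_1$ lies within $t$ of a $P_X$-sample'' into a clean sub-Gaussian control of the score $\nabla\log q(Y_1)$ of an \emph{arbitrary}, possibly wildly non-log-concave, Gaussian-smoothed density. The two things one must be careful about are (i) avoiding any $\exp(\Omega(d))$ loss from small-ball probabilities $P_X(B(X,t))$ — which is precisely why the argument goes through the score of $q$ rather than through crude pointwise density lower bounds — and (ii) keeping the moment-generating parameter strictly below $\sigma^{-2}$ so that the $\chi^2$-type exponential moment does not blow up, which is exactly what the hypothesis $\eta_2<\eta_1$ buys.
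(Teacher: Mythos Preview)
Your Steps 1 and 2 are correct and cleanly executed. But Step 3 is not a proof --- it is the entire difficulty, and your justification does not go through.

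The specific issue is the sentence ``any $P_X$-mass at distance $D$ can displace [the posterior mean] by only $D e^{-\Omega(D^2/\sigma^2)}$ relative to the weight $e^{-O(t^2/\sigma^2)}$ of the mass within distance $t$.'' This implicitly assumes that $P_X(B(Y_1,t))$ --- the near-mass against which the far mass competes --- is bounded below by something like $e^{-O(t^2/\sigma^2)}$, or at least is not too small too often under $X\sim P_X$. That is false in general: all you know is that the particular sample $X$ lies in $B(Y_1,t)$, which says nothing about $P_X(B(Y_1,t))$ when $P_X$ is continuous. You flag this yourself as ``(i) avoiding any $\exp(\Omega(d))$ loss from small-ball probabilities,'' but routing through the score does not sidestep the issue --- the score $\nabla\log q(Y_1)$ is governed by exactly the same posterior ratio, and without a lower bound on the near-mass you cannot convert ``$Y_1$ is within $t$ of \emph{one} $P_X$-sample'' into control of that ratio. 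I do not see how to get your claimed tail $\Pr[\|\nabla\log q(Y_1)\|>t/\sigma^2+s]\le e^{-\Omega(\sigma^2 s^2)}$ uniformly over $P_X$; the obvious change-of-measure to $Y_2\sim q$ (where the score \emph{is} sub-Gaussian by Tweedie plus Jensen) picks up a factor $((\eta_1^2+\eta_2^2)/(\eta_1^2-\eta_2^2))^{d/2}$, which is the wrong dependence.

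The paper's proof avoids scores entirely. The trick is not to separate the indicator $\1\{\|Z_1\|\le t\}$ from the density ratio --- which is what costs you the structure --- but to combine them: the quantity $\frac{p(Y_1)}{q(Y_2)}\Pr[\|Z_1\|\le t\mid Y_1]$ is itself a ratio of two $P_X$-integrals, with the numerator's domain restricted to $\{x:\|Y_1-x\|\le t\}$. A ratio of integrals against the same measure is bounded by the supremum of the ratio of integrands, and here that supremum is $\sup_{\|e_1\|\le t}\phi_{\eta_1^2}(e_1)/\phi_{\sigma^2}(e_1+Z_2)$ --- a purely Gaussian quantity, independent of $P_X$, that one computes explicitly and then averages over $Z_2$. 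This one-line mediant bound replaces your entire Step~3 and makes the lemma elementary.
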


\begin{proof}
    First, we turn to bound \[
    F_t(Y_1, Y_2) :=
    \frac{p(Y_1)}
    {p(Y_2)} \cdot \Prb{\norm{Z_1} \le t \mid Y_1} .
    \]
    Note that \[
    F_t(Y_1, Y_2) = \frac{\int_{\norm{s - Y_1} \le t } p(X = s) p(Y_1 \mid X = s) \d s}{\int_{\R^d} p(X = s) p(Y_2 \mid X = s) \d s} =  \frac{
    \int_{\mathbb{R}^d} p_X(s) \phi_{\eta_1^2 I_d}(Y_1 - s) \cdot \mathbf{1}_{\{ \|Y_1 - s\| \le t \} }\d s}{\int_{\mathbb{R}^d} p_X(s) \phi_{(\eta_1^2 + \eta_2^2)I_d}(Y_2 - s) \d s}.
    \]
    We have \[
        F_t(Y_1, Y_2) \le \sup_{s \in \R^d} \frac{ \phi_{\eta_1^2 I_d}(Y_1 - s) \cdot \mathbf{1}_{\{\|Y_1 - s\| \le t\}}}{ \phi_{(\eta_1^2 + \eta_2^2)I_d}(Y_2 - s)} \le \sup_{\norm{s - Y_1} \le t } \frac{ \phi_{\eta_1^2 I_d}(Y_1 - s)}{ \phi_{(\eta_1^2 + \eta_2^2)I_d}(Y_2 - s)}.
    \]
    Write $Y_1 - s = e_1$, and note that $Y_2 - s = e_1 + Z_2$. Then define
    \[
    G(e_1) = \frac{\phi_{\eta_1^2 I_d}(e_1)}{\phi_{(\eta_1^2 + \eta_2^2)I_d}(e_1 + Z_2)}, \quad \quad \norm{e_1} \le t.
    \]
    This gives that for any $Y_1$, $Y_2$, and $t$, \[
        F_t(Y_1, Y_2) \le \sup_{\|e_1\| \leq t} G(e_1).
    \]
    
\paragraph{Bounding $G(e_1)$}
  To bound $\sup_{\|e_1\| \leq t} G(e_1)$, 
we expand $\phi$ as the $d$-dimensional Gaussian probability density function:
\[
G(e_1) = \left(\frac{\eta_1^2 + \eta_2^2}{\eta_1^2}\right)^{d/2} 
\exp\left(-\frac{\|e_1\|^2}{2\eta_1^2} + \frac{\|e_1 + Z_2\|^2}{2(\eta_1^2 + \eta_2^2)}\right).
\]
Using the quadratic expansion $\|e_1 + Z_2\|^2 = \|e_1\|^2 + 2\langle e_1, Z_2 \rangle + \|Z_2\|^2$, we rewrite:
\[
G(e_1) = \left(\frac{\eta_1^2 + \eta_2^2}{\eta_1^2}\right)^{d/2} 
\exp\left(-\frac{\|e_1\|^2}{2\eta_1^2} + \frac{\|e_1\|^2 + 2\langle e_1, Z_2 \rangle + \|Z_2\|^2}{2(\eta_1^2 + \eta_2^2)}\right).
\]
Since $\|e_1\| \leq t$ and $\langle e_1, Z_2 \rangle \leq \|e_1\| \|Z_2\|$, we bound
\[
\frac{2\langle e_1, Z_2 \rangle}{2(\eta_1^2 + \eta_2^2)} \leq \frac{t \|Z_2\|}{\eta_1^2 + \eta_2^2}.
\]
Thus,
\[
G(e_1) \leq \left(\frac{\eta_1^2 + \eta_2^2}{\eta_1^2}\right)^{d/2} 
\exp\left(\frac{\|Z_2\|^2}{2(\eta_1^2 + \eta_2^2)} + \frac{t \|Z_2\|}{\eta_1^2 + \eta_2^2}\right).
\]
Therefore, for any $Y_1, Y_2$, and $t$, we have \[
 F_t(Y_1, Y_2)
\le \left(\frac{\eta_1^2 + \eta_2^2}{\eta_1^2}\right)^{d/2} 
\exp\left(\frac{\|Z_2\|^2}{2(\eta_1^2 + \eta_2^2)} + \frac{t \|Z_2\|}{\eta_1^2 + \eta_2^2}\right).
\]
This gives that
\begin{align*}
    &\ \ \ \,\E_{Y_1, Y_2 \mid \norm{Z_1} \le t}\left[\frac{p(Y_1)}{p(Y_2)}\right] \\
    &= \E_{Y_1, Y_2 \mid \norm{Z_1}  \le t}\left[\frac{F_t(Y_1, Y_2)}{\Prb{\norm{Z_1} \le t \mid Y_1}}\right] \\
    &\le \E_{Y_1, Y_2 \mid \norm{Z_1} \le t}\left[\frac{1}{\Prb{\norm{Z_1} \le t \mid Y_1}} \cdot \left(\frac{\eta_1^2 + \eta_2^2}{\eta_1^2}\right)^{d/2} 
\exp\left(\frac{\|Z_2\|^2}{2(\eta_1^2 + \eta_2^2)} + \frac{t \|Z_2\|}{\eta_1^2 + \eta_2^2}\right)\right]  \\
    &= \left(\frac{\eta_1^2 + \eta_2^2}{\eta_1^2}\right)^{d/2} \E_{Y_1\mid \norm{Z_1} \le t}\left[\frac{1}{\Prb{\norm{Z_1} \le t \mid Y_1}}\right] \cdot \E_{Z_2}\left[ 
\exp\left(\frac{\|Z_2\|^2}{2(\eta_1^2 + \eta_2^2)} + \frac{t \|Z_2\|}{\eta_1^2 + \eta_2^2}\right)\right].
\end{align*}

\paragraph{Bounding expectation over $Z_2$.}
We have \[
    \E_{Z_2}\left[ 
\exp\left(\frac{\|Z_2\|^2}{2(\eta_1^2 + \eta_2^2)} + \frac{t \|Z_2\|}{\eta_1^2 + \eta_2^2}\right)\right] = \Ex[Z \sim \mathcal{N}(0, I_d)]{\exp\left(\frac{\eta_2^2\|Z\|^2}{2(\eta_1^2 + \eta_2^2)} + \frac{t \eta_2\|Z\|}{\eta_1^2 + \eta_2^2}\right)}.
\]
We can apply results on the Gaussian moment generating functions to bound this.
Using~\cref{lem:Gaussian_MGF} by setting $\alpha = \frac{\eta_2^2}{2(\eta_1^2 + \eta_2^2)}$, $\beta = \frac{t\eta_2}{\eta_1^2 + \eta_2^2}$, and $\gamma = \frac{\eta_1^2}{4(\eta_1^2 + \eta_2^2)}$, we have \[
    \Ex[Z \sim \mathcal{N}(0, I_d)]{\exp\left(\frac{\eta_2^2\|Z\|^2}{2(\eta_1^2 + \eta_2^2)} + \frac{t \eta_2\|Z\|}{\eta_1^2 + \eta_2^2}\right)} \le \exp\tuple{\frac{t^2\eta_2^2}{\eta_1^2 (\eta_1^2 + \eta_2^2)}} \cdot \tuple{\frac{2(\eta_1^2 + \eta_2^2)} {\eta_1^2}}^{d/2}.
\]

Finally, this gives \[
        \E_{Y_1, Y_2 \mid \norm{Z_1} \le t}\left[\frac{p(Y_1)}{p(Y_2)}\right] \le  \frac{1}{\Prb{\norm{Z_1} \le t }} \cdot \exp \tuple{ O\tuple{ \frac{d\eta_2^2}{\eta_1^2} + \frac{t^2\eta_2^2}{\eta_1^4}}} .
\]

One need to verify that \[
    \E_{Y_1\mid \norm{Z_1} \le t}\left[\frac{1}{\Prb{\norm{Z_1} \le t \mid Y_1}}\right] \le \frac{1}{\Prb{\norm{Z_1} \le t }}.
\]

Also, \[
    \Ex[Z_2]{ F_t(Y_1, Y_2)} \le \exp \tuple{ O\tuple{ \frac{d\eta_2^2}{\eta_1^2} + \frac{t\eta_2\sqrt{d}}{\eta_1^2}}}.
\]
This gives the result.
\end{proof}

\begin{lemma}
\label{lem:density_ratio_whp}
Let $\eta_1 > \eta_2$ be two positive numbers, and let $X \in \mathbb{R}^d$ be an arbitrary random variable. Define $Y_1 = X + Z_1$ and $Y_2 = Y_1 + Z_2$, where $Z_1 \sim N(0, \eta_1^2 I_d)$ and $Z_2 \sim N(0, \eta_2^2 I_d)$. There exists a constant $C > 0$ such that for any $\lambda > 1$, 
\[
     \Pr_{Y_1, Y_2}\left[\frac{p(Y_1)}{p(Y_2)} \le  \exp \tuple{C \cdot \tuple{ \frac{d\eta_2^2}{\eta_1^2} + \ln \lambda}}\right] \ge 1 - \frac{1}{\lambda}.
\]
where $p(Y_1)$ and $p(Y_2)$ are the densities of $Y_1$ and $Y_2$, respectively.
\end{lemma}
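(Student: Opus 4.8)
The plan is to deduce \cref{lem:density_ratio_whp} from \cref{lem:density_ratio_lemma} by a truncate-then-Markov argument, choosing the truncation radius $t$ as a function of $\lambda$.

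First I would fix $\lambda > 1$ and set $t = \eta_1\sqrt{C_0(d + \log\lambda)}$ for a large enough absolute constant $C_0$. Since $\|Z_1\|^2/\eta_1^2$ is $\chi^2_d$, standard concentration of the norm of a Gaussian vector (e.g., Laurent--Massart) gives $\Pr[\|Z_1\| > t] \le \tfrac{1}{2\lambda}$, and in particular $\Pr[\|Z_1\| \le t] \ge \tfrac12$. The point of this choice is that the extra term in \cref{lem:density_ratio_lemma} then obeys
\[
\frac{t^2\eta_2^2}{\eta_1^4} = \frac{C_0(d+\log\lambda)\,\eta_2^2}{\eta_1^2} \le C_0\,\frac{d\eta_2^2}{\eta_1^2} + C_0\log\lambda,
\]
where the crucial observation is that the $\log\lambda$ contribution is damped by the factor $\eta_2^2/\eta_1^2 < 1$ — this is exactly where the hypothesis $\eta_2 < \eta_1$ enters — so it only adds an $O(\log\lambda)$ term. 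Feeding this, together with $\Pr[\|Z_1\|\le t]\ge\tfrac12$, into \cref{lem:density_ratio_lemma} gives
\[
\E_{Y_1, Y_2 \mid \|Z_1\| \le t}\!\left[\frac{p(Y_1)}{p(Y_2)}\right] \le \exp\!\left(O\!\left(\frac{d\eta_2^2}{\eta_1^2} + \log\lambda\right)\right) =: M.
\]

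Next I would apply Markov's inequality to the nonnegative random variable $p(Y_1)/p(Y_2)$ under the conditional law given $\{\|Z_1\|\le t\}$: with threshold $a = 2\lambda M$ this yields $\Pr[\,p(Y_1)/p(Y_2) > a \mid \|Z_1\|\le t\,] \le M/a = \tfrac{1}{2\lambda}$. Removing the conditioning by a union bound with the tail event $\{\|Z_1\| > t\}$,
\[
\Pr\!\left[\frac{p(Y_1)}{p(Y_2)} > 2\lambda M\right] \le \Pr[\|Z_1\| > t] + \Pr\!\left[\frac{p(Y_1)}{p(Y_2)} > 2\lambda M \mid \|Z_1\| \le t\right] \le \frac{1}{2\lambda} + \frac{1}{2\lambda} = \frac1\lambda.
\]
Finally $2\lambda M = \exp(\log 2 + \log\lambda)\exp\!\big(O(\tfrac{d\eta_2^2}{\eta_1^2}+\log\lambda)\big) = \exp\!\big(O(\tfrac{d\eta_2^2}{\eta_1^2}+\log\lambda)\big)$, so taking the absolute constant $C$ large enough absorbs everything and gives $p(Y_1)/p(Y_2) \le \exp\!\big(C(\tfrac{d\eta_2^2}{\eta_1^2}+\ln\lambda)\big)$ with probability at least $1 - \tfrac1\lambda$, as desired.

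The analytic heavy lifting is entirely contained in \cref{lem:density_ratio_lemma}, so this step is short; the one thing requiring care — and the main obstacle — is the joint choice of $t$. It must be large enough that $\{\|Z_1\| > t\}$ has probability $\lesssim 1/\lambda$ (so the union bound closes), yet small enough that the induced $t^2\eta_2^2/\eta_1^4$ term in \cref{lem:density_ratio_lemma} does not overwhelm the target; these two demands are simultaneously met precisely because $\eta_2 < \eta_1$ damps the $\log\lambda$ part of $t^2$. (For $\lambda$ extremely close to $1$ the claimed bound is near-vacuous and a trivial separate argument covers it; I would not dwell on this.)
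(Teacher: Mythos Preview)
Your proposal is correct and follows essentially the same approach as the paper: choose $t \asymp \eta_1\sqrt{d+\log\lambda}$ so that $\Pr[\|Z_1\|>t]\le \tfrac{1}{2\lambda}$ via Laurent--Massart, plug into \cref{lem:density_ratio_lemma}, apply Markov's inequality conditionally, and union-bound with the tail event. Your explicit observation that $\eta_2<\eta_1$ is precisely what lets $\tfrac{t^2\eta_2^2}{\eta_1^4}$ contribute only $O(\log\lambda)$ is actually a bit clearer than the paper's ``cleaning up the bound a little bit,'' and the parenthetical about $\lambda$ near $1$ is unnecessary since the argument works uniformly for all $\lambda>1$.
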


\begin{proof}
    Let $t = (\sqrt{d} + \sqrt{2 \ln (2\lambda)})\eta_1$.  
    By applying Laurent-Massart bounds (\cref{lem:chi_squared_concentration}), we have \[
        \Prb{\norm{Z_1} \le t} \ge 1 - \frac{1}{2\lambda}.
    \]
    Taking these into \cref{lem:density_ratio_lemma}, we have 
    \begin{align*}
        \E_{Y_1, Y_2 \mid \norm{Z_1} \le t}\left[\frac{p(Y_1)}{p(Y_2)}\right] \le \exp \tuple{ O\tuple{ \frac{d\eta_2^2}{\eta_1^2} + \frac{t^2\eta_2^2}{\eta_1^4}}} 
        \le \exp \tuple{ O \tuple{ \frac{(d + \ln \lambda )\eta_2^2}{\eta_1^2}}}.
    \end{align*}
    By applying Markov's inequality,  for a large enough constant $C > 0$, we have \[
         \Pr_{Y_1, Y_2 \mid \norm{Z_1} \le t}\left[\frac{p(Y_1)}{p(Y_2)} \le \lambda \exp \tuple{C \cdot \tuple{ \frac{(d + \ln \lambda )\eta_2^2}{\eta_1^2}}}\right] \ge 1 - \frac{1}{2\lambda}.
    \]
    Cleaning up the bound a little bit, this implies that  for a large enough constant $C > 0$,  \[
         \Pr_{Y_1, Y_2 \mid \norm{Z_1} \le t}\left[\frac{p(Y_1)}{p(Y_2)} \le  \exp \tuple{C \cdot \tuple{ \frac{d\eta_2^2}{\eta_1^2} + \ln \lambda}}\right] \ge 1 - \frac{1}{2\lambda}.
    \]
    Combining this with the probability that $\norm{Z} \le t$, a union bound gives that \[
         \Pr_{Y_1, Y_2}\left[\frac{p(Y_1)}{p(Y_2)} \le  \exp \tuple{C \cdot \tuple{ \frac{d\eta_2^2}{\eta_1^2} + \ln \lambda}}\right] \ge 1 - \frac{1}{\lambda}.
    \] 
\end{proof}

The $\chi^2$ divergence is 
\begin{align*}
    {\chi^2\left(p(x \mid y_i) \,\|\, p(x \mid y_{i+1}) \right)}
&= \Ex[x \sim p(x \mid y_i)]{ \frac{p(x \mid y_i)}{p(x \mid y_{i+1} )}} - 1\\
&= \Ex[x \sim p(x \mid y_i)]{{\frac{p(y_i\mid x ) }{p(y_{i+1 }  \mid x)} \cdot \frac{p(y_{i+1}  )}{p(y_i)}}} - 1\\
&= \Ex[x \sim p(x \mid y_i)]{{\frac{p(y_i\mid x ) }{p(y_{i+1 }  \mid x)}}  } \cdot \frac{p(y_{i+1}   )}{p(y_i)} 
- 1.
\end{align*}

Now we can bound the $\chi^2$-diversity. 

\begin{lemma}
    \label{lem:chi_squared_whp}
    There exists a constant $C > 0$ such that for any $\lambda > 1$,  \[
        \Prb[y_i, y_{i+1}]{\chi^2\left(p(x \mid y_i) \,\|\, p(x \mid y_{i+1}) \right) \le  \exp \tuple{C \tuple{ \frac{m(\eta_i^2 - \eta_{i+1}^2)}{\eta_{i+1}^2} + \ln \lambda}} } \ge 1 - \frac{1}{\lambda}.
    \]
\end{lemma}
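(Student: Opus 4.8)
The plan is to combine the exact $\chi^2$ identity recorded just above the statement with the two high-probability bounds already established, namely \cref{cor:whp_noise_ratio} and \cref{lem:density_ratio_whp}. Recall that
\[
\chi^2\!\left(p(x \mid y_i)\,\|\,p(x \mid y_{i+1})\right) = \Ex[x \sim p(x \mid y_i)]{\frac{p(y_i\mid x)}{p(y_{i+1}\mid x)}}\cdot \frac{p(y_{i+1})}{p(y_i)} - 1,
\]
so it suffices to control both factors on the right simultaneously with high probability and then multiply.

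For the first factor I would invoke \cref{cor:whp_noise_ratio} with parameter $2\lambda$ in place of $\lambda$: with probability at least $1-\tfrac{1}{2\lambda}$ over $(y_i,y_{i+1})$, we have $\Ex[x\sim p(x\mid y_i)]{p(y_i\mid x)/p(y_{i+1}\mid x)}\le 2\lambda$. For the second factor I would apply \cref{lem:density_ratio_whp}, instantiated with $X \gets Ax$ (a random vector in $\R^m$, so the ``$d$'' of that lemma becomes $m$), $Z_1\sim\mathcal{N}(0,\eta_{i+1}^2 I_m)$ so that $Y_1 = Ax+Z_1 = y_{i+1}$, and $Z_2\sim\mathcal{N}(0,(\eta_i^2-\eta_{i+1}^2)I_m)$ so that $Y_2 = Y_1+Z_2 = y_i$; thus $\eta_1^2=\eta_{i+1}^2$ and $\eta_2^2=\eta_i^2-\eta_{i+1}^2$. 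Running that lemma with parameter $2\lambda$ yields, with probability at least $1-\tfrac{1}{2\lambda}$,
\[
\frac{p(y_{i+1})}{p(y_i)} \le \exp\!\left(C_1\!\left(\frac{m(\eta_i^2-\eta_{i+1}^2)}{\eta_{i+1}^2}+\ln(2\lambda)\right)\right),
\]
where $C_1$ is the absolute constant furnished by \cref{lem:density_ratio_whp}.

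A union bound puts us on an event of probability at least $1-\tfrac1\lambda$ on which both estimates hold, so that
\[
\chi^2\!\left(p(x \mid y_i)\,\|\,p(x \mid y_{i+1})\right) \le 2\lambda\cdot\exp\!\left(C_1\!\left(\frac{m(\eta_i^2-\eta_{i+1}^2)}{\eta_{i+1}^2}+\ln(2\lambda)\right)\right).
\]
Finally I would fold the prefactor into the exponent: since $2\lambda\cdot\exp(C_1\ln(2\lambda)) = (2\lambda)^{C_1+1}$ and $\ln(2\lambda)=\ln 2+\ln\lambda$, using $\lambda>1$ this right-hand side is bounded by $\exp\!\left(C\!\left(\frac{m(\eta_i^2-\eta_{i+1}^2)}{\eta_{i+1}^2}+\ln\lambda\right)\right)$ for a sufficiently large constant $C$ depending only on $C_1$, which is exactly the claimed bound.

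I do not expect a real obstacle here: the two ingredient bounds and the $\chi^2$ identity are already in hand, and the rest is bookkeeping. The only steps needing a little care are the parameter matching when calling \cref{lem:density_ratio_whp} — in particular that the relevant dimension there is $m$ (the dimension of the measurement $y$), not $d$, and that the appropriate noise scales are $\eta_1=\eta_{i+1}$ and $\eta_2^2=\eta_i^2-\eta_{i+1}^2$ — together with the closing algebra that fixes the value of $C$ in terms of $C_1$.
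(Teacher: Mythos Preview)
Your proposal is correct and follows essentially the same approach as the paper: invoke the $\chi^2$ identity, apply \cref{cor:whp_noise_ratio} and \cref{lem:density_ratio_whp} each with parameter $2\lambda$, union-bound, and absorb the $2\lambda$ prefactor into the exponential via a larger constant. Your parameter matching for \cref{lem:density_ratio_whp} (dimension $m$, $Y_1=y_{i+1}$, $Y_2=y_i$, $\eta_1^2=\eta_{i+1}^2$, $\eta_2^2=\eta_i^2-\eta_{i+1}^2$) is exactly right and is in fact spelled out more carefully than in the paper.
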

\begin{proof}
    Note that \[
         {\chi^2\left(p(x \mid y_i) \,\|\, p(x \mid y_{i+1}) \right)} = \Ex[x \sim p(x \mid y_i)]{{\frac{p(y_i\mid x ) }{p(y_{i+1 }  \mid x)}}  } \cdot \frac{p(y_{i+1}   )}{p(y_i)} - 1.
    \] 
    By~\cref{cor:whp_noise_ratio}, we have \[
        \Prb[y_i, y_{i+1}]{\Ex[x \sim p(x \mid y_i)]{{\frac{p(y_i\mid x ) }{p(y_{i+1 }  \mid x)}}} \le  2\lambda} \ge 1 -  \frac{1}{2\lambda}.
    \]
    By~\cref{lem:density_ratio_whp}, there exists a constant $C > 0$ such that\[
         \Pr_{Y_1, Y_2}\left[\frac{p(Y_1)}{p(Y_2)} \le  \exp \tuple{C \tuple{ \frac{m(\eta_i^2 - \eta_{i+1}^2)}{\eta_{i+1}^2} + \ln \lambda}}\right] \ge 1 - \frac{1}{2\lambda}.
    \]
    A union bound over these two implies that with probability of $1 - 1 / \lambda$, \[
     {{\frac{p(y_i\mid x ) }{p(y_{i+1 }  \mid x)}}  } \cdot \frac{p(y_{i+1}   )}{p(y_i)} 
- 1 \le 2 \lambda \cdot \exp \tuple{C \tuple{ \frac{m(\eta_i^2 - \eta_{i+1}^2)}{\eta_{i+1}^2} + \ln \lambda}} \le \exp \tuple{C' \tuple{ \frac{m(\eta_i^2 - \eta_{i+1}^2)}{\eta_{i+1}^2} + \ln \lambda}},
    \]
    where $C'$ is a positive constant. This concludes the lemma.
\end{proof}

\subsection{Convergence time of Langevin dynamics}
We present the following result on the convergence of Langevin dynamics:

\begin{lemma}[\cite{Dal17}]
\label{lem:Dal17}
Let  $p$ and $q$ be probability distributions such that $q$ is an $\alpha$-strong log-concave distribution. 
Consider the Langevin dynamics initialized with $p$ as the starting distribution. Then, for any $t \ge 0$, we have \[
    \TV(p_t, q) \le \frac{1}{2} \chi^2(p \,\|\, q)^{1/2} e^{-t\alpha/2}.
\]
\end{lemma}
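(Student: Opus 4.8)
This is the standard exponential contraction of the Langevin diffusion towards a strongly log-concave stationary law, measured in $\chi^2$ and then downgraded to total variation. The plan is as follows. Let $q$ be the stationary measure of the Langevin SDE $\d X_t = \grad\log q(X_t)\,\d t + \sqrt 2\,\d B_t$, let $p_t$ be the law of $X_t$ with $p_0 = p$, and set $h_t := p_t/q$. If $\chi^2(p \,\|\, q) = \infty$ the bound is vacuous, so we may assume $p \ll q$. The Fokker--Planck equation $\partial_t p_t = \Delta p_t - \grad\cdot(p_t\,\grad\log q)$ can be rewritten as $\partial_t p_t = \grad\cdot(q\,\grad h_t)$, from which $\partial_t h_t = \mathcal L h_t$ with $\mathcal L f = \Delta f + \grad\log q\cdot\grad f$; this generator is self-adjoint in $L^2(q)$ with Dirichlet form $\E_q[\norm{\grad f}^2]$.

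The core of the argument is a one-line differential inequality. Since $\chi^2(p_t \,\|\, q) = \E_q[h_t^2] - 1$, differentiating under the integral and integrating by parts gives $\frac{\d}{\d t}\chi^2(p_t \,\|\, q) = 2\,\E_q[h_t\,\mathcal L h_t] = -2\,\E_q[\norm{\grad h_t}^2]$. By the Bakry--Émery criterion, $\alpha$-strong log-concavity of $q$ (i.e.\ $\grad^2(-\log q) \succeq \alpha I$) implies that $q$ satisfies a Poincaré inequality with constant $\alpha$, so taking $g = h_t$ (and using $\Var_q(h_t) = \chi^2(p_t \,\|\, q)$) yields $\E_q[\norm{\grad h_t}^2] \ge \alpha\,\chi^2(p_t \,\|\, q)$. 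Hence $\frac{\d}{\d t}\chi^2(p_t \,\|\, q) \le -2\alpha\,\chi^2(p_t \,\|\, q)$, and Grönwall gives $\chi^2(p_t \,\|\, q) \le e^{-2\alpha t}\,\chi^2(p \,\|\, q)$. Finally, the Cauchy--Schwarz comparison $\TV(\mu,\nu) = \tfrac12\int|\mu-\nu| \le \tfrac12\big(\int(\mu-\nu)^2/\nu\big)^{1/2} = \tfrac12\,\chi^2(\mu \,\|\, \nu)^{1/2}$ gives $\TV(p_t, q) \le \tfrac12\,\chi^2(p \,\|\, q)^{1/2}\,e^{-\alpha t}$, which is at least as strong as the claimed $\tfrac12\,\chi^2(p\,\|\,q)^{1/2}e^{-t\alpha/2}$ (the weaker exponent in the statement is the conservative rate recorded in \cite{Dal17}, reflecting its time normalization of the SDE).

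I expect the only real obstacle to be the routine-but-careful justification of the formal steps rather than any conceptual difficulty: establishing enough regularity and integrability of $p_t$ (via parabolic smoothing of the Fokker--Planck equation for $t > 0$) to differentiate $\chi^2(p_t \,\|\, q)$ under the integral sign and integrate by parts, and checking that the Poincaré inequality applies to the possibly-unbounded test function $h_t$ (handled by a standard truncation/approximation argument). The genuine mathematical input is entirely the Bakry--Émery Poincaré inequality for strongly log-concave measures; everything else is an energy--dissipation computation plus Grönwall, so for the precise constants we may simply cite \cite{Dal17}.
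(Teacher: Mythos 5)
Your proposal is correct, but note that the paper itself gives no proof of this lemma at all: it is imported verbatim as a black-box result from \cite{Dal17}, so the only "comparison" is with the cited source. Your argument is the standard one underlying that result: rewrite the Fokker--Planck equation in terms of $h_t = p_t/q$, differentiate $\chi^2(p_t\,\|\,q)=\E_q[h_t^2]-1$ to get the energy--dissipation identity $\tfrac{\d}{\d t}\chi^2(p_t\,\|\,q)=-2\,\E_q[\norm{\grad h_t}^2]$, invoke the Poincar\'e inequality with constant $\alpha$ (Bakry--\'Emery for $\alpha$-strongly log-concave $q$), apply Gr\"onwall, and finish with the Cauchy--Schwarz bound $\TV\le\tfrac12\chi^2(\cdot\,\|\,q)^{1/2}$. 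All of these steps check out, and in fact you obtain the stronger rate $e^{-\alpha t}$; the exponent $e^{-t\alpha/2}$ in the statement is a weaker (convention-dependent) form and is implied by your bound. The only loose ends are the regularity/integrability justifications for differentiating under the integral, integrating by parts, and applying Poincar\'e to the possibly unbounded $h_t$, which you correctly identify as routine and propose to handle by standard parabolic-smoothing and truncation arguments (or simply by citing \cite{Dal17}, as the paper does); with that caveat made explicit, the proof is complete.
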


This implies that 
\begin{lemma}
\label{lem:Dal_T_bound}
Let $p$ and $q$ be probability distributions such that $q$ is an $\alpha$-strong log-concave distribution. 
Consider the Langevin dynamics initialized with $p$ as the starting distribution. By running the diffusion for time \[
    T = O\tuple{\frac{\log \left(1 / \eps\right) + \log \chi^2(p \| q)}{\alpha}},
\]
we have $\TV(p_T, q) \le \eps$.
\end{lemma}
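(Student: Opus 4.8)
The plan is to simply invoke \cref{lem:Dal17} and solve for the time at which its bound drops below $\eps$. Concretely, \cref{lem:Dal17} gives $\TV(p_t, q) \le \tfrac12 \chi^2(p \| q)^{1/2} e^{-t\alpha/2}$ for every $t \ge 0$, so it suffices to pick $t$ large enough that $\tfrac12 \chi^2(p \| q)^{1/2} e^{-t\alpha/2} \le \eps$. Dividing by $\tfrac12$ and taking logarithms, this is equivalent to $\tfrac{t\alpha}{2} \ge \tfrac12 \log \chi^2(p \| q) + \log \tfrac{1}{2\eps}$, i.e.
\[
    t \ge \frac{1}{\alpha}\left(\log \chi^2(p \| q) + 2\log\frac{1}{2\eps}\right).
\]
Hence $T = O\!\left(\frac{\log(1/\eps) + \log \chi^2(p \| q)}{\alpha}\right)$ suffices, which is the claim.

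The only point that needs a word of care is the regime where $\chi^2(p \| q) \le 4\eps^2$, where the displayed lower bound on $t$ may be nonpositive: there is nothing to prove, since \cref{lem:Dal17} at $t = 0$ already gives $\TV(p_0, q) \le \tfrac12 \chi^2(p \| q)^{1/2} \le \eps$, and the right-hand side of the \cref{lem:Dal17} estimate is non-increasing in $t$, so the bound persists for all larger $t$. Thus we may always take $T \ge 0$, reading the $O(\cdot)$ expression with the logarithms replaced by their positive parts. I do not anticipate any real obstacle: this lemma is a bookkeeping restatement of \cref{lem:Dal17} in the form that will be convenient once combined with the $\chi^2$-divergence bounds (e.g.\ \cref{lem:chi_squared_whp}) to establish \cref{lem:main_convergence_lemma}.
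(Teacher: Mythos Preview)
Your proposal is correct and takes essentially the same approach as the paper, which presents this lemma as an immediate consequence of \cref{lem:Dal17} without spelling out the algebra. Your handling of the edge case $\chi^2(p\|q)\le 4\eps^2$ is a nice touch that the paper omits.
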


Now we show that the posterior distribution is even more strongly log-concave than prior distribution.

\begin{lemma}
    \label{lem:posterior_log_concave}
    Suppose that $p(x)$ is $\alpha$-strongly log-concave. Then, the posterior density
    \[
    p(x \mid Ax + N(\eta_i^2 I_m) = y_i)
    \]
    is $\alpha$-strongly log-concave.
    \end{lemma}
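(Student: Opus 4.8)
The plan is to write the log-posterior via Bayes' rule and observe that conditioning on a linear-Gaussian measurement only \emph{adds} a concave quadratic, so strong log-concavity is preserved (in fact improved). Concretely, for fixed $y_i$,
\[
\log p(x \mid y_i) = \log p(x) + \log p(y_i \mid x) - \log p(y_i),
\]
where the last term is a constant in $x$. Since $y_i \mid x \sim \mathcal{N}(Ax, \eta_i^2 I_m)$, we have
\[
\log p(y_i \mid x) = -\frac{\|y_i - Ax\|^2}{2\eta_i^2} + \text{(const independent of $x$)},
\]
which is a concave quadratic form in $x$.

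The key step is then the Hessian computation. Assuming $p$ is twice differentiable, $\nabla_x^2 \log p(y_i \mid x) = -\tfrac{1}{\eta_i^2} A^\top A \preceq 0$ because $A^\top A \succeq 0$. Combining with the hypothesis $\nabla^2 \log p(x) \preceq -\alpha I_d$, we get
\[
\nabla_x^2 \log p(x \mid y_i) \;=\; \nabla^2 \log p(x) \;-\; \frac{1}{\eta_i^2} A^\top A \;\preceq\; -\alpha I_d,
\]
which is exactly $\alpha$-strong log-concavity of $p(x\mid y_i)$. (If one does not wish to assume smoothness of $p$, use the definition that $-\log p(x) - \tfrac{\alpha}{2}\|x\|^2$ is convex; then $-\log p(x\mid y_i) - \tfrac{\alpha}{2}\|x\|^2$ equals that convex function plus the convex quadratic $\tfrac{1}{2\eta_i^2}\|y_i - Ax\|^2$ plus a constant, hence is convex.) I would also remark, matching the sentence preceding the lemma, that the bound is typically slack: $p(x\mid y_i)$ is in fact $\big(\alpha + \lambda_{\min}(A^\top A)/\eta_i^2\big)$-strongly log-concave, but we only need $\alpha$.

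There is no real obstacle here; the statement is essentially a one-line consequence of the additivity of Hessians and $A^\top A \succeq 0$. The only points that need a sentence of care are (i) noting the normalizing constant $\log p(y_i)$ drops out since it does not depend on $x$, and (ii) stating which regularity assumption on $p$ one uses so that "strongly log-concave" is interpreted consistently (Hessian form versus convexity-of-$-\log p - \tfrac{\alpha}{2}\|\cdot\|^2$ form); both routes give the same conclusion.
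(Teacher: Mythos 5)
Your proposal is correct and follows essentially the same route as the paper: apply Bayes' rule, note the normalizer is constant in $x$, and add the Hessian of the Gaussian likelihood term $\tfrac{1}{\eta_i^2}A^\top A \succeq 0$ to the $\alpha$-strongly convex negative log-prior. The extra remarks (nonsmooth formulation, the sharper constant $\alpha + \lambda_{\min}(A^\top A)/\eta_i^2$) are fine but not needed.
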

    \begin{proof}
        By Bayes’ rule, the posterior density can be written (up to normalization) as
        \[
        p\bigl(x \mid Ax + N(\eta_i^2 I_m)=y_i\bigr)
        \;\propto\;
        p(x)\,\exp\!\Bigl(-\tfrac1{2\eta_i^2}\|Ax - y_i\|_2^2\Bigr).
        \]
        Define the negative log–posterior
        \[
        \varphi(x)
        :=-\log p(x) + \tfrac1{2\eta_i^2}\|Ax - y_i\|_2^2.
        \]
        Since \(p\) is \(\alpha\)\nobreakdash-strongly log‑concave, its negative log–density satisfies
        \[
        \nabla^2\bigl(-\log p(x)\bigr)\;\succeq\;\alpha I.
        \]
        Moreover, the Gaussian likelihood term has
        \[
        \nabla^2\!\Bigl(\tfrac1{2\eta_i^2}\|Ax - y_i\|_2^2\Bigr)
        =\tfrac1{\eta_i^2}\,A^{T}A
        \;\succeq\;0.
        \]
        By the sum rule for Hessians,
        \[
        \nabla^2\varphi(x)
        =\nabla^2\bigl(-\log p(x)\bigr)
        \;+\;\tfrac1{\eta_i^2}A^{T}A
        \;\succeq\;\alpha I.
        \]
        Hence \(\varphi\) is \(\alpha\)\nobreakdash-strongly convex, and the posterior density \(p(x\mid Ax+N(\eta_i^2I_m)=y_i)\propto e^{-\varphi(x)}\) is \(\alpha\)\nobreakdash-strongly log‑concave.
        \end{proof}

Now we are ready to prove \cref{lem:main_convergence_lemma}:

\begin{proof}[Proof of~\cref{lem:main_convergence_lemma}]
    By~\cref{lem:posterior_log_concave}, $p(x \mid y_{i+1})$ is $alpha$-strongly log-concave. This allows us to apply \cref{lem:Dal_T_bound}. Therefore, to achieve $\eps$ TV error in convergence, we only need to run the process for \[
        T = O \tuple{\frac{\log(1 / \eps) + \log \chi^2 (p(x \mid y_i) \, \| \, p(x \mid y_{i+1}))}{\alpha}}.
    \]
    Taking in the result in~\cref{lem:chi_squared_whp}, we have with $1 - \frac{1}{\lambda}$ probability over $y_{i}$ and $y_{i+1}$, we only need\[
        T = O\tuple{\frac{m\gamma_i + \log(\lambda / \eps)}{\alpha}}.
    \]

\end{proof}

\subsection{Utility Lemmas.}
\begin{lemma}
\label{lem:Gaussian_MGF}
Let \(Z \sim \mathcal{N}(0,I_d)\) be a \(d\)-dimensional standard Gaussian random vector, and let \(\alpha,\beta\in\mathbb{R}\). For any \(\gamma>0\) satisfying $\alpha+\gamma < \frac{1}{2}$, we have
\[
\mathbb{E}\Bigl[\exp\Bigl(\alpha\|Z\|^2 + \beta\|Z\|\Bigr)\Bigr] \le \exp\Bigl(\frac{\beta^2}{4\gamma}\Bigr) \, (1-2(\alpha+\gamma))^{-d/2}.
\]
\end{lemma}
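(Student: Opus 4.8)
The plan is to decouple the linear term $\beta\|Z\|$ from the quadratic term by a completing‑the‑square bound, and then reduce to the moment generating function of a chi‑squared random variable.

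First I would record the elementary pointwise inequality: for every $t \ge 0$ and every $\gamma > 0$,
\[
\beta t \;\le\; \gamma t^2 + \frac{\beta^2}{4\gamma},
\]
which follows from $\gamma t^2 - \beta t + \frac{\beta^2}{4\gamma} = \gamma\bigl(t - \tfrac{\beta}{2\gamma}\bigr)^2 \ge 0$ when $\beta \ge 0$, and is trivial when $\beta < 0$ since then the left‑hand side is nonpositive while the right‑hand side is nonnegative. Applying this with $t = \|Z\|$ gives the pointwise bound
\[
\alpha\|Z\|^2 + \beta\|Z\| \;\le\; (\alpha+\gamma)\|Z\|^2 + \frac{\beta^2}{4\gamma}.
\]

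Next I would take expectations of $\exp(\cdot)$ on both sides and pull out the deterministic factor:
\[
\E\bigl[\exp(\alpha\|Z\|^2 + \beta\|Z\|)\bigr] \;\le\; \exp\!\Bigl(\tfrac{\beta^2}{4\gamma}\Bigr)\,\E\bigl[\exp((\alpha+\gamma)\|Z\|^2)\bigr].
\]
Finally, since $\|Z\|^2 \sim \chi^2_d$, its moment generating function is $\E[\exp(s\|Z\|^2)] = (1-2s)^{-d/2}$ for every $s < \tfrac12$ (obtained by taking the $d$‑fold product of the one‑dimensional identity $\int e^{s z^2} e^{-z^2/2}\,\d z / \sqrt{2\pi} = (1-2s)^{-1/2}$). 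The hypothesis $\alpha+\gamma < \tfrac12$ guarantees $s = \alpha+\gamma$ lies in the valid range, so $\E[\exp((\alpha+\gamma)\|Z\|^2)] = (1-2(\alpha+\gamma))^{-d/2}$, and combining this with the previous display yields exactly the claimed bound.

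There is essentially no obstacle here; the only points requiring a moment of care are that the completing‑the‑square step must also cover negative $\beta$ (handled above by the trivial case) and that one should either cite a standard reference for the chi‑squared MGF or include the one‑line product computation above. The finiteness of all quantities involved is immediate from $\alpha+\gamma<\tfrac12$.
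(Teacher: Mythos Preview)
Your proof is correct and follows essentially the same route as the paper: bound $\beta\|Z\|$ by $\gamma\|Z\|^2 + \beta^2/(4\gamma)$ (the paper phrases this as AM--GM, you phrase it as completing the square), then apply the chi-squared MGF. One very minor remark: your completing-the-square identity $\gamma t^2 - \beta t + \beta^2/(4\gamma) = \gamma(t-\beta/(2\gamma))^2 \ge 0$ already holds for all real $\beta$, so the separate treatment of $\beta<0$ is unnecessary, though harmless.
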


\begin{proof}
For all \(r\ge 0\) and any \(\gamma>0\), it is easy to check that by AM-GM inequality,
\[
\beta\,r \le \gamma\,r^2 + \frac{\beta^2}{4\gamma}.
\]
Taking \(r=\|Z\|\) and exponentiating both sides, we obtain
\[
\exp\Bigl(\beta\|Z\|\Bigr) \le \exp\Bigl(\gamma\,\|Z\|^2 + \frac{\beta^2}{4\gamma}\Bigr).
\]
Multiplying both sides by \(\exp\Bigl(\alpha\|Z\|^2\Bigr)\) yields
\[
\exp\Bigl(\alpha\|Z\|^2+\beta\|Z\|\Bigr) \le \exp\Bigl(\frac{\beta^2}{4\gamma}\Bigr) \exp\Bigl((\alpha+\gamma)\|Z\|^2\Bigr).
\]
This gives that
\[
\mathbb{E}\Bigl[\exp\Bigl(\alpha\|Z\|^2+\beta\|Z\|\Bigr)\Bigr] \le \exp\Bigl(\frac{\beta^2}{4\gamma}\Bigr) \, \mathbb{E}\Bigl[\exp\Bigl((\alpha+\gamma)\|Z\|^2\Bigr)\Bigr].
\]
For \(Z \sim \mathcal{N}(0,I_d)\) , when \(\alpha+\gamma<\tfrac{1}{2}\) we have
\[
\mathbb{E}\Bigl[\exp\Bigl((\alpha+\gamma)\|Z\|^2\Bigr)\Bigr] = (1-2(\alpha+\gamma))^{-d/2},
\]
Hence,
\[
\mathbb{E}\Bigl[\exp\Bigl(\alpha\|Z\|^2+\beta\|Z\|\Bigr)\Bigr] \le \exp\Bigl(\frac{\beta^2}{4\gamma}\Bigr) \, (1-2(\alpha+\gamma))^{-d/2}.
\]
\end{proof}

\begin{lemma}[Laurent-Massart Bounds\cite{LaurentMassart}]  \label{lem:chi_squared_concentration}
    Let $v \sim \mathcal{N}(0, I_m)$. For any $t > 0$, 
    \[
    \Pr[\norm{v}^2 - m \ge 2 \sqrt{mt}+ 2t] \le e^{-t},
    \]
        \[\Pr[\norm{v}^2 - m \leq -2\sqrt{mt}] \leq e^{-t}.\]
\end{lemma}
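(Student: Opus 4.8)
The plan is to run the standard Cramér–Chernoff argument, using the exact moment generating function of the chi-squared law. Writing $\norm{v}^2 = \sum_{j=1}^m v_j^2$ as a sum of $m$ i.i.d.\ squared standard Gaussians, one has for every $\lambda < 1/2$ that $\E\tuple{e^{\lambda(\norm{v}^2-m)}} = \exp\tuple{-\lambda m - \tfrac m2\log(1-2\lambda)}$, and for every $\lambda > 0$ that $\E\tuple{e^{-\lambda(\norm{v}^2-m)}} = \exp\tuple{\lambda m - \tfrac m2\log(1+2\lambda)}$. Both tail bounds will then follow from Markov's inequality applied to these exponential moments, and the only real work is choosing the optimal $\lambda$ in each case.

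For the upper tail I would set $\phi(\lambda) := -\lambda - \tfrac12\log(1-2\lambda)$, note $\phi'(\lambda) = \tfrac{2\lambda}{1-2\lambda}$ is increasing, and deduce $\phi(\lambda) \le \tfrac{\lambda^2}{1-2\lambda}$ for $0 \le \lambda < 1/2$, giving $\Pr\tuple{\norm{v}^2 - m \ge u} \le \exp\tuple{\tfrac{m\lambda^2}{1-2\lambda} - \lambda u}$. The reparametrization $\mu := \tfrac{\lambda}{1-2\lambda}$, i.e.\ $\lambda = \tfrac{\mu}{1+2\mu} \in (0,\tfrac12)$, rewrites the exponent as $\tfrac{\mu(m\mu - u)}{1+2\mu}$; plugging in $\mu = \sqrt{t/m}$ and $u = 2\sqrt{mt} + 2t$ makes this exactly $-t$, which is the first claimed bound. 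For the lower tail I would instead use $\psi(\lambda) := \lambda - \tfrac12\log(1+2\lambda)$, note $\psi'(\lambda) = \tfrac{2\lambda}{1+2\lambda} \le 2\lambda$ so $\psi(\lambda) \le \lambda^2$, hence $\Pr\tuple{\norm{v}^2 - m \le -u} \le \exp\tuple{m\lambda^2 - \lambda u}$; optimizing at $\lambda = \tfrac{u}{2m}$ gives $\exp\tuple{-\tfrac{u^2}{4m}}$, which equals $e^{-t}$ exactly when $u = 2\sqrt{mt}$.

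There is no genuine obstacle here. The only place that needs a bit of care is the upper-tail optimization: the clean constants $2\sqrt{mt}+2t$ emerge precisely from the exact choice $\mu = \sqrt{t/m}$ (equivalently, from solving $m\phi'(\lambda) = u$ exactly rather than from a loose Taylor bound on $-\log(1-2\lambda)$, which would only recover the correct order). Since these are the classical Laurent–Massart inequalities — Lemma~1 of the cited paper with unit weights — an alternative is simply to invoke the reference without reproving them.
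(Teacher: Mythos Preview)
Your proposal is correct. The paper itself does not prove this lemma; it simply states it as a citation to Laurent--Massart, so your self-contained Cram\'er--Chernoff argument (with the exact choice $\mu=\sqrt{t/m}$ for the upper tail and $\lambda=u/(2m)$ for the lower tail) is more than what the paper provides and matches the standard derivation in the cited reference.
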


\section{Convergence Between Locally Well-Conditioned Distributions}

\label{sec:local_convergence_time}

In the last section, we considered the convergence time between two posterior distributions of a globally strongly log-concave distribution. In this section, we will relax the assumption of global strong log-concavity and consider the convergence time between two distributions that are locally ``well-behaved''. We give the following formal definition:

\begin{definition}
    \label{def:local_well_conditioned}
    For $\delta \in [0, 1)$ and $R, \wt L, \alpha \in (0, +\infty]$, we say that a distribution $p$ is $(\delta, r, R, \wt L, \alpha)$ mode-centered locally well-conditioned if there exists $\theta$ such that
    \begin{itemize}
        \item $\grad \log p(\theta) = 0$.
        \item $\Prb[x \sim p]{x \in B(\theta, r)} \ge 1 - \delta$.
        \item For $x, y \in B(\theta, R)$, we have that $\|s(x) - s(y)\| \le \wt{L} {\alpha} \norm{x - y}$.
        \item For $x, y \in B(\theta, R)$, we have that $\langle s(y)-s(x) , x - y \rangle \geq \alpha \norm{x - y}^2$.
    \end{itemize}
\end{definition}

Again, we consider the following process $P$, which is identical to process~\eqref{eq:p_continuous_process} we considered in the last section:

\begin{align*}
    d x_t = \left(s(x_t) + \frac{A^T y_{i+1} - A^T A x_t}{\eta_{i+1}^2} \right)dt + \sqrt{2} d B_t, \quad x_0 \sim p(x \mid y_i)
\end{align*}    

Our goal is to prove the following lemma:
\begin{lemma}
    \label{lem:main_local_convergence_lemma}
    Suppose $p$ is a $(\delta, r, R, \wt L, \alpha)$ mode-centered locally well-conditioned distribution. Let $C > 0$ be a large enough constant. We consider the process $P$ running for time  
    \[
    T  \ge C\left(\frac{m\gamma_i + \log(\lambda / \eps)}{\alpha}\right).
    \]  
    Suppose that \[
        R \ge r+\frac{T\|A\|}{\eta_{i+1}^2}\Bigl(\|A\|r+\eta_{i+1}\bigl(\sqrt{m}+\sqrt{2\ln(1/\delta)}\bigr)\Bigr)+2\sqrt{dT\ln(2d/\delta)}.
    \]
    Then $x_T \sim P_T$ satisfies that  
    \[
    \Prb[y_i, y_{i+1}]{\TV(x_T, p(x \mid y_{i+1})) \le \eps + \lambda \delta} \geq 1 - O(\lambda^{-1}).
    \]
\end{lemma}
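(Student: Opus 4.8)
The plan is to reduce to the globally strongly log‑concave case of \cref{lem:main_convergence_lemma} by replacing $p$ with a surrogate prior and paying only where the two differ. Since $p$ is $(\delta,r,R,\wt L,\alpha)$ mode‑centered locally well‑conditioned (\cref{def:local_well_conditioned}), the negative log‑density $\varphi=-\log p$ is $\alpha$‑strongly convex on the convex ball $B(\theta,R)$, so a standard extension argument (take the biconjugate of $\varphi-\tfrac{\alpha}{2}\|\cdot-\theta\|^2$ restricted to $B(\theta,R)$, add $\tfrac{\alpha}{2}\|\cdot-\theta\|^2$ back, mollify) yields a density $\bar p\propto e^{-\bar\varphi}$ that is globally $\alpha$‑strongly log‑concave and agrees with $p$ on $B(\theta,R)$. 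Because $R$ exceeds the hypothesized lower bound we have $R\gtrsim\sqrt{d\ln(1/\delta)/\alpha}$, so the $\alpha$‑strongly log‑concave $\bar p$ (with mode $\theta$) puts all but $O(\delta)$ of its mass in $B(\theta,R)$; combined with $\Pr_{x\sim p}[x\in B(\theta,r)]\ge 1-\delta$ this gives $\TV(p,\bar p)=O(\delta)$. Coupling $x\sim p$ with $x\sim\bar p$ at the source, we may assume $x\sim\bar p$ throughout (hence treat $(y_i,y_{i+1})$ as generated from $\bar p$) at a cost of $O(\delta)=O(\lambda^{-1})$ in the final failure probability.

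Now I set up the surrogate process. Let $\bar s=\grad\log\bar p$, $\bar q_j:=\bar p(x\mid y_j)$, and run $\d\bar x_t=(\bar s(\bar x_t)+A^\top(y_{i+1}-A\bar x_t)/\eta_{i+1}^2)\,\d t+\sqrt2\,\d B_t$ from $\bar x_0\sim\bar q_i$, driven by the same Brownian motion $B_t$ as the true process $P$. By \cref{lem:posterior_log_concave} applied to $\bar p$, $\bar q_{i+1}$ is $\alpha$‑strongly log‑concave, and the proofs leading to \cref{lem:chi_squared_whp} use nothing about the prior, so with probability $\ge 1-O(\lambda^{-1})$ over $(y_i,y_{i+1})$ we get $\chi^2(\bar q_i\,\|\,\bar q_{i+1})\le\exp(O(m\gamma_i+\ln\lambda))$. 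Then \cref{lem:Dal_T_bound} shows that running for $T=\Theta((m\gamma_i+\log(\lambda/\eps))/\alpha)$ — exactly the hypothesized duration — gives $\TV(\bar x_T,\bar q_{i+1})\le\eps$. It remains to (i) show the true process $P$ coincides with $\bar x_t$ on $[0,T]$ with high probability, and (ii) show $\TV(\bar q_{i+1},p(x\mid y_{i+1}))=O(\lambda\delta)$. For (ii): on the event $\{\|y_{i+1}-A\theta\|\le\|A\|r+\eta_{i+1}(\sqrt m+\sqrt{2\ln(1/\delta)})\}$, which holds with probability $\ge 1-O(\delta)$ since $y_{i+1}=Ax+\mathcal N(0,\eta_{i+1}^2 I_m)$ with $x\in B(\theta,r)$ whp (\cref{lem:chi_squared_concentration}), the mode of the $\alpha$‑strongly log‑concave $\bar q_{i+1}$ lies within $\|A^\top(y_{i+1}-A\theta)\|/(\alpha\eta_{i+1}^2)\le R/C$ of $\theta$, so $\bar q_{i+1}$ puts all but $O(\delta)$ mass in $B(\theta,R)$; and since $\E_{y_{i+1}}[\Pr_{x\sim p(x\mid y_{i+1})}[x\notin B(\theta,R)]]=\Pr_{x\sim p}[x\notin B(\theta,R)]\le\delta$, Markov's inequality gives that $p(x\mid y_{i+1})$ also puts all but $O(\lambda\delta)$ mass in $B(\theta,R)$ with probability $\ge 1-\lambda^{-1}$. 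As $p=\bar p$ on $B(\theta,R)$ the two posteriors are proportional there, so $\TV(\bar q_{i+1},p(x\mid y_{i+1}))=O(\lambda\delta)$; the same argument gives $\TV(p(x\mid y_i),\bar q_i)=O(\lambda\delta)$, which lets us couple the initializations of $P$ and $\bar x_t$ to be equal and inside $B(\theta,r)$ with probability $\ge 1-O(\lambda\delta)$.

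For (i), the main obstacle is a confinement estimate: $\bar x_t\in B(\theta,R)$ for all $t\le T$. Since $\bar s=s$ on $B(\theta,R)$, the two SDEs have identical coefficients there, so once $\bar x_0=x_0$ they remain coupled up to the first exit time $\tau=\inf\{t:\bar x_t\notin B(\theta,R)\}$ — and this is the only way we ever use $P$, so its (possible) ill‑posedness outside $B(\theta,R)$ is irrelevant. Decompose the drift as $s(x)+v-A^\top A(x-\theta)/\eta_{i+1}^2$ with $v:=A^\top(y_{i+1}-A\theta)/\eta_{i+1}^2$: strong monotonicity of $s$ together with $s(\theta)=0$ gives $\langle x-\theta,s(x)\rangle\le-\alpha\|x-\theta\|^2$, and $-A^\top A/\eta_{i+1}^2\preceq0$, so both the prior‑score and the likelihood‑Hessian parts point inward, and the only outward push comes from the fixed bias $v$ and from $B_t$. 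Itô's formula on $\|\bar x_{t\wedge\tau}-\theta\|^2$ yields, for $t<\tau$, $\d\|\bar x_t-\theta\|^2\le(-2\alpha\|\bar x_t-\theta\|^2+2\|\bar x_t-\theta\|\,\|v\|+2d)\,\d t+2\sqrt2\langle\bar x_t-\theta,\d B_t\rangle$; using $-2\alpha u^2+2u\|v\|\le\|v\|^2/(2\alpha)$ and bounding the martingale (whose quadratic variation is $\le 8R^2 T$ before $\tau$) by $O(R\sqrt{T\ln(1/\delta)})$ via the Burkholder--Davis--Gundy / Dubins--Schwarz inequality, a stopping‑time bootstrap gives $\sup_{t\le T}\|\bar x_t-\theta\|^2\le r^2+\|v\|^2 T/(2\alpha)+2dT+O(R\sqrt{T\ln(1/\delta)})$ with probability $\ge 1-O(\delta)$. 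The hypothesis $R\ge r+\tfrac{T\|A\|}{\eta_{i+1}^2}(\|A\|r+\eta_{i+1}(\sqrt m+\sqrt{2\ln(1/\delta)}))+2\sqrt{dT\ln(2d/\delta)}$, together with $T=\Omega(1/\alpha)$ and the bound $\|v\|\le\frac{\|A\|}{\eta_{i+1}^2}(\|A\|r+\eta_{i+1}(\sqrt m+\sqrt{2\ln(1/\delta)}))$, makes this right‑hand side strictly $<R^2$, forcing $\tau>T$; hence $\bar x_T=x_T$ on an event of probability $\ge 1-O(\lambda\delta)$ (intersecting the coupled‑initialization event, the Laurent--Massart bound on $\|v\|$, and the martingale bound). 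Combining, $\TV(x_T,p(x\mid y_{i+1}))\le\TV(x_T,\bar x_T)+\TV(\bar x_T,\bar q_{i+1})+\TV(\bar q_{i+1},p(x\mid y_{i+1}))\le O(\lambda\delta)+\eps+O(\lambda\delta)$ on an event of probability $\ge 1-O(\lambda^{-1})$ over $(y_i,y_{i+1})$, and replacing $\lambda$ by a constant multiple gives the stated $\eps+\lambda\delta$ with probability $1-O(\lambda^{-1})$. The hardest parts are the confinement bootstrap — in particular matching the exact three‑term form of the $R$ hypothesis — and the bookkeeping needed to transfer the $\chi^2$ and concentration estimates between the true prior $p$ and its surrogate $\bar p$.
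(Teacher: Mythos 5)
Your proposal follows essentially the paper's route: extend $p$ from $B(\theta,R)$ to a globally $\alpha$-strongly log-concave surrogate agreeing with $p$ on the ball (the paper's \cref{lem:global_extension,lem:global_approx}), run the global machinery (\cref{lem:posterior_log_concave,lem:chi_squared_whp,lem:Dal_T_bound}, i.e.\ \cref{lem:main_convergence_lemma}) on the surrogate, show the true dynamics never leave $B(\theta,R)$ with high probability so that true and surrogate processes agree, and convert the various $O(\delta)$ TV discrepancies into $\lambda\delta$ bounds holding with probability $1-O(1/\lambda)$ via Markov. The executional differences — a pathwise coupling (same Brownian motion, coupled initializations) instead of the paper's conditioning/TV decomposition through $P'$, $\wt P$, $\wt P^s$ in \cref{lem:TV_P,lem:TV_P_wtP}, and a direct mode-shift-plus-concentration-plus-proportionality argument for $\TV(\bar q_{i+1}, p(x\mid y_{i+1}))$ instead of the paper's joint-TV-plus-Markov transfer — are sound and equivalent in spirit. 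Two small bookkeeping points: the claim ``cost $O(\delta)=O(\lambda^{-1})$'' when switching the generating law from $p$ to $\bar p$ needs the (harmless, but unstated) reduction that WLOG $\lambda\delta<1$, since otherwise the conclusion is vacuous because $\TV\le 1$; and the final bound is $\eps+O(\lambda\delta)$ rather than $\eps+\lambda\delta$, which is the same looseness the paper itself incurs.

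The one genuine soft spot is the confinement estimate, which you yourself flag and then assert. Applying It\^o to $\|\bar x_t-\theta\|^2$, absorbing $2u\|v\|$ into $-2\alpha u^2$, and bounding the martingale through its quadratic variation $\le 8R^2T$ gives, with the reflection principle, a fluctuation term of order $4R\sqrt{T\ln(2/\delta)}$; the claim that $r^2+\|v\|^2T/(2\alpha)+2dT+4R\sqrt{T\ln(2/\delta)}<R^2$ does \emph{not} follow from the stated hypothesis in all regimes. For instance, with $r$ and $\|v\|$ negligible and $d=O(1)$, the hypothesis only guarantees $R\approx 2\sqrt{dT\ln(2d/\delta)}$, and then $4R\sqrt{T\ln(2/\delta)}\approx R^2$ already, so the asserted strict inequality fails by a constant factor even though the underlying confinement statement is true. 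The fix is either to sharpen the martingale control (the quadratic variation is $8\int_0^T\|\bar x_s-\theta\|^2\,ds$, which the $-2\alpha u^2$ contraction keeps far below $8R^2T$), or simply to use the linear-in-time first-power bound on $\|\bar x_t-\theta\|$ as in the paper's \cref{lem:outside_R_lemma}, which yields $\sup_{t\le T}\|\bar x_t-\theta\|\le r+T\|v\|+2\sqrt{dT\ln(2d/\delta)}$ and matches the three-term hypothesis on $R$ term by term; note your surrogate drift is even better suited to that lemma than the paper's original process, since $\langle\bar s(x)-A^\top A(x-\theta)/\eta_{i+1}^2,\,x-\theta\rangle\le 0$ holds globally by strong concavity of $\log\bar p$. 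With that substitution the argument closes.
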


In this section, we will assume that $p$ is $(\delta, r, R, \wt L, \alpha)$ mode-centered locally well-conditioned. Without loss of generality, we assume that the mode of $p$ is at 0, i.e., \( \theta = 0\).
    
\subsection{High Probability Boundness of Langevin Dynamics}

We consider the process $P'$ defined as the process $P$ conditioned on $x_t \in B(0, R)$ for $t \in [0, T]$.

Our goal is to prove the following lemma:
\begin{lemma}
    \label{lem:TV_P}
    Suppose the following holds: \[
        R \ge r+\frac{T\|A\|}{\eta_{i+1}^2}\Bigl(\|A\|r+\eta_{i+1}\bigl(\sqrt{m}+\sqrt{2\ln(1/\delta)}\bigr)\Bigr)+2\sqrt{dT\ln(2d/\delta)}.
    \]
   We have that \[
       \Ex{\TV(P, P')} \lesssim \delta.
   \]
\end{lemma}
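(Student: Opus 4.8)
The plan is to recognize $\TV(P,P')$ as the probability that the \emph{unconditioned} diffusion $P$ escapes the ball $B(0,R)$ by time $T$, and then to bound that escape probability on average over $(y_i,y_{i+1})$. By construction $P'$ is $P$ conditioned on the event $E:=\{x_t\in B(0,R)\ \text{for all}\ t\in[0,T]\}$, and for any probability measure $\mu$ with $\mu(E)>0$ one has the elementary identity $\TV(\mu,\mu(\cdot\mid E))=\mu(E^{c})$ (the supremum over test events is attained at $E$). Hence
\[
\TV(P,P')=\Prb[P]{\exists\, t\in[0,T]:\ x_t\notin B(0,R)},
\]
and it suffices to prove $\Ex[y_i,y_{i+1}]{\Prb[P]{x_t\ \text{leaves}\ B(0,R)\ \text{for some}\ t\le T}}\lesssim\delta$; for the $(y_i,y_{i+1})$ in the good set identified below this escape probability is $O(\delta)$, which in particular keeps $P'$ well defined.

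First I would write the trajectory as $x_t=x_0+\int_0^t b(x_s)\,\d s+\sqrt2\,B_t$ with posterior drift $b(x)=s(x)+\eta_{i+1}^{-2}A^\top(y_{i+1}-Ax)$; recall $\theta=0$ and $s(0)=0$, so $b(0)=\eta_{i+1}^{-2}A^\top y_{i+1}$. Then I isolate three ``good'' events, each failing with probability $\lesssim\delta$: (i) \emph{warm start on the manifold}: marginally $x_0\sim p$, so $\Ex[y_i]{\Prb[x_0\sim p(\cdot\mid y_i)]{\|x_0\|>r}}=\Prb[x\sim p]{x\notin B(0,r)}\le\delta$; (ii) \emph{small drift at the mode}: writing $y_{i+1}=Ax^\ast+\xi_{i+1}$ with $x^\ast\sim p$ and $\xi_{i+1}\sim\mathcal N(0,\eta_{i+1}^2 I_m)$, the events $\|x^\ast\|\le r$ and $\|\xi_{i+1}\|\le\eta_{i+1}(\sqrt m+\sqrt{2\ln(1/\delta)})$ (Laurent--Massart) together fail with probability $\le 2\delta$, and on them $\|b(0)\|\le\frac{\|A\|}{\eta_{i+1}^2}\bigl(\|A\|r+\eta_{i+1}(\sqrt m+\sqrt{2\ln(1/\delta)})\bigr)$, i.e.\ the second term in the hypothesis on $R$ divided by $T$; (iii) \emph{controlled Brownian excursion}: by the reflection principle per coordinate and a union bound over coordinates, $\sqrt2\sup_{t\le T}\|B_t\|\le 2\sqrt{dT\ln(2d/\delta)}$ except with probability $\lesssim\delta$.

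The crux is to show that on the intersection of these events the trajectory never exits $B(0,R)$. Let $\tau=\inf\{t:x_t\notin B(0,R)\}\wedge T$; for $t<\tau$ local well-conditioning makes $b$ dissipative about the origin, since $\langle s(x_t)-s(0),x_t\rangle\le-\alpha\|x_t\|^2$ and $\eta_{i+1}^{-2}A^\top A\succeq 0$ give
\[
\langle b(x_t),x_t\rangle=\langle s(x_t)-s(0),x_t\rangle-\eta_{i+1}^{-2}\|Ax_t\|^2+\langle b(0),x_t\rangle\le\|b(0)\|\,\|x_t\|-\alpha\|x_t\|^2 .
\]
I would then apply It\^o to $\|x_t\|^2$ up to $\tau$ and absorb $\|b(0)\|\|x_t\|$ by Young's inequality, obtaining $\d\|x_t\|^2\le\bigl(\|b(0)\|^2/\alpha+2d-\alpha\|x_t\|^2\bigr)\d t+2\sqrt2\langle x_t,\d B_t\rangle$; multiplying by $e^{\alpha t}$ and controlling the martingale $2\sqrt2\int\langle x_s,\d B_s\rangle$ by an exponential/maximal inequality (its quadratic variation $8\int\|x_s\|^2\d s$ is itself absorbable into the $-\alpha\int\|x_s\|^2$ drift) yields, with probability $\ge1-O(\delta)$, a uniform bound $\sup_{t\le T}\|x_t\|\lesssim r+\|b(0)\|/\alpha+\sqrt{(d+\log(1/\delta))/\alpha}$. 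Because $T\gtrsim1/\alpha$, each summand is dominated by the corresponding term of the hypothesis on $R$, so $\sup_{t\le T}\|x_t\|\le R$, whence $\tau=T$ and no escape occurs. A union bound over the good events followed by an expectation over $(y_i,y_{i+1})$ gives $\Ex{\TV(P,P')}\lesssim\delta$.

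The main obstacle is exactly this drift-control step. Since $p$ is only \emph{locally} well-conditioned, dissipativity of $b$ holds only while $x_t$ remains in $B(0,R)$, so the a priori bound on $\sup_t\|x_t\|$ is self-referential and must be closed via the stopping time $\tau$, using that the drift points inward near the boundary ($\langle b(x),x\rangle<0$ once $\|x\|>\|b(0)\|/\alpha$, which the assumed size of $R$ guarantees). A related subtlety is getting the \emph{clean additive} form of the bound on $R$: a term-by-term estimate $\|b(x_s)\|\le\|b(0)\|+O(\mathrm{Lip})\,\|x_s\|$ applied to $\int_0^t b(x_s)\,\d s$ would force either the restriction $T\cdot\mathrm{Lip}<1$ (which fails, as $T\gtrsim1/\alpha$ and the Lipschitz constant of $b$ is $\ge\alpha$) or an $e^{O(\mathrm{Lip}\cdot T)}$ blow-up, so one must genuinely exploit the contraction toward the posterior mode rather than bound the drift integral crudely.
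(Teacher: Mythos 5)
Your overall skeleton matches the paper's: the identity $\TV(P,P')=P(\mathcal F^c)$ for the escape event, the reduction to the good initialization event $\{x_0\in B(0,r)\}$ (costing $\delta$ on average over $y_i$), the bound $\|b(0)\|\le\frac{\|A\|}{\eta_{i+1}^2}\bigl(\|A\|r+\eta_{i+1}(\sqrt m+\sqrt{2\ln(1/\delta)})\bigr)$ via concentration of $x$ and of the measurement noise, and a high-probability control of the Brownian contribution. Where you genuinely diverge is the drift-control step. The paper splits the drift as a \emph{constant} part $f(x)=\eta_{i+1}^{-2}A^\top y_{i+1}$ plus a part $g(x)=s(x)-\eta_{i+1}^{-2}A^\top Ax$ satisfying only the sign condition $\langle g(x),x\rangle\le 0$, applies It\^o to $\|x_t\|$ (not its square), compares with a one-dimensional process, and uses the reflection principle; this yields exactly $\sup_{t\le T}\|x_t\|\le\|x_0\|+\|f\|T+2\sqrt{dT\ln(2d/\delta)}$, matching the hypothesis on $R$ term by term \emph{for every} $T$. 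You instead exploit the full strong dissipativity $\langle s(x)-s(0),x\rangle\le-\alpha\|x\|^2$ inside $B(0,R)$ (closed correctly via a stopping time — arguably more careful than the paper, whose sign condition is stated globally), apply It\^o to $\|x_t\|^2$ with exponential weighting, and obtain a $T$-independent, equilibrium-scale bound $r+\|b(0)\|/\alpha+\sqrt{(d+\log(1/\delta))/\alpha}$. This buys a sharper bound in the long-time regime, but it only implies the lemma as stated after invoking $T\gtrsim 1/\alpha$, an assumption that is \emph{not} in the lemma's hypotheses (it does hold in every place the lemma is applied, since there $T\ge C(m\gamma_i+\log(\lambda/\eps))/\alpha$). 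To cover all $T$ you would need to also retain the cruder estimate — the drift integral is at most $\|b(0)\|T$ once the radial part is discarded by its sign — which is precisely the paper's route. A second, smaller point: your uniform-in-time control of the martingale $2\sqrt2\int\langle x_s,\d B_s\rangle$ after the $e^{\alpha t}$ weighting needs a bit of care (a direct exponential/maximal inequality either reintroduces a $T$-linear term or a $\log(T\alpha)$ factor, e.g.\ via chunking into intervals of length $1/\alpha$); these extra factors are still dominated by the allowed $2\sqrt{dT\ln(2d/\delta)}$ term when $T\alpha\gtrsim1$, so the argument closes, but as written the step is stated more confidently than it is justified. With these two repairs your proof is a valid, somewhat different derivation of the same conclusion.
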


We start by decomposing the total variation distance between $P$ and $P'$ as follows:

\begin{lemma}
    \label{lem:decomposing_TV_of_Q}
    We have that \[
        \Ex{\TV(P, P')} \le \Ex[y_i, y_{i+1}]{\Prb[P]{\exists\,t\in[0,T]: \|x_t\|\ge R \,\Big|\, x_0\in B(0,r)}} + \delta.
    \]
\end{lemma}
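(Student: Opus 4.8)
The plan is to recognize $P'$ as the law of $P$ conditioned on a single high-probability event, and then reduce the statement to a probability-of-escape estimate together with a routine Bayes-averaging step. Fix the realizations $y_i,y_{i+1}$ and let $E$ be the event $\set{\,x_t\in B(0,R)\text{ for all }t\in[0,T]\,}$ on the path space of the process~\eqref{eq:p_continuous_process}. By definition $P'=P(\cdot\mid E)$, and for any probability measure $\mu$ and event $E$ with $\mu(E)>0$ one has the elementary identity $\TV(\mu,\mu(\cdot\mid E))=\mu(E^c)$; applied here it gives, pointwise in $(y_i,y_{i+1})$,
\[
\TV(P,P')=\Prb[P]{\exists\,t\in[0,T]:\|x_t\|\ge R}.
\]

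Next I would condition on whether the start point lies in $B(0,r)$; bounding $\Prb[P]{x_0\in B(0,r)}\le 1$ on the first term yields
\[
\Prb[P]{\exists\,t:\|x_t\|\ge R}\le\Prb[P]{\exists\,t:\|x_t\|\ge R\,\Big|\,x_0\in B(0,r)}+\Prb[P]{x_0\notin B(0,r)}.
\]
Then I take expectations over $(y_i,y_{i+1})$. Since $x_0\sim p(x\mid y_i)$, the last term depends on $y_i$ only, and integrating out $y_i$ collapses the posterior back to the prior: $\Ex[y_i]{\Prb[x_0\sim p(x\mid y_i)]{x_0\notin B(0,r)}}=\Prb[x\sim p]{x\notin B(0,r)}\le\delta$, the final inequality being the ``$x$ concentrates near the mode'' bullet of \cref{def:local_well_conditioned} (with the mode taken at the origin). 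Combining the two displays gives exactly the claimed inequality.

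The content of this lemma is purely organizational, so there is no real obstacle here — no SDE estimates are needed, these being deferred to \cref{lem:TV_P}, whose proof must control $\Prb[P]{\exists\,t:\|x_t\|\ge R\mid x_0\in B(0,r)}$ under the stated lower bound on $R$. The only minor points to check are that $P'$ is literally the conditional law $P(\cdot\mid E)$, so the total-variation/conditioning identity applies verbatim, and that the averaging over $y_i$ is legitimate, i.e.\ that $\int p(x\mid y_i)\,p(y_i)\,\d y_i=p(x)$.
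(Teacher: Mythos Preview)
Your proof is correct and follows essentially the same route as the paper: identify $P'$ as $P$ conditioned on the stay-in-ball event, use the identity $\TV(\mu,\mu(\cdot\mid E))=\mu(E^c)$, split on whether $x_0\in B(0,r)$, and average over $y_i$ to turn the posterior tail into the prior tail bounded by $\delta$.
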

\begin{proof}
Recall that the process $P'$ is defined as the law of $P$ conditioned on the event
\[
\mathcal{F} := \{x_t \in B(0,R) \text{ for all } t\in[0,T]\}.
\]
Thus, for any fixed $y_i$ we have
\[
\TV\bigl(P, P'\bigr) = \TV\Bigl(P,\, P(\cdot\mid\mathcal{F})\Bigr) = 1 - P(\mathcal{F}) = P\bigl(\mathcal{F}^c\bigr),
\]
where $\mathcal{F}^c = \{ \exists\, t\in[0,T]:\, \|x_t\| \ge R \}$.

Let $\mathcal{E} := \{x_0 \in B(0,r)\}$ denote the event that the initial condition is “good.” Then, by the law of total probability,
\[
P\bigl(\mathcal{F}^c\bigr) = P\bigl(\mathcal{F}^c\cap\mathcal{E}\bigr) + P\bigl(\mathcal{F}^c\cap\mathcal{E}^c\bigr)
\le P\bigl(\mathcal{F}^c\mid \mathcal{E}\bigr) + P\bigl(\mathcal{E}^c\bigr).
\]
Taking the expectation with respect to $y_i$ and $y_{i+1}$, we obtain
\[
\mathbb{E} \Bigl[\TV(P, P')\Bigr] \le \mathbb{E}\Bigl[\,P\bigl(\mathcal{F}^c\mid \mathcal{E}\bigr)\Bigr] + \mathbb{E}\Bigl[\,P(\mathcal{E}^c)\Bigr].
\]
Since 
\[
P\bigl(\mathcal{F}^c\mid \mathcal{E}\bigr) = \Pr_{P}\Bigl[\exists\,t\in[0,T]: \|x_t\|\ge R \,\Big|\, x_0\in B(0,r)\Bigr],
\]
and by the law of total probability, we have
\[
\mathbb{E}\Bigl[\,P(\mathcal{E}^c)\Bigr] = \Pr_{x\sim p}\bigl(\|x\|\ge r\bigr) \le \delta,
\]
it follows that
\[
\mathbb{E} \Bigl[\TV(P, P')\Bigr] \le \mathbb{E}\Bigl[\Pr_{P}\Bigl[\exists\,t\in[0,T]: \|x_t\|\ge R \,\Big|\, x_0\in B(0,r)\Bigr]\Bigr] + \delta.
\]
This completes the proof.
\end{proof}

Now we focus on bounding $\Ex[y_i, y_{i+1}]{\Prb[P]{\exists\,t\in[0,T]: \|x_t\|\ge R \,\Big|\, x_0\in B(0,r)}}$. We start by observing the following lemma for log-concave distributions.

\begin{lemma}
    \label{lem:log_concave_score_direction}
    Let $p$ be a log-concave distribution such that \(p\) is continuously differentiable. Suppose the mode of $p$ is at 0. Then, for all \( x \in \mathbb{R}^d \),
    \[
    \langle \grad \log p(x), x \rangle \le 0.
    \]
\end{lemma}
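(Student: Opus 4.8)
The plan is to reduce everything to the first-order (supporting-hyperplane) characterization of concavity. Write $f(x) := \log p(x)$, which is concave and continuously differentiable by hypothesis. The assumption that the mode of $p$ is at $0$ means that $0$ is a global maximizer of $p$, hence of $f$, so $f(0) \ge f(x)$ for every $x \in \mathbb{R}^d$.

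Next I would invoke the gradient inequality for differentiable concave functions: for all $x, y \in \mathbb{R}^d$,
\[
f(y) \le f(x) + \langle \nabla f(x), y - x \rangle.
\]
Specializing to $y = 0$ gives $f(0) \le f(x) - \langle \nabla f(x), x \rangle$, i.e.
\[
\langle \nabla \log p(x), x \rangle = \langle \nabla f(x), x \rangle \le f(x) - f(0) \le 0,
\]
where the last inequality is exactly the mode condition. This is the claim.

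There is essentially no hard step here; the only points worth stating carefully are (i) that "mode at $0$" should be read as a global maximum of the density, which is what licenses $f(0) \ge f(x)$, and (ii) that continuous differentiability of $f$ together with concavity is enough to apply the supporting-hyperplane inequality at the point $x$ (no strict concavity or full-support assumption is needed). An equivalent one-variable phrasing, which I could give as an alternative, is to set $g(t) := f(tx)$ for $t \in \mathbb{R}$; then $g$ is concave with a global maximum at $t = 0$, hence non-increasing on $[0,\infty)$, so $g'(1) = \langle \nabla f(x), x \rangle \le 0$.
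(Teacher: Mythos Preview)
Your proof is correct and essentially identical to the paper's: both apply the first-order concavity inequality $f(y)\le f(x)+\langle\nabla f(x),y-x\rangle$ with $y$ equal to the mode $0$, then use $f(0)\ge f(x)$ to conclude.
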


\begin{proof}
    Since \( \log p \) is concave, for any \( x, \theta \in \mathbb{R}^d \) the first-order condition for concavity yields
    \[
    \log p(\theta) \le \log p(x) + \langle \nabla \log p(x), - x \rangle.
    \]
    Rearrange this inequality to obtain
    \[
    \langle \nabla \log p(x), - x \rangle \ge \log p(\theta) - \log p(x).
    \]
    Because \(\theta\) is a mode, \(\log p(\theta) \ge \log p(x)\) for every \(x \in \mathbb{R}^d\); hence,
    \[
    \langle \nabla \log p(x), x \rangle \le 0.
    \]
\end{proof}

\begin{lemma}
    \label{lem:outside_R_lemma}
    Let \(x_t\) be the stochastic process
    \[
    dx_t = \bigl(f(x_t) + g(x_t)\bigr)\,dt + \sqrt{2}\,dB_t,\quad x_0\in\mathbb{R}^d,
    \]
    where \(B_t\) is a standard \(\mathbb{R}^d\)-valued Brownian motion and the functions \(f,\,g:\mathbb{R}^d\to\mathbb{R}^d\) satisfy
    \[
    \|f(x)\| \le a \quad \text{and} \quad \langle g(x), x\rangle \le 0\quad \text{for all } x\in\mathbb{R}^d,
    \]
    with \(a\ge 0\). Then, for any time horizon \(T>0\) and \(\delta\in(0,1)\),
    \[
    \Pr \Biggl[\sup_{t\in[0,T]} \|x_t\| \le \|x_0\| + aT + 2\sqrt{T\,d\,\ln \Bigl(\frac{2d}{\delta}\Bigr)}\Biggr] \ge 1-\delta.
    \]
    \end{lemma}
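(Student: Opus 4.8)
The plan is to pass to the one-dimensional radial process $\norm{x_t}$, dominate it by a Bessel process with a constant extra drift $a$, and then reduce everything to a tail bound for the maximum of a Brownian motion over $[0,T]$.

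\emph{Step 1 (radial SDE).} First I would apply It\^o's formula to $\phi(x)=\norm{x}$. For $d\ge 2$ the process avoids the origin almost surely, so this is legitimate; for $d=1$ the same computation goes through via Tanaka's formula, which only contributes a nonnegative local-time term at $0$. Since $\grad\phi(x)=x/\norm{x}$ and $\tfrac12\operatorname{Tr}\!\bigl(2I\,\grad^2\phi(x)\bigr)=\tfrac{d-1}{\norm{x}}$, and since $\bigl\langle \tfrac{x_t}{\norm{x_t}},f(x_t)\bigr\rangle\le a$ (Cauchy--Schwarz with $\norm{f}\le a$) while $\bigl\langle \tfrac{x_t}{\norm{x_t}},g(x_t)\bigr\rangle=\tfrac{\langle x_t,g(x_t)\rangle}{\norm{x_t}}\le 0$, one obtains
\[
d\norm{x_t}=\Bigl(\bigl\langle \tfrac{x_t}{\norm{x_t}},f(x_t)+g(x_t)\bigr\rangle+\tfrac{d-1}{\norm{x_t}}\Bigr)dt+\sqrt2\,d\beta_t\ \le\ \Bigl(a+\tfrac{d-1}{\norm{x_t}}\Bigr)dt+\sqrt2\,d\beta_t,
\]
where $\beta_t=\int_0^t\langle \tfrac{x_s}{\norm{x_s}},dB_s\rangle$ is a standard one-dimensional Brownian motion by L\'evy's characterization (it has quadratic variation $t$).

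\emph{Step 2 (comparison with a drifted Bessel process).} Let $\wt S_t$ be the $\sqrt2$-scaled Bessel$(d)$ process started at $\norm{x_0}$ and driven by $\beta$, i.e.\ $d\wt S_t=\tfrac{d-1}{\wt S_t}\,dt+\sqrt2\,d\beta_t$, and set $\psi_t:=at+\wt S_t$, so that $d\psi_t=\bigl(a+\tfrac{d-1}{\psi_t-at}\bigr)dt+\sqrt2\,d\beta_t$. Whenever $\norm{x_t}=\psi_t$ we have (since then $\wt S_t=\psi_t-at>0$ and $at\ge 0$) that the path-dependent drift of $\norm{x_\cdot}$ is at most $a+\tfrac{d-1}{\norm{x_t}}=a+\tfrac{d-1}{\psi_t}\le a+\tfrac{d-1}{\psi_t-at}$, the drift of $\psi$; and if $\norm{x_t}\le at$ then trivially $\norm{x_t}\le\psi_t$. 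Since the diffusion coefficient is the constant $\sqrt2$ (so pathwise uniqueness holds for the comparison equation), the one-dimensional comparison theorem gives $\norm{x_t}\le\psi_t=at+\wt S_t$ for all $t\ge0$ almost surely, hence $\norm{x_t}\le aT+\wt S_t$ for $t\le T$. Finally, $\wt S_t$ has the same law (as a process) as $\bigl\|\,\norm{x_0}e_1+\sqrt2\,B'_t\,\bigr\|$ for a $d$-dimensional Brownian motion $B'$, which is pointwise at most $\norm{x_0}+\sqrt2\,\norm{B'_t}$; therefore $\sup_{t\le T}\wt S_t$ is stochastically dominated by $\norm{x_0}+\sqrt2\sup_{t\le T}\norm{B'_t}$.

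\emph{Step 3 (Brownian maximal bound and conclusion).} If $\norm{B'_t}\ge c$ for some $t\le T$, then $\sup_{s\le T}\abs{(B'_s)_i}\ge c/\sqrt d$ for some coordinate $i$. By the reflection principle and the Gaussian tail bound $\Pr[\abs{N(0,T)}\ge\lambda]\le e^{-\lambda^2/(2T)}$, taking $\lambda=\sqrt{2T\ln(2d/\delta)}$ and a union bound over the $d$ coordinates yields
\[
\Prb{\sup_{t\le T}\norm{B'_t}\ \ge\ \sqrt{2Td\ln(2d/\delta)}}\ \le\ d\cdot 2e^{-\ln(2d/\delta)}\ =\ \delta.
\]
Combining with Step 2, with probability at least $1-\delta$ we get $\sup_{t\le T}\norm{x_t}\le aT+\norm{x_0}+\sqrt2\cdot\sqrt{2Td\ln(2d/\delta)}=\norm{x_0}+aT+2\sqrt{Td\ln(2d/\delta)}$, which is the claim.

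The only genuinely delicate step is the comparison in Step 2: the Bessel drift $\tfrac{d-1}{\,\cdot\,}$ is singular at the origin, and for $d=1$ one must carry the local time of $\norm{x_t}$ at $0$ (using Tanaka's formula and Skorokhod's reflection lemma in place of It\^o's formula and the smooth comparison theorem). Both are standard facts; the radial It\^o computation, the Bessel/shifted-Brownian distributional identity, and the maximal inequality are all routine.
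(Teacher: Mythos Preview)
Your proof is correct and follows the paper's outline: apply It\^o to $\|x_t\|$, control the drift, and finish with a coordinate-wise Brownian maximal bound. The substantive difference is in how the Bessel correction $(d-1)/\|x_t\|$ is handled. The paper writes ``discarding the nonnegative It\^o correction term \ldots\ we deduce $dr(t)\le a\,dt+\sqrt2\,\langle u(t),dB_t\rangle$'', which as stated goes the wrong way (a positive term cannot be dropped from an upper bound), and then compensates with an unmotivated ``union bound over the $d$ coordinate processes of $B_t$'' applied to a process $\beta$ that was already defined as one-dimensional. Your route---comparing $\|x_t\|$ with $\psi_t=at+\tilde S_t$ for a genuine $\sqrt2$-scaled Bessel$(d)$ process $\tilde S$, invoking the representation $\tilde S_t\stackrel{d}{=}\bigl\|\,\|x_0\|e_1+\sqrt2\,B'_t\,\bigr\|$, and bounding $\sup_t\|B'_t\|$ coordinatewise---is precisely what makes the argument rigorous and explains where the factor $\sqrt d$ in the final bound comes from. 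In short, your Step~2 is the repair the paper's sketch needs; the overall strategy is the same, but your execution is the careful one.
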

    
    \begin{proof}
    Define \(r(t)=\|x_t\|\). Although the Euclidean norm is not smooth at the origin, an application of It\^o’s formula yields that, for \(x_t\neq 0\), one has
    \[
    dr(t) = \frac{\langle x_t,\,f(x_t)+g(x_t)\rangle}{\|x_t\|}\,dt + \sqrt{2}\,\langle u(t), dB_t\rangle + \frac{d-1}{\|x_t\|}\,dt,
    \]
    where \(u(t)=x_t/\|x_t\|\). Using the bound \(\|f(x_t)\|\le a\) and the hypothesis \(\langle g(x_t),x_t\rangle\le0\), it follows by the Cauchy–Schwarz inequality that
    \[
    \frac{\langle x_t, f(x_t) \rangle}{\|x_t\|} \le a \quad \text{and} \quad \frac{\langle x_t, g(x_t) \rangle}{\|x_t\|} \le 0.
    \]
    Discarding the nonnegative It\^o correction term \(\frac{d-1}{\|x_t\|}\,dt\) (which can only increase the process), we deduce that
    \[
    dr(t) \le a\,dt + \sqrt{2}\,\langle u(t), dB_t\rangle.
    \]
    Introduce the one-dimensional process
    \[
    y(t)=\|x_0\| + at + \sqrt{2}\,\beta(t),\quad \text{with} \quad \beta(t)=\int_0^t \langle u(s), dB_s\rangle.
    \]
    Since \(\|u(s)\|=1\) for all \(s\), the process \(\beta(t)\) is a standard one-dimensional Brownian motion with quadratic variation \(\langle \beta \rangle_t=t\). By a standard comparison theorem for one-dimensional stochastic differential equations, it follows that \(r(t) \le y(t)\) almost surely for all \(t\ge0\); hence,
    \[
    \sup_{t\in[0,T]} \|x_t\| \le \|x_0\| + aT + \sqrt{2}\,\sup_{t\in[0,T]} \beta(t).
    \]
    A classical application of the reflection principle for one-dimensional Brownian motion shows that, for any \(\rho>0\),
    \[
    \Pr \Bigl[\sup_{t\in[0,T]} \beta(t) \ge \rho\Bigr] = 2\,\Pr\bigl(\beta(T) \ge \rho\bigr) \le 2\exp \Bigl(-\frac{\rho^2}{2T}\Bigr).
    \]
    To incorporate the \(d\)-dimensional nature of the noise, one may use a union bound over the \(d\) coordinate processes of \(B_t\), which yields that
    \[
    \Pr \Biggl[\sqrt{2}\,\sup_{t\in[0,T]} \beta(t) \le 2\sqrt{T\,d\,\ln \Bigl(\frac{2d}{\delta}\Bigr)}\Biggr] \ge 1-\delta.
    \]
    Combining the foregoing estimates, we deduce that
    \[
    \Pr \Biggl[\sup_{t\in[0,T]} \|x_t\| \le \|x_0\| + aT + 2\sqrt{T\,d\,\ln \Bigl(\frac{2d}{\delta}\Bigr)}\Biggr] \ge 1-\delta,
    \]
    which is the desired result.
    \end{proof}

\begin{lemma}
    \label{lem:outside_R_probability}
    For any $\delta\in (0,1)$ and $T>0$, 
    it holds that
    \[
        \Pr_{x_t \sim P_t}\Bigl[\,\sup_{t\in[0,T]} \|x_t\| \ge r + T \cdot \frac{\|A^T y_{i+1}\|}{\eta_{i+1}^2} + 2\sqrt{T\,d\,\ln \Bigl(\frac{2d}{\delta}\Bigr)}\,\Big|\, x_0\in B(0,r)\Bigr] < \delta.
    \]
\end{lemma}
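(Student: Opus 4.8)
The plan is to derive \cref{lem:outside_R_probability} as a direct application of \cref{lem:outside_R_lemma}, after splitting the drift of the process $P$ into a uniformly bounded part and an ``inward'' part. Concretely, I would write the drift of $P$ as $b(x) = f(x) + g(x)$ with
\[
f(x) := \frac{A^\top y_{i+1}}{\eta_{i+1}^2}, \qquad g(x) := s(x) - \frac{A^\top A x}{\eta_{i+1}^2}.
\]
Here $f$ is constant in $x$, so $\|f(x)\| = \frac{\|A^\top y_{i+1}\|}{\eta_{i+1}^2} =: a$ for every $x$, which is exactly the parameter $a$ appearing in \cref{lem:outside_R_lemma} and the $aT$ term in the target bound.

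The next step is to verify $\langle g(x), x\rangle \le 0$. The contribution $-\langle A^\top A x, x\rangle / \eta_{i+1}^2 = -\|Ax\|^2/\eta_{i+1}^2 \le 0$ is immediate. For the remaining term $\langle s(x), x\rangle$, recall that in this section $p$ is $(\delta, r, R, \wt L, \alpha)$ mode-centered locally well-conditioned with mode $\theta = 0$, so $s(0) = 0$ by item~1 of \cref{def:local_well_conditioned}, and the strong monotonicity of item~4 applied to the pair $(x, 0)$ gives, for every $x \in B(0,R)$,
\[
\langle s(x), x\rangle = -\langle s(0) - s(x),\, x - 0\rangle \le -\alpha\|x\|^2 \le 0.
\]
Thus $\langle g(x), x\rangle \le 0$ holds on $B(0,R)$; to get the global hypothesis that \cref{lem:outside_R_lemma} literally requires, I would extend $g$ radially outside the ball (e.g.\ $g(x) := g(Rx/\|x\|)$ for $\|x\| > R$), which preserves $\langle g(x),x\rangle \le 0$ everywhere and changes the law of the process only after its first exit from $B(0,R)$. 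This local-to-global bookkeeping is the only delicate point and the main obstacle: unlike the globally log-concave setting of \cref{sec:global_convergence_time}, where \cref{lem:log_concave_score_direction} applies verbatim, here the score is controlled only inside $B(0,R)$. It is harmless, though, because this lemma is always invoked under the standing hypothesis $R \ge r + \frac{T\|A\|}{\eta_{i+1}^2}\bigl(\|A\|r + \eta_{i+1}(\sqrt m + \sqrt{2\ln(1/\delta)})\bigr) + 2\sqrt{dT\ln(2d/\delta)}$ of \cref{lem:TV_P}, which (together with a Laurent--Massart tail bound on $\|A^\top y_{i+1}\|$ when the underlying $x$ lies in $B(0,r)$) forces the radius bound we are about to prove to stay inside $B(0,R)$; on that event the extended process and $P$ coincide.

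Finally, I would invoke \cref{lem:outside_R_lemma} with the above $f$ and (extended) $g$, with $a = \|A^\top y_{i+1}\|/\eta_{i+1}^2$, for each deterministic start $x_0$ with $\|x_0\| \le r$: for any $T > 0$ and $\delta \in (0,1)$,
\[
\Pr\Bigl[\sup_{t\in[0,T]} \|x_t\| \le \|x_0\| + aT + 2\sqrt{T\,d\,\ln(2d/\delta)}\Bigr] \ge 1 - \delta.
\]
Integrating over $x_0 \sim p(\cdot \mid y_i)$ conditioned on $x_0 \in B(0,r)$ and bounding $\|x_0\| \le r$ there then yields
\[
\Pr_{x_t \sim P_t}\Bigl[\sup_{t\in[0,T]} \|x_t\| \ge r + T\tfrac{\|A^\top y_{i+1}\|}{\eta_{i+1}^2} + 2\sqrt{T\,d\,\ln(2d/\delta)} \,\Big|\, x_0 \in B(0,r)\Bigr] < \delta,
\]
which is the claim. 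Apart from the local-versus-global score issue flagged above, every step is a routine substitution into \cref{lem:outside_R_lemma}.
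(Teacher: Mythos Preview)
Your proposal is correct and follows the same approach as the paper: split the drift into the constant part $A^\top y_{i+1}/\eta_{i+1}^2$ and the inward part $s(x)-A^\top A x/\eta_{i+1}^2$, verify the latter satisfies $\langle g(x),x\rangle\le 0$, and then apply \cref{lem:outside_R_lemma}. Your derivation of $\langle s(x),x\rangle\le 0$ directly from the monotonicity clause of \cref{def:local_well_conditioned} (together with $s(0)=0$), and your explicit discussion of the local-versus-global issue via a radial extension, is in fact more careful than the paper's proof, which simply cites \cref{lem:log_concave_score_direction} as if $p$ were globally log-concave.
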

\begin{proof}
    We first note that by \cref{lem:log_concave_score_direction}, for any $x \in \R^d$, we have \[
        \left\langle{s(x) - \frac{A^T A x}{\eta_{i+1}^2}, x}\right\rangle \le \langle{s(x), x}\rangle - \frac{1}{\eta_{i+1}^2} \|Ax\|^2 \le 0.
    \]
    By \cref{lem:outside_R_lemma}, we have that 
    \begin{align*}
        \Pr_{x_t \sim P}\Biggl[\sup_{t\in[0,T]} \|x_t\| \ge \|x_0\| + T \cdot \frac{\|A^Ty_{i+1}\|}{\eta_{i+1}^2} + 2\sqrt{T\,d\,\ln \Bigl(\frac{2d}{\delta}\Bigr)}\Biggr] < \delta,
    \end{align*}
    This gives that 
    \[
        \Pr_{x_t \sim P_t}\Bigl[\,\sup_{t\in[0,T]} \|x_t\| \ge r + T \cdot \frac{\|A^T y_{i+1}\|}{\eta_{i+1}^2} + 2\sqrt{T\,d\,\ln \Bigl(\frac{2d}{\delta}\Bigr)}\,\Big|\, x_0\in B(0,r)\Bigr] < \delta.
    \]
    
\end{proof}
    
\begin{lemma}
\label{lem:condition_satisfied_probability}
For any $\delta\in(0,1)$, suppose 
\[
    R \ge r+\frac{T\|A\|}{\eta_{i+1}^2}\Bigl(\|A\|r+\eta_{i+1}\bigl(\sqrt{m}+\sqrt{2\ln(1/\delta)}\bigr)\Bigr)+2\sqrt{dT\ln(2d/\delta)}.
\]
It holds that
\[
    \Ex[y_i, y_{i+1}]{\Pr_{x_t \sim P}\Bigl[\,\sup_{t\in[0,T]} \|x_t\| \ge R \,\Big|\, x_0\in B(0,r)\Bigr]} \lesssim \delta.
\]
\end{lemma}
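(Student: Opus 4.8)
The plan is to upgrade the $y_{i+1}$-dependent escape bound of \cref{lem:outside_R_probability} into one phrased purely in terms of the fixed radius $R$, by controlling $\|A^\top y_{i+1}\|$ with high probability over the draw of $y_{i+1}$, and then paying an additive $O(\delta)$ for the rare event on which that control fails.

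First I would recall that \cref{lem:outside_R_probability} holds for every fixed pair $(y_i, y_{i+1})$: conditioned on $x_0 \in B(0,r)$, the process escapes the ball of radius $r + T\|A^\top y_{i+1}\|/\eta_{i+1}^2 + 2\sqrt{Td\ln(2d/\delta)}$ before time $T$ with probability less than $\delta$. Hence, whenever $y_{i+1}$ is such that $r + T\|A^\top y_{i+1}\|/\eta_{i+1}^2 + 2\sqrt{Td\ln(2d/\delta)} \le R$, the quantity inside the outer expectation is at most $\delta$; on the remaining $y_{i+1}$ I bound it trivially by $1$.

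Next I would bound the probability, over the draw of $y_{i+1}$, that this inequality fails. Writing $y_{i+1} = Ax + \xi$ with $x \sim p$ and $\xi \sim \mathcal{N}(0, \eta_{i+1}^2 I_m)$ independent, the triangle inequality gives $\|A^\top y_{i+1}\| \le \|A\|\bigl(\|A\|\,\|x\| + \|\xi\|\bigr)$. Since $p$ is mode-centered locally well-conditioned with mode $0$, $\|x\| \le r$ except with probability $\delta$; and the Laurent--Massart bound (\cref{lem:chi_squared_concentration}) gives $\|\xi\| \le \eta_{i+1}\bigl(\sqrt m + \sqrt{2\ln(1/\delta)}\bigr)$ except with probability $\delta$. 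A union bound then shows that, except with probability $2\delta$ over $y_{i+1}$, we have $T\|A^\top y_{i+1}\|/\eta_{i+1}^2 \le \frac{T\|A\|}{\eta_{i+1}^2}\bigl(\|A\| r + \eta_{i+1}(\sqrt m + \sqrt{2\ln(1/\delta)})\bigr)$, which together with the hypothesis on $R$ forces the inequality above. Splitting the outer expectation over these two events yields a bound of $(1-2\delta)\delta + 2\delta \le 3\delta = O(\delta)$.

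I do not expect a serious obstacle; the only point requiring care is bookkeeping about the sources of randomness. The high-probability bound on $\|y_{i+1}\|$ concerns the ``true'' latent $x \sim p$ and measurement noise $\xi$ that generate $y_{i+1}$, which are independent of the randomness in the Langevin initialization $x_0 \sim p(\cdot \mid y_i)$ that appears in the conditioning event, so the two estimates compose cleanly; one must simply be sure to integrate the conditional escape probability against the correct joint law of $(y_i, y_{i+1})$ (equivalently, of $(x, \xi)$).
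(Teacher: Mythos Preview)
Your proposal is correct and follows essentially the same route as the paper: apply \cref{lem:outside_R_probability} on the good event where $\|A^\top y_{i+1}\|$ is controlled via $\|x\|\le r$ and a Gaussian tail bound on the measurement noise, then pay $O(\delta)$ for the complementary event. The paper's version is slightly terser but uses the identical decomposition and union bound, arriving at the same $(1-2\delta)\delta + 2\delta \lesssim \delta$ conclusion.
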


\begin{proof}
Recall that
\[
y_{i+1}=Ax+\eta_{i+1}z,\quad z\sim\mathcal{N}(0,I_m).
\]
With probability at least $1-\delta$
\[
\|z\|\le \sqrt{m}+\sqrt{2\ln(1/\delta)}.
\]
Since $\|x\| \le r$ with probability $1 - \delta$. Thus, with probability at least $1 - 2\delta$, it follows that
\[
\|y_{i+1}\|\le \|Ax\|+\eta_{i+1}\|z\|\le \|A\|r+\eta_{i+1}\Bigl(\sqrt{m}+\sqrt{2\ln(1/\delta)}\Bigr).
\]
Hence, with the $1 - 2\delta$ probability,
\[
    T \cdot \frac{\|A^T y_{i + 1}\|}{\eta_{i+1}^2} \le \frac{T\|A\|\|y_{i+1}\|}{\eta_{i+1}^2}\le \frac{T\|A\|}{\eta_{i+1}^2}\Bigl(\|A\|r+\eta_{i+1}\bigl(\sqrt{m}+\sqrt{2\ln(1/\delta)}\bigr)\Bigr).
\]
Therefore, ensuring that
\[
R \ge r + T \cdot \frac{\|A^T y_{i+1}\|}{\eta_{i+1}^2} + 2\sqrt{T\,d\,\ln \Bigl(\frac{2d}{\delta}\Bigr)}.
\]
In this case,~\cref{lem:outside_R_probability} guarantees that
\[
    \Pr_{x_t \sim P}\Bigl[\,\sup_{t\in[0,T]} \|x_t\| \ge R \,\Big|\, x_0\in B(0,r)\Bigr] \lesssim \delta.
\]
Since the probability satisfying the condition is at least $1 - 2\delta$, we have
\[
    \Ex[y_i, y_{i+1}]{\Pr_{x_t \sim P}\Bigl[\,\sup_{t\in[0,T]} \|x_t\| \ge R \,\Big|\, x_0\in B(0,r)\Bigr]} \lesssim \delta.
\]
\end{proof}

Putting~\cref{lem:decomposing_TV_of_Q} and~\cref{lem:condition_satisfied_probability} together, we directly obtain~\cref{lem:TV_P}.

\subsection{Concentration of Strongly Log-Concave Distributions}

Before moving futher, we first prove that a strongly log-concave distribution is highly concentrated. 

\begin{lemma}[Norm Bound for \(\alpha\)-Strongly Logconcave Distributions]\label{lem:norm_bound_simple}
    \label{lem:strong_logconcavity_implies_subgaussianity}
    Let \(X\) be a random vector in \(\mathbb{R}^d\) with density
    \[
    \pi(x) \propto \exp\bigl(-V(x)\bigr),
    \]
    where the potential \(V:\mathbb{R}^d \to \mathbb{R}\) is \(\alpha\)-strongly convex; that is,
    \[
    \nabla^2 V(x) \succeq \alpha I \quad \text{for all } x\in\mathbb{R}^d.
    \]
    Denote by \(\mu = \mathbb{E}[X]\) the mean of \(X\). Then, for any \(\delta \in (0,1)\), with probability at least \(1-\delta\) we have
    \[
    \|X-\mu\| \le \sqrt{\frac{d}{\alpha}} + \sqrt{\frac{2\ln(1/\delta)}{\alpha}}.
    \]
    \end{lemma}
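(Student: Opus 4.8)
The plan is to combine two classical consequences of $\alpha$-strong log-concavity: a variance (second-moment) bound and sub-Gaussian concentration of Lipschitz functions. First I would control $\E\norm{X-\mu}$. By the Brascamp--Lieb inequality, the covariance of an $\alpha$-strongly log-concave measure satisfies $\Sigma := \E[(X-\mu)(X-\mu)^\top] \preceq \alpha^{-1} I_d$ (all moments are finite, so $\mu$ and $\Sigma$ are well defined). Hence $\E\norm{X-\mu}^2 = \Tr(\Sigma) \le d/\alpha$, and by Jensen's inequality $\E\norm{X-\mu} \le \sqrt{d/\alpha}$.

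Next I would upgrade this to a high-probability statement. Since $\nabla^2 V \succeq \alpha I$, the Bakry--Émery criterion shows that $\pi$ satisfies a logarithmic Sobolev inequality with constant $\alpha$ (equivalently, $\pi$ is ``at least as log-concave'' as $\mathcal{N}(0,\alpha^{-1}I_d)$). By the Herbst argument, every $1$-Lipschitz function $f:\R^d\to\R$ then obeys $\Pr[\,f(X) - \E f(X) \ge t\,] \le \exp(-\alpha t^2/2)$ for all $t \ge 0$. The map $x \mapsto \norm{x-\mu}$ is $1$-Lipschitz, so taking $f = \norm{\cdot - \mu}$ gives
\[
  \Pr\!\left[\,\norm{X-\mu} \ge \E\norm{X-\mu} + t\,\right] \le \exp\!\left(-\tfrac{\alpha t^2}{2}\right).
\]

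Finally, setting the right-hand side equal to $\delta$ yields $t = \sqrt{2\ln(1/\delta)/\alpha}$, and combining with the first-step bound on $\E\norm{X-\mu}$ gives $\norm{X-\mu} \le \sqrt{d/\alpha} + \sqrt{2\ln(1/\delta)/\alpha}$ with probability at least $1-\delta$. The only real subtlety is bookkeeping of constants: one must use the normalization under which $\nabla^2 V \succeq \alpha I$ produces an LSI constant of exactly $\alpha$, so that Herbst gives the clean exponent $\alpha t^2/2$ and Brascamp--Lieb gives exactly $\Sigma \preceq \alpha^{-1}I_d$ (which together produce the stated additive split). If a fully self-contained argument is preferred, the Herbst step can be replaced either by a direct moment-generating-function estimate via the LSI, or by transporting $\pi$ from $\mathcal{N}(0,\alpha^{-1}I_d)$ through Caffarelli's $1$-Lipschitz map and reducing to standard Gaussian concentration; in all cases the substantive work is packaged in these classical tools and no genuinely new difficulty arises.
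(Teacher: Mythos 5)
Your proposal is correct and follows essentially the same route as the paper's proof: a log-Sobolev inequality from strong convexity of the potential plus the Herbst argument for $1$-Lipschitz concentration of $\norm{x-\mu}$, combined with the covariance bound $\operatorname{Cov}(X) \preceq \alpha^{-1} I_d$ (Brascamp--Lieb) to get $\E\norm{X-\mu} \le \sqrt{d/\alpha}$. No substantive difference beyond naming the classical inputs slightly more explicitly.
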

    \begin{proof}
    Since \(V\) is \(\alpha\)-strongly convex, the density \(\pi\) satisfies a logarithmic Sobolev inequality with constant \(1/\alpha\). Consequently, for any 1-Lipschitz function \(f:\mathbb{R}^d\to\mathbb{R}\) and any \(t>0\), one has the concentration inequality (via Herbst's argument)
    \[
    \mathbb{P}\Bigl( f(X) - \mathbb{E}[f(X)] \ge t \Bigr) \le \exp\Bigl(-\frac{\alpha t^2}{2}\Bigr).
    \]
    Noting that the function
    \[
    f(x) = \|x-\mu\|
    \]
    is 1-Lipschitz (by the triangle inequality), it follows that
    \[
    \mathbb{P}\Bigl( \|X-\mu\| - \mathbb{E}\|X-\mu\| \ge t \Bigr) \le \exp\Bigl(-\frac{\alpha t^2}{2}\Bigr).
    \]
    
    A standard calculation using the fact that the covariance matrix of \(X\) satisfies \(\operatorname{Cov}(X) \preceq \frac{1}{\alpha} I\) gives
    \[
    \mathbb{E}\|X-\mu\| \le \sqrt{\frac{d}{\alpha}}.
    \]
    Thus, setting
    \[
    t = \sqrt{\frac{2\ln(1/\delta)}{\alpha}},
    \]
    we obtain
    \[
    \mathbb{P}\Bigl( \|X-\mu\| \ge \sqrt{\frac{d}{\alpha}} + \sqrt{\frac{2\ln(1/\delta)}{\alpha}} \Bigr) \le \delta.
    \]
    This completes the proof.
    \end{proof}

    \begin{lemma}[\cite{JCP24}]
        Let $\mu$ and $\theta$ denote the mean and the mode of distribution $p$, respectively, where $p$ is $\alpha$-strongly log-concave and univariate. Then, $\abs{\mu - \theta} \leq \frac{1}{\sqrt{\alpha}}$.
    \end{lemma}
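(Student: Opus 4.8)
The plan is to reduce the claim to a second--moment estimate for $p$. After translating coordinates so that the mode sits at the origin, write $p(x)\propto e^{-V(x)}$ with $V:\R\to\R$ being $\alpha$-strongly convex. Strong convexity makes $V$'s minimizer unique, so the mode is well defined, and the mode condition becomes $V'(0)=0$; it then suffices to prove $\abs{\E[X]}\le 1/\sqrt{\alpha}$ for $X\sim p$.

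The engine of the argument is the identity $\E[X\,V'(X)]=1$, obtained by integration by parts. Since $\frac{d}{dx}e^{-V(x)}=-V'(x)e^{-V(x)}$, we have
\[
\int_{-\infty}^{\infty} x\,V'(x)\,e^{-V(x)}\,dx
= -\Bigl[x\,e^{-V(x)}\Bigr]_{-\infty}^{\infty} + \int_{-\infty}^{\infty} e^{-V(x)}\,dx .
\]
The boundary term vanishes because $\alpha$-strong convexity together with $V'(0)=0$ forces $V(x)\ge V(0)+\tfrac{\alpha}{2}x^2$, so $e^{-V(x)}$ has Gaussian tails and $x\,e^{-V(x)}\to 0$ as $x\to\pm\infty$. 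Dividing through by the normalizing constant $Z=\int e^{-V}$ yields $\E[X\,V'(X)]=1$.

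Next, strong convexity gives the pointwise bound $x\,V'(x)\ge \alpha x^2$ for every $x$: writing $V'(x)=\int_0^x V''(t)\,dt$, we get $V'(x)\ge \alpha x$ when $x\ge 0$ and $V'(x)\le \alpha x$ when $x\le 0$, and in either case multiplying by $x$ (nonnegative resp.\ nonpositive) gives $x\,V'(x)\ge \alpha x^2$. Combining with the identity, $1=\E[X\,V'(X)]\ge \alpha\,\E[X^2]$, so $\E[X^2]\le 1/\alpha$. Finally, by Jensen's inequality (equivalently, Cauchy--Schwarz), $\abs{\E[X]}^2\le \E[X^2]\le 1/\alpha$, which is exactly $\abs{\mu-\theta}\le 1/\sqrt{\alpha}$ after undoing the translation.

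The only step needing genuine care is the vanishing of the boundary term in the integration by parts, which is where the univariate setting and the standard Gaussian tail bound for strongly log-concave densities enter; this is routine. Everything else is an immediate consequence of strong convexity and a single integration by parts, so I do not expect a substantive obstacle.
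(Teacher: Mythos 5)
Your proof is correct. Note that the paper does not prove this lemma at all — it is imported as a black box from the cited reference [JCP24] — so there is no internal argument to compare against; what you have written is a clean, self-contained derivation. Your route is the stationarity identity $\E[(X-\theta)V'(X)]=1$ (integration by parts, with the boundary term killed by the Gaussian tail $V(x)\ge V(\theta)+\tfrac{\alpha}{2}(x-\theta)^2$), combined with strong monotonicity of $V'$ at the mode, $(x-\theta)V'(x)\ge\alpha(x-\theta)^2$. This actually proves the stronger statement $\E[(X-\theta)^2]\le 1/\alpha$, i.e.\ a second-moment bound about the \emph{mode}, from which both the mean–mode bound and the usual variance bound $\Var(X)\le 1/\alpha$ follow at once; it also generalizes verbatim to $\R^d$, where $\E[\langle X-\theta,\nabla V(X)\rangle]=d$ gives $\|\mu-\theta\|\le\sqrt{d/\alpha}$. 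Two small points to tighten: (i) you invoke $V'(x)=\int_0^x V''(t)\,dt$, which presumes twice differentiability — it is cleaner (and sufficient) to get $xV'(x)\ge\alpha x^2$ directly from strong monotonicity of the gradient, $(V'(x)-V'(0))x\ge\alpha x^2$, and in this paper's setting the score $\nabla\log p$ is assumed to exist so $V'(0)=0$ at the mode is legitimate; (ii) the integrability of $xV'(x)e^{-V(x)}$ needed before splitting the integration by parts is automatic here since $xV'(x)e^{-V(x)}\ge 0$ on each half-line and the truncated identity is monotone in the cutoff, which is worth one sentence but is indeed routine, as you say.
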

    
    This immediately gives us the following corollary.
    \begin{corollary}
        \label{cor:norm_bound_wrt_mode}
        Let $p$ be a $\alpha$–strongly log-concave distribution on $\mathbb{R}^{d}$. Let $\theta$ be the mode of $p$.
        For every $0<\delta<1$, we have
        \[
           \Pr_{X\sim p} \left[
               \|X-\theta\|
                \le 2 \sqrt{\frac{d}{\alpha}} + \sqrt{\frac{2\log(1/\delta)}{\alpha}} 
                \right]
            \ge 1-\delta .
        \]
    \end{corollary}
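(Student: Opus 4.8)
The plan is to reduce to two facts that are already available: the concentration of an $\alpha$-strongly log-concave distribution about its \emph{mean} (\cref{lem:norm_bound_simple}) and a bound on the distance between the mean $\mu$ and the mode $\theta$. By the triangle inequality $\|X-\theta\|\le\|X-\mu\|+\|\mu-\theta\|$, so it suffices to bound the first term by $\sqrt{d/\alpha}+\sqrt{2\log(1/\delta)/\alpha}$ with probability $\ge 1-\delta$, and the (deterministic) second term by $\sqrt{d/\alpha}$. The first of these is exactly the conclusion of \cref{lem:norm_bound_simple}, so the whole corollary follows once we establish $\|\mu-\theta\|\le\sqrt{d/\alpha}$.

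To bound $\|\mu-\theta\|$, the natural route from the preceding univariate lemma (\cite{JCP24}) is to first show the second-moment bound $\E_{x\sim p}\|x-\theta\|^2\le d/\alpha$ and then apply Jensen: $\|\mu-\theta\|=\|\E[X-\theta]\|\le\sqrt{\E\|X-\theta\|^2}\le\sqrt{d/\alpha}$. For the second-moment bound, write $V=-\log p$, so $\nabla^2 V\succeq\alpha I$ and $\nabla V(\theta)=0$ (the mode is the minimizer of the strongly convex $V$). Strong convexity gives the pointwise inequality $\langle\nabla V(x),x-\theta\rangle=\langle\nabla V(x)-\nabla V(\theta),x-\theta\rangle\ge\alpha\|x-\theta\|^2$, while integration by parts against the density $p\propto e^{-V}$ (boundary terms vanishing by the exponential decay of strongly log-concave densities) yields $\E_p[\langle\nabla V(X),X-\theta\rangle]=\E_p[\nabla\cdot(X-\theta)]=d$. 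Combining, $\alpha\,\E_p\|X-\theta\|^2\le d$, as needed. Plugging back, with probability at least $1-\delta$ we get $\|X-\theta\|\le 2\sqrt{d/\alpha}+\sqrt{2\log(1/\delta)/\alpha}$, which is the claimed bound.

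The main point requiring care is precisely the step $\|\mu-\theta\|\le\sqrt{d/\alpha}$: the cited lemma controls the mean–mode gap only in one dimension, and the $d$-dimensional joint mode is \emph{not} the vector of coordinate (or directional) marginal modes, so a naive coordinate-wise application does not go through. The divergence-identity argument above avoids this issue by working directly with $\|X-\theta\|^2$; an alternative would be to invoke Brascamp–Lieb to get $\mathrm{Cov}(X)\preceq\alpha^{-1}I$ and then apply the univariate lemma to the marginal of $\langle X-\theta,(\mu-\theta)/\|\mu-\theta\|\rangle$, but that still leaves one to separately locate that marginal's mode, so the second-moment route is cleaner. Everything else is the triangle inequality plus a direct quotation of \cref{lem:norm_bound_simple}.
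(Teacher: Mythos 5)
Your proposal is correct, and it follows the paper's high-level route — decompose $\|X-\theta\|\le\|X-\mu\|+\|\mu-\theta\|$, control the first term by \cref{lem:norm_bound_simple}, and bound the mean–mode gap by $\sqrt{d/\alpha}$ — but it handles the mean–mode gap by a genuinely different argument. The paper derives the corollary "immediately" from the univariate mean–mode bound of \cite{JCP24} ($|\mu-\theta|\le 1/\sqrt{\alpha}$ in one dimension), which, as you point out, does not transfer to $\mathbb{R}^d$ without care: marginals of $p$ are $\alpha$-strongly log-concave, but the mode of a marginal (or of a directional projection) need not be the corresponding projection of the joint mode $\theta$, so a coordinate-wise or directional application of the cited lemma leaves a gap. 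Your replacement — using $\nabla V(\theta)=0$, strong convexity to get $\langle\nabla V(x),x-\theta\rangle\ge\alpha\|x-\theta\|^2$, and the integration-by-parts identity $\E_p[\langle\nabla V(X),X-\theta\rangle]=d$ to conclude $\E_p\|X-\theta\|^2\le d/\alpha$ and hence $\|\mu-\theta\|\le\sqrt{d/\alpha}$ by Jensen — is self-contained, works directly in $d$ dimensions, and in fact supplies the justification that the paper's one-line deduction glosses over; the price is only that you re-derive a moment bound rather than quoting an off-the-shelf univariate result. The constant matches the stated bound, since $\sqrt{d/\alpha}+\sqrt{d/\alpha}+\sqrt{2\log(1/\delta)/\alpha}$ is exactly the claimed $2\sqrt{d/\alpha}+\sqrt{2\log(1/\delta)/\alpha}$.
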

    
This also implies that every $\alpha$-strongly log-concave distribution is mode-centered locally well-conditioned.

\begin{lemma}
    \label{lem:global_strong_implies_centered}
    Let $p$ be an $\alpha$-strongly log-concave distribution. Suppose the score function of $p$ is $L$-Lipschitz. Then, for any $0 < \delta < 1$, we have that $p$ is $(\delta, 2 \sqrt{\frac{d}{\alpha}} + \sqrt{\frac{2\log(1/\delta)}{\alpha}}, \infty, L / \alpha, \alpha)$ mode-centered locally well-conditioned.
\end{lemma}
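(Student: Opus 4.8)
The plan is to verify the four bullets of \cref{def:local_well_conditioned} one at a time, with the witness $\theta$ taken to be the mode of $p$, and parameters $r = 2\sqrt{d/\alpha} + \sqrt{2\log(1/\delta)/\alpha}$, $R = \infty$, and $\tilde L = L/\alpha$. First I would note that since $p$ is $\alpha$-strongly log-concave, $\log p$ is strongly concave and hence has a unique global maximizer $\theta$; as the score $s = \grad \log p$ is continuous (it is even $L$-Lipschitz by hypothesis), $\theta$ is a stationary point, so $\grad \log p(\theta) = 0$, which is the first condition.

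For the concentration condition, I would directly invoke \cref{cor:norm_bound_wrt_mode}, which states that for the mode $\theta$ of an $\alpha$-strongly log-concave distribution on $\R^d$ one has $\Pr_{X \sim p}[\|X - \theta\| \le 2\sqrt{d/\alpha} + \sqrt{2\log(1/\delta)/\alpha}] \ge 1 - \delta$. With our choice of $r$ this is exactly $\Pr_{x \sim p}[x \in B(\theta, r)] \ge 1 - \delta$.

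The final two conditions are global facts, which is why we may take $R = \infty$. The Lipschitz bullet asks for $\|s(x) - s(y)\| \le \tilde L \alpha \|x - y\|$, and since $\tilde L \alpha = L$ this is precisely the assumed $L$-Lipschitzness of the score. For the monotonicity bullet, I would use the standard first-order characterization of strong concavity: $\alpha$-strong log-concavity means $\grad^2 \log p \preceq -\alpha I$ everywhere, so writing $s(y) - s(x) = \int_0^1 \grad^2 \log p\bigl(x + t(y - x)\bigr)(y - x)\,dt$ yields $\langle s(y) - s(x), y - x \rangle \le -\alpha \|x - y\|^2$, i.e. $\langle s(y) - s(x), x - y \rangle \ge \alpha \|x - y\|^2$.

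I do not expect any real obstacle: the lemma is essentially a translation statement, recasting the hypotheses (strong log-concavity and a Lipschitz score) together with the tail bound of \cref{cor:norm_bound_wrt_mode} into the ``mode-centered locally well-conditioned'' language. The only points requiring a line of care are observing that strong log-concavity guarantees a genuine mode at which the gradient vanishes, and matching the normalization $\tilde L \alpha = L$ in the third bullet.
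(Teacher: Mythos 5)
Your proof is correct and matches the paper's (implicit) argument: the paper states this lemma as an immediate consequence of \cref{cor:norm_bound_wrt_mode}, with the remaining bullets of \cref{def:local_well_conditioned} following directly from the Lipschitz hypothesis (noting $\tilde L\alpha = L$) and the first-order characterization of $\alpha$-strong concavity of $\log p$, exactly as you do. No gaps.
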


\subsection{Convergence to Target Distribution}

Since $p$ is not globally strongly log-concave, we need to extend the distribution $p$ to a globally strongly log-concave distribution. We will use the following lemma to extend the distribution.

\newcommand{\ip}[2]{\left\langle #1,\,#2\right\rangle}

\begin{lemma}\label{lem:global_extension}
    Suppose $g: B(0, R)\to\R$ is continuously differentiable with gradient
    $s:=\nabla g\in C \bigl( B(0, R);\R^{d}\bigr)$ and satisfies 
    \begin{equation}\label{eq:L}
      \ip{s(y)-s(x)}{x-y} \ge \alpha\norm{x-y}^{2},
      \qquad\forall\,x,y\in B(0, R).
    \end{equation}
    
    For every $z\in B(0, R)$ define
    \[
      \varphi_{z}(x) = 
      g(z)+\ip{s(z)}{x-z}-\frac{\alpha}{2}\,\norm{x-z}^{2},
      \qquad x\in\R^{d},
    \]
    and set
    \begin{equation}\label{eq:gtilde}
      \tilde g(x) = 
      \begin{cases}
          g(x), & \norm{x}\le R,\\[6pt]
          \displaystyle\inf_{z\in B(0, R)}\varphi_{z}(x), & \norm{x}>R.
      \end{cases}
    \end{equation}
    Then the density $\wt{p}(x)\propto e^{\tilde g(x)}$ is globally
    $\alpha$–strongly log–concave.
    \end{lemma}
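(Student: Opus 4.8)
The plan is to show that $\tilde g + \tfrac{\alpha}{2}\norm{\cdot}^2$ is a pointwise infimum of affine functions of $x$, which makes it concave on all of $\R^d$; since $\wt p\propto e^{\tilde g}$, concavity of $\tilde g+\tfrac\alpha2\norm{\cdot}^2$ is exactly the statement that $-\log\wt p$ is $\alpha$-strongly convex, i.e.\ that $\wt p$ is globally $\alpha$-strongly log-concave. To set this up I would put $\psi(x):=g(x)+\tfrac\alpha2\norm{x}^2$ on $B(0,R)$, so that $\grad\psi(x)=s(x)+\alpha x$. A one-line computation shows hypothesis~\eqref{eq:L} is equivalent to $\ip{\grad\psi(x)-\grad\psi(y)}{x-y}\le 0$ for all $x,y\in B(0,R)$, and since $\psi\in C^1$ and $B(0,R)$ is convex, this monotonicity of $\grad\psi$ is precisely the condition that $\psi$ is concave on $B(0,R)$.

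The key observation is the identity, obtained by expanding $\norm{x-z}^2$ and regrouping,
\[
  \varphi_z(x)+\tfrac\alpha2\norm{x}^2 \;=\; \psi(z)+\ip{\grad\psi(z)}{x-z} \;=:\; \ell_z(x),
\]
so $\varphi_z+\tfrac\alpha2\norm{\cdot}^2$ is nothing but the supporting affine function (``tangent plane'') of the concave function $\psi$ at $z$. I would then observe that in fact $\tilde g(x)=\inf_{z\in B(0,R)}\varphi_z(x)$ for \emph{every} $x\in\R^d$, making the case split in~\eqref{eq:gtilde} only cosmetic: for $\norm{x}>R$ this is the definition, and for $x\in B(0,R)$ we have $\tilde g(x)=g(x)$ by definition while the first-order characterization of concavity gives $\psi(x)\le\psi(z)+\ip{\grad\psi(z)}{x-z}=\ell_z(x)$ for all $z\in B(0,R)$ with equality at $z=x$, so $\inf_z\ell_z(x)=\psi(x)$, i.e.\ $\inf_z\varphi_z(x)=g(x)$. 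Hence $\tilde g(x)+\tfrac\alpha2\norm{x}^2=\inf_{z\in B(0,R)}\ell_z(x)$ on all of $\R^d$.

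It then remains to check that this infimum of affine functions is a bona fide concave function (an infimum of concave functions is concave, provided it is finite). Finiteness is the one place that needs attention: because $g$ and $s$ are bounded on $B(0,R)$ (they are continuous on the compact set $\ol{B(0,R)}$, which holds in every application of interest), Cauchy--Schwarz gives $\varphi_z(x)\ge -\sup_{B(0,R)}|g| - \big(\sup_{B(0,R)}\norm{s}\big)(\norm{x}+R) - \tfrac\alpha2(\norm{x}+R)^2 > -\infty$, so the infimum is finite for every $x$ and $\tilde g+\tfrac\alpha2\norm{\cdot}^2$ is a proper concave function on $\R^d$; it is moreover bounded above by an affine function, so $\tilde g(x)\to-\infty$ quadratically and $e^{\tilde g}$ is integrable. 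Concavity of $\tilde g+\tfrac\alpha2\norm{\cdot}^2$ is exactly $\alpha$-strong convexity of $-\tilde g$, giving the claim. The only real obstacle is this finiteness of the extension (equivalently, that $\tilde g$ does not collapse to $-\infty$ outside $B(0,R)$), which is why one needs $g$ and $s$ bounded on $B(0,R)$; the rest is routine convex-analysis bookkeeping.
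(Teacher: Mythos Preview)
Your proof is correct and takes a slightly different, arguably cleaner, route than the paper. The paper works directly with subgradients: for each $x$ it picks $z_x$ attaining $\inf_z\varphi_z(x)$, observes that $\xi:=\nabla\varphi_{z_x}(x)\in\partial\tilde g(x)$ because $\varphi_{z_x}$ touches $\tilde g$ from above at $x$, and then chains the $\alpha$-strong concavity inequality for $\varphi_{z_x}$ with $\tilde g(y)\le\varphi_{z_x}(y)$ to obtain $\tilde g(y)\le\tilde g(x)+\ip{\xi}{y-x}-\tfrac\alpha2\norm{y-x}^2$. Your quadratic-shift trick --- recognizing that $\varphi_z(x)+\tfrac\alpha2\norm{x}^2$ is precisely the affine tangent $\ell_z$ of the concave function $\psi=g+\tfrac\alpha2\norm{\cdot}^2$ at $z$ --- collapses the whole argument to the single line ``a pointwise infimum of affine functions is concave,'' and as a bonus dispenses with the need for the infimum to be attained (which the paper tacitly assumes by writing $z_x$). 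Your observation that both branches of \eqref{eq:gtilde} equal $\inf_z\varphi_z(x)$, so the case split is only cosmetic, is the same fact the paper proves but stated more transparently. The finiteness and integrability check you include is more careful than the paper, which omits it.
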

    
    \begin{proof}
    For each fixed $z\in B(0, R)$ the mapping
    $\varphi_{z}$ has Hessian $-\alpha I_{d}$, hence is
    $\alpha$–strongly concave on the whole space.
    Because of~\eqref{eq:L} we have
    \[
      g(x) \le 
      g(z)+\ip{s(z)}{x-z}-\tfrac{\alpha}{2}\norm{x-z}^{2}
      =\varphi_{z}(x),
      \qquad\forall\,x,z\in B(0, R),
    \]
    with equality when $x=z$.  Consequently
    $\tilde g$ defined in~\eqref{eq:gtilde} agrees with $g$ on $ B(0, R)$.
    
    Fix $x\in\R^{d}$ and choose
    $z_{x}\in B(0, R)$ attaining the infimum in~\eqref{eq:gtilde}.  
    Because $\varphi_{z_{x}}$ touches $\tilde g$ from above at $x$, the vector
    \[
      \xi = \nabla\varphi_{z_{x}}(x)
            =s(z_{x})-\alpha\bigl(x-z_{x}\bigr)
    \]
    belongs to $\partial\tilde g(x)$.  
    By $\alpha$–strong concavity of $\varphi_{z_{x}}$,
    \[
      \varphi_{z_{x}}(y)
       \le 
      \varphi_{z_{x}}(x)+\ip{\xi}{y-x}
      -\frac{\alpha}{2}\,\norm{y-x}^{2},
      \qquad\forall\,y\in\R^{d}.
    \]
    Taking the infimum over $z$ on the left and using
    $\tilde g(x)=\varphi_{z_{x}}(x)$ gives that    
                \[
              \tilde g(y)
               \le 
              \tilde g(x)+\ip{\xi}{y-x}
              -\frac{\alpha}{2}\,\norm{y-x}^{2},
              \qquad\forall\,x,y\in\R^{d};
            \]
    hence $\tilde g$ is globally $\alpha$–strongly concave, and therefore $\wt{p}$ is $\alpha$–strongly log-concave.
    \end{proof}

    \begin{lemma}
        \label{lem:global_approx}
        Let \(p\) be a \(d\)-dimensional \((\delta,r,R,\widetilde L,\alpha)\) mode-centered locally well-conditioned probability distribution with \(0<\delta\le 1 / 2\) and \(\alpha>0\).
        Assume
        \[
        R  \ge  2\sqrt{\tfrac d\alpha} + \sqrt{\tfrac{2\log(1/\delta)}{\alpha}}.
        \]
        Then there exists an \(\alpha\)-strongly log-concave distribution \(\widetilde p\) on \(\R^{d}\) such that
        \[
        \operatorname{TV}\bigl(p,\widetilde p\bigr) \le 3\delta .
        \]
        \end{lemma}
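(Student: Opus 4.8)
Assume without loss of generality that the critical point in \cref{def:local_well_conditioned} is $\theta = 0$, and note that the definition implicitly requires $p$ to be positive and $C^1$ on $B(0,R)$, so $g := \log p$ is a well-defined $C^1$ function there with $\nabla g = s$. The fourth bullet of \cref{def:local_well_conditioned} is precisely hypothesis \eqref{eq:L} of \cref{lem:global_extension}, so that lemma produces $\widetilde g$ agreeing with $g$ on $B(0,R)$ for which $\widetilde p(x) \propto e^{\widetilde g(x)}$ is globally $\alpha$-strongly log-concave. Write $\widetilde p(x) = e^{\widetilde g(x)}/Z$ with $Z := \int_{\R^d} e^{\widetilde g}$; then $\widetilde p(x) = p(x)/Z$ for $x \in B(0,R)$. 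Moreover $\widetilde g(x) \le \varphi_0(x) = \log p(0) - \tfrac{\alpha}{2}\|x\|^2$ for every $x$ (using $s(0)=0$), so $Z \le p(0)\,(2\pi/\alpha)^{d/2} < \infty$ and $\widetilde p$ is a genuine density.

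\textbf{Pinning down $Z$.} The crux is that $\widetilde g$ is \emph{uniquely} maximized at $0$, i.e.\ $\widetilde p$ has mode $0$: on $B(0,R)$, $\widetilde g = g$ is $\alpha$-strongly concave with $\nabla g(0) = s(0) = 0$, hence $\widetilde g(x) \le g(0) - \tfrac{\alpha}{2}\|x\|^2$ there; and for $\|x\| > R$, $\widetilde g(x) \le \varphi_0(x) = g(0) - \tfrac{\alpha}{2}\|x\|^2 < g(0) = \widetilde g(0)$. Apply \cref{cor:norm_bound_wrt_mode} to the $\alpha$-strongly log-concave $\widetilde p$ at its mode $0$: with $\rho := 2\sqrt{d/\alpha} + \sqrt{2\log(1/\delta)/\alpha} \le R$ (the hypothesis of this lemma), $\Pr_{X \sim \widetilde p}[\|X\| \le \rho] \ge 1-\delta$, and therefore $\Pr_{\widetilde p}[B(0,R)] \ge 1-\delta$. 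Since $\Pr_{\widetilde p}[B(0,R)] = \Pr_p[B(0,R)]/Z \le 1/Z$, we get $Z \le 1/(1-\delta)$; and directly $Z \ge \int_{B(0,R)} e^{\widetilde g} = \Pr_p[B(0,R)] \ge \Pr_p[B(0,r)] \ge 1-\delta$, using $r \le R$ and the concentration bullet. Hence $1-\delta \le Z \le 1/(1-\delta)$.

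\textbf{Total variation.} Split $\int_{\R^d}|p - \widetilde p|$ over $B(0,R)$ and its complement. On $B(0,R)$, $|p - \widetilde p| = p\,|1 - 1/Z|$, and since $Z \in [1-\delta,\,1/(1-\delta)]$ with $\delta \le 1/2$ one checks $|1 - 1/Z| \le 2\delta$, so this part is $\le 2\delta\,\Pr_p[B(0,R)] \le 2\delta$. On the complement, $|p - \widetilde p| \le p + \widetilde p$, contributing $\Pr_p[B(0,R)^c] + \Pr_{\widetilde p}[B(0,R)^c] \le \delta + \delta$. Therefore $\int_{\R^d}|p - \widetilde p| \le 4\delta$, i.e.\ $\TV(p,\widetilde p) \le 2\delta \le 3\delta$.

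\textbf{Main obstacle.} Everything quantitative funnels through the ratio of normalizing constants $Z$; the single step that is not bookkeeping is identifying the mode of $\widetilde p$, which is what forces the concentration of $\widetilde p$ from \cref{cor:norm_bound_wrt_mode} to land inside the ball $B(0,R)$ on which $\widetilde p$ and $p$ coincide up to scaling. After that the total-variation estimate is routine. The remaining care is the implicit $C^1$-positivity of $p$ on $B(0,R)$ and the natural assumption $r \le R$ that lets $p$'s concentration be read on $B(0,R)$.
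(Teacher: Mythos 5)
Your proof is correct and follows essentially the same route as the paper's: extend $\log p$ off $B(\theta,R)$ via \cref{lem:global_extension}, use \cref{cor:norm_bound_wrt_mode} to keep the mass of $\widetilde p$ inside $B(\theta,R)$, pin down the normalizing constant $Z$, and split the total variation over the ball and its complement. The only differences are cosmetic — you make explicit the verification that the mode of $\widetilde p$ stays at $\theta$ (which the paper uses implicitly when invoking \cref{cor:norm_bound_wrt_mode}) and your bookkeeping yields $2\delta$ rather than $3\delta$.
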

        
        \begin{proof}
        Let \(\theta\) be the point in \cref{def:local_well_conditioned} and without loss of generality, we assume \(\theta=0\).
        Write \(B:=B(0,R)\) and \(B^{\mathrm c}:=\R^{d}\setminus B\).
        By definition \(p(B^{\mathrm c})\le\delta\).
    
        Set \(g:=\log p\), and let $\wt g$ be the function in~\cref{lem:global_extension}. Then, \(\rho(x):=e^{\widetilde g(x)}\) is \(\alpha\)-{\it strongly} log-concave and \(\rho=p\) on \(B\).
        Let \(Z:=\int_{\R^{d}}\rho\) and define \(\widetilde p:=\rho/Z\).
        
        Now we bound 
        \[
        \operatorname{TV}(p,\widetilde p)
        =\frac12 \int_{B}|p-\widetilde p|+\frac12 \int_{B^{\mathrm c}}|p-\widetilde p|
        =:I_{B}+I_{B^{\mathrm c}} .
        \]

       \cref{cor:norm_bound_wrt_mode} implies that
        \(
        \widetilde p(B^{\mathrm c})\le\delta .
        \) Therefore, \[
        I_{B^{\mathrm c}}\le\frac12\bigl[p(B^{\mathrm c})+\widetilde p(B^{\mathrm c})\bigr]\le\delta.
        \]

        Note that \(\int_{B}\rho = p(B)\ge1-\delta\) and \(\int_{B^{\mathrm c}}\rho \le\delta Z\) (since \(\widetilde p(B^{\mathrm c})\le\delta\)).
        Thus,
        \[
            1-\delta\le Z= p(B) + \int_{B^{\mathrm c}}\rho  \le1+ 2\delta .
        \]
        Since \(\widetilde p=p/Z\) on \(B\), we have
        \[
            \abs{1 - \frac{1}{Z}} \le \abs{\frac{Z-1}{1 - \delta}} \le \frac{2\delta}{1 - \delta} \le 4\delta.
        \]
        Therefore, \(I_{B}\le\frac12\cdot4\delta=2\delta\).
        
        Combining,  
        \[
        \operatorname{TV}(p,\widetilde p)\le2\delta+\delta=3\delta.
        \]
        \end{proof}
        
Now, we can consider process $\wt P$ defined as
\begin{align*}
    d x_t = \left(\grad \log \wt p(x_t) + \frac{A^T y_{i+1} - A^T A x_t}{\eta_{i+1}^2} \right)dt + \sqrt{2} d B_t, \quad x_0 \sim p(x \mid y_{i}).
\end{align*}

Then, we have the following lemma.

\begin{lemma}
    \label{lem:TV_P_wtP}
    Suppose the following holds: \[
        R \ge r+\frac{T\|A\|}{\eta_{i+1}^2}\Bigl(\|A\|r+\eta_{i+1}\bigl(\sqrt{m}+\sqrt{2\ln(1/\delta)}\bigr)\Bigr)+2\sqrt{dT\ln(2d/\delta)}.
    \]
   We have that \[
       \Ex{\TV(P, \wt P)} \lesssim \delta.
   \]
\end{lemma}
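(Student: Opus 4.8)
The plan is to compare $P$ and $\wt P$ by a synchronous coupling, exploiting the fact that the two diffusions have the \emph{same} drift inside $B(0,R)$. Indeed, by \cref{lem:global_extension} the extended log-density $\wt g$ coincides with $\log p$ on $B(0,R)$, so $\grad\log\wt p(x)=\grad\log p(x)=s(x)$ for $\|x\|<R$; hence the drift $\grad\log\wt p(x)+(A^\top y_{i+1}-A^\top Ax)/\eta_{i+1}^2$ of $\wt P$ agrees with the drift of $P$ at every point of the open ball, and by the third item of \cref{def:local_well_conditioned} this common drift is ($\wt L\alpha$-Lipschitz, hence locally well-posed) on $B(0,R)$. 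I would then run $P$ and $\wt P$ driven by the \emph{same} Brownian motion started from the \emph{same} draw $x_0\sim p(x\mid y_i)$ (both processes have this initial law), and let $\tau_P,\tau_{\wt P}$ denote their first exit times from $B(0,R)$.

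Next I would argue that the coupled trajectories cannot separate before escaping $B(0,R)$: for $t$ strictly before $\min(\tau_P,\tau_{\wt P})$ both paths lie in $B(0,R)$ and solve the same SDE, so $x^P_t=x^{\wt P}_t$; letting $t$ increase to $\min(\tau_P,\tau_{\wt P})$ and using path continuity forces $\tau_P=\tau_{\wt P}=:\tau$ and $x^P_t=x^{\wt P}_t$ for all $t\le\tau$. Consequently, on $\{\tau\ge T\}$ the full trajectories on $[0,T]$ coincide, so by the coupling characterization of total variation (applied to the path laws, conditionally on $y_i,y_{i+1}$),
\[
\TV(P,\wt P)\;\le\;\Prb{\tau<T}\;\le\;\Prb[x_t\sim P]{\sup_{t\in[0,T]}\|x_t\|\ge R}.
\]

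Finally I would take expectation over $y_i,y_{i+1}$ and split on whether the start is good, exactly as in \cref{lem:decomposing_TV_of_Q}:
\[
\Ex{\TV(P,\wt P)}\;\le\;\Ex[y_i,y_{i+1}]{\Prb[x_t\sim P]{\sup_{t\in[0,T]}\|x_t\|\ge R\,\Big|\,x_0\in B(0,r)}}\;+\;\Prb[x\sim p]{\|x\|\ge r}.
\]
The second term is at most $\delta$ by the second item of \cref{def:local_well_conditioned} (recall $\theta=0$), and the first term is $\lesssim\delta$ by \cref{lem:condition_satisfied_probability}, whose hypothesis on $R$ is precisely the one assumed here; combining gives $\Ex{\TV(P,\wt P)}\lesssim\delta$. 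The step I expect to require the most care is the coupling/exit-time bookkeeping in the second paragraph --- verifying that $\tau_P=\tau_{\wt P}$ and that any discrepancy of the trajectories on $[0,T]$ is contained in the escape event for $P$ --- since once this is in place the escape estimate already proved for $P$ in \cref{lem:condition_satisfied_probability} transfers to $\wt P$ essentially for free, making this lemma short given the earlier work.
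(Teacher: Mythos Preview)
Your proposal is correct. Both your argument and the paper's hinge on the same core observation: the drifts of $P$ and $\wt P$ coincide on $B(0,R)$, so any discrepancy is controlled by the escape probability. The paper, however, formalizes this differently: it conditions each process on the event $\mathcal E=\{\sup_{t\in[0,T]}\|x_t\|\le R\}$, notes that $P'=\wt P'$ since the drifts agree on $\mathcal E$, and then applies the triangle inequality $\TV(P,\wt P)\le P(\mathcal E^c)+\wt P(\mathcal E^c)$, bounding \emph{both} escape probabilities via \cref{lem:TV_P} (invoking ``the same argument'' for $\wt P$, which works because $\wt p$ is globally log-concave). Your synchronous coupling is a bit sharper: since the coupled paths coincide up to the common exit time $\tau$, you get $\TV(P,\wt P)\le\Pr[\tau<T]$ and only need the escape estimate for $P$ --- the bound for $\wt P$ comes for free rather than requiring a separate invocation. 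The paper's route is more modular (it reuses \cref{lem:TV_P} as a black box twice), while yours is more self-contained and avoids re-verifying the escape bound for the auxiliary process $\wt P$.
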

\begin{proof}
    Let
    \[
    \mathcal E  = \Bigl\{\sup_{t\in[0,T]}\|x_t\|\le R\Bigr\}
    \quad\text{and}\quad
    P'  =  P(\,\cdot\,\mid\mathcal E), 
    \widetilde P'  =  \widetilde P(\,\cdot\,\mid\mathcal E).
    \]
    Because \(s(x)=\nabla\!\log\widetilde p(x)\) for every \(x\in B(0,R)\),
    the drift coefficients of \(P\) and \(\widetilde P\) coincide on the
    event \(\mathcal E\), and hence conditioning on \(\mathcal E\) gives
    $P' = \widetilde P'$.

   Then, we have
    \[
    \operatorname{TV}(P,\widetilde P)
     \le 
    \operatorname{TV}(P,P') + \operatorname{TV}(\widetilde P,\widetilde P')
     = 
    P(\mathcal E^{\mathrm c}) + \widetilde P(\mathcal E^{\mathrm c}).
    \]
    Taking expectation over \((y_i,y_{i+1})\) gives
    \begin{equation}\label{eq:TV-decomp}
    \mathbb{E} \bigl[\operatorname{TV}(P,\widetilde P)\bigr]
     \le 
    {\mathbb{E}[P(\mathcal E^{\mathrm c})]}
     + 
    {\mathbb{E}[\widetilde P(\mathcal E^{\mathrm c})]}.
    \end{equation}
    
    Lemma~\ref{lem:TV_P} implies that $\mathbb{E}[P(\mathcal E^{\mathrm c})] \lesssim \delta$. Furthermore, the same argument also implies that $\mathbb{E}[\wt P(\mathcal E^{\mathrm c})] \lesssim \delta$. Therefore, we have
    \[
    \mathbb{E} \bigl[\operatorname{TV}(P,\widetilde P)\bigr]
     \lesssim  \delta.
    \]
    \end{proof}

\begin{proof}[Proof of \cref{lem:main_local_convergence_lemma}]
    We start by considering another process $\wt P^s$ defined as
    \begin{align*}
        d x_t = \left(\grad \log \wt p(x_t) + \frac{A^T y_{i+1} - A^T A x_t}{\eta_{i+1}^2} \right)dt + \sqrt{2} d B_t, \quad x_0 \sim \wt p(x \mid y_{i}).
    \end{align*}
    We can see that \[
        \Ex{\TV(\wt P, \wt P^s)} \le \Ex{\TV(p(x \mid y_i), \wt p (x \mid y_i))} \lesssim \delta.
    \]
    Combining this with \cref{lem:TV_P_wtP}, we have that
    \[
        \Ex{\TV(P, \wt P^s)} \lesssim \delta.
    \]
    By Markov's inequality, we have that
    \[
        \Prb[y_i, y_{i+1}]{\TV(P, \wt P^s) \ge  \lambda \delta} \leq O(\lambda^{-1}).
    \]
    Furthermore, by~\cref{lem:main_convergence_lemma} and our constraint on $T$, we have that
    \[
        \Prb[y_i, y_{i+1}]{\TV(\wt P^s_T, \wt p(x \mid y_{i+1})) \le \eps} \geq 1 - O(\lambda^{-1}).
    \]
    Therefore, we have that
    \[
        \Prb[y_i, y_{i+1}]{\TV(P_T, \wt p(x \mid y_{i+1})) \le \eps + \lambda \delta} \geq 1 - O(\lambda^{-1}).
    \]
    Combining this with $\Prb{ \TV (\wt p(x \mid y_{i+1}), p(x \mid y_{i+1})) \le \lambda \delta} \geq 1 - O(\lambda^{-1})$, we conclude that for $x_T \sim P_T$, \[
        \Prb[y_i, y_{i+1}]{\TV(x_T, p(x \mid y_{i+1})) \le \eps + \lambda \delta} \geq 1 - O(\lambda^{-1}).
    \]
\end{proof}

\section{Control of Score Approximation and Discretization Errors}

\label{sec:error_control}


In this section, we consider these processes running for time \( T \):

\begin{itemize}
    \item Process $P$:
    \begin{align*}
        d x_t = \left(s(x_t) + \frac{A^T y_{i+1} - A^T A x_t}{\eta_{i+1}^2} \right)dt + \sqrt{2} d B_t, \quad x_0 \sim p(x \mid y_i)
    \end{align*}     
    \item Process $\wh{P}$: Let $0 = t_1 < \dots < t_M = T$ be the $M$ discretization steps with step size $t_{j+1} - t_j = h$. For $t \in [t_j, t_{j+1}]$,
    \begin{align*}
        d x_t = \left(\wh s(x_{t_j}) + \frac{A^T y_{i+1} - A^T A x_{t_j}}{\eta_{i+1}^2} \right) dt + \sqrt{2} dB_t, \quad x_0 \sim p(x \mid y_i)
    \end{align*}
\end{itemize}

Note that $\wh P$ is exactly the process $\eqref{eq:alg2_process}$ we run in \cref{alg:annealing_estimated_score}, except that we start from $x_0 \sim p(x \mid y_i)$.

We have shown that the process \( P \) will converge to the target distribution \( p(x \mid y_{i+1}) \). We will show that the process \( \wh{P} \) will also converge to \( p(x \mid y_{i+1}) \) with a small error

\begin{lemma}
    \label{lem:main_error_lemma}
    Let $p$ be a $(\delta, r, R, \wt L, \alpha)$ mode-centered locally well-conditioned. Suppose the followings hold for a large enough constant \( C > 0 \):
    \begin{itemize}
        \item $T > C \left(\frac{m\gamma_i + \log(\lambda / \eps)}{\alpha}\right)$.
        \item $\|A\|^4 (T^2 m + T R^2) \le \frac{\eta_i^4}{C\gamma_i^2}$.
       \item $        R \ge r+\frac{T\|A\|}{\eta_{i+1}^2}\Bigl(\|A\|r+\eta_{i+1}\bigl(\sqrt{m}+\sqrt{2\ln(1/\delta)}\bigr)\Bigr)+2\sqrt{dT\ln(2d/\delta)}$.
    \end{itemize}
    Then running $\wh P$ for time $T$ guarantees that with probability at least \( 1 - 1 / \lambda \) over $y_i$ and $y_{i+1}$, we have:
    \[
    {\TV(\wh{P}_T, p(x \mid y_{i+1})) \lesssim \eps + \lambda \delta + \lambda \sqrt{T} \cdot \tuple{  \tuple{\wt{L}\alpha +  \frac{\|A\|^2}{\eta_i^2}} \tuple{h \wt{L} \alpha R + \frac{ h \norm{A}^2 R + h \|A\| \sqrt{m} \eta_i}{\eta_i^2} + \sqrt{dh}} + {{\eps_{score}}}}}.
    \]
\end{lemma}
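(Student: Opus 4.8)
The plan is to reduce to two pieces by the triangle inequality,
$\TV(\wh P_T, p(x\mid y_{i+1})) \le \TV(\wh P_T, P_T) + \TV(P_T, p(x\mid y_{i+1}))$.
The second term is exactly what \cref{lem:main_local_convergence_lemma} handles: since $p$ is $(\delta,r,R,\wt L,\alpha)$ mode-centered locally well-conditioned, $T$ exceeds the required threshold, and the hypothesis on $R$ is in force, that lemma gives $\TV(P_T, p(x\mid y_{i+1})) \le \eps + \lambda\delta$ with probability $1 - O(\lambda^{-1})$ over $(y_i, y_{i+1})$. So all the new work is to show that the estimation-and-discretization gap satisfies $\TV(\wh P_T, P_T) \lesssim \lambda\sqrt T\big((\wt L\alpha + \|A\|^2/\eta_i^2)(h\wt L\alpha R + (h\|A\|^2 R + h\|A\|\sqrt m\,\eta_i)/\eta_i^2 + \sqrt{dh}) + \eps_{\text{score}}\big)$ with the same probability.

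For $\TV(\wh P_T, P_T)$ I would use Girsanov's theorem plus Pinsker's inequality. Since $\wh P$ and $P$ share the diffusion coefficient $\sqrt2$ and the initial law $p(x\mid y_i)$, the path measures are mutually absolutely continuous, and $\TV(\wh P_T, P_T)^2 \le \tfrac12\KL(\mathrm{path}_P \,\|\, \mathrm{path}_{\wh P}) = \tfrac18\,\E_P\!\int_0^T \|b_P(x_t) - b_{\wh P}(x_{t_j})\|^2\,dt$, where $b_P(x) = s(x) + A^\top(y_{i+1}-Ax)/\eta_{i+1}^2$, $b_{\wh P}(x) = \wh s(x) + A^\top(y_{i+1}-Ax)/\eta_{i+1}^2$, and $x_{t_j}$ is the last discretization point before $t$ along the $P$-trajectory. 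Taking the expectation \emph{under $P$} (rather than under $\wh P$) is essential: the marginals of the ideal process $P$ are understood, since $P$ flows from $p(x\mid y_i)$ toward $p(x\mid y_{i+1})$ and, by $\chi^2$-contraction of Langevin dynamics, stays $\chi^2$-close to $p(x\mid y_{i+1})$ — quantitatively $\exp(O(m\gamma_i+\log\lambda))$-close with high probability by \cref{lem:chi_squared_whp} — so no circular dependence on the very quantity $\TV(\wh P_T, P_T)$ being bounded arises.

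I then split the drift gap into three pieces by the triangle inequality: a score-estimation term $\|s(x_{t_j}) - \wh s(x_{t_j})\|$, a score-discretization term $\|s(x_t) - s(x_{t_j})\| \le \wt L\alpha\|x_t - x_{t_j}\|$, and a measurement-discretization term $(\|A\|^2/\eta_{i+1}^2)\|x_t - x_{t_j}\| \lesssim (\|A\|^2/\eta_i^2)\|x_t - x_{t_j}\|$ — all valid on the event $\mathcal F$ that the $P$-trajectory never exits $B(0,R)$, which by the hypothesis on $R$ and (the argument of) \cref{lem:condition_satisfied_probability} fails with probability only $O(\delta)$, hence $O(\lambda\delta)$ after a Markov step. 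A standard one-step estimate gives $\E_P[\|x_t - x_{t_j}\|^2] \lesssim h^2\,\|b_P\|_{\infty,B(0,R)}^2 + dh$, and on $B(0,R)$, using that the mode is at $0$ so $s(0)=0$, that $s$ is $\wt L\alpha$-Lipschitz there, and that $\|y_{i+1}\| \lesssim \|A\|r + \eta_{i+1}\sqrt m$ with high probability (with $\eta_{i+1}\asymp\eta_i$ since $\gamma_i = O(1)$), one gets $\|b_P\|_{\infty,B(0,R)} \lesssim \wt L\alpha R + (\|A\|^2 R + \|A\|\sqrt m\,\eta_i)/\eta_i^2$. For the score-estimation term, $\int_0^T \E_P[\|s(x_{t_j}) - \wh s(x_{t_j})\|^2]\,dt \lesssim T\eps_{\text{score}}^2$: writing $\E_{P_{t_j}}[\|\cdot\|^2] = \E_{p(x\mid y_{i+1})}\big[\tfrac{dP_{t_j}}{dp(x\mid y_{i+1})}\,\|\cdot\|^2\big]$ and applying Cauchy–Schwarz converts the $\chi^2$-closeness above together with $\E_{x\sim p}\|\wh s(x)-s(x)\|^4 \le \eps_{\text{score}}^4$ (the tower property over $y_{i+1}$, plus a Markov step) into the stated $L^2$ bound — this is precisely where $L^4$ rather than $L^2$ accuracy is used, and the hypotheses (including $\|A\|^4(T^2 m + TR^2)\le\eta_i^4/(C\gamma_i^2)$, which also keeps $m\gamma_i = O(1)$) ensure the $\chi^2$ prefactor is $O(1)$. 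Assembling, $\KL(\mathrm{path}_P\|\mathrm{path}_{\wh P}) \lesssim T(\wt L\alpha + \|A\|^2/\eta_i^2)^2(h^2(\wt L\alpha R + (\|A\|^2 R + \|A\|\sqrt m\,\eta_i)/\eta_i^2)^2 + dh) + T\eps_{\text{score}}^2$; taking the square root, using $\sqrt{x^2 + y}\le x + \sqrt y$, and collecting the $\lambda$ from the high-probability events gives the claim.

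The crux will be the score-estimation term: turning the $L^4$-under-$p$ guarantee into an $L^2$ bound under the actual law of the process \emph{during} a mixing step, without the process drifting into a region where $\wh s$ is unreliable. The whole argument rests on (i) using $P$, not $\wh P$, as the reference measure so its intermediate marginals are directly accessible, and (ii) the $\chi^2$-divergence control between consecutive posteriors from \cref{lem:chi_squared_whp}, which makes the Cauchy–Schwarz ``$L^4 \Rightarrow L^2$'' trade affordable — the structural payoff of annealing. The one-step discretization estimate, the drift bound on $B(0,R)$, and the excursion probability are routine given the earlier lemmas.
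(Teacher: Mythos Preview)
Your overall scaffolding --- triangle inequality into $\TV(\wh P_T,P_T)+\TV(P_T,p(x\mid y_{i+1}))$, Girsanov/Pinsker for the first piece with the expectation taken under the ideal process, and a three-way split of the drift gap --- is correct and matches the paper. The gap is in how you control the score-estimation term $\E_{P_{t_j}}\!\bigl[\|s(x_{t_j})-\wh s(x_{t_j})\|^2\bigr]$.

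You propose a \emph{state-space} change of measure from $P_{t_j}$ to $p(x\mid y_{i+1})$, Cauchy--Schwarz'ing against $\sqrt{\chi^2(P_t\|p(x\mid y_{i+1}))+1}$. By $\chi^2$-contraction (which already requires a globally strongly log-concave target and so fails as stated in the locally well-conditioned setting), this is at most $\chi^2(p(x\mid y_i)\|p(x\mid y_{i+1}))$, which by \cref{lem:chi_squared_whp} is only $\exp\!\bigl(C(m\gamma_i+\log\lambda)\bigr)$. Your claim that the hypotheses force $m\gamma_i=O(1)$ is wrong: combining $T>Cm\gamma_i/\alpha$ with $\|A\|^4T^2m\le\eta_i^4/(C\gamma_i^2)$ only gives $m^{3}\gamma_i^{4}\lesssim\alpha^{2}\eta_i^{4}/\|A\|^{4}$, i.e.\ $m\gamma_i\lesssim m^{1/4}\sqrt\alpha\,\eta_i/\|A\|$, and in the admissible schedule with $\gamma_i\approx 1$ one has $m\gamma_i\approx m$. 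Since the KL integral starts at $t=0$, where this $\chi^2$ is largest, your TV bound inevitably carries an extra $e^{\Theta(m\gamma_i)}$ factor that is absent from the lemma's conclusion.

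The paper's device is to introduce a second auxiliary process $Q$: Langevin targeting $p(x\mid y_i)$ (not $y_{i+1}$), started at $p(x\mid y_i)$. This $Q$ is \emph{stationary}, so $Q_t\equiv p(x\mid y_i)$ at every time and, after averaging over $y_i$, one lands exactly on $p(x)$ --- the $L^4$ assumption applies directly under $Q$. The change of measure is then done on \emph{path space} from $Q'$ (the restriction of $Q$ to $B(0,R)$) to $P'$ via Girsanov; the drift difference between $P$ and $Q$ depends only on the swap $(y_i,\eta_i)\leftrightarrow(y_{i+1},\eta_{i+1})$, and the hypothesis $\|A\|^4(T^2m+TR^2)\le\eta_i^4/(C\gamma_i^2)$ is \emph{precisely} what makes $\E\bigl[(\d P'/\d Q')^2\bigr]=O(1)$ --- this is \cref{lem:radon_nikodym_derivative}, and it is the entire purpose of that hypothesis. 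Cauchy--Schwarz then gives $\E_{P'}[\|\Delta b\|^2]\lesssim\sqrt{O(1)}\cdot\sqrt{\E_{Q'}[\|\Delta b\|^4]}$, and the fourth moments under $Q'$ are controlled because $Q$ is stationary at $p(x\mid y_i)$ (for the score term, after noting $\d Q'/\d Q\le 2$ on the good event) and confined to $B(0,R)$ (for the one-step movement, \cref{lem:x_movement}). This stationary-$Q$ trick --- replacing a potentially exponential state-space $\chi^2$ by an $O(1)$ path-space Radon--Nikodym second moment --- is the piece your argument is missing.
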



In this section, we assume $p$ is $(\delta, r, R, \wt L, \alpha)$ mode-centered locally well-conditioned. Without loss of generality, we assume that the mode of $p$ is at 0, i.e., \( \theta = 0 \). Let $L := \wt{L} \alpha$, i.e., the Lipschitz constant inside the ball $B(0, R)$.

We will also consider the following stochastic processes:
\begin{itemize}
    \item Process $Q$:
    \begin{align*}
        d x_t = \left(s(x_t) + \frac{A^T y_{i} - A^T A x_t}{\eta_{i}^2} \right)dt + \sqrt{2} d B_t, \quad x_0 \sim p(x \mid y_i)
    \end{align*}     
    \item Process $Q'$ is the process $Q$ conditioned on $x_t \in B(0, R)$ for $t \in [0, T]$.
    \item Process $P'$ is the process $P$ conditioned on $x_t \in B(0, R)$ for $t \in [0, T]$.
\end{itemize}

We first note that following the same proof in \cref{lem:TV_P} that bounds $\TV(P, P')$, we can also bound $\TV(Q, Q')$.

\begin{lemma}
    \label{lem:TV_Q}
   Suppose the following holds: \[
    R \ge r+\frac{T\|A\|}{\eta_{i}^2}\Bigl(\|A\|r+\eta_{i}\bigl(\sqrt{m}+\sqrt{2\ln(1/\delta)}\bigr)\Bigr)+2\sqrt{dT\ln(2d/\delta)}.
\]
   We have that \[
       \Ex{\TV(Q, Q')} \lesssim \delta.
   \]
\end{lemma}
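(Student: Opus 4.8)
The plan is to transcribe the proof of \cref{lem:TV_P} with the index shift $i+1 \mapsto i$ throughout, since process $Q$ is obtained from process $P$ by replacing the auxiliary measurement $y_{i+1}$ and noise level $\eta_{i+1}$ with $y_i$ and $\eta_i$. First, mimicking \cref{lem:decomposing_TV_of_Q}: because $Q'$ is $Q$ conditioned on the event $\mathcal F = \{x_t \in B(0,R)\text{ for all } t\in[0,T]\}$, we have $\TV(Q,Q') = Q(\mathcal F^c)$ pointwise in $y_i$; splitting on whether $x_0 \in B(0,r)$ and using $\Ex[y_i]{\Prb[x\sim p(x\mid y_i)]{\|x\|\ge r}} = \Prb[x\sim p]{\|x\|\ge r}\le\delta$ gives
\[
\Ex{\TV(Q,Q')} \le \Ex[y_i,y_{i+1}]{\Prb[Q]{\exists\,t\in[0,T]:\ \|x_t\|\ge R \,\mid\, x_0\in B(0,r)}} + \delta .
\]

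Next I would control the remaining term with the drift-confinement estimate \cref{lem:outside_R_lemma}. The drift of $Q$ splits as $f + g$ with $f \equiv A^\top y_i/\eta_i^2$ (constant in $x$), so $\|f\| \le \|A\|\,\|y_i\|/\eta_i^2$, and $g(x) = s(x) - A^\top A x/\eta_i^2$, which satisfies $\langle g(x),x\rangle = \langle s(x),x\rangle - \|Ax\|^2/\eta_i^2 \le 0$ by \cref{lem:log_concave_score_direction}. Applying \cref{lem:outside_R_lemma} with $a = \|A^\top y_i\|/\eta_i^2$ and conditioning on $\|x_0\|\le r$ gives the analogue of \cref{lem:outside_R_probability}:
\[
\Prb[x_t\sim Q]{\sup_{t\in[0,T]}\|x_t\| \ge r + T\,\frac{\|A^\top y_i\|}{\eta_i^2} + 2\sqrt{dT\ln(2d/\delta)} \,\Big|\, x_0\in B(0,r)} < \delta .
\]

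Finally I would bound $\|y_i\|$ as in \cref{lem:condition_satisfied_probability}. The marginal law of $y_i$ is $Ax + \eta_i z$ with $z\sim\mathcal N(0,I_m)$ (the two independent Gaussian perturbations compose to variance $\eta_i^2$), so by a Laurent--Massart bound $\|z\|\le\sqrt m + \sqrt{2\ln(1/\delta)}$ with probability $\ge 1-\delta$, and $\|x\|\le r$ with probability $\ge 1-\delta$; hence with probability $\ge 1-2\delta$ we have $\|y_i\|\le\|A\|r + \eta_i(\sqrt m + \sqrt{2\ln(1/\delta)})$, giving $T\|A^\top y_i\|/\eta_i^2 \le (T\|A\|/\eta_i^2)(\|A\|r + \eta_i(\sqrt m + \sqrt{2\ln(1/\delta)}))$. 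On this event the hypothesis on $R$ forces $R \ge r + T\|A^\top y_i\|/\eta_i^2 + 2\sqrt{dT\ln(2d/\delta)}$, so the previous display yields $\Prb[Q]{\sup_t\|x_t\|\ge R\mid x_0\in B(0,r)} < \delta$; on the complementary $2\delta$-probability event I bound this probability trivially by $1$. Taking expectations, $\Ex[y_i,y_{i+1}]{\Prb[Q]{\exists\,t:\ \|x_t\|\ge R\mid x_0\in B(0,r)}} \lesssim \delta$, which combined with the decomposition gives $\Ex{\TV(Q,Q')} \lesssim \delta$.

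There is no genuine obstacle here: the argument is a verbatim copy of the $\TV(P,P')$ bound with $\eta_{i+1},y_{i+1}$ replaced by $\eta_i,y_i$. The only mild points of care are checking that $y_i$ is marginally distributed as $Ax+\mathcal N(0,\eta_i^2 I_m)$, and noting that $p(x\mid y_i)$ concentrates in $B(0,r)$ only after averaging over $y_i$ --- which is exactly what the first decomposition step is built to exploit.
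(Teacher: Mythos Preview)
Your proposal is correct and takes essentially the same approach as the paper, which simply states that the result follows ``following the same proof in \cref{lem:TV_P}'' with the evident substitution $(\eta_{i+1},y_{i+1})\mapsto(\eta_i,y_i)$. Your careful verification that $y_i$ is marginally $Ax+\mathcal N(0,\eta_i^2 I_m)$ and that the concentration $\Prb[p(x\mid y_i)]{\|x\|\ge r}$ must be handled after averaging over $y_i$ are exactly the points one needs to check.
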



\begin{lemma}
    \label{lem:x_movement}
    We have \[
        \E_{x_t \sim Q'} \left[\|x_t -  x_{t_j}\|^4 \right] \lesssim \tuple{hLR + \frac{h \norm{A} \norm{y_i} + h \norm{A}^2 R}{\eta_i^2}}^4 + d^2h^2
    \]
\end{lemma}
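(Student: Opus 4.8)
The plan is to pass from the conditioned process $Q'$ back to the unconditioned process $Q$, decompose the increment $x_t-x_{t_j}$ into its drift and Brownian parts, bound the drift deterministically on the event that the trajectory stays in $B(0,R)$, and control the Brownian part by a direct Gaussian moment computation. Concretely, let $\mathcal F:=\{x_u\in B(0,R)\text{ for all }u\in[0,T]\}$, so that $Q'=Q(\cdot\mid\mathcal F)$. For any nonnegative path functional $f$ we have $\E_{Q'}[f]=\E_Q[f\,\1_{\mathcal F}]/\Pr_Q[\mathcal F]$, and by \cref{lem:TV_Q} (whose radius hypothesis is in force throughout this section) $\Pr_Q[\mathcal F]=1-O(\delta)=\Omega(1)$ for the relevant $y_i$, so it suffices to bound $\E_Q\bigl[\|x_t-x_{t_j}\|^4\,\1_{\mathcal F}\bigr]$ up to a constant factor.

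Write $x_t-x_{t_j}=\int_{t_j}^t b(x_u)\,du+\sqrt2\,(B_t-B_{t_j})$, where $b(x)=s(x)+\eta_i^{-2}\bigl(A^{\!\top}y_i-A^{\!\top}Ax\bigr)$. On the event $\mathcal F$ every $x_u$ lies in $B(0,R)$; since the mode is at $0$ we have $s(0)=0$, so mode-centered local well-conditioning (\cref{def:local_well_conditioned}) gives $\|s(x_u)\|=\|s(x_u)-s(0)\|\le \wt L\alpha\,\|x_u\|\le LR$. Combining this with $\|\eta_i^{-2}A^{\!\top}y_i\|\le\|A\|\|y_i\|/\eta_i^2$ and $\|\eta_i^{-2}A^{\!\top}Ax_u\|\le\|A\|^2R/\eta_i^2$, the drift is bounded on $\mathcal F$ by $D:=LR+(\|A\|\|y_i\|+\|A\|^2R)/\eta_i^2$, hence $\bigl\|\int_{t_j}^t b(x_u)\,du\bigr\|\,\1_{\mathcal F}\le (t-t_j)D\le hD$ deterministically. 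For the Brownian part, $\sqrt2\,(B_t-B_{t_j})\sim\mathcal N(0,2(t-t_j)I_d)$ under $Q$, so $\E_Q\bigl[\|\sqrt2\,(B_t-B_{t_j})\|^4\bigr]=4(t-t_j)^2(d^2+2d)\lesssim d^2h^2$ (here we simply bound $\1_{\mathcal F}\le 1$ — the event is not needed for this term). Using $\|a+b\|^4\le 8(\|a\|^4+\|b\|^4)$ and assembling, $\E_Q\bigl[\|x_t-x_{t_j}\|^4\,\1_{\mathcal F}\bigr]\lesssim (hD)^4+d^2h^2$, and dividing by $\Pr_Q[\mathcal F]=\Omega(1)$ yields the claim after expanding $(hD)^4=\bigl(hLR+(h\|A\|\|y_i\|+h\|A\|^2R)/\eta_i^2\bigr)^4$.

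The step I expect to be the main obstacle is the reduction in the first paragraph. One cannot estimate increments of $Q'$ directly from an SDE for $Q'$, because the Doob $h$-transform drift that arises from conditioning on $\mathcal F$ blows up near $\partial B(0,R)$; it is precisely the inequality $\E_{Q'}[f]\le \E_Q[f]/\Pr_Q[\mathcal F]$ that lets us fall back on the clean Gaussian-moment estimate for the unconditioned process, and it is there that we need $\Pr_Q[\mathcal F]$ bounded away from $0$, i.e., the radius assumption together with \cref{lem:TV_Q}. Everything else is a routine combination of a deterministic drift bound and a fourth-moment computation for a Gaussian increment.
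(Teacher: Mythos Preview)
Your approach is essentially identical to the paper's: decompose $x_t-x_{t_j}$ into drift plus Brownian increment, bound the drift deterministically by $hD$ using $\|x_u\|\le R$ on $Q'$-paths together with $s(0)=0$, and bound the Brownian fourth moment by $d^2h^2$. Your reduction to $Q$ via $\E_{Q'}[f]\le\E_Q[f]/\Pr_Q[\mathcal F]$ is in fact slightly more careful than the paper, which simply writes the Brownian term as an unconditioned Gaussian moment; just note that \cref{lem:TV_Q} only yields $\E_{y_i}[\TV(Q,Q')]\lesssim\delta$, so the pointwise bound $\Pr_Q[\mathcal F]=\Omega(1)$ you need is really supplied by the event $E=\{\TV(Q,Q')\le 1/2\}$ under which this lemma is applied downstream.
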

\begin{proof}
    \begin{align*}
    &\E_{x_t \sim Q'} \left[\|x_t -  x_{t_j}\|^4 \right]\\
         = &\E_{x_t \sim Q'} \left[\norm{\int_{t_j}^t \left(s(x_s) + \frac{A^T y_{i} - A^T A x_s}{\eta_{i}^2} \right) \d s + \sqrt{2} \d B_s }^4 \right] \\
        \lesssim &\E_{x_t \sim Q'} \left[\tuple{\int_{t_j}^t \norm{s(x_s)} \d s}^4 \right] + \E_{x_t \sim Q'} \left[\tuple{\int_{t_j}^t  \norm{\frac{A^T y_{i} - A^T A x_s}{\eta_{i}^2} }\d s }^4 \right] + \E_{x_t \sim Q'} \left[\norm{\int_{t_j}^t \sqrt{2} \d B_s }^4 \right] \\
        \lesssim & (hLR)^4 + \tuple{\frac{h\norm{A}\norm{y_i}}{\eta_i^2}}^4 +  \tuple{\frac{h\norm{A}^2R}{\eta_i^2}}^4 +  \E\left[\norm{\int_{t_j}^t \sqrt{2} \d B_s }^4 \right].
    \end{align*}
    Since $\int_{t_j}^t \sqrt{2} dB_s \sim \mathcal N(0, (t - t_j)I_d)$, we have that $\E\|\int_{t_j}^t \sqrt{2} dB_s\|^4 \lesssim d^2 (t - t_j)^2 \lesssim d^2 h^2$. This gives that \[
        \E_{x_t \sim Q'} \left[\|x_t -  x_{t_j}\|^4 \right] \lesssim \tuple{hLR + \frac{h \norm{A} \norm{y_i} + h \norm{A}^2 R}{\eta_i^2}}^4 + d^2h^2
    \]

\end{proof}

\begin{lemma}
    \label{lem:radon_nikodym_derivative}
    Suppose $\|A\|^4 (T^2 m + T R^2) \le \frac{\eta_i^4\eta_{i+1}^4}{C(\eta_i^2 - \eta_{i+1}^2)^2}$ for a large enough constant $C$.
    \begin{align*}
        \E_{y_i, y_{i+1}, x_t \sim Q'}\left[ \left(\frac{\d P'}{\d Q'}(x_t) \right)^2\right] = O(1).
    \end{align*}
\end{lemma}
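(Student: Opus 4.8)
The plan is to apply Girsanov's theorem to the two SDEs and to exploit that, on the stay‑in‑ball event, the drift discrepancy between $P$ and $Q$ is uniformly bounded. Since $P$ and $Q$ share the initial law $p(x\mid y_i)$ and the Brownian motion, their path laws differ only through the drift difference $v(x):=b_P(x)-b_Q(x)$, and a short computation (using $\eta_i^2=(1+\gamma_i)\eta_{i+1}^2$ and $y_i=y_{i+1}+w$ with $w:=y_i-y_{i+1}\sim\mathcal N(0,(\eta_i^2-\eta_{i+1}^2)I_m)$) gives $v(x)=\eta_i^{-2}A^\top\bigl(\gamma_i(y_{i+1}-Ax)-w\bigr)$. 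By Girsanov, $\tfrac{\d P}{\d Q}\big|_{\mathcal F_T}=\exp\bigl(\tfrac1{\sqrt2}\int_0^T v(x_s)\,\d W_s^Q-\tfrac14\int_0^T\|v(x_s)\|^2\,\d s\bigr)$. Writing $\mathcal E=\{x_s\in B(0,R)\ \forall s\in[0,T]\}$, so that $P'=P(\cdot\mid\mathcal E)$ and $Q'=Q(\cdot\mid\mathcal E)$, we have $\tfrac{\d P'}{\d Q'}=\tfrac{\d P}{\d Q}\cdot\tfrac{Q(\mathcal E)}{P(\mathcal E)}$ on $\mathcal E$, and the time‑$t$ marginal derivative is the conditional expectation $\E_{Q'}[\tfrac{\d P'}{\d Q'}\mid x_t]$. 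Conditional Jensen therefore reduces the claim to bounding $\E_{Q'}[(\tfrac{\d P'}{\d Q'})^2]=\tfrac{Q(\mathcal E)}{P(\mathcal E)^2}\E_Q[\mathbf 1_{\mathcal E}(\tfrac{\d P}{\d Q})^2]\le P(\mathcal E)^{-2}\,\E_Q[\mathbf 1_{\mathcal E}(\tfrac{\d P}{\d Q})^2]$.

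For the remaining expectation, factor $(\tfrac{\d P}{\d Q})^2=\mathcal M_T\cdot\exp\bigl(\tfrac12\int_0^T\|v(x_s)\|^2\,\d s\bigr)$, where $\mathcal M_t=\exp\bigl(\sqrt2\int_0^t v(x_s)\,\d W_s^Q-\int_0^t\|v(x_s)\|^2\,\d s\bigr)$ is the Dol\'eans exponential of $\sqrt2\int v\,\d W^Q$, and stop it at $\tau:=\inf\{t:\|x_t\|>R\}\wedge T$. On $\mathcal E$ we have $\tau=T$, the path stays in $B(0,R)$, and hence $\|v(x_s)\|\le V$ for all $s\in[0,T]$, where $V:=\eta_i^{-2}\|A\|\bigl(\gamma_i(\|y_{i+1}\|+\|A\|R)+\|w\|\bigr)$. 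Since the quadratic variation of $\sqrt2\int_0^{\cdot\wedge\tau}v\,\d W^Q$ is at most $2TV^2$, the stopped process $\mathcal M_{\tau}$ is a genuine nonnegative martingale with $\E_Q[\mathcal M_{\tau}]=1$, and it agrees with $\mathcal M_T$ on $\mathcal E$; therefore $\E_Q[\mathbf 1_{\mathcal E}(\tfrac{\d P}{\d Q})^2]\le e^{TV^2/2}\E_Q[\mathcal M_{\tau}]=e^{TV^2/2}$. A sharper variant keeps $\int_0^t\|v(x_s)\|^2\,\d s$ and uses Lemma~\ref{lem:x_movement} to bound the excursion of $x_t$ away from $B(0,r)$, replacing $R$ in $V$ by the smaller radius $r+O\bigl(T\|A\|(\|A\|r+\eta_{i+1}\sqrt m)/\eta_{i+1}^2+\sqrt{Td\log(d/\delta)}\bigr)$ on an additional high‑probability event; the assumed lower bound on $R$ guarantees this radius is at most $R$.

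It remains to bound $\E_{y_i,y_{i+1}}\bigl[P(\mathcal E)^{-2}e^{TV^2/2}\bigr]$. For the normalizing factor, the excursion estimates of Section~\ref{sec:local_convergence_time} (the arguments behind Lemma~\ref{lem:TV_P} and Lemma~\ref{lem:condition_satisfied_probability}, applied to $P$ and using the $R$‑lower bound) yield $\E_{y_i,y_{i+1}}[P(\mathcal E^{\mathrm c})]\lesssim\delta$, so by Markov $P(\mathcal E)\ge\tfrac12$ except on a set of $(y_i,y_{i+1})$ of probability $\lesssim\delta$. On this typical set we also have $\|x_{\text{true}}\|\le r$, hence $\|y_{i+1}\|\lesssim\|A\|r+\eta_{i+1}\sqrt m$ by Lemma~\ref{lem:chi_squared_concentration}, so $TV^2\lesssim \tfrac{T\|A\|^4\gamma_i^2R^2}{\eta_i^4}+\tfrac{T\|A\|^2}{\eta_i^4}\bigl(\gamma_i^2\eta_{i+1}^2m+\|w\|^2\bigr)$. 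The first term is $O(1/C)$ by the $TR^2$‑part of the hypothesis $\|A\|^4(T^2m+TR^2)\le\eta_i^4\eta_{i+1}^4/(C(\eta_i^2-\eta_{i+1}^2)^2)$; since $\gamma_i^2\eta_{i+1}^2=\gamma_i(\eta_i^2-\eta_{i+1}^2)\le\eta_i^2-\eta_{i+1}^2$, the remaining two contributions are dominated by the $\chi^2_m$ exponential moment $\E_w\exp\bigl(\tfrac{T\|A\|^2}{\eta_i^4}\|w\|^2\bigr)=\bigl(1-\tfrac{2T\|A\|^2(\eta_i^2-\eta_{i+1}^2)}{\eta_i^4}\bigr)^{-m/2}$, whose boundedness is exactly where the $T^2m$‑part of the hypothesis enters. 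Combining, $\E_{y_i,y_{i+1}}\bigl[\mathbf 1_{\text{typical}}\,P(\mathcal E)^{-2}e^{TV^2/2}\bigr]=O(1)$; on the atypical set (probability $\lesssim\delta$) the second moment is still finite and its contribution is absorbed into the $\lambda\delta$‑type error already carried by the downstream Lemma~\ref{lem:main_error_lemma}.

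I expect the main obstacle to be this last step: forcing the exponential factor $e^{TV^2/2}$ to average to a bounded constant over $(y_i,y_{i+1})$. The precise algebraic form of the hypothesis — $T^2m$ and $TR^2$ against $(\eta_i^2-\eta_{i+1}^2)^2$ in the denominator — is calibrated for exactly this, but it must be used together with (i) the displacement bound of Lemma~\ref{lem:x_movement}, to stop the excursion of $x_t$ from inflating $V$; (ii) the $\chi^2_m$ tails of $\|w\|^2$ and $\|y_{i+1}\|^2$; and (iii) the separate control of $Q(\mathcal E)/P(\mathcal E)^2$ on the rare bad configurations of $(y_i,y_{i+1})$. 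Checking that all the powers of $m$, $T$, $R$, $\|A\|$, $\eta_i$, and $\eta_{i+1}$ line up so that the total exponent is a small absolute constant is the technical heart of the argument.
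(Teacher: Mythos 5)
Your overall route is the paper's route: apply Girsanov to the two drifts, use that $Q'$ is supported on paths staying in $B(0,R)$ to bound the drift discrepancy by a quantity $V$ depending only on $(y_i,y_{i+1})$, bound the second moment of the Girsanov exponential by $e^{O(TV^2)}$ (your stopped‑martingale factorization is exactly the paper's $\E[\exp(2M_t)]\le\exp(\kappa_y^2 t/2)$), reduce the time‑$t$ marginal derivative to the path derivative by conditional Jensen, and then average over $(y_i,y_{i+1})$ using the hypothesis $\|A\|^4(T^2m+TR^2)\le \eta_i^4\eta_{i+1}^4/(C(\eta_i^2-\eta_{i+1}^2)^2)$. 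However, your final averaging step does not close, in two places.

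First, because you carry the conditioning normalization honestly, you must control $P(\mathcal E)^{-2}$ for essentially all $(y_i,y_{i+1})$, but you only get $P(\mathcal E)\ge\tfrac12$ on a $1-O(\delta)$ set and then declare the exceptional set ``absorbed into the $\lambda\delta$-type error downstream.'' The lemma asserts an unconditional $O(1)$ bound on $\E_{y_i,y_{i+1},x_t\sim Q'}[(\d P'/\d Q')^2]$; on the exceptional set nothing bounds $P(\mathcal E)^{-2}e^{TV^2/2}$, and multiplying an uncontrolled quantity by probability $O(\delta)$ does not yield $O(1)$. (The paper's proof never carries this factor at all — it identifies $\d P'/\d Q'$ with $\exp(M_t)$ — so if you keep it, you need a genuine almost‑sure or integrated lower bound on $P(\mathcal E)$, not a deferral to \cref{lem:main_error_lemma}.) Second, your claim that the $\chi^2_m$ moment $\E_w\exp\bigl(T\|A\|^2\|w\|^2/\eta_i^4\bigr)$ is bounded ``exactly'' by the $T^2m$ part of the hypothesis is quantitatively off by a $\sqrt m$: the hypothesis only gives $T\|A\|^2(\eta_i^2-\eta_{i+1}^2)/\eta_i^4\lesssim (Cm)^{-1/2}$, so $\bigl(1-2T\|A\|^2(\eta_i^2-\eta_{i+1}^2)/\eta_i^4\bigr)^{-m/2}$ can be as large as $\exp(\Theta(\sqrt{m/C}))$, and the same $\sqrt{m/C}$ appears in your deterministic term $T\|A\|^2\gamma_i^2\eta_{i+1}^2m/\eta_i^4$. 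Hence your concluding ``Combining, $\ldots=O(1)$'' does not follow from your displayed estimates. The paper consumes the $T^2m$ part differently: it applies a subgamma bound to the \emph{deviation} of $\|\eta_i^2y_{i+1}-\eta_{i+1}^2y_i\|^2$ about its mean, which produces the quadratic‑in‑$T$ term $T^2\|A\|^4(\eta_i^2-\eta_{i+1}^2)^2m/(\eta_i^4\eta_{i+1}^4)$ matching the hypothesis, with the mean contribution it retains folded into the $TR^2$ term; your version keeps the raw MGF, whose linear‑in‑$T$ mean contribution is precisely the term the stated hypothesis does not control to $O(1)$. To match the paper you would need to redo the averaging with the squared norm centered (or argue separately why the linear‑in‑$T$ mean term is admissible); as written, that step fails.
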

\begin{proof}
         By Girsanov's theorem, for any \emph{trajectory} $x_{0, \dots, t}$,
\begin{align*}
    \frac{d P'}{d Q'}(x_{0,\dots,t}) = \exp(M_t)
\end{align*}
where the Girsanov exponent $M_t$ is given by
\begin{align*}
    M_t = \frac{1}{\sqrt 2} \int_0^{t} \Delta b_y(x_u) \cdot dB_u - \frac{1}{4} \int_0^{t} \|\Delta b_y(x_u) \|^2 d u
\end{align*}
for
\begin{align*}
    \Delta b_y(x_u) &= \frac{A^T y_{i+1} - A^T A x_u}{\eta_{i+1}^2} - \frac{A^T y_{i} - A^T A x_u}{\eta_{i}^2}\\
    &= \frac{\eta_{i}^2 A^T y_{i+1} - \eta_{i+1}^2 A^T y_{i} - A^T A x_u(\eta_{i}^2 - \eta_{i+1}^2)}{\eta_{i+1}^2 \eta_{i}^2}.
\end{align*}

Since $Q'$ is supported in $B(0, R)$,
\begin{align*}
    \|\Delta b_y(x_u) \| &\le O\left(\frac{\|A\| \|\eta_{i}^2 y_{i+1} - \eta_{i+1}^2 y_{i}\| + \|A\|^2 (\eta_{i}^2 - \eta_{i+1}^2)R}{\eta_{i+1}^2\eta_{i}^2}\right) := \kappa_{y}
\end{align*}

Now, for $\zeta_y := \int_0^t \|\Delta b_y(x_u)\|^2 du $, we have that
$M_t \sim \mathcal N\left(-\frac{1}{4} \zeta_y, \frac{1}{2} \zeta_y\right)$

So,
\begin{align*}
    \E\left[ \exp(2 M_t) \right] \le \exp(\zeta_y /2) \le \exp(\kappa_y^2 t /2)
\end{align*}

Note that $\|\eta_{i}^2 y_{i+1} - \eta_{i+1}^2 y_{i}\|^2$ has mean $\|(\eta_{i}^2 - \eta_{i+1}^2)Ax\|^2$ and is subgamma with variance $m \left(\eta_{i+1}^2 \eta_{i}^4 - \eta_{i+1}^4 \eta_{i}^2\right)^2$ and scale $\eta_{i+1}^2\eta_{i}^4 - \eta_{i+1}^4 \eta_{i}^2$. Thus, for $t\|A\|^2 \le \frac{\eta_{i+1}^2 \eta_{i}^2}{C\left(\eta_{i}^2 - \eta_{i+1}^2 \right)}$ we have
\begin{align*}
    \E_{x, y_{i+1}, y_{i}}[\exp(2 M_t)] &\le \Ex{\exp\left(t \frac{\|A\|^2 \|\eta_{i}^2 y_{i+1} - \eta_{i+1}^2 y_{i}\|^2 + (\eta_{i}^2 - \eta_{i+1}^2)^2\|A\|^4 R^2}{\eta_{i+1}^4 \eta_{i}^4 } \right)}\\
    &\lesssim \exp\left(2 \left(\frac{t^2 \|A\|^4(\eta_{i+1}^2 \eta_{i}^4 - \eta_{i+1}^4 \eta_{i}^2)^2 m}{\eta_{i+1}^8 \eta_{i}^8} +\frac{ (\eta_{i}^2 - \eta_{i+1}^2)^2\|A\|^4 t R^2}{ \eta_{i+1}^4 \eta_{i}^4}\right)\right)\\
    &= \exp\left(2 \left(\frac{t^2 \|A\|^4 (\eta_{i}^2 - \eta_{i+1}^2)^2 m + (\eta_{i}^2 - \eta_{i+1}^2)^2 \|A\|^4 t R^2}{\eta_{i+1}^4 \eta_{i}^4} \right) \right)\\
    &=\exp\left(\frac{\|A\|^4 (\eta_{i}^2 - \eta_{i+1}^2)^2 \cdot (t^2 m + t R^2)}{\eta_{i+1}^4 \eta_{i}^4} \right) \\
    &\lesssim 1.
\end{align*}
\end{proof}

\begin{lemma}
   Let $E$ be the event on $y_i$ such that $\TV(Q, Q') \le \frac{1}{2}$. Suppose \[
    \|A\|^4 (T^2 m + T R^2) \le \frac{\eta_i^4\eta_{i+1}^4}{C(\eta_i^2 - \eta_{i+1}^2)^2}.
    \] Then, \[
     \E_{y_i, y_{i+1}}\left[\TV(P' , \wh{P} ) \right] \lesssim 1 - \Prb{E} + \sqrt{T} \cdot \tuple{  \tuple{L +  \frac{A^T A}{\eta_i^2}} \tuple{hLR + \frac{ h \norm{A}^2 R + h \|A\| \sqrt{m} \eta_i}{\eta_i^2} + \sqrt{dh}} + {{\eps_{score}}} }.
   \]
\end{lemma}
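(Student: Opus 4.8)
The plan is a Girsanov--Pinsker comparison of $P'$ with $\wh P$, in which the conditioned reference process $Q'$ is used only as a change-of-measure device for evaluating fourth moments. First I split on the event $E$: on $E^{\mathrm c}$ I use the trivial bound $\TV(P',\wh P)\le 1$, which contributes $1-\Pr[E]$ to the expectation. From here on I work on $E$, where, writing $\mathcal E=\{x_t\in B(0,R)\ \text{for all }t\in[0,T]\}$ for the ball event, we have $\TV(Q,Q')=Q(\mathcal E^{\mathrm c})\le\tfrac12$; in particular $\tfrac{\d Q'}{\d Q}\le 2$ on path space, so \cref{lem:radon_nikodym_derivative} gives $\E_{Q'}[(\tfrac{\d P'}{\d Q'})^2]=O(1)$, and the same bounded-density transfer, together with \cref{lem:condition_satisfied_probability}, keeps the ball-exit terms for $P$ under control.

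Because $P'$ is supported on $\mathcal E$-trajectories while $\wh P$ has a strictly positive path density, $P'\ll\wh P$ with $\tfrac{\d P'}{\d\wh P}=\tfrac{\1\{\mathcal E\}}{P(\mathcal E)}\cdot\tfrac{\d P}{\d\wh P}$, where $\tfrac{\d P}{\d\wh P}$ is the plain Girsanov density ($P$ and $\wh P$ share the initial law $p(x\mid y_i)$). Hence
\[
 \KL(P'\,\|\,\wh P)=-\log P(\mathcal E)+\E_{P'}\!\Big[\tfrac{1}{\sqrt2}\!\int_0^T\!\Delta_t\cdot\d\widetilde B_t+\tfrac14\!\int_0^T\!\|\Delta_t\|^2\,\d t\Big],
\]
where $\widetilde B$ drives $P$ and, for $t\in[t_j,t_{j+1})$, the drift discrepancy is $\Delta_t=\bigl(s(x_t)-\wh s(x_{t_j})\bigr)+\tfrac{A^\top A}{\eta_{i+1}^2}(x_{t_j}-x_t)$. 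Since $P'$-trajectories stay in $B(0,R)$, only in-ball values of $\Delta_t$ matter; replacing $\Delta_t$ by its version stopped at the first exit from $B(0,R)$ (which agrees with it $P'$-a.s.) lets me bound the stochastic-integral term by Cauchy--Schwarz and It\^o's isometry in terms of the in-ball quantity $\E_P\!\int_0^T\|\Delta_t\|^2\1\{x_s\in B(0,R)\ \forall s\le t\}\,\d t$, and on $E$ the term $-\log P(\mathcal E)$ is of the required order. So everything reduces to bounding $\int_0^T\E_{P'}\|\Delta_t\|^2\,\d t$.

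To this end I decompose $\Delta_t=(s(x_t)-s(x_{t_j}))+(s(x_{t_j})-\wh s(x_{t_j}))+\tfrac{A^\top A}{\eta_{i+1}^2}(x_{t_j}-x_t)$; inside $B(0,R)$ the first term has norm $\le L\|x_t-x_{t_j}\|$ (Lipschitzness), the third has norm $\le\tfrac{\|A\|^2}{\eta_{i+1}^2}\|x_t-x_{t_j}\|$, and the middle term is the prior-score error. I then change measure from $P'$ to $Q'$: $\E_{P'}\|\Delta_t\|^2=\E_{Q'}[\|\Delta_t\|^2\,\tfrac{\d P'}{\d Q'}]\le(\E_{Q'}\|\Delta_t\|^4)^{1/2}(\E_{Q'}(\tfrac{\d P'}{\d Q'})^2)^{1/2}=O(1)\cdot(\E_{Q'}\|\Delta_t\|^4)^{1/2}$ by \cref{lem:radon_nikodym_derivative}. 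The needed fourth moments are exactly the ones already available under $Q'$: \cref{lem:x_movement} gives $\E_{Q'}\|x_t-x_{t_j}\|^4\lesssim(hLR+\tfrac{h\|A\|\|y_i\|+h\|A\|^2R}{\eta_i^2})^4+d^2h^2$, and since $Q$ is \emph{stationary} with marginal $p(x\mid y_i)$, averaging over $y_i$ gives $\E_{y_i}\E_Q[\|s(x_{t_j})-\wh s(x_{t_j})\|^4]=\E_{x\sim p}[\|s(x)-\wh s(x)\|^4]\le\eps_{\text{score}}^4$, which transfers to $Q'$ on $E$ at the cost of $\tfrac{\d Q'}{\d Q}\le 2$. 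Taking fourth roots, using $\eta_i\asymp\eta_{i+1}$ and $\|y_i\|\lesssim\|A\|r+\sqrt m\,\eta_i\le\|A\|R+\sqrt m\,\eta_i$, this gives $(\E_{Q'}\|\Delta_t\|^4)^{1/4}\lesssim M$ where
\[
 M:=\Bigl(L+\tfrac{\|A\|^2}{\eta_i^2}\Bigr)\Bigl(hLR+\tfrac{h\|A\|^2R+h\|A\|\sqrt m\,\eta_i}{\eta_i^2}+\sqrt{dh}\Bigr)+\eps_{\text{score}}.
\]
Therefore $\KL(P'\,\|\,\wh P)\lesssim TM^2$ on $E$ (after absorbing the $-\log P(\mathcal E)$ and cross terms), and Pinsker's inequality gives $\TV(P',\wh P)\lesssim\sqrt{TM^2}=\sqrt T\,M$. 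Combining with the $E^{\mathrm c}$ contribution and taking $\E_{y_i,y_{i+1}}$ yields the claimed bound.

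I expect the main obstacle to be precisely the bookkeeping that conditioning on $B(0,R)$ forces: $P'$ is a Doob-conditioned diffusion while $\wh P$ is not conditioned at all, and the $L^4$ score assumption is only available under $p$ (equivalently, under the stationary process $Q$), \emph{not} under $P$ and \emph{not} outside the ball. Every expectation must therefore be routed back to $Q'$ through the $\chi^2$-type Radon--Nikodym bound of \cref{lem:radon_nikodym_derivative}; the accompanying Cauchy--Schwarz step squares the integrand, which is exactly why \cref{lem:x_movement} is a \emph{fourth}-moment movement bound and why an $L^4$ (not merely $L^2$) score assumption is needed. Making the $-\log P(\mathcal E)$ and stochastic-integral cross terms negligible on $E$ --- using \cref{lem:condition_satisfied_probability} and the bounded density of $Q'$ relative to $Q$ --- is the delicate step that ties the pieces together.
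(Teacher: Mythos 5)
Your proposal follows essentially the same route as the paper's proof: split on $E$, change measure from $P'$ to $Q'$ via Cauchy--Schwarz and the $\chi^2$ bound of \cref{lem:radon_nikodym_derivative}, control the fourth moments with \cref{lem:x_movement} and the $L^4$ score assumption transferred through $\tfrac{\d Q'}{\d Q}\le 2$ on $E$, then conclude by Girsanov and Pinsker. The only difference is that you explicitly track the conditioning correction (the $-\log P(\mathcal E)$ and stochastic-integral cross terms), which the paper's Girsanov step passes over silently; this is a refinement of, not a departure from, the paper's argument.
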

\begin{proof}
    Note that the bound is trivial when $\Prb{E} < 1/2$. Therefore, we can use the fact that $\Ex{\cdot \mid E} \lesssim \Ex{\cdot}$
    throughout the proof.
    We have, for any $t \in [t_j, t_{j+1}]$, 
    .
    \begin{align*}
        &\E_{y_i, y_{i+1} \mid E} \E_{x_t \sim P'} \left[\|s(x_t) - \wh s(x_{t_j})\|^2  + \norm{\frac{A^T A}{\eta_i^2} (x_t - x_{t_j})}^2 \right] \\
        =& \E_{y_i, y_{i+1} \mid E} \E_{x_t \sim Q'} \left[ \frac{\d P'}{\d Q'} \cdot \tuple{\|s(x_t) - \wh s(x_{t_j})\|^2 +  \norm{\frac{A^T A}{\eta_i^2} (x_t - x_{t_j})}^2}\right] \\
        \lesssim & \sqrt{\E_{y_i, y_{i+1}, x_t \sim Q'}\left[ \left(\frac{\d P'}{\d Q'}(x_t) \right)^2\right] \cdot \E_{y_i \mid E} \E_{x_t \sim Q'} \left[\|s(x_t) - \wh s(x_{t_j})\|^4 +  \norm{\frac{A^T A}{\eta_i^2} (x_t - x_{t_j})}^4 \right]}
    \end{align*}
    The first term can be bounded using~\cref{lem:radon_nikodym_derivative}. Now we focus on the second term. Note that \[
       \Ex[y_i \mid E]{ \E_{x_t \sim Q'} \left[\|s(x_t) - \wh s(x_{t_j})\|^4 \right]} \le  \Ex[y_i \mid E]{ \E_{x_t \sim Q'} \left[\|s(x_t) -  s(x_{t_j})\|^4 \right]} + \Ex[y_i \mid E]{ \E_{x_t \sim Q'} \left[\|s(x_{t_j}) - \wh s(x_{t_j})\|^4 \right]}.
    \]
    Since $s$ is $L$-Lipschitz in $B(0, R)$, and using~\cref{lem:x_movement},  we have
    \begin{align*}
        &\ \ \ \Ex[y_i \mid E]{ \E_{x_t \sim Q'} \left[\|s(x_t) -  s(x_{t_j})\|^4 +  \norm{\frac{A^T A}{\eta_i^2} (x_t - x_{t_j})}^4 \right]} \\
        &\lesssim \Ex[y_i]{ \tuple{L +  \frac{A^T A}{\eta_i^2}}^4 \E_{x_t \sim Q'} \left[\|x_t -  x_{t_j}\|^4 \right]} \\
        &\lesssim  \tuple{L +  \frac{A^T A}{\eta_i^2}}^4\Ex[y_i ]{\tuple{hLR + \frac{h \norm{A} y_i + h \norm{A}^2 R}{\eta_i^2}}^4 + d^2h^2} \\
        &\lesssim  \tuple{L +  \frac{A^T A}{\eta_i^2}}^4 {\tuple{hLR + \frac{ h \norm{A}^2 R + h \|A\| \sqrt{m} \eta_i}{\eta_i^2} + \sqrt{dh}}^4}.
    \end{align*}
    Since $Q'$ is a conditional measure of $Q$, conditioned on $E$,  we have $\frac{\d Q'}{\d Q} \le \frac{1}{1 - \TV(Q', Q)} \le 2$. Therefore,
    \begin{align*}
        \Ex[y_i \mid E]{ \E_{x_t \sim Q'} \left[\|s(x_{t_j}) - \wh s(x_{t_j})\|^4 \right]} &\le \Ex[y_i \mid E]{ \ 2 \cdot \E_{x_t \sim Q} \left[\|s(x_{t_j}) - \wh s(x_{t_j})\|^4 \right] } \\
        &\lesssim \Ex[y_i]{ \E_{x_t \sim Q} \left[\|s(x_{t_j}) - \wh s(x_{t_j})\|^4 \right] }  \\
        &\le \eps_{score}^4
    \end{align*}
    This gives that 
    \begin{align*}
        &\E_{y_i, y_{i+1} \mid E} \E_{x_t \sim P'} \left[\|s(x_t) - \wh s(x_{t_j})\|^2 + \norm{\frac{A^T A}{\eta_i^2} (x_t - x_{t_j})}^2 \right] \\ 
        \lesssim & {  \tuple{L +  \frac{A^T A}{\eta_i^2}}^2 {\tuple{hLR + \frac{ h \norm{A}^2 R + h \|A\| \sqrt{m} \eta_i}{\eta_i^2} + \sqrt{dh}}^2} + {{\eps_{score}^2}} }.
    \end{align*}
    Thus, by Girsanov's theorem,
    \begin{align*}
        \E_{y_i, y_{i+1}\mid E}\left[\KL\left(P' \, \| \, \wh{P} \right) \right]  &\lesssim \sum_{j = 0}^{M-1}\int_{t_j}^{t_{j+1}} \E_{y_i, y_{i+1}, x_t \sim P'} \left[\|s(x_t) - \wh s(x_{t_j})\|^2 + \norm{\frac{A^T A}{\eta_i^2} (x_t - x_{t_j})}^2 \right]\\
        &\lesssim {T} \cdot \tuple{  \tuple{L +  \frac{\|A\|^2}{\eta_i^2}}^2 {\tuple{hLR + \frac{ h \norm{A}^2 R + h \|A\| \sqrt{m} \eta_i}{\eta_i^2} + \sqrt{dh}}^2} + {{\eps_{score}^2}} }.
    \end{align*}
    
By Pinsker's inequality, 
\begin{align*}
    \E_{y_i, y_{i+1}\mid E}\left[\TV(P' , \wh{P} ) \right] &\lesssim\sqrt{T} \cdot \tuple{  \tuple{L +  \frac{\|A\|^2}{\eta_i^2}} \tuple{hLR + \frac{ h \norm{A}^2 R + h \|A\| \sqrt{m} \eta_i}{\eta_i^2} + \sqrt{dh}} + {{\eps_{score}}} }
\end{align*}
Hence, 
\begin{align*}
    \E_{y_i, y_{i+1}}\left[\TV(P' , \wh{P} ) \right] 
    \le{} & 1 - \Prb{E} +  \E_{y_i, y_{i+1}\mid E}\left[\TV(P' , \wh{P} ) \right] \\
    \lesssim{}& 1 - \Prb{E} + \sqrt{T} \cdot \tuple{  \tuple{L +  \frac{\|A\|^2}{\eta_i^2}} \tuple{hLR + \frac{ h \norm{A}^2 R + h \|A\| \sqrt{m} \eta_i}{\eta_i^2} + \sqrt{dh}} + {{\eps_{score}}} }.
\end{align*}
\end{proof}

Then have the following as a corollary:
\begin{corollary}
    \label{cor:error_intermediate_result}
Suppose \[
    \|A\|^4 (T^2 m + T R^2) \le \frac{\eta_i^4\eta_{i+1}^4}{C(\eta_i^2 - \eta_{i+1}^2)^2}.
    \] Then, 
    \begin{align*}
      \E_{y_i, y_{i+1}}\left[\TV(P, \wh{P} ) \right] \lesssim& \Ex{\TV(P, P')} + \Ex{\TV(Q, Q')} \\
      &+ \sqrt{T} \cdot \tuple{\tuple{L +  \frac{\|A\|^2}{\eta_i^2}} \tuple{hLR + \frac{ h \norm{A}^2 R + h \|A\| \sqrt{m} \eta_i}{\eta_i^2} + \sqrt{dh}} + {{\eps_{score}}}}.
    \end{align*}
\end{corollary}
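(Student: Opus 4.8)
The plan is to obtain the corollary almost immediately from the preceding lemma on $\E_{y_i,y_{i+1}}[\TV(P',\wh P)]$, using only the triangle inequality for total variation and Markov's inequality; no new estimate is needed.

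First I would apply the triangle inequality to the trajectory laws, $\TV(P,\wh P)\le \TV(P,P')+\TV(P',\wh P)$, and take expectation over $(y_i,y_{i+1})$ to get
\[
\E_{y_i,y_{i+1}}[\TV(P,\wh P)]\le \E_{y_i,y_{i+1}}[\TV(P,P')]+\E_{y_i,y_{i+1}}[\TV(P',\wh P)].
\]
The first term is exactly the $\Ex{\TV(P,P')}$ appearing in the corollary, so it suffices to control the second.

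Next I would invoke the preceding lemma. Its hypothesis $\|A\|^4(T^2m+TR^2)\le \eta_i^4\eta_{i+1}^4/(C(\eta_i^2-\eta_{i+1}^2)^2)$ is exactly what the corollary assumes, and it yields $\E_{y_i,y_{i+1}}[\TV(P',\wh P)]\lesssim 1-\Pr[E]+\sqrt{T}\,\bigl(\cdots\bigr)$, where the parenthesized quantity is precisely the $\sqrt{T}$ expression in the corollary (using that the operator norm satisfies $\|A^{\!\top}A\|=\|A\|^2$) and $E$ is the event on $y_i$ that $\TV(Q,Q')\le 1/2$. It then remains to bound the bad-event contribution: by Markov's inequality, $1-\Pr[E]=\Pr_{y_i}[\TV(Q,Q')>1/2]\le 2\,\E_{y_i}[\TV(Q,Q')]$, so $1-\Pr[E]\lesssim \Ex{\TV(Q,Q')}$. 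Substituting this into the chain of inequalities above gives the claimed bound.

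Since every ingredient is already in hand, there is no real obstacle; the only care needed is bookkeeping — tracking the universal constants absorbed by $\lesssim$, and confirming that the operator norm $\|A^{\!\top}A\|$ used in the drift-coefficient bound of the preceding lemma is the same $\|A\|^2$ that appears in the final statement. Both are routine.
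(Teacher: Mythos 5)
Your proposal is correct and matches the paper's own proof: the paper likewise splits $\TV(P,\wh P)\le \TV(P,P')+\TV(P',\wh P)$, applies the preceding lemma to the second term, and converts $\Pr[\TV(Q,Q')>1/2]$ into $\E[\TV(Q,Q')]$ via Markov's inequality. No gaps.
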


\begin{proof}
    We have that \[
    \E_{y_i, y_{i+1}}\left[\TV(P, \wh{P} ) \right] \le \E_{y_i, y_{i+1}}\left[\TV(P, P' ) \right] + \E_{y_i, y_{i+1}}[\TV(P', \wh{P} ) ].
    \]
    Furthermore, 
    \begin{align*}
        &\E_{y_i, y_{i+1}}[\TV(P', \wh{P})] \\ \lesssim & \Prb{\TV(Q, Q') > \frac{1}{2}} + \sqrt{T} \cdot \tuple{  \tuple{L +  \frac{\|A\|^2}{\eta_i^2}} \tuple{hLR + \frac{ h \norm{A}^2 R + h \|A\| \sqrt{m} \eta_i}{\eta_i^2} + \sqrt{dh}} + {{\eps_{score}}}} \\
        \lesssim& \Ex{\TV(Q, Q')} + \sqrt{T} \cdot \tuple{  \tuple{L +  \frac{\|A\|^2}{\eta_i^2}} \tuple{hLR + \frac{ h \norm{A}^2 R + h \|A\| \sqrt{m} \eta_i}{\eta_i^2} + \sqrt{dh}} + {{\eps_{score}}}},
    \end{align*}
    where the last line follows from Markov's inequality. The gives the result.
\end{proof}

\begin{proof}[Proof of Lemma~\ref{lem:main_error_lemma}]
We note that by our definition of $\gamma_i$, \[
    \|A\|^4 (T^2 m + T R^2) \le \frac{\eta_i^4\eta_{i+1}^4}{C(\eta_i^2 - \eta_{i+1}^2)^2} \iff \|A\|^4 (T^2 m + T R^2) \le \frac{\eta_i^4}{C \gamma_i^2}
\]
Then, combining Corollary \ref{cor:error_intermediate_result} with Lemmas \ref{lem:TV_P} and \ref{lem:TV_Q}, we have
\begin{align*}
    \E_{y_i, y_{i+1}}\left[\TV(P, \wh{P} ) \right] &\lesssim \Ex{\TV(P, P')} + \Ex{\TV(Q, Q')} \\
    &+ \sqrt{T} \cdot \tuple{\tuple{L +  \frac{\|A\|^2}{\eta_i^2}} \tuple{hLR + \frac{h \norm{A}^2 R + h \|A\| \sqrt{m} \eta_i}{\eta_i^2} + \sqrt{dh}} + {{\eps_{score}}}}\\
    &\lesssim \delta + \sqrt{T} \cdot \tuple{\tuple{L +  \frac{\|A\|^2}{\eta_i^2}} \tuple{hLR + \frac{h \norm{A}^2 R + h \|A\| \sqrt{m} \eta_i}{\eta_i^2} + \sqrt{dh}} + {{\eps_{score}}}}
\end{align*}
The conditions in Lemmas \ref{lem:TV_P} and \ref{lem:TV_Q} are satisfied by our assumptions, noting that $\eta_{i+1} < \eta_i$ implies the bound on $R$ holds for both processes.

Applying Markov's inequality and combining~\cref{lem:main_local_convergence_lemma} with the above, we conclude the proof.
\end{proof}
\section{Admissible Noise Schedule}

\label{sec:admissible_noise_schedule}
Recall that we can define process $\wh{P}_i$ that converges from $p(x \mid y_i)$ to $p(x \mid y_{i+1})$:
Let $0 = t_1 < \dots < t_M = T$ be the $M$ discretization steps with step size $t_{j+1} - t_j = h$. For $t \in [t_j, t_{j+1}]$,
\begin{equation}
    \label{eq:wh_P_i}
    d x_t = \left(\wh s(x_{t_j}) + \frac{A^T y_{i+1} - A^T A x_{t_j}}{\eta_{i+1}^2} \right) dt + \sqrt{2} dB_t, \quad x_0 \sim p(x \mid y_i)
\end{equation}
We have already proven that we can converge the process 
from $p(x \mid y_i)$ to $p(x \mid y_{i+1})$ with good probability, as long as some conditions are satisfied. Those conditions actually depend on the choice of the schedule of $\eta_i$ and $T_i$. In this section, we will specify the schedule of $\eta_i$ and $T_i$.

Now we specify the schedule of $\eta_i$ and $T_i$.

\begin{definition}
\label{def:admissible}
We say a noise schedule $\eta_1 > \dotsb > \eta_N$ together with running times $T_1, \dotsb, T_{N-1}$ is \emph{admissible} (for a set of parameters $C, \alpha, \lambda, A, d, \eps, \eta, R$) if:
    \begin{itemize}
        \item    $\eta_N = \eta$;
\item    $\eta_1 \geq \frac{\lambda \|A\|}{\varepsilon}\sqrt{\frac{d}{\alpha}}$;
\item For all $\gamma_i = (\eta_i/\eta_{i+1})^2-1$, we have $\gamma_i \le 1$ and \[
    T_i \ge C\tuple{\frac{m \gamma_i+ \log(\lambda / \eps)}{\alpha}}.
\]
Furthermore, 
    \[
        \|A\|^4 (T_i^2 m + T_i R^2) \leq \frac{\eta_i^4}{C\gamma_i^2}.
    \]
    \end{itemize}
\end{definition}









    
    




The reason we need to satisfy the last inequality is to satisfy the conditions in~\cref{lem:main_error_lemma}. We formalize this in the following lemma.


\begin{lemma}
    \label{lem:schedule_2}
    Let $C > 0$ be a sufficiently large constant and $p$ be a $(\delta, r, R, \wt L, \alpha)$ mode-centered locally well-conditioned distribution. 
    For any $\delta, \eps \in (0, 1)$ and $\lambda > 1$, suppose
    \[
        R \ge r + C\tuple{\frac{(m + \log \frac{\lambda}{\eps}) \nnorm{A}}{\alpha \eta^2} \tuple{\norm{A} r + \eta\sqrt{m + \log(1 / \delta)}} + \sqrt{\frac{d\log(d/\delta)(m + \log (\lambda / \eps))}{\alpha}}}.
    \]
    For any admissible schedule $(\eta_i)_{i \in [N]}$ and $(T_i)_{i \in [N-1]}$, running the process $\wh P_i$ for time $T_i$ guarantees that with probability at least $1 - 1/\lambda$ over $y_i$ and $y_{i+1}$:
    \[
        \TV(x_{T_i}, p(x \mid y_{i+1})) \lesssim \eps + \lambda \delta + \lambda \sqrt{\frac{m + \log(\lambda/\eps)}{\alpha}} \cdot (\eps_{dis} + \eps_{score}),
    \]
    where
    \[
        \eps_{dis} := \tuple{\wt L \alpha + \frac{\|A\|^2}{\eta^2}} \tuple{h \wt L \alpha R + \frac{h \norm{A}^2 R + h \|A\| \sqrt{m} \eta}{\eta^2} + \sqrt{dh}}.
    \]
\end{lemma}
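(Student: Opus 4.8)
The plan is to derive \cref{lem:schedule_2} directly from the single-step error bound \cref{lem:main_error_lemma}, applied to the transition from $p(x\mid y_i)$ to $p(x\mid y_{i+1})$ with time horizon $T=T_i$ and noise levels $\eta_i>\eta_{i+1}$, and then to rewrite the resulting guarantee in terms of the ``global'' quantities $\eta,\alpha,R$ in place of the per-step $\eta_i,\eta_{i+1},T_i$.

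First I would check the three hypotheses of \cref{lem:main_error_lemma}. The first two, $T_i>C\tuple{\frac{m\gamma_i+\log(\lambda/\eps)}{\alpha}}$ and $\|A\|^4(T_i^2 m+T_iR^2)\le \eta_i^4/(C\gamma_i^2)$, are literally the two inequalities in the definition of an admissible schedule (\cref{def:admissible}), so nothing is needed there. For the third hypothesis, the per-step ball condition
\[
R \ge r + \frac{T_i\|A\|}{\eta_{i+1}^2}\bigl(\|A\|r + \eta_{i+1}(\sqrt m + \sqrt{2\ln(1/\delta)})\bigr) + 2\sqrt{dT_i\ln(2d/\delta)},
\]
I would use the monotonicity facts $\eta_{i+1}\ge \eta_N=\eta$ and $\gamma_i\le 1$ (so that $T_i=O\!\bigl(\tfrac{m+\log(\lambda/\eps)}{\alpha}\bigr)$ for an admissible schedule). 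Replacing $\eta_{i+1}$ by $\eta$ in $\tfrac{T_i\|A\|}{\eta_{i+1}^2}$ and $\tfrac{T_i\|A\|}{\eta_{i+1}}$ only enlarges the right-hand side, and substituting the bound on $T_i$ shows it is at most $r + C\bigl(\tfrac{(m+\log(\lambda/\eps))\|A\|}{\alpha\eta^2}(\|A\|r+\eta\sqrt{m+\log(1/\delta)}) + \sqrt{\tfrac{d\log(d/\delta)(m+\log(\lambda/\eps))}{\alpha}}\bigr)$ for a large enough absolute constant, which is exactly the lower bound on $R$ assumed in the lemma.

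With the hypotheses verified, \cref{lem:main_error_lemma} yields, with probability at least $1-1/\lambda$ over $(y_i,y_{i+1})$, a bound of the shape $\eps+\lambda\delta+\lambda\sqrt{T_i}\bigl((\wt L\alpha+\tfrac{\|A\|^2}{\eta_i^2})(h\wt L\alpha R+\tfrac{h\|A\|^2R+h\|A\|\sqrt m\,\eta_i}{\eta_i^2}+\sqrt{dh})+\eps_{score}\bigr)$. The remaining bookkeeping is to push $\eta_i$ down to $\eta$ termwise --- using $\tfrac{\|A\|^2}{\eta_i^2}\le\tfrac{\|A\|^2}{\eta^2}$, $\tfrac{h\|A\|^2R}{\eta_i^2}\le\tfrac{h\|A\|^2R}{\eta^2}$, and $\tfrac{h\|A\|\sqrt m\,\eta_i}{\eta_i^2}=\tfrac{h\|A\|\sqrt m}{\eta_i}\le\tfrac{h\|A\|\sqrt m}{\eta}$ --- which bounds the bracketed factor by $\eps_{dis}+\eps_{score}$, and to invoke $\gamma_i\le 1$ once more so that $\sqrt{T_i}\lesssim\sqrt{(m+\log(\lambda/\eps))/\alpha}$. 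This produces precisely the claimed inequality.

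I do not expect a real obstacle here: this is a ``glue'' lemma whose content is entirely the matching of the per-step conditions and error terms of \cref{lem:main_error_lemma} to the schedule-level conditions of \cref{def:admissible}. The one point that needs care is the reduction of the per-step ball condition --- it is crucial that $\eta_{i+1}\ge\eta$, so that replacing $\eta_{i+1}$ with $\eta$ weakens rather than strengthens the requirement, and that an admissible schedule's running times satisfy $T_i=O\!\bigl((m+\log(\lambda/\eps))/\alpha\bigr)$ via $\gamma_i\le1$; keeping the constants and logarithmic factors aligned (e.g.\ $\ln(2d/\delta)$ versus $\log(d/\delta)$, and $\sqrt{2\ln(1/\delta)}$ versus $\sqrt{m+\log(1/\delta)}$) is handled by taking $C$ large enough.
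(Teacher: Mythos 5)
Your proposal is correct and follows essentially the same route as the paper's proof: verify that the admissibility conditions supply the first two hypotheses of \cref{lem:main_error_lemma}, use $T_i \lesssim (m+\log(\lambda/\eps))/\alpha$ together with $\eta_{i+1}\ge\eta$ to show the assumed lower bound on $R$ implies the per-step ball condition, and then apply \cref{lem:main_error_lemma} and bound $\eta_i$ by $\eta$ termwise to obtain $\eps_{dis}$. Your write-up is in fact somewhat more explicit than the paper's about the monotonicity bookkeeping, but there is no substantive difference.
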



\begin{proof}
It is straightforward to verify that an admissible schedule satisfies the first two conditions of \cref{lem:main_error_lemma}.


For the third condition regarding $R$, our assumption states:
\[
R \ge r + C\tuple{\frac{(m + \log \frac{\lambda}{\eps}) \nnorm{A}}{\alpha \eta^2} \tuple{\norm{A} r + \eta\sqrt{m + \log(1 / \delta)}} + \sqrt{\frac{d\log(d/\delta)(m + \log (\lambda / \eps))}{\alpha}}}
\]
Given that $T_i \lesssim \frac{m + \log(\lambda/\eps)}{\alpha}$, this choice of $R$ is sufficient to satisfy the third condition in Lemma~\ref{lem:main_error_lemma}.

Therefore, applying Lemma~\ref{lem:main_error_lemma} at each step $i$, we obtain that with probability at least $1 - 1/\lambda$ over $y_i$ and $y_{i+1}$:
\begin{align*}
    &{} \TV(x_{T_i}, p(x \mid y_{i+1})) \\
     \lesssim{} & \eps + \lambda \delta + \lambda \sqrt{T_i} \cdot \tuple{  \tuple{\wt{L}\alpha +  \frac{\|A\|^2}{\eta_i^2}} \tuple{h\wt{L}\alpha R + \frac{ h \norm{A}^2 R + h \|A\| \sqrt{m} \eta_i}{\eta_i^2} + \sqrt{dh}} + \eps_{score}} \\
    \lesssim{} & \eps + \lambda \delta + \lambda \sqrt{\frac{m + \log(\lambda/\eps)}{\alpha}} \cdot (\eps_{dis} + \eps_{score}).
\end{align*}
\end{proof}

We also want to prove the following two lemmas:

\begin{lemma}
    \label{lem:schedule_1}
   Let $p$ be a $d$-dimensional $(\delta, r, R, \wt L, \alpha)$ mode-centered locally well-conditioned distribution. For any $\delta \in (0, 1)$, suppose     \[
    R\;\ge\;
    2\sqrt{\frac d\alpha}
    +\sqrt{\frac{2\log(1/\delta)}{\alpha}} .
  \]
   Then, suppose $\eta_1 \ge \frac{\lambda \|A\|}{\eps}\sqrt{\frac{d}{\alpha}}$,
with probability at least \(1-\frac{1}{\lambda}\) over \(y_{1}\),
    \[
      \TV\bigl(p(x\mid y_{1}),\,p(x)\bigr) \lesssim
      \eps + \lambda\delta .
    \]
\end{lemma}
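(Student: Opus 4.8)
The plan is to reduce to the case of a globally $\alpha$-strongly log-concave prior, handle that case with a $\chi^2$ computation, and then transfer the bound back to $p$.

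\emph{Globalization.} Since $p$ is $(\delta, r, R, \wt L, \alpha)$ mode-centered locally well-conditioned and $R \ge 2\sqrt{d/\alpha} + \sqrt{2\log(1/\delta)/\alpha}$, \cref{lem:global_approx} gives an $\alpha$-strongly log-concave $\wt p$ on $\R^d$ with $\TV(p, \wt p) \le 3\delta$. Let $\mu, \wt\mu$ be the joint laws of $(x, y_1)$, where $y_1 = Ax + \mathcal{N}(0, \eta_1^2 I_m)$, under $x \sim p$ and $x \sim \wt p$ respectively. Augmenting $x$ with the noisy measurement is a Markov kernel, so by data processing $\TV(\mu, \wt\mu) \le 3\delta$; in particular $\TV(p(y_1), \wt p(y_1)) \le 3\delta$, and a short computation gives $\E_{y_1 \sim p(y_1)}[\TV(p(x \mid y_1), \wt p(x \mid y_1))] \le 2\,\TV(\mu, \wt\mu) \le 6\delta$, hence by Markov $\TV(p(x \mid y_1), \wt p(x \mid y_1)) \le 18\lambda\delta$ outside a $y_1$-set of $p(y_1)$-probability at most $\tfrac{1}{3\lambda}$.

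\emph{The strongly log-concave case.} For an $\alpha$-strongly log-concave prior $q$, expanding $q(x \mid y_1) = q(x)q(y_1 \mid x)/q(y_1)$ and applying Fubini yields the identity
\[
\E_{y_1 \sim q(y_1)}\!\big[\chi^2\big(q(x\mid y_1)\,\|\,q(x)\big)\big] \;=\; \E_{x \sim q}\!\big[\chi^2\big(\mathcal{N}(Ax,\eta_1^2 I_m)\,\|\,q(y_1)\big)\big].
\]
As $q(y_1) = \int \mathcal{N}(Ax',\eta_1^2 I_m)\,q(dx')$ and $Q \mapsto \chi^2(P\,\|\,Q)$ is convex, Jensen bounds the right-hand side by $\E_{x,x' \sim q}[\chi^2(\mathcal{N}(Ax,\eta_1^2 I_m)\,\|\,\mathcal{N}(Ax',\eta_1^2 I_m))] = \E_{x,x'\sim q}[e^{\|Ax - Ax'\|^2/\eta_1^2} - 1]$ with $x,x'$ independent. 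Now $\|Ax - Ax'\|^2 \le \|A\|^2\|x - x'\|^2$; the product $q \otimes q$ is $\alpha$-strongly log-concave, so the $\sqrt 2$-Lipschitz function $\|x - x'\|$ concentrates, $\Pr[\|x-x'\| \ge \sqrt{2d/\alpha} + t] \le e^{-\alpha t^2/4}$, using $\E\|x-x'\|^2 = 2\Tr(\Sigma_q) \le 2d/\alpha$. The hypothesis $\eta_1 \ge \frac{\lambda\|A\|}{\eps}\sqrt{d/\alpha}$ forces $\|A\|^2/\eta_1^2 \le \eps^2\alpha/(\lambda^2 d)$, which we may take to be a small multiple of $\alpha$ (if $\eps$ exceeds a small absolute constant the target bound is vacuous since $\TV \le 1$); a routine exponential-moment estimate then gives $\E_{x,x'\sim q}[e^{\|A\|^2\|x-x'\|^2/\eta_1^2}] \le 1 + O(\|A\|^2 d/(\alpha\eta_1^2)) \le 1 + O(\eps^2/\lambda^2)$. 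Hence $\E_{y_1}[\chi^2(q(x\mid y_1)\|q(x))] = O(\eps^2/\lambda^2)$, so by Markov and $\TV(P,Q) \le \tfrac12\chi^2(P\|Q)^{1/2}$ we get $\TV(q(x\mid y_1), q(x)) = O(\eps)$ outside a $y_1$-set of $q(y_1)$-probability at most $\tfrac{1}{3\lambda}$.

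\emph{Combining.} Apply the previous step with $q = \wt p$; using $\TV(p(y_1), \wt p(y_1)) \le 3\delta$ to transfer the exceptional set to $p(y_1)$, and combining with the globalization step and $\TV(\wt p(x), p(x)) \le 3\delta$, the triangle inequality gives $\TV(p(x\mid y_1), p(x)) \le 18\lambda\delta + O(\eps) + 3\delta \lesssim \eps + \lambda\delta$ outside a $y_1$-set of $p(y_1)$-probability at most $\tfrac{2}{3\lambda} + 3\delta$. When $\lambda\delta > \tfrac19$ the claim is vacuous; otherwise $3\delta \le \tfrac{1}{3\lambda}$ and the exceptional probability is at most $\tfrac1\lambda$, which finishes the proof. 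I expect the only genuine work to be the exponential-moment estimate in the strongly-log-concave step: obtaining the bound $1 + O(\|A\|^2 d/(\alpha\eta_1^2))$ (rather than a cruder finite bound) requires tracking that $\|A\|^2/\eta_1^2$ stays a small-constant multiple of $\alpha$; everything else is bookkeeping with Markov's inequality, $\TV \le \tfrac12\chi^{1/2}$, and data processing.
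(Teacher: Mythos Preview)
Your proof is correct but takes a different route from the paper at the core step for the globally strongly log-concave surrogate $\wt p$. The paper's argument there is considerably shorter: it observes that $\E_{y_1}[\KL(\wt p(x\mid y_1)\,\|\,\wt p(x))]$ is exactly the mutual information $I(x;y_1)$, bounds this by $\tfrac12\log\det(I_m+\Cov(Ax)/\eta_1^2)\le \|A\|^2 d/(2\alpha\eta_1^2)$ using $\Cov_{\wt p}\preceq \alpha^{-1}I_d$, and then Pinsker plus Jensen gives $\E_{y_1}[\TV(\wt p(x\mid y_1),\wt p(x))]\le \tfrac{\|A\|}{2\eta_1}\sqrt{d/\alpha}$ directly. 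A single Markov application then finishes. Your route---the Fubini identity for $\E_{y_1}\chi^2$, convexity of $Q\mapsto\chi^2(P\,\|\,Q)$, and the sub-Gaussian MGF estimate for $\|x-x'\|^2$---reaches the same place but with more moving parts, and the MGF bound needs the auxiliary small-$\eps$ case analysis you flag. The globalization step via \cref{lem:global_approx} and the transfer back to $p$ are essentially identical in both proofs. What the paper's route buys is brevity and no need to control an exponential moment; what your route buys is a slightly sharper object ($\chi^2$ rather than $\KL$), though that sharpness is not exploited here.
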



\begin{lemma}
    \label{lem:Nmixxing_bound}
    There exists an admissible noise such that \[ 
        N \lesssim \rho^2  \sqrt{m} \log(\lambda / \eps) + \frac{\rho^2 \alpha R^2 }{\sqrt{m}} + \frac{m^2}{m \log (\lambda / \eps) + \alpha R^2} +  \log \tuple{2 + \frac{\lambda\sqrt{d}\rho}{\eps}},
\]
where $\rho = \frac{\|A\|}{\eta \sqrt{\alpha}}$.
\end{lemma}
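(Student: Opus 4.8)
I would prove this by exhibiting an explicit admissible schedule, built \emph{greedily from the bottom up}, and then counting its length. Starting from squared noise level $v_0:=\eta^2$, and having reached a level $v=\eta_{i+1}^2$, take $\gamma_i$ to be the largest value in $(0,1]$ for which one can use the minimal running time $T_i:=C(m\gamma_i+\log(\lambda/\eps))/\alpha$ while still having $\|A\|^4(T_i^2 m+T_iR^2)\le v^2/(C\gamma_i^2)$ (it suffices to put $v^2=\eta_{i+1}^4\le\eta_i^4$ on the right), then advance to $\eta_i^2=v(1+\gamma_i)$; stop once $\eta_i^2$ reaches $\eta_1^2:=(\lambda\|A\|/\eps)^2(d/\alpha)$, the lower bound demanded in \cref{def:admissible}. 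Since $T_i^2 m+T_iR^2$ is increasing in $T_i$ and $T_i$ enters only through the fixed formula above, eliminating $T_i$ makes the constraint into $\gamma_i=\min(1,\Gamma(v))$ for an explicit increasing function $\Gamma$, so the schedule is the orbit $v_{k+1}=v_k(1+\min(1,\Gamma(v_k)))$, and $N-1$ is the number of steps to reach $\eta_1^2$. Every clause of \cref{def:admissible} then holds by construction, so the entire task is to bound this number of steps.

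The count splits into three phases according to the shape of $\Gamma$, and the only nontrivial tool is the elementary estimate $1-(1+x)^{-\theta}\ge\theta x/4$ for $x\in[0,1]$, $\theta\in(0,1]$. (i) \emph{Slow phase:} while $m\gamma\lesssim\log(\lambda/\eps)$ one has $T_i\asymp\log(\lambda/\eps)/\alpha$, so $\Gamma(v)$ is \emph{linear} in $v$ with slope $\asymp\alpha/(\|A\|^2\sqrt{\log(\lambda/\eps)\,(m\log(\lambda/\eps)+\alpha R^2)})$; then $v_{k+1}-v_k\asymp(\text{slope})\,v_k^2$, so $1/v_k$ drops by a fixed amount each step and the phase uses $O(\rho^2\sqrt{\log(\lambda/\eps)\,(m\log(\lambda/\eps)+\alpha R^2)})$ steps, which, via $\sqrt{a+b}\le\sqrt a+\sqrt b$ and the AM--GM bound $\sqrt{\alpha R^2\log(\lambda/\eps)}\le\tfrac12(\alpha R^2/\sqrt m+\sqrt m\log(\lambda/\eps))$, becomes $O(\rho^2\sqrt m\log(\lambda/\eps)+\rho^2\alpha R^2/\sqrt m)$ — the first two terms. (ii) \emph{Transition phase:} once $m\gamma\gtrsim\log(\lambda/\eps)$ but $\gamma<1$, $T_i\asymp m\gamma/\alpha$ and $\Gamma(v)$ becomes a fractional power of $v$, with exponent $\tfrac12$ when the $T_i^2 m$ term is binding and $\tfrac23$ when $T_iR^2$ is; telescoping the matching power $v_k^{\theta-1}$ (or $\log v_k$), together with the level at which this phase begins, bounds it by $O(m^2/(m\log(\lambda/\eps)+\alpha R^2))$ — the third term. (iii) \emph{Doubling phase:} once $\Gamma(v)\ge1$ we set $\gamma_i=1$ and double $v$ each step, so this phase uses $O(\log(\eta_1^2/\eta^2))=O(\log(2+\lambda\sqrt d\,\rho/\eps))$ steps — the fourth term. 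Adding the three phase counts gives the claimed bound on $N$.

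The hard part will be bookkeeping rather than any single idea. The elimination of $T_i$ must be done carefully, because it is exactly the coupling $T_i=C(m\gamma_i+\log(\lambda/\eps))/\alpha$ that creates the phase change at $m\gamma_i\asymp\log(\lambda/\eps)$; one then has to track which of $T_i^2 m$ and $T_iR^2$ is the binding term (this $R$-dependence is what produces the $\alpha R^2$ in the third term and the AM--GM split in the first), and — most delicately — pin down the levels $v$ at which consecutive phases meet precisely enough that the telescoped counts land on exactly the four stated quantities rather than on slightly larger ones. The remaining points — checking admissibility, the telescoping inequalities, and the degenerate case $\eta\ge(\lambda\|A\|/\eps)\sqrt{d/\alpha}$ (where the one-point schedule $\eta_1=\eta$ already works and $N=1$) — are routine.
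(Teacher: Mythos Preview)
Your greedy bottom-up construction is exactly the paper's strategy, and your three-phase picture (linear $\to$ fractional-power $\to$ doubling) is morally right. The paper, however, does not analyze the raw constraint directly: it first uses the elementary inequality $T^2m+TR^2\le(\sqrt m\,T+R^2/(2\sqrt m))^2$ to \emph{linearize in $T$}, so that after substituting $T=C(m\gamma+L)/\alpha$ (with $L=\log(\lambda/\eps)$) the admissibility constraint becomes a clean quadratic $a\gamma^2+b\gamma\lesssim v$ with $a=\|A\|^2 m^{3/2}/\alpha$ and $b=\|A\|^2(\sqrt m\,L/\alpha+R^2/\sqrt m)$. A short general lemma for such recursions then gives $N\lesssim b/\eta^2+a/b+\log(\eta_1^2/\eta^2)$, which is precisely the four stated terms. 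In particular, the transition in the paper's analysis happens at $\gamma=b/a=L/m+\alpha R^2/m^2$, not at $\gamma=L/m$.

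This is where your per-phase attribution breaks. When $mL<\alpha R^2$, your phase~(ii) starts at $\gamma\asymp L/m$ but the $\gamma\propto v^{2/3}$ sub-regime (where $T_iR^2$ binds) persists until $\gamma\asymp\alpha R^2/m^2$; telescoping $1/\gamma$ over this sub-regime already costs $\Theta(m/L)$ steps, which strictly exceeds the claimed $m^2/(mL+\alpha R^2)\asymp m^2/(\alpha R^2)$. The total is still fine --- phases~(i)+(ii) together telescope to $\Theta(1/\gamma_0)$, and one can check via AM--GM that $1/\gamma_0$ is bounded by the sum of the first three terms in every starting regime --- but the statement ``phase~(ii) $=$ third term'' is false as written. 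The cleanest fix is the paper's square-completion trick, which collapses your two fractional-power sub-regimes into a single $\gamma\propto v^{1/2}$ regime whose entry point is exactly $\gamma=b/a$, making the third term appear naturally; alternatively, drop the per-phase bookkeeping and just bound $1/\gamma_0$ directly.
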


\subsection{The Closeness Between $p(x \mid y_1)$ and $p(x)$}

In this part, we prove~\cref{lem:schedule_1}, showing that any admissible schedule has a large enough $\eta_1$, enabling us to use $p(x)$ to approximate $p(x \mid y_1)$.

We have the following standard information-theoretic result.
\begin{lemma}
    Let $X \in \R^m$ be a random variable, and $Y = X + \mathcal{N}(0, \eta^2I_m)$. Then, \[
        I(X; Y) \le \frac{1}{2} \log \det \tuple{I_m + \frac{\mathrm{Cov}(X)}{\eta^2}}.
    \]
\end{lemma}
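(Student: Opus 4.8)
The plan is to invoke the classical ``Gaussian input maximizes mutual information'' argument for the additive Gaussian noise channel $Y = X + \mathcal{N}(0, \eta^2 I_m)$.

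First I would dispose of the degenerate case: if $\mathrm{Cov}(X)$ is not finite, the right-hand side is $+\infty$ and there is nothing to prove, so assume $\Sigma := \mathrm{Cov}(X)$ is finite. Next I would record the measure-theoretic preliminaries. Since $Y$ is the convolution of the law of $X$ with a nondegenerate Gaussian, it has a bounded, smooth density, so its differential entropy $h(Y)$ is well defined; and conditioned on $X = x$, we have $Y \sim \mathcal{N}(x, \eta^2 I_m)$, so $h(Y \mid X) = \tfrac{m}{2}\log(2\pi e\,\eta^2)$ is a finite constant that does not depend on the law of $X$. This lets me write $I(X;Y) = h(Y) - h(Y \mid X)$ with all terms finite (and, since $I(X;Y)\ge 0$, this also confirms $h(Y) \ge h(Y\mid X) > -\infty$).

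The substantive step is then to upper bound $h(Y)$. Because the additive noise is independent of $X$, we have $\mathrm{Cov}(Y) = \Sigma + \eta^2 I_m$, and by the maximum-entropy property of the multivariate Gaussian among all distributions on $\R^m$ with a prescribed covariance, $h(Y) \le \tfrac{1}{2}\log\bigl((2\pi e)^m \det(\Sigma + \eta^2 I_m)\bigr)$. Subtracting the exact value $h(Y\mid X) = \tfrac{1}{2}\log\bigl((2\pi e)^m \eta^{2m}\bigr)$ yields $I(X;Y) \le \tfrac{1}{2}\log\bigl(\det(\Sigma + \eta^2 I_m)/\eta^{2m}\bigr) = \tfrac{1}{2}\log\det\bigl(I_m + \Sigma/\eta^2\bigr)$, which is the claim.

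I do not anticipate a genuine obstacle here, as this is a textbook fact; the only points requiring a little care are the well-definedness of the entropies (handled by the observation that $Y$ always possesses a density) and the vacuous case of infinite covariance, both of which I would dispatch in a sentence each.
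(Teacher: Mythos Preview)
Your proposal is correct and is exactly the standard textbook argument. The paper itself does not prove this lemma at all---it simply states it as ``the following standard information-theoretic result'' and moves on---so your proof supplies what the paper omits.
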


\begin{lemma}
    \label{lem:quadratic_moment_eta_1}
    For any distribution $p$ with $\Ex[x \sim p]{\|x - \E x\|^2} = m_2^2$, we have\[
        \Ex{\TV(p(x \mid y_1) , p(x) )} \le \frac{\|A\|m_2}{2\eta_1} .
    \] 
\end{lemma}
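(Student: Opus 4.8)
The plan is to route through mutual information. Recall $y_1 = Ax + \mathcal N(0,\eta_1^2 I_m)$ with $x\sim p$. The key identity is that averaging the posterior-to-prior KL over the observation recovers the mutual information:
\[
\E_{y_1}\big[\KL(p(x\mid y_1)\,\|\,p(x))\big] \;=\; I(x; y_1).
\]
Combining this with Pinsker's inequality $\TV(\mu,\nu)\le\sqrt{\tfrac12\KL(\mu\|\nu)}$ and Jensen's inequality (concavity of $\sqrt{\cdot}$, used to pull the expectation inside the root) gives
\[
\E_{y_1}\big[\TV(p(x\mid y_1), p(x))\big] \;\le\; \sqrt{\tfrac12\, I(x; y_1)}.
\]
So it suffices to prove $I(x; y_1) \le \|A\|^2 m_2^2/(2\eta_1^2)$.

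For that bound, first note that $x \to Ax \to y_1$ is a Markov chain, since the Gaussian measurement noise is independent of $x$; hence the data-processing inequality gives $I(x; y_1) \le I(Ax; y_1)$. Applying the standard information-theoretic lemma stated immediately above to the vector $Ax\in\R^m$ observed through $\eta_1^2 I_m$ additive Gaussian noise,
\[
I(Ax; y_1) \;\le\; \tfrac12 \log\det\!\Big(I_m + \tfrac{1}{\eta_1^2}\,\mathrm{Cov}(Ax)\Big) \;=\; \tfrac12\log\det\!\Big(I_m + \tfrac{1}{\eta_1^2}\,A\,\mathrm{Cov}(x)\,A^\top\Big).
\]
Using $\log\det(I+M)\le\Tr(M)$ for PSD $M$, followed by $\Tr(A\,\mathrm{Cov}(x)\,A^\top)=\Tr(\mathrm{Cov}(x)\,A^\top A)\le\|A\|^2\,\Tr(\mathrm{Cov}(x))=\|A\|^2 m_2^2$, we obtain $I(x;y_1)\le \|A\|^2 m_2^2/(2\eta_1^2)$. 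Substituting back into the display above yields $\E[\TV]\le \|A\| m_2/(2\eta_1)$, which is the claimed bound.

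There is no substantive obstacle here: each step is a textbook inequality. The two places to be slightly careful are (i) the order of Pinsker and Jensen — Pinsker must be applied pointwise in $y_1$ first, and then Jensen moves the expectation inside the concave square root at no cost; and (ii) the reduction $I(x;y_1)\le I(Ax;y_1)$, which holds (with equality, in fact) because $y_1$ depends on $x$ only through $Ax$. If one preferred to avoid invoking the mutual-information lemma as a black box, one could instead bound $\KL(p(x\mid y_1)\|p(x))$ directly in terms of the Gaussian likelihood ratios, but the information-theoretic route is the shortest.
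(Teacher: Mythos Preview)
Your proof is correct and follows essentially the same route as the paper's: express the expected posterior-to-prior KL as $I(x;y_1)$, reduce to $I(Ax;y_1)$, apply the Gaussian-channel $\log\det$ bound, upper-bound by $\Tr$, and finish with Pinsker plus Jensen. You are simply more explicit about the Pinsker-then-Jensen order and the $\Tr(A\,\mathrm{Cov}(x)\,A^\top)\le\|A\|^2 m_2^2$ step, which the paper leaves implicit.
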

\begin{proof}
    Note that $\Ex{\KL(p(x \mid y_i) \, \| \, p(x) )}$ is exactly the mutual information between $x$ and $y_i$. In addition, we have \[
        \Ex{\KL(p(x \mid y_i) \, \| \, p(x) )} = I(x; y_i) \le I(Ax; y_i) \le  \frac{1}{2} \log \det \tuple{I_m + \frac{\mathrm{Cov}(Ax)}{\eta_i^2}} \le \frac{\|A\|^2m_2^2}{2\eta_i^2} .
    \]
    By Pinsker's inequality, we have\[
        \Ex{\TV(p(x \mid y_1) , p(x) )} \le \frac{\|A\|m_2}{2\eta_1} .
    \]
\end{proof}

\begin{lemma}
    \label{lem:posterior_prior_LWC}
    Let \(p\) be a \(d\)-dimensional \((\delta ,r ,R ,\widetilde L ,\alpha)\) mode-centered locally well–conditioned probability distribution.
    Assume
    \[
      R\;\ge\;
      2\sqrt{\frac d\alpha}
      +\sqrt{\frac{2\log(1/\delta)}{\alpha}} .
    \]
    Then
    \[
      \E_{\,y_{1}}\bigl[\TV\bigl(p(x\mid y_{1}),\,p(x)\bigr)\bigr] \lesssim
      \frac{\|A\|}{\eta_{1}}\sqrt{\frac d\alpha}
      + \delta .
    \]
    \end{lemma}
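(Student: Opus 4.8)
The plan is to reduce to the globally strongly log-concave case already handled by \cref{lem:quadratic_moment_eta_1}. If $\delta > 1/2$ the claimed bound is trivial (total variation is at most $1 \lesssim \delta$), so I will assume $\delta \le 1/2$. First I would invoke \cref{lem:global_approx}: the hypothesis $R \ge 2\sqrt{d/\alpha} + \sqrt{2\log(1/\delta)/\alpha}$ is exactly what that lemma requires, so it produces an $\alpha$-strongly log-concave distribution $\widetilde p$ on $\R^d$ with $\TV(p,\widetilde p) \le 3\delta$.

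Next I would exploit the elementary identity that, writing $Y_1 = Ax + \mathcal N(0,\eta_1^2 I_m)$ for the measurement channel, expanding the total-variation integral gives $\E_{y_1}[\TV(p(x\mid y_1),p(x))] = \TV(p(x,y_1),\,p(x)\,p(y_1))$; that is, the expected conditional-versus-marginal distance equals the distance between the joint law and the product of its marginals (and likewise with $\widetilde p$ in place of $p$). Then I would apply the triangle inequality through the $\widetilde p$-world,
\[
\TV\bigl(p(x,y_1),p(x)p(y_1)\bigr) \le \TV\bigl(p(x,y_1),\widetilde p(x,y_1)\bigr) + \TV\bigl(\widetilde p(x,y_1),\widetilde p(x)\widetilde p(y_1)\bigr) + \TV\bigl(\widetilde p(x)\widetilde p(y_1),p(x)p(y_1)\bigr),
\]
and bound the three pieces separately. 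The first term is $\le 3\delta$: couple $x\sim p$ with $x'\sim\widetilde p$ agreeing with probability $\ge 1-3\delta$ and push both through the same Gaussian channel. The third term is $\le \TV(\widetilde p(x),p(x)) + \TV(\widetilde p(y_1),p(y_1)) \le 3\delta + 3\delta$, since the $y_1$-marginals are pushforwards of the joint laws under the shared channel. For the middle (main) term I would apply \cref{lem:quadratic_moment_eta_1} to $\widetilde p$: strong log-concavity gives $\mathrm{Cov}(\widetilde p) \preceq \alpha^{-1} I_d$ (the bound used inside \cref{lem:norm_bound_simple}), hence $\E_{x\sim\widetilde p}\|x-\E x\|^2 = \Tr(\mathrm{Cov}(\widetilde p)) \le d/\alpha$, so that $\TV(\widetilde p(x,y_1),\widetilde p(x)\widetilde p(y_1)) \le \tfrac{\|A\|}{2\eta_1}\sqrt{d/\alpha}$. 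Summing the three bounds yields $\E_{y_1}[\TV(p(x\mid y_1),p(x))] \le \tfrac{\|A\|}{2\eta_1}\sqrt{d/\alpha} + 9\delta \lesssim \tfrac{\|A\|}{\eta_1}\sqrt{d/\alpha} + \delta$.

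The one genuinely delicate point — and the reason the proof cannot apply \cref{lem:quadratic_moment_eta_1} to $p$ directly — is that $p$ need not have a finite second moment: its tails outside $B(\theta,R)$ are entirely uncontrolled by the local well-conditioning hypothesis. Routing through the globally strongly log-concave surrogate $\widetilde p$ fixes this, and the product-of-marginals reformulation is what makes the $3\delta$-closeness of $p$ and $\widetilde p$ transfer cleanly even though the expectation on the left is over $p(y_1)$ while the clean bound on $\widetilde p$ involves $\widetilde p(y_1)$. Everything else is a routine application of coupling and the triangle inequality for total variation.
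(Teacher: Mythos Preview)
Your proof is correct and follows essentially the same strategy as the paper: invoke \cref{lem:global_approx} to pass to a globally $\alpha$-strongly log-concave surrogate $\widetilde p$, apply \cref{lem:quadratic_moment_eta_1} to $\widetilde p$ using the covariance bound $\mathrm{Cov}(\widetilde p)\preceq\alpha^{-1}I_d$, and transfer back via triangle inequality at a cost of $O(\delta)$. Your reformulation of $\E_{y_1}[\TV(p(x\mid y_1),p(x))]$ as the joint-versus-product total variation is a clean way to handle the measure-transfer (the paper instead applies the triangle inequality directly on the conditionals and invokes $\E_{y_1}[\TV(p(x\mid y_1),\widetilde p(x\mid y_1))]\le\TV(p,\widetilde p)$), but the two arguments are equivalent in spirit and yield the same bound.
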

    
    \begin{proof}
    Lemma~\ref{lem:global_approx} provides an \(\alpha\)-strongly log–concave
    density \(\widetilde p\) satisfying
    \begin{equation*}\label{eq:TV_prior}
      \TV(p,\widetilde p) \le 3\delta .
    \end{equation*}
    For an \(\alpha\)-strongly log–concave law the Brascamp–Lieb inequality yields
    \(\operatorname{Cov}_{\tilde p}\preceq\alpha^{-1}I_{d}\); hence
    \[
      m_{2}(\widetilde p)
      :=\bigl(\E_{\tilde p}\|x-\E_{\tilde p}x\|^{2}\bigr)^{1/2}
      \;\le\;\sqrt{\frac d\alpha}.
    \]
    Applying~\cref{lem:quadratic_moment_eta_1} to
    \(\widetilde p\) gives
    \begin{equation*}\label{eq:TV_tilde}
      \E_{y_{1}}\bigl[\TV\bigl(\widetilde p(x\mid y_{1}),\widetilde p(x)\bigr)\bigr]
      \;\le\;
      \frac{\|A\|}{2\,\eta_{1}}\sqrt{\frac d\alpha}.
    \end{equation*}
    Note that 
    \[
      \TV\bigl(p(x\mid y_{1}),p(x)\bigr)
      \le
      \TV\bigl(p(x\mid y_{1}),\widetilde p(x\mid y_{1})\bigr)
      +\TV\bigl(\widetilde p(x\mid y_{1}),\widetilde p(x)\bigr)
      +\TV\bigl(\widetilde p(x),p(x)\bigr).
    \]
    Integrating in \(y_{1}\) and using the elementary fact
    \[
      \E_{y_{1}}\bigl[\TV\bigl(p(x\mid y_{1}),\widetilde p(x\mid y_{1})\bigr)\bigr]
      \le \TV(p,\widetilde p),
    \]
    together with the above calculaion, yields
    \[
      \E_{y_{1}}\bigl[\TV\bigl(p(x\mid y_{1}),p(x)\bigr)\bigr]
      \le
      3\delta
      +\frac{\|A\|}{2\,\eta_{1}}\sqrt{\frac d\alpha}
      +3\delta .
    \]
    This proves the stated bound.
    \end{proof}
    


Now we prove~\cref{lem:schedule_1}.
\begin{proof}[Proof of \cref{lem:schedule_1}]
    By Lemma~\ref{lem:posterior_prior_LWC}, we have
    \[
      \E_{\,y_{1}}\bigl[\TV\bigl(p(x\mid y_{1}),\,p(x)\bigr)\bigr] \lesssim
      \frac{\|A\|}{\eta_{1}}\sqrt{\frac d\alpha}
      + \delta .
    \]
    
    Since all admissible noise schedules satisfy $\eta_1 \geq \frac{\lambda \|A\|}{\eps}\sqrt{\frac{d}{\alpha}}$. This implies
    \[
      \frac{\|A\|}{\eta_{1}}\sqrt{\frac d\alpha} \leq \frac{\eps}{\lambda}.
    \]
    
    Consequently,
    \[
      \E_{\,y_{1}}\bigl[\TV\bigl(p(x\mid y_{1}),\,p(x)\bigr)\bigr] \lesssim \frac{\eps}{\lambda} + \delta.
    \]
    
    By Markov's inequality, with probability at least $1-\frac{1}{\lambda}$ over $y_1$,
    \[
      \TV\bigl(p(x\mid y_{1}),\,p(x)\bigr) \lesssim \eps + \lambda\delta,
    \]
    which proves the lemma.
\end{proof}

\subsection{Bound for $N$ Mixing Steps}
In this part, we prove~\cref{lem:Nmixxing_bound}.

\begin{lemma}
    \label{lem:number_array_result_general}
    Let $a, x_0 > 0$, and let $c > 0$. Consider the number sequence
    \[
        x_{i+1} = \bigl(1 + \min((a x_i)^c, 1)\bigr) x_i.
    \]
    For every $B > 0$, let $k(B)$ be the minimum integer $i$ such that $x_i \ge B$. Then
    \[
        k(B) = O\tuple{ (a x_0)^{-c} + \log \tuple{1 + \frac{B}{x_0}} }.
    \]
\end{lemma}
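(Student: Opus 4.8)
The plan is to rescale by $a$ and split the trajectory into a \emph{slow phase}, consisting of the indices $i$ with $a x_i < 1$ (where the multiplier $1 + \min((ax_i)^c,1)$ is genuinely close to $1$), and a \emph{fast phase}, consisting of the indices with $ax_i \ge 1$ (where the iterate simply doubles). Writing $u_i := ax_i$, the recursion becomes $u_{i+1} = u_i\bigl(1 + \min(u_i^c,1)\bigr)$ with $u_0 = ax_0$; the stopping condition $x_i \ge B$ becomes $u_i \ge aB$; and $(u_i)$ is non-decreasing with $u_{i+1} \le 2u_i$ always. If $B \le x_0$ then $k(B) = 0$, so I assume $B > x_0$ from now on.

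For the fast phase, once $u_i \ge 1$ we have $\min(u_i^c,1) = 1$, hence $u_{i+1} = 2u_i$; so from the first index $i_0$ with $u_{i_0} \ge 1$ onward the iterate exactly doubles, and $u_{i_0+t} = 2^t u_{i_0} \ge 2^t u_0$. Hence at most $\lceil \log_2(aB/u_0)\rceil = \lceil \log_2(B/x_0)\rceil = O(\log(1+B/x_0))$ steps of this phase are needed to reach $u \ge aB$, and if $u$ reaches $aB$ before ever entering this phase it contributes nothing.

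For the slow phase, $u_i < 1$ forces $\min(u_i^c,1) = u_i^c$, so the increment is exactly $u_{i+1} - u_i = u_i^{1+c}$. I would count these steps by dyadic bucketing: partition $[u_0,1)$ into scales $S_j := [2^j u_0,\, 2^{j+1} u_0)$ for $0 \le j < J := \lceil \log_2(1/u_0)\rceil$. Since $(u_i)$ is increasing and grows by at most a factor $2$ per step, the indices with $u_i \in S_j$ form one contiguous block on which each increment is at least $(2^j u_0)^{1+c}$ while the total increase is at most $|S_j| = 2^j u_0$; so that block has at most $(2^j u_0)^{-c} + 1$ steps. Summing the geometric series, the slow phase uses at most $J + u_0^{-c}\sum_{j\ge 0} 2^{-jc} = J + O((ax_0)^{-c})$ steps, and since $\log_2(1/u_0) = O((ax_0)^{-c})$ whenever $u_0 \le 1$ (using $\ln t \le t^c/c$ for $t \ge 1$, with constants depending on $c$), this is $O((ax_0)^{-c})$.

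Adding the two phase bounds yields $k(B) = O\tuple{(ax_0)^{-c} + \log\tuple{1 + B/x_0}}$, with the degenerate case $ax_0 \ge 1$ handled by the fast phase alone. The one step that needs care is the slow-phase estimate: the crude argument that each step increases $\log u_i$ by at least $(ax_0)^c$ only gives $O\bigl((ax_0)^{-c}\log(1/(ax_0))\bigr)$, losing a logarithmic factor; the dyadic bucketing is exactly what removes it, by exploiting that the per-step increment $u_i^{1+c}$ grows as $u_i$ climbs through successive scales.
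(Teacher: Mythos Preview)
Your proof is correct and follows essentially the same approach as the paper: split at the threshold $ax_i=1$, use dyadic bucketing with a geometric sum for the slow phase, and doubling for the fast phase. The only cosmetic differences are your rescaling $u_i=ax_i$ and your additive per-bucket increment bound (producing the extra $J$ term you then absorb) in place of the paper's multiplicative $(1+1/t_j)^{t_j}\ge 2$ argument.
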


\begin{proof}
We show in two steps that the time to go from $x_0$ to $1/a$, then to $B$.
Define 
\[
    k_1 = \min\{ i \in \mathbb{N} : x_i \ge 1/a\},
\]

\paragraph{Bound for $k_1$.}
We first show that $k_1 \lesssim (a x_0)^{-c}$. Consider the quantities
\[
    N_j = \min\{i \in \mathbb{N} : x_i \ge 2^j x_0\},
\]
and let $j^*$ be the smallest $j$ such that $x_{N_j} \ge 1/a$. If instead $x_0 \ge 1/a$ already, then $k_1 = 0$ and there is nothing to prove.

Assume $x_0 < 1/a$. For each $j < j^*$ define
\[
    t_j = \bigl(2^j a x_0\bigr)^{-c}.
\]
We claim that 
\[
    N_{j+1} - N_j \,\le\, t_j.
\]
Indeed, for each $j<j^*$,
\[
\begin{aligned}
    x_{N_j + t_j}
    &\;\ge\;
        x_{N_j}
        \;\prod_{i=N_j}^{N_j + t_j - 1}
        \Bigl(1 + (a x_i)^c\Bigr) \\[6pt]
    &\;\ge\;
        x_{N_j}
        \;\prod_{i=N_j}^{N_j + t_j - 1}
        \Bigl(1 + (a x_{N_j})^c\Bigr)
    \;=\;
        x_{N_j}
        \;\Bigl(1 + (a x_{N_j})^c\Bigr)^{t_j}.
\end{aligned}
\]
Since
\[
    (a x_{N_j})^c \;\ge\; \bigl(a \cdot 2^j x_0\bigr)^c 
    = \frac{1}{t_j},
\]
we get
\[
    x_{N_j + t_j} 
    \;\ge\;
        \Bigl(1 + \tfrac{1}{t_j}\Bigr)^{t_j}\,x_{N_j}
    \;\ge\;
        2\,x_{N_j}
    \;\ge\;
        2^{\,j+1}\,x_0.
\]
By monotonicity of the sequence $(x_i)$, it follows that $N_{j+1} \le N_j + t_j$. Summing over $j$ up to $j^*-1$ gives
\[
    N_{j^*}
    = 
    \sum_{j=0}^{j^*-1} \bigl(N_{j+1} - N_j\bigr)
    \;\le\;
    \sum_{j=0}^{j^*-1} (2^j a x_0)^{-c}
    \;\lesssim\;
    (a x_0)^{-c}.
\]
By definition, $N_{j^*}$ is the first index $i$ such that $x_i \ge 1/a$, so $k_1 = N_{j^*} \lesssim (a x_0)^{-c}$.

\paragraph{Bound to achieve $B$.}
If $B \le 1 / a$, the bound already holds. Now we analyze how many steps 
Note that for every $i \ge k_1$,
\[
    x_{i+1} = \bigl(1 + \min((a x_i)^c, 1) \bigr) x_i = 2 x_i.
\]
Therefore, we have
\[
    x_{k_1 + \log_2 (B)} \ge 2^{\log_2 (B / x_{k_1})} x_{k_1} \ge B.
\]
This proves that
\[
    k(B) \le k_1 + \log_2\tuple{1 + \frac{B}{x_{k_1}}} \le  k_1 + \log_2 \tuple{1 + \frac{B}{x_0}}  \lesssim (a x_0)^{-c} + \log \tuple{1 + \frac{B}{x_0}}.
\]

\end{proof}

\begin{lemma}
    \label{lem:number_array_quadratic}
    
    Given parameters $x_0, a, b > 0$, consider sequence inductively defined by $x_{i+1} = (1 + \gamma_i) x_i$, where \[
        \gamma_i = \min\set{\gamma_i \le 1 \ : \ a \gamma^2 + b\gamma \le 2x_i}.
    \]
    Given $B$, let $k(B)$ be the minimum integer $i$ such that $x_i \ge B$. Then, \[
            k(B) \lesssim \frac{b}{x_0} + {\frac{a}{b}} + \log \tuple{1 + \frac{B}{x_0}}.
        \]
\end{lemma}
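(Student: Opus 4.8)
The idea is to make the implicitly defined step size $\gamma_i$ explicit. I read the definition as $\gamma_i=\min\{1,\gamma^{*}(x_i)\}$, where $\gamma^{*}(x)$ is the positive root of $a\gamma^{2}+b\gamma=2x$ --- i.e.\ the largest admissible step, which is how $\gamma_i$ is used in \cref{def:admissible}. Writing $\gamma^{*}(x)=\frac{4x}{\sqrt{b^{2}+8ax}+b}$ and bounding the denominator by $2(b+\sqrt{2ax})$ yields
\[
    \gamma_i \;\ge\; \tfrac{1}{\sqrt 2}\,\min\!\Big\{1,\ \tfrac{x_i}{b},\ \sqrt{\tfrac{x_i}{a}}\Big\}.
\]
This exhibits three growth regimes, separated by $x=b^{2}/a$ (where $x/b$ and $\sqrt{x/a}$ cross) and $x=(a+b)/2$ (where the cap $\gamma_i=1$ activates): a \emph{linear} regime with $\gamma_i\asymp x_i/b$, a \emph{square-root} regime with $\gamma_i\asymp\sqrt{x_i/a}$, and a \emph{geometric} regime with $\gamma_i=1$, $x_{i+1}=2x_i$. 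Since $\gamma_i>0$ the sequence $(x_i)$ is strictly increasing, so it traverses these regimes in order and $k(B)$ is finite.

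The plan is then to count the steps in each regime, invoking \cref{lem:number_array_result_general} for the first two. Put $\tau:=\min(b^{2}/a,(a+b)/2)$; one checks $\tau\le b$, and that $\tau<(a+b)/2$ precisely when $b<a$, in which case $\tau=b^{2}/a$. In \emph{Phase~I} ($x_i<\tau$) we have $x_i\le b^{2}/a$ and $x_i<b$, so the minimum above equals $x_i/b$ and, since $x_i/(\sqrt2\, b)<1$, the recursion termwise dominates the sequence of \cref{lem:number_array_result_general} with exponent $c=1$ and parameter $1/(\sqrt2\,b)$ started from $x_0$; that lemma gives $O\!\big(b/x_0+\log(1+\tau/x_0)\big)=O(b/x_0)$ steps. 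In \emph{Phase~II} ($\tau\le x_i<(a+b)/2$, nonempty only when $b<a$) we have $x_i\ge b^{2}/a$ and $x_i<a$, so the minimum equals $\sqrt{x_i/a}<1$ and the recursion dominates \cref{lem:number_array_result_general} with $c=1/2$, parameter $1/(2a)$, started from $\max(x_0,b^{2}/a)$; the leading term is $\sqrt{2a/(b^{2}/a)}=O(a/b)$ and the residual $\log$ is $\le\log(1+a^{2}/b^{2})\le 2a/b$. In \emph{Phase~III} ($x_i\ge(a+b)/2$) every step is a doubling, contributing $O(\log(1+B/x_0))$ steps (and $0$ if $B\le x_0$). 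Summing gives $k(B)=O\!\big(\tfrac{b}{x_0}+\tfrac{a}{b}+\log(1+\tfrac{B}{x_0})\big)$.

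I expect the main work to be bookkeeping rather than any single estimate: one must verify that the thresholds $\tau$ and $(a+b)/2$ fit together consistently in both cases $b\le a$ and $b>a$ (in particular that Phase~II is genuinely empty when $b\ge a$, so the $a/b$ term is not needed there), and that inside each non-geometric phase the lower bound on $\gamma_i$ is an \emph{uncapped} monomial $(\mathrm{const}\cdot x_i)^{c}$ matching the exact form of the recursion in \cref{lem:number_array_result_general}. A minor point to state carefully is the comparison argument: because $(x_i)$ is increasing, $x_i<\tau$ forces all earlier iterates to lie below $\tau$ as well, which is what lets the actual sequence dominate the reference sequence of \cref{lem:number_array_result_general} up to the first exit from a phase, hence lets us apply the lemma on each sub-interval. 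The remaining steps are elementary: bounding stray logarithms via $a+b\le 2\max(a,b)$, $\log(1+t^{2})\le 2\log(1+t)\le 2t$, and AM--GM ($\sqrt{a/x_0}\le\tfrac12(b/x_0+a/b)$) for the case where the starting point already lies in Phase~II or~III.
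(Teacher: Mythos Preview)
Your proposal is correct and follows essentially the same approach as the paper: both split into a linear regime ($\gamma_i\asymp x_i/b$ when $x_i\lesssim b^2/a$) and a square-root regime ($\gamma_i\asymp\sqrt{x_i/a}$ when $x_i\gtrsim b^2/a$), then invoke \cref{lem:number_array_result_general} in each to get the $b/x_0$ and $a/b$ contributions respectively, with the logarithm coming from the geometric tail. The only cosmetic difference is that the paper guesses admissible choices $\gamma_i=\min(x_i/b,1)$ and $\gamma_i=\sqrt{x_i/a}$ directly and verifies the constraint $a\gamma^2+b\gamma\le 2x_i$, whereas you derive the same lower bound on $\gamma_i$ from the quadratic formula; the resulting case analysis and step counts are the same.
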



\begin{proof} 
We do case analysis.
\paragraph{Case 1: $x_0 \ge b^2 / a$.} We always choose $\gamma_i =\sqrt{x_i / a}$. We can verify that \[
    a\tuple{{\frac{x_i}{a}}} + b \sqrt{\frac{x_i}{a}} \le x_i + \sqrt{\frac{b^2}{a} \cdot x_i} \le 2x_i,
\]
and this satisfies the requirement for $\gamma_i$. By applying~\cref{lem:number_array_result_general}, we have that \[
    k(B) \lesssim \tuple{\frac{x_0}{a}}^{-1/2} + \log \tuple{1 + \frac{B}{x_0}} \le  {\frac{a}{b}} + \log \tuple{1 + \frac{B}{x_0}}.
\]
\paragraph{Case 2: $x_0 \le B \le b^2 / a$.}
We always choose $\gamma_i = \min(x_i / b, 1)$. We can verify that \[
    a\tuple{\frac{x_i}{b}}^2 + b \tuple{\frac{x_i}{b}} \le x_i \tuple{\frac{ax_i}{b^2}} + x_i \le 2x_i,
\]
and this satisfies the requirement for $\gamma_i$. By applying~\cref{lem:number_array_result_general}, we have that \[
    k(B) \lesssim (x_0 / b)^{-1} + \log\tuple{1 + \frac{B}{x_0}}.
\]

\paragraph{Case 3: $x_0 \le  b^2 / a \le B$.}
We combine the bound for the first two cases, where we first go from $x_0$ to $b^2 / a$, then go from $b^2 / a$ to $B$. Then we have \[
    k(B) \lesssim \tuple{(x_0 / b)^{-1}+ \log \tuple{1 + \frac{B}{x_0}}} + \tuple{\frac{a}{b} + \log \tuple{1 + \frac{B}{x_0}}} \lesssim \frac{b}{x_0} + {\frac{a}{b}} + \log \tuple{1 + \frac{B}{x_0}}.
\]

\end{proof}

\begin{proof}[Proof of~\cref{lem:Nmixxing_bound}]
Now we describe how we construct an admissible noise schedule. Consider we start from $\eta_1' = \eta$, and for each $i$, we iteratively choose $\gamma_i'$ to be the maximum $\gamma \le 1$ such that \[
    \|A\|^4 (f_T^2(\gamma) m + f_T(\gamma) R^2) \le \frac{(\eta_i')^4}{C\gamma^2},
\]
    and then set $\eta_{i+1}' = \sqrt{(1 + \gamma_i') (\eta'_i)^2}$. We continue  this process until we reach $\eta_{N}' \ge \frac{\lambda \|A\|}{\eps}\sqrt{\frac{d}{\alpha}}$. It is easy to verify that $(\eta_N', \eta_{N-1}', \ldots, \eta_1')$ is an admissible noise schedule. Now we bound the number of iterations $N$.

Since for all $\gamma$, we have $
        \|A\|^4 (f_T^2(\gamma) m + f_T(\gamma) R^2) \le \|A\|^4 (\sqrt{m}f_T(\gamma) + \frac{R^2}{2\sqrt{m}})^2
    $,
    a sufficient condition for $\|A\|^4 (f_T^2(\gamma) m + f_T(\gamma) R^2) \le \frac{(\eta_i')^4}{C\gamma^2}$ is that
    \[
        \|A\|^4 (\sqrt{m}f_T(\gamma) + \frac{R^2}{2\sqrt{m}})^2 \le \frac{(\eta_i')^4}{C\gamma^2} \iff \|A\|^2 (\sqrt{m}f_T(\gamma) + \frac{R^2}{2\sqrt{m}}) \le \frac{(\eta_i')^2}{C\gamma}.
    \]
    Therefore, fixing $\eta_i'$, we have that $\gamma_i'$ is at least  \[
        \max \set{\gamma \le 1 \ : \ \frac{\norm{A}^2 m^{1.5}}{\alpha}  \gamma^2 +     \tuple{\frac{\norm{A}^2 \sqrt{m} \log(\lambda / \eps)}{\alpha} + \frac{\norm{A}^2 R^2}{\sqrt{m}}} \gamma \le \frac{(\eta_i')^2}{C}}.
    \]
    
    Now we look at the inductive sequence starting from $x_1 = \eta^2$, and $x_{i+1} = (1 + \wt\gamma_i) x_i$, where \[
       \wt \gamma_i =  \max \set{\gamma \le 1 \ : \ \frac{\norm{A}^2 m^{1.5}}{\alpha}  \gamma^2 +     \tuple{\frac{\norm{A}^2 \sqrt{m} \log(\lambda / \eps)}{\alpha} + \frac{\norm{A}^2 R^2}{\sqrt{m}}} \gamma \le \frac{x_i}{C}}.
    \]
    By~\cref{lem:number_array_quadratic}, we know that for any $\eta_{goal} > 0$, we can achieve $x_{N} \ge \eta_{goal}^2$ within \[
        N \lesssim \frac{\norm{A}^2 \sqrt{m} \log(\lambda / \eps)}{\alpha \eta^2} + \frac{\norm{A}^2 R^2}{\sqrt{m}\eta^2} + \frac{m^2}{m \log (\lambda / \eps) + \alpha R^2} +  \log \tuple{2 + \frac{\eta_{goal}}{\eta}}.
    \]
    Taking in $\eta_{goal} = \frac{\lambda \|A\|}{\eps}\sqrt{\frac{d}{\alpha}}$, we conclude the lemma.
    \end{proof}

\section{Theoretical Analysis of~\cref{alg:annealing_estimated_score}}
\label{sec:analysis_and_application}





In this section, we analyze the algorithm presented in~\cref{alg:annealing_estimated_score}. In~\cref{line:init}, the algorithm initializes by drawing a sample from the prior distribution $p(x)$ via the diffusion SDE, which introduces sampling error. \cite{chen2022sampling} demonstrated that this diffusion sampling error is polynomially small, with the exact magnitude depending on the discretization scheme chosen for the diffusion SDE. Since the focus of this paper is on enabling an unconditional diffusion sampling model to perform posterior sampling, the choice of diffusion discretization and its associated error are not not the focus of our analysis. Consequently, we omit the diffusion sampling error in the error analysis presented in this section. This omission does not impact the rigor of the theorems in the main paper, as the error is polynomially small.

We start with the following lemma:

\begin{lemma}
    \label{lem:algorithm_error}
    Let $C > 0$ be a large enough constant. Let $p$ be a $(\delta, r, R, \wt L, \alpha)$ mode-centered locally well-conditioned distribution. 
    For every $\delta, \eps \in (0, 1)$ and $\lambda > 1$, suppose \[
        R \ge r +  C\tuple{\frac{(m + \log \frac{\lambda}{\eps}) \nnorm{A}}{\alpha \eta^2} \tuple{\norm{A} r + \eta\sqrt{m + \log(1 / \delta)}} + \sqrt{\frac{d\log(d/\delta)(m + \log (\lambda / \eps))}{\alpha}}}.
\]
Then running~\cref{alg:annealing_estimated_score} will guarantee that
\begin{align*}
    \Prb[y_1, \dots, y_N]{\TV({X}_N, p(x \mid y)) \lesssim N \tuple{\eps + \lambda  \delta + \lambda\sqrt{\frac{m + \log (\lambda / \eps)}{\alpha}} \cdot \tuple{  \eps_{dis}  + {{\eps_{score}}} }}  } \ge 1 - \frac{N}{\lambda},
 \end{align*}
where \[
\eps_{dis} := \tuple{\wt L \alpha  +  \frac{\|A\|^2}{\eta^2}} \tuple{h \wt L \alpha R + \frac{ h \norm{A}^2 R + h \|A\| \sqrt{m} \eta}{\eta^2} + \sqrt{dh}}.
\]
\end{lemma}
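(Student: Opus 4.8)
The plan is to track, as $i$ grows, how far the law $\mu_i$ of the algorithm's iterate $X_i$ drifts from the idealized checkpoint $p(x\mid y_i)$, and then close the argument with a telescoping sum and a single union bound. For each $i$, let $K_i$ denote the Markov kernel sending a distribution $\nu$ on $\R^d$ to the law of the output of the discretized SDE~\eqref{eq:alg2_process} (driven by the estimated posterior score $\wh s_{y_{i+1}}$) run for time $T_i$ from $x_0\sim\nu$; this is precisely the operation performed at step $i$ of \cref{alg:annealing_estimated_score}, so $\mu_{i+1}=K_i\mu_i$, while the idealized process $\wh P_i$ of~\eqref{eq:wh_P_i} has output law $K_i\,p(x\mid y_i)$. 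Writing $\eps_i:=\TV(\mu_i,\,p(x\mid y_i))$ — a function of $(y_1,\dots,y_i)$, with $\mu_1=p(x)$ once we discard the polynomially small diffusion-initialization error of \cref{line:init}, as discussed above — the data-processing inequality $\TV(K_i\mu_i,\,K_i\,p(x\mid y_i))\le\TV(\mu_i,\,p(x\mid y_i))$ and the triangle inequality give
\[
\eps_{i+1}\;\le\;\eps_i\;+\;\TV\bigl(K_i\,p(x\mid y_i),\,p(x\mid y_{i+1})\bigr)\;=:\;\eps_i+\beta_i ,
\]
hence $\eps_N\le\eps_1+\sum_{i=1}^{N-1}\beta_i$, where $\beta_i$ is exactly the error of running $\wh P_i$ from the true posterior, i.e. the quantity bounded in \cref{lem:schedule_2}.

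Next I would insert the per-step bounds. The hypothesis on $R$ in the present lemma coincides with that of \cref{lem:schedule_2} and also implies the weaker requirement on $R$ in \cref{lem:schedule_1}; together with admissibility of the schedule this licenses both lemmas. Since the algorithm builds $y_N=y$ and $y_i=y_{i+1}+\mathcal N(0,(\eta_i^2-\eta_{i+1}^2)I_m)$, the variances telescope to $\eta_i^2$ and each pair $(y_i,y_{i+1})$ has exactly the joint law those lemmas assume. Define the good events $G_0:=\{\eps_1\lesssim\eps+\lambda\delta\}$ and, for $1\le i\le N-1$, $G_i:=\{\beta_i\lesssim\eps+\lambda\delta+\lambda\sqrt{(m+\log(\lambda/\eps))/\alpha}\,(\eps_{dis}+\eps_{score})\}$. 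By \cref{lem:schedule_1}, $\Pr_{y_1}[G_0^c]\le 1/\lambda$; by \cref{lem:schedule_2}, $\Pr_{y_i,y_{i+1}}[G_i^c]\le1/\lambda$ for each $i$. A union bound over these $N$ events yields $\Pr_{y_1,\dots,y_N}\!\bigl[\bigcup_{i=0}^{N-1}G_i^c\bigr]\le N/\lambda$. On the complementary event $\bigcap_{i=0}^{N-1}G_i$, the recursion telescopes to
\[
\eps_N\;\lesssim\;N\Bigl(\eps+\lambda\delta+\lambda\sqrt{(m+\log(\lambda/\eps))/\alpha}\,(\eps_{dis}+\eps_{score})\Bigr),
\]
and since $y_N=y$ this is $\TV(X_N,p(x\mid y))$, which is the claim.

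The delicate point I expect to be the main obstacle is the error-accumulation step: \cref{lem:schedule_2} only controls the mixing process \emph{started from the exact posterior} $p(x\mid y_i)$, whereas the algorithm starts step $i$ from the approximate iterate $X_i$. What makes this go through is that the true step and the idealized step differ only through the common kernel $K_i$, so total variation cannot grow; I also need to check that each $G_i$ is measurable with respect to the $y$-randomness alone (not the Brownian motions driving the individual SDEs), so that on $\bigcap_i G_i$ the recursion $\eps_{i+1}\le\eps_i+\beta_i$ holds deterministically in the realized $y$'s. Everything else is routine bookkeeping: a telescoping sum of $N$ uniform per-step errors and a union bound over $N$ failure events.
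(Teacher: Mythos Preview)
Your proposal is correct and follows essentially the same approach as the paper: data-processing plus triangle inequality to get $\eps_{i+1}\le\eps_i+\beta_i$, then \cref{lem:schedule_1} and \cref{lem:schedule_2} for the per-step bounds, then a union bound over the $N$ failure events. The paper packages this as an induction on $i$ rather than an explicit telescope-then-union-bound, but the two arguments are equivalent; your observation that each $G_i$ is measurable in the $y$-randomness alone is exactly what makes the union bound go through cleanly.
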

\begin{proof}
    Let $\eps_{\text{step}} := C_0 \left(\eps + \lambda \delta + \lambda\sqrt{\frac{m + \log (\lambda / \eps)}{\alpha}} \cdot \left( \eps_{\text{dis}} + \eps_{\text{score}} \right) \right)$,
    where $C_0$ is a constant large enough to absorb the implicit constants in \cref{lem:schedule_1} and \cref{lem:schedule_2}.
    
    We prove by induction that for each $i \in [N]$:
    \begin{equation} 
        \Pr_{y_1, \dots, y_i}\left[\TV(X_i, p(x \mid y_i)) \le i \cdot \eps_{\text{step}} \right] \ge 1 - \frac{i}{\lambda}.
    \end{equation}

    For the base case ($i=1$), since $X_1 \sim p(x)$, \cref{lem:schedule_1} gives that
    $\TV(p(x), p(x \mid y_1)) \le \eps_{\text{step}}$ with probability at least $1 - 1/\lambda$ over $y_1$.

    For the inductive step, assume the statement holds for some $i < N$. Let $\mathcal{E}_i$ be the event that $\TV(X_i, p(x \mid y_i)) \le i \cdot \eps_{\text{step}}$, so $\Pr[\mathcal{E}_i^c] \le i/\lambda$.

    Let $X_i^* \sim p(x \mid y_i)$ and let $X_{i+1}^*$ be the result of evolving $X_i^*$ for time $T_i$ using the SDE in \cref{eq:alg2_process}. By \cref{lem:schedule_2}, the event $\mathcal{F}_{i+1}$ that $\TV(X_{i+1}^*, p(x \mid y_{i+1})) \le \eps_{\text{step}}$ has probability at least $1 - 1/\lambda$ over $y_i, y_{i+1}$ and the SDE path.

    By the triangle inequality and data processing inequality:
    \begin{align}
        \TV(X_{i+1}, p(x \mid y_{i+1})) 
        &\le \TV(X_i, p(x \mid y_i)) + \TV(X_{i+1}^*, p(x \mid y_{i+1})).
    \end{align}
    
    If both $\mathcal{E}_i$ and $\mathcal{F}_{i+1}$ occur, then $\TV(X_{i+1}, p(x \mid y_{i+1})) \le (i+1)\eps_{\text{step}}$.
    The probability that this bound fails is at most:
    \begin{align*}
        \Pr[\mathcal{E}_i^c \cup \mathcal{F}_{i+1}^c] 
        &\le \Pr[\mathcal{E}_i^c] + \mathbb{E}_{y_1, \dots, y_i}[\mathbf{1}_{\mathcal{E}_i} \Pr[\mathcal{F}_{i+1}^c \mid y_1, \dots, y_i]] \\
        &\le \frac{i}{\lambda} + \frac{1}{\lambda} = \frac{i+1}{\lambda}.
    \end{align*}
    
    Thus, the induction holds for $i+1$, and the lemma follows for $i=N$.
\end{proof}

\begin{lemma}
    \label{lem:probability_change}
    Let $S_1$ and $S_2$ be two random variables such that \[
        \Prb[y_1, \dots, y_N]{\TV((S_1 \mid y_1, \dots, y_N), (S_2 \mid y_1, \dots, y_N)) \le \eps} \ge 1 - \delta.
    \]
    Then we have \[
        \Prb[y_N]{\TV((S_1 \mid y_N), (S_2 \mid y_N)) \le 2\eps} \ge 1 - \frac{\delta}{\eps}.        
    \]
\end{lemma}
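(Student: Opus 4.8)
The plan is to introduce the random variable
\[
D \;:=\; \TV\bigl((S_1 \mid y_1, \dots, y_N),\, (S_2 \mid y_1, \dots, y_N)\bigr),
\]
regarded as a function of the tuple $(y_1, \dots, y_N)$, so that $0 \le D \le 1$ always and, by hypothesis, $\Pr[D > \eps] \le \delta$. The target quantity $\TV\bigl((S_1 \mid y_N),\,(S_2 \mid y_N)\bigr)$ is coarser, and the first step is to relate the two via convexity of total variation. Conditioning on $y_N$, each law $(S_i \mid y_N)$ is the mixture $\int (S_i \mid y_1, \dots, y_N)\, d\mu(y_1, \dots, y_{N-1} \mid y_N)$, where $\mu(\cdot \mid y_N)$ is the conditional law of $(y_1,\dots,y_{N-1})$ given $y_N$; crucially this mixing measure is the \emph{same} for $i=1$ and $i=2$, since the joint law of $(y_1,\dots,y_N)$ is fixed by the measurement schedule and does not depend on $S_i$. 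Joint convexity of $\TV$ in its two arguments then gives $\TV\bigl((S_1 \mid y_N),\,(S_2 \mid y_N)\bigr) \le \E[D \mid y_N]$.

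Next I would bound the conditional expectation $\E[D \mid y_N]$ using only the tail hypothesis. Pointwise, using $D \le 1$, we have $D = D\,\mathbf{1}_{\{D \le \eps\}} + D\,\mathbf{1}_{\{D > \eps\}} \le \eps + \mathbf{1}_{\{D > \eps\}}$, hence $\E[D \mid y_N] \le \eps + \Pr[D > \eps \mid y_N]$. Consequently the bad event $\{\E[D \mid y_N] > 2\eps\}$ is contained in $\{\Pr[D > \eps \mid y_N] > \eps\}$. Applying Markov's inequality to the nonnegative random variable $y_N \mapsto \Pr[D > \eps \mid y_N]$, whose expectation over $y_N$ equals $\Pr[D > \eps] \le \delta$, yields $\Pr_{y_N}\bigl[\Pr[D > \eps \mid y_N] > \eps\bigr] \le \delta/\eps$.

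Combining the two steps finishes the argument: with probability at least $1 - \delta/\eps$ over $y_N$ we have $\E[D \mid y_N] \le 2\eps$, and on that event $\TV\bigl((S_1 \mid y_N),\,(S_2 \mid y_N)\bigr) \le \E[D \mid y_N] \le 2\eps$, which is exactly the claimed bound (vacuously true when $\delta/\eps \ge 1$). I do not expect a genuine obstacle here: the only point needing care is the convexity step, where one must verify that the mixture representations of $(S_1 \mid y_N)$ and $(S_2 \mid y_N)$ use an identical mixing measure — this is pure measure-theoretic bookkeeping about the chain rule of conditioning — after which the estimate is a one-line application of Markov's inequality.
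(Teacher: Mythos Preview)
Your proposal is correct and follows essentially the same route as the paper: both arguments reduce to the bound $\TV((S_1\mid y_N),(S_2\mid y_N)) \le \eps + \Pr[D>\eps \mid y_N]$ and then apply Markov's inequality to the conditional probability. Your version is in fact more explicit about the convexity-of-$\TV$ step that justifies this reduction, which the paper's proof states directly without elaboration.
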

\begin{proof}
   Let $E(y_1, \dots, y_N)$ be the event such that $\TV((S_1 \mid E), p((S_2 \mid E)) \le \eps$. Then, we have that \[
        \TV((S_1 \mid y_N), (S_2 \mid y_N)) \le \Pr[\ol{E} \mid y_N] +  \eps.
    \]
    Since $\Pr[E] \ge 1 - \delta$, we apply Markov's inequality, and have \[
        \Prb[y]{\Pr[{\ol{E} \mid y_N}] \ge \eps} \le \frac{\Ex[y]{\Pr[{\ol{E} \mid y_N}]}}{\eps} = \frac{{\Pr[\ol{E}]}}{\eps} \le \frac{\delta}{\eps}.
    \]
    Hence, we have with probability $1 - \frac{\delta}{\eps}$ over $y$, \[
        \TV((S_1 \mid y_N), (S_2 \mid y_N)) \le 2\eps.
    \]
\end{proof}

Applying~\cref{lem:probability_change} on~\cref{lem:algorithm_error} gives the following corollary.

\begin{corollary}
    \label{cor:algorithm_error}
    Let $C > 0$ be a large enough constant. Let $p$ be a $(\delta, r, R, \wt L, \alpha)$ mode-centered locally well-conditioned distribution. 
    For every $\delta, \eps \in (0, 1)$ and $\lambda > 1$, suppose \[
        R \ge r +  C\tuple{\frac{(m + \log \frac{\lambda}{\eps}) \nnorm{A}}{\alpha \eta^2} \tuple{\norm{A} r + \eta\sqrt{m + \log(1 / \delta)}} + \sqrt{\frac{d\log(d/\delta)(m + \log (\lambda / \eps))}{\alpha}}}.
\]
Define 
Then running~\cref{alg:annealing_estimated_score} will guarantee that
\begin{align*}
    \Prb[y]{\TV({X}_N, p(x \mid y)) \le \eps_{error}  } \ge 1 - \frac{N}{\lambda \eps_{error}},
 \end{align*}
 with 
 \begin{align*}
    \eps_{error} &\lesssim N \tuple{\eps + \lambda  \delta + \lambda\sqrt{\frac{m + \log (\lambda / \eps)}{\alpha}} \cdot \tuple{  \eps_{dis}  + {{\eps_{score}}} }},
\end{align*} where
    \begin{align*}
    \eps_{dis} &:= \tuple{\wt L \alpha  +  \frac{\|A\|^2}{\eta^2}} \tuple{h \wt L \alpha R + \frac{ h \norm{A}^2 R + h \|A\| \sqrt{m} \eta}{\eta^2} + \sqrt{dh}}.
 \end{align*}
\end{corollary}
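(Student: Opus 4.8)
The plan is to derive this corollary by composing \cref{lem:algorithm_error} with \cref{lem:probability_change}; the real content is already contained in those two lemmas and in the machinery feeding into \cref{lem:algorithm_error}, so what remains is mainly careful bookkeeping of which measurements each distribution is conditioned on.

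First I would invoke \cref{lem:algorithm_error}: the hypothesis on $R$ assumed here is exactly the one it requires, so it applies directly and gives, writing $\eps_0 := C_0\,N\bigl(\eps + \lambda\delta + \lambda\sqrt{(m + \log(\lambda/\eps))/\alpha}\,(\eps_{dis} + \eps_{score})\bigr)$ for a suitable absolute constant $C_0$,
\[
    \Prb[y_1, \dots, y_N]{\TV\bigl((X_N \mid y_1,\dots,y_N),\ p(x \mid y_N)\bigr) \le \eps_0} \ge 1 - \frac{N}{\lambda},
\]
where $X_N$ is the output of \cref{alg:annealing_estimated_score} (the conditional law of $X_N$ also averaging over the algorithm's internal randomness), and where we note $y = y_N$, so the target $p(x\mid y_N)$ is precisely the posterior $p(x\mid y)$ we want.

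Next I would apply \cref{lem:probability_change} with $S_1 := X_N$ and with $S_2$ a sample drawn from $p(\cdot\mid y_N)$ using fresh randomness independent of the rest; by construction the conditional law of $S_2$ given the whole tuple $(y_1,\dots,y_N)$ equals its conditional law given $y_N$ alone, namely $p(\cdot\mid y_N) = p(\cdot\mid y)$. The displayed inequality is then exactly the hypothesis of \cref{lem:probability_change} with $\eps\mapsto\eps_0$ and $\delta\mapsto N/\lambda$, and the lemma produces
\[
    \Prb[y]{\TV\bigl((X_N\mid y),\ p(x\mid y)\bigr) \le 2\eps_0} \ge 1 - \frac{N}{\lambda\,\eps_0}.
\]
Setting $\eps_{error} := 2\eps_0$, which is of the claimed order $N\bigl(\eps + \lambda\delta + \lambda\sqrt{(m + \log(\lambda/\eps))/\alpha}\,(\eps_{dis}+\eps_{score})\bigr)$, and absorbing the resulting constant factor in the failure probability by rescaling the free parameter $\lambda$, gives the statement of the corollary. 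I do not expect any genuine obstacle here; the one point that needs care — and the reason this does not follow from \cref{lem:algorithm_error} alone — is that $X_N$ depends on the entire auxiliary schedule $y_1,\dots,y_N$ whereas the target posterior depends only on $y = y_N$, so one must pass from a bound that holds with high probability over the full schedule to one that holds with high probability over $y$ alone, which is precisely what \cref{lem:probability_change} accomplishes.
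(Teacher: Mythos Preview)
Your proposal is correct and is exactly the paper's approach: the paper's proof of this corollary is the one-line remark that it follows by applying \cref{lem:probability_change} to \cref{lem:algorithm_error}. Your write-up simply makes explicit the bookkeeping (identifying $S_1=X_N$, $S_2\sim p(\cdot\mid y)$, and the parameter substitution $\eps\mapsto\eps_0$, $\delta\mapsto N/\lambda$) that the paper leaves implicit.
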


\begin{lemma}[Main Analysis Lemma for~\cref{alg:annealing_estimated_score}]
    \label{lem:main_lemma}
   Let $\rho = \frac{\|A\|}{\eta \sqrt{\alpha}}$. For all $0 < \eps, \delta < 1$, there exists
\[
   K \le \wt O \tuple{\frac{1}{\eps \delta}\tuple{\frac{\rho^2 \tuple{ \tuple{ m^2 \rho^4 + 1} \wt r^2 + m^{3}\rho^2 + dm} }{\sqrt{m}} + \frac{m}{d} +  \log d}}
\]
such that:
suppose distribution $p$ is a $(\frac{\eps}{K^2}, \wt r / \sqrt{\alpha}, R, \wt L, \alpha)$ mode-centered locally well-conditioned distribution with $
    R \ge \frac{\sqrt{K \sqrt{m} / \alpha }}{\rho}
$, and 
$
    \eps_{score} \le \frac{\sqrt{\alpha / m}}{K^2\delta}
$;
then~\cref{alg:annealing_estimated_score} samples from a distribution $\wh p(x \mid y)$ such that  \[
\Prb[y]{\TV(\wh p (x \mid y), p(x \mid y)) \le \eps } \ge 1 - \delta. 
\]
Furthermore, the total iteration complexity can be bounded by 
\[
\wt O \tuple{ 
    {K^3 (K^2 m^2 d + m^{3}\rho + m^{1.5}(m\rho^2 + 1) \wt r  ) (\wt L + \rho^2)^2 + K^3  m^2\rho (\wt L + \rho^2)}
}.
\]
\end{lemma}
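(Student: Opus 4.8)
The plan is to obtain everything by instantiating \cref{cor:algorithm_error} with carefully chosen parameters. That corollary takes any admissible schedule (\cref{def:admissible}) together with a high-probability parameter $\lambda$, a per-step target $\eps_0$ (written ``$\eps$'' there), a per-step local-well-conditioning parameter $\delta_0$ (written ``$\delta$'' there), and a step size $h$, and --- once the working radius is large enough --- produces $\Pr_y[\TV(X_N, p(x\mid y)) \le \eps_{error}] \ge 1 - N/(\lambda\eps_{error})$ with $\eps_{error} \lesssim N\bigl(\eps_0 + \lambda\delta_0 + \lambda\sqrt{(m+\log(\lambda/\eps_0))/\alpha}\,(\eps_{dis}+\eps_{score})\bigr)$. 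We will (i) build the schedule using a \emph{working radius} $R_0$ smaller than the given $R$; (ii) tune $\lambda,\eps_0,h$ so that $\eps_{error} = O(\eps)$ and $N/(\lambda\eps_{error}) = O(\delta)$; then (iii) read off $K$ and the iteration count.

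\textbf{Working radius and step count.} Instead of feeding the given $R$ to \cref{lem:Nmixxing_bound}, we set $R_0$ to the minimal radius required by \cref{lem:schedule_2}; writing $r = \wt r/\sqrt\alpha$ and $\rho = \|A\|/(\eta\sqrt\alpha)$ this gives $W := \sqrt\alpha\,R_0 = \wt{O}\bigl((m\rho^2+1)\wt r + m^{3/2}\rho + \sqrt{dm}\bigr)$, which depends on $K$ only through $\log$-type factors. Since $(\delta_0,r,R,\wt L,\alpha)$ local-well-conditioning descends to any sub-ball, it holds at radius $R_0$ whenever $R \ge R_0$, and the hypothesis $R \ge \sqrt{K\sqrt m/\alpha}/\rho$ is exactly what guarantees $R \ge R_0$ for $K$ at least the value we will pick (because $K\sqrt m/\rho^2 \gg W^2 \asymp \alpha R_0^2$). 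With this $R_0$, \cref{lem:Nmixxing_bound} yields an admissible schedule with $N = \wt{O}\bigl(\rho^2 W^2/\sqrt m + m/d + \log d\bigr)$ --- using $W^2 \ge dm$ to absorb the $m^2/(mL + \alpha R_0^2)$ term --- on which moreover $T_i = O((m\gamma_i + \log(\lambda/\eps_0))/\alpha) = \wt{O}(m/\alpha)$ and $\sum_i \gamma_i \le 2\log(\eta_1^2/\eta^2) = \wt{O}(1)$.

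\textbf{Tuning the knobs and solving for $K$.} We take $\delta_0 = \eps/K^2$ (forced by the hypothesis), $\eps_0 = \Theta(\eps/N)$, $\lambda = \Theta(N/(\delta\eps))$, and $h$ polynomially small enough that each of the three terms of $\eps_{dis} = \alpha(\wt L+\rho^2)\bigl(h\sqrt\alpha(\wt L+\rho^2)W + h\rho\sqrt{\alpha m} + \sqrt{dh}\bigr)$ contributes at most $\eps$ to $\eps_{error}$; this forces $1/h = \wt{O}\bigl((N\lambda)^2\alpha m d(\wt L+\rho^2)^2/\eps^2 + N\lambda\,\alpha(\wt L+\rho^2)(\sqrt m(\wt L+\rho^2)W + m\rho)/\eps\bigr)$. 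Feeding these into \cref{cor:algorithm_error}: $N\eps_0 = O(\eps)$ by choice of $\eps_0$; $N\lambda\delta_0 = O(\eps)$ iff $K^2 \gtrsim N\lambda \asymp N^2/(\delta\eps)$; the $\eps_{score}$ term, using the hypothesis $\eps_{score} \le \sqrt{\alpha/m}/(K^2\delta)$, is $O(\eps)$ iff $K^2 \gtrsim N\lambda/(\delta\eps) \asymp N^2/(\delta\eps)^2$; and the failure probability $N/(\lambda\eps_{error}) = O(N/(\lambda\eps)) = O(\delta)$. So $K = \Theta(N/(\delta\eps))$ up to polylog factors suffices; since $N$ depends on $K$ only logarithmically the fixed point resolves to $K = \wt{O}\bigl(\tfrac1{\eps\delta}(\rho^2 W^2/\sqrt m + m/d + \log d)\bigr)$, and expanding $W^2 = \wt{O}((m^2\rho^4+1)\wt r^2 + m^3\rho^2 + dm)$ gives exactly the claimed bound on $K$. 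For this $K$ all hypotheses of \cref{cor:algorithm_error} (the radius lower bound, admissibility, $\delta_0 < 1$) hold, so $\Pr_y[\TV(X_N, p(x\mid y)) \le \eps] \ge 1-\delta$; the diffusion-SDE sampling in \cref{line:init} adds only polynomially-small extra TV error and polynomially many steps, absorbed as noted at the start of this section.

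\textbf{Iteration count and main obstacle.} The number of SDE steps is $\sum_i \lceil T_i/h\rceil \le N + \wt{O}(Nm/\alpha)/h = \wt{O}(Nm/(\alpha h))$; substituting $N = \wt\Theta(K\delta\eps)$, $\lambda = \wt\Theta(N/(\delta\eps))$, the bound on $1/h$, and $W^2 = \wt{O}((m^2\rho^4+1)\wt r^2 + m^3\rho^2 + dm)$, and using that in each term the power of $N$ dominates the powers of $1/\delta$ and $1/\eps$, collapses this to $\wt{O}\bigl(K^3(K^2 m^2 d + m^3\rho + m^{3/2}(m\rho^2+1)\wt r)(\wt L+\rho^2)^2 + K^3 m^2\rho(\wt L+\rho^2)\bigr)$. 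The obstacle here is not conceptual --- all the analytic content already lives in \cref{lem:main_error_lemma}, \cref{lem:Nmixxing_bound}, and \cref{lem:schedule_1}/\cref{lem:schedule_2} --- but bookkeeping: running the schedule with $R_0$ rather than $R$, tracking the logarithmic dependence of $N$ (hence of $R_0$) on $K$ and closing the resulting fixed point, and arranging that the error bound, the failure probability, and the iteration count all come out acceptable with a single $K$ of the stated size.
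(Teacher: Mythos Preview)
Your proposal is correct and follows essentially the same approach as the paper: instantiate \cref{cor:algorithm_error} with the minimal working radius $R_0$, take $\lambda\asymp N/(\delta\eps)$ and per-step target $\eps_0\asymp\eps/N$, then choose $K\asymp\lambda$ so that all three error contributions and the failure probability come out acceptable, and finally back out $h$ and the iteration count. Your parameter choices $(\eps_0,\delta_0,\lambda)$ coincide with the paper's (the paper writes $\eps=1/(\lambda\delta_{error})$ and $\delta=\eps_{error}/\lambda^2$, which reduce to your expressions once $K\asymp\lambda$ and $N\asymp K\delta\eps$), and your handling of the working radius, the $N$ bound via \cref{lem:Nmixxing_bound}, and the log-only fixed point in $K$ all match the paper's reasoning.
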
 
\begin{proof}
    To distinguish the $\eps$ and $\delta$ in the lemma and the one in~\cref{cor:algorithm_error}, we will use $\eps_{error}$ and $\delta_{error}$ to denote the $\eps$ and $\delta$ in our lemma statement.
    We need to set parameters in~\cref{cor:algorithm_error}. For any given $0 < \delta_{error}, \varepsilon_{error}$, we set \[
      \eps = \frac{1}{\lambda \delta_{error}}, \quad \quad \delta = \frac{\eps_{error}}{\lambda^2},
    \]
    and we set $\lambda$ to be the minimum $\lambda$ that satisfies 
    \[
        \rho^2  \sqrt{m} \log(\lambda / \eps) + \frac{\rho^2 \alpha R^2 }{\sqrt{m}} + \frac{m^2}{m \log (\lambda / \eps) + \alpha R^2} +  \log \tuple{2 + \frac{\lambda\sqrt{d}\rho}{\eps}}  \le \lambda \delta_{error} \eps_{error}.
    \]

Now we verify the correctness.
Taking in the bound for $N$ in~\cref{lem:Nmixxing_bound}, we have \[
   N \lesssim \rho^2  \sqrt{m} \log(\lambda / \eps) + \frac{\rho^2 \alpha R^2 }{\sqrt{m}} + \frac{m^2}{m \log (\lambda / \eps) + \alpha R^2} +  \log \tuple{2 + \frac{\lambda\sqrt{d}\rho}{\eps}} \le \lambda \delta_{error} \eps_{error}.
\]
By the setting of our parameters, we have $ N \eps \lesssim \eps_{error}$,  $\lambda \delta \lesssim \eps_{error}$, and $N / \lambda \eps_{error} \lesssim \delta_{error}$. This guarantees that \[
        \Prb[y]{\TV(\wt{X}_N, p(x \mid y)) \lesssim \eps_{error} +   \lambda N \sqrt{\frac{m + \log (\lambda / \eps)}{\alpha}} \cdot \tuple{  \eps_{dis}  + {{\eps_{score}}} }} \ge 1 - \delta_{error}.
    \]
    It is easy to verify our bound on $R$ satisfies the condition in~\cref{cor:algorithm_error}. 
    Note that if a distribution is $(\delta, r, R, \wt L, \alpha)$ mode-centered locally well-conditioned, then it is also $(\delta, r, R', \wt L, \alpha)$ mode-centered locally well-conditioned for any $R' \le R$. Therefore, we can set $R$ to be the minimum $R$ that satisfies the condition.
    \begin{align*}
        \lambda &= \wt{O}\tuple{\frac{1}{\eps_{error} \delta_{error}} \tuple{\rho^2  \sqrt{m}  + \frac{\rho^2 \alpha R^2 }{\sqrt{m}} + \frac{m^2}{m + \alpha R^2} +  \log d} }  \\
        &= \wt{O}\tuple{\frac{1}{\eps_{error} \delta_{error}} \tuple{\frac{\rho^2 \tuple{ \tuple{ m^2 \rho^4 + 1} \wt r^2 + m^{3}\rho^2 + dm} }{\sqrt{m}} + \frac{m}{d} +  \log d} } \\
        &\lesssim K.
    \end{align*}  
    Therefore, we only need $\lambda N \sqrt{\frac{m + \log(\lambda / \eps)}{\alpha}} (\eps_{dis} + \eps_{score}) \lesssim \eps_{error}$. This can be satisfied when  \[
         \eps_{dis} + \eps_{score} \lesssim 
        \frac{1}{\lambda^2\delta_{error}} \sqrt{\frac{\alpha}{\log (\lambda / \eps) + m}} \lesssim \frac{\sqrt{\alpha / m}}{K^2\delta_{error}} .
    \]
    Recall that 
    \begin{align*}
        \eps_{dis} &= \tuple{\wt L \alpha +  \frac{\|A\|^2}{\eta^2}} \tuple{h\wt L \alpha R + \frac{ h \norm{A}^2 R + h \|A\| \sqrt{m} \eta}{\eta^2} + \sqrt{dh}} \\
        &\le {\alpha}\tuple{\wt L +  \rho^2} \tuple{h  \wt L  \alpha R + h \rho^2 \alpha R + h \rho \sqrt{m \alpha} + \sqrt{dh}}.
    \end{align*}
     Therefore, we need to set
\[
h = \wt \Omega \Biggl(\min\Bigl\{
   \frac{1}{K^2 \delta_{\mathrm{error}}} 
   \frac{\sqrt{{\alpha} / {m}}}
        {\alpha(\wt L+\rho^2) \bigl[\alpha R(\wt L+\rho^2)+\rho\sqrt{m\alpha}\bigr]},
   \frac{1}
        {K^4 \delta_{\mathrm{error}}^2  \alpha m d (\wt L+\rho^2)^2}
\Bigr\}\Biggr).
\]
Note that the bound for the sum of $N$ mixing times can be bounded by \[
     \sum_{i=1}^{N-1} T_i \lesssim \frac{N(\log (\lambda / \eps) + m)}{\alpha} \le \wt{O} \tuple{\frac{Km\delta_{error}\eps_{error}}{\alpha}}.
\]
Therefore, the total iteration complexity is bounded by $\wt O (\frac{Km\delta_{error}\eps_{error}}{\alpha h})$, 
\[
\wt O \Biggl(
{K^3 m (\wt L + \rho^2) \bigl[\alpha R (\wt L+\rho^2)+\rho\sqrt{m\alpha}\bigr]\sqrt{m/\alpha}}\eps_{\mathrm{error}}^2 \delta_{\mathrm{error}}
+ 
{K^5 m^2 d (\wt L + \rho^2)^2\eps_{\mathrm{error}}\delta_{\mathrm{error}}^3}
\Biggr).
\]
We can relax it and make the bound be
\[
    \wt O \Biggl(
         {K^3(K^2m^2d +  \sqrt{m^3\alpha} R) (\wt L + \rho^2)^2 + K^3 m^2\rho (\wt L + \rho^2)}
        \Biggr).
\]
Take in $R$, and we have 
\[
    \wt O \tuple{ 
        {K^3 (K^2 m^2 d + m^{3}\rho + m^{1.5}(m\rho^2 + 1) \wt r  ) (\wt L + \rho^2)^2 + K^3  m^2\rho (\wt L + \rho^2)}
    }.
\]
\end{proof}

\subsection{Application on Strongly Log-concave Distributions}

By~\cref{lem:global_strong_implies_centered}, any $\alpha$-strongly log-concave distribution that has $L$-Lipschitz score is locally well-conditioned distribution $p$ is $(\delta, 2 \sqrt{\frac{d}{\alpha}} + \sqrt{\frac{2\log(1/\delta)}{\alpha}}, \infty, L / \alpha, \alpha)$ mode-centered locally well-conditioned. Therefore, take this into \cref{lem:main_lemma}, we have the following result.

\begin{lemma}
    \label{lem:global_main_extended}
    Let $p(x)$ be an $\alpha$-strongly log-concave distribution over $\R^d$ with $L$-Lipschitz score. Let $\rho = \frac{\|A\|}{\eta \sqrt{\alpha}}$. For all $0 < \eps, \delta < 1$, there exists
 \[
    K \le \wt O \tuple{\frac{1}{\eps \delta}\tuple{\frac{\rho^2 \tuple{ \tuple{ m^2 \rho^4 + m} d + m^{3}\rho^2} }{\sqrt{m}} + \frac{m}{d} +  \log d}}
 \]
 such that:
 suppose
 $
     \eps_{score} \le \frac{\sqrt{\alpha / m}}{K^2\delta}
 $, then~\cref{alg:annealing_estimated_score} samples from a distribution $\wh p(x \mid y)$ such that  \[
 \Prb[y]{\TV(\wh p (x \mid y), p(x \mid y)) \le \eps } \ge 1 - \delta. 
 \]
 Furthermore, the total iteration complexity can be bounded by 
 \[
 \wt O \tuple{ 
     {K^3 (K^2 m^2 d + m^{3}\rho + m^{1.5}(m\rho^2 + 1) \sqrt{d}  ) (L / \alpha + \rho^2)^2 + K^3  m^2\rho (L / \alpha + \rho^2)}
 }.
 \]
 \end{lemma}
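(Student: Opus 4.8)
The plan is to obtain \cref{lem:global_main_extended} as a direct instantiation of \cref{lem:main_lemma}, feeding it the locally-well-conditioned parameters supplied by \cref{lem:global_strong_implies_centered}. Since $p$ is $\alpha$-strongly log-concave with $L$-Lipschitz score, \cref{lem:global_strong_implies_centered} says that for every target failure probability $\delta' \in (0,1)$ the distribution $p$ is $\bigl(\delta',\, 2\sqrt{d/\alpha}+\sqrt{2\log(1/\delta')/\alpha},\, \infty,\, L/\alpha,\, \alpha\bigr)$ mode-centered locally well-conditioned. To line this up with the hypotheses of \cref{lem:main_lemma}, I would take $\delta' = \eps/K^2$, so that the concentration radius is $\wt r/\sqrt\alpha$ with $\wt r = 2\sqrt d + \sqrt{2\log(K^2/\eps)} = \wt O(\sqrt d)$ and $\wt L = L/\alpha$. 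Because $R = \infty$ here, the ball-size requirement $R \ge \sqrt{K\sqrt m/\alpha}/\rho$ of \cref{lem:main_lemma} is vacuously met, and the hypothesis on $\eps_{score}$ is verbatim the same in both lemmas.

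What remains is purely bookkeeping. Substituting $\wt r^2 = \wt O(d)$ into the bound on $K$ from \cref{lem:main_lemma} turns the bracketed quantity $(m^2\rho^4+1)\wt r^2 + m^3\rho^2 + dm$ into $\wt O\bigl((m^2\rho^4+m)d + m^3\rho^2\bigr)$, with the stray $d$ term absorbed into $dm$; this is exactly the expression in the statement, while the $m/d + \log d$ terms pass through unchanged. Similarly, plugging $\wt r = \wt O(\sqrt d)$ and $\wt L = L/\alpha$ into the iteration-complexity bound of \cref{lem:main_lemma} yields the claimed $\wt O\bigl(K^3(K^2 m^2 d + m^3\rho + m^{1.5}(m\rho^2+1)\sqrt d)(L/\alpha+\rho^2)^2 + K^3 m^2\rho(L/\alpha+\rho^2)\bigr)$.

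The only subtlety worth a sentence is the mild self-reference: $\wt r$ contains $\log(K^2/\eps)$, whereas $K$ is itself bounded in terms of $\wt r$. This is harmless because the dependence is logarithmic --- $\log(K^2/\eps) \le 2\log K + \log(1/\eps)$ is polylogarithmic in $d, m, \rho, 1/\eps, 1/\delta$ --- so $\wt r = \wt O(\sqrt d)$ holds regardless and the displayed bound on $K$ is self-consistent via a one-step fixed-point argument. I do not anticipate any real obstacle: once the parameter translation from strong log-concavity to the $(\delta,r,R,\wt L,\alpha)$ framework is in place, \cref{lem:global_main_extended} is immediate; the one thing to keep honest is that the $\wt O(\cdot)$'s hide only factors polylogarithmic in the original parameters $d, m, \rho, 1/\eps, 1/\delta, L/\alpha$, not in $K$.
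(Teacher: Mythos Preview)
Your proposal is correct and matches the paper's approach exactly: the paper's proof is a one-line appeal to \cref{lem:global_strong_implies_centered} followed by plugging the resulting parameters $(\delta', 2\sqrt{d/\alpha}+\sqrt{2\log(1/\delta')/\alpha}, \infty, L/\alpha, \alpha)$ into \cref{lem:main_lemma}. Your write-up is in fact more careful than the paper's, which does not spell out the $\wt r = \wt O(\sqrt d)$ substitution or address the logarithmic self-reference you flag.
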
 

To enhance clarity, we state our result in terms of expectation and established the following theorem:

\begin{theorem}[Posterior sampling with global log-cancavity]
    \label{thm:global_main_extended}
    Let $p(x)$ be an $\alpha$-strongly log-concave distribution over $\R^d$ with $L$-Lipschitz score. Let $\rho = \frac{\|A\|}{\eta \sqrt{\alpha}}$. For all $0 < \eps < 1$, there exists
 \[
    K \le \wt O \tuple{\frac{1}{\eps^2}\tuple{\frac{\rho^2 \tuple{ \tuple{ m^2 \rho^4 + m} d + m^{3}\rho^2} }{\sqrt{m}} + \frac{m}{d} +  \log d}}
 \]
 such that: suppose
 $
     \eps_{score} \le \frac{\sqrt{\alpha / m}}{K^2\eps}
 $, then~\cref{alg:annealing_estimated_score} samples from a distribution $\wh p(x \mid y)$ such that  \[
 \Ex[y]{\TV(\wh p (x \mid y), p(x \mid y)) }  \le \eps. 
 \]
 Furthermore, the total iteration complexity can be bounded by 
 \[
 \wt O \tuple{ 
     {K^3 (K^2 m^2 d + m^{3}\rho + m^{1.5}(m\rho^2 + 1) \sqrt{d}  ) (L / \alpha + \rho^2)^2 + K^3  m^2\rho (L / \alpha + \rho^2)}
 }.
 \]
 \end{theorem}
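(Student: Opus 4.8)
The plan is to derive Theorem~\ref{thm:global_main_extended} from Lemma~\ref{lem:global_main_extended} purely by converting its high-probability guarantee into a guarantee in expectation, using only the trivial bound $\TV\le 1$. Concretely, if $G$ denotes the event (measurable with respect to $y$) that $\TV(\wh p(x\mid y),p(x\mid y))\le\eps'$ and $\Prb[y]{\ol G}\le\delta$, then
\[
  \Ex[y]{\TV(\wh p(x\mid y),p(x\mid y))}\;\le\;\eps'\,\Prb[y]{G}+\Prb[y]{\ol G}\;\le\;\eps'+\delta .
\]
So it suffices to invoke Lemma~\ref{lem:global_main_extended} with accuracy parameter $\eps':=\eps/2$ and failure probability $\delta:=\eps/2$, which gives $\Ex[y]{\TV}\le\eps$ directly.

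The remaining work is bookkeeping to check that the resulting parameters match the claimed statement. First I would substitute $\eps'\delta=\eps^2/4$ into the bound on $K$ from Lemma~\ref{lem:global_main_extended}: there $K\le\wt O\!\big(\tfrac{1}{\eps'\delta}(\cdots)\big)$ with the same bracketed polynomial in $m,d,\rho$, so this becomes $K\le\wt O\!\big(\tfrac{1}{\eps^{2}}(\cdots)\big)$, exactly as stated, the $\wt O$ still absorbing the $\log(1/\eps)$ factors coming from $\log(1/\eps')$ and $\log(1/\delta)$. Next I would verify the score hypothesis: Lemma~\ref{lem:global_main_extended} requires $\eps_{\text{score}}\le\sqrt{\alpha/m}/(K^{2}\delta)=2\sqrt{\alpha/m}/(K^{2}\eps)$, and absorbing the factor $2$ into the polynomial defining $K$ yields precisely $\eps_{\text{score}}\le\sqrt{\alpha/m}/(K^{2}\eps)$. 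The iteration-complexity bound of Lemma~\ref{lem:global_main_extended} is already stated solely in terms of $K,m,d,\rho,L/\alpha$ (any $\eps',\delta$ dependence having been absorbed into $K$ and into the hidden polylog), so it transfers verbatim. I should also recall why Lemma~\ref{lem:global_main_extended} carries $\sqrt d$ in place of the generic $\wt r$ of Lemma~\ref{lem:main_lemma}: by Lemma~\ref{lem:global_strong_implies_centered}, an $\alpha$-strongly log-concave $p$ with $L$-Lipschitz score is $(\eps/K^{2},\,\wt r/\sqrt\alpha,\,\infty,\,L/\alpha,\,\alpha)$ mode-centered locally well-conditioned with $\wt r=2\sqrt d+\sqrt{2\log(K^{2}/\eps)}=\wt O(\sqrt d)$, and the radius being $\infty$ makes the constraint $R\ge\sqrt{K\sqrt m/\alpha}/\rho$ in Lemma~\ref{lem:main_lemma} vacuous; plugging $\wt L=L/\alpha$ and $\wt r=\wt O(\sqrt d)$ into Lemma~\ref{lem:main_lemma} gives exactly the $K$-bound and iteration count recorded in Lemma~\ref{lem:global_main_extended}.

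I expect the main obstacle to be notational rather than mathematical: one must make sure that collapsing the two independent error knobs $\eps'$ and $\delta$ of Lemma~\ref{lem:global_main_extended} into the single $\eps$ of the theorem does not inflate $K$ past $\wt O(1/\eps^{2})$ nor the iteration count past the advertised polynomial, and that the polylogarithmic factors hidden in $\wt O$ remain $\poly\log(1/\eps,m,d)$ after the substitution (they do, since $\log(1/(\eps/2))=\log(1/\eps)+O(1)$). All genuine analytic content — the per-step TV error accumulation over the $N$ mixing steps, the length $N$ of an admissible schedule, and the passage from conditioning on the full trajectory $y_1,\dots,y_N$ to conditioning on $y_N=y$ alone — is already packaged in Lemmas~\ref{lem:algorithm_error}, \ref{lem:Nmixxing_bound}, and \ref{lem:probability_change}, hence in Lemma~\ref{lem:global_main_extended}; nothing further is required.
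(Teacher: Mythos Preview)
Your proposal is correct and matches the paper's approach: the paper states Theorem~\ref{thm:global_main_extended} immediately after Lemma~\ref{lem:global_main_extended} with the remark ``we state our result in terms of expectation,'' leaving the conversion implicit, and your $\eps'=\delta=\eps/2$ substitution together with $\TV\le 1$ is exactly the intended one-line argument. Your bookkeeping on $K$, $\eps_{\text{score}}$, and the iteration complexity is accurate (in fact the factor-of-$2$ absorption is unnecessary, since the theorem's hypothesis $\eps_{\text{score}}\le\sqrt{\alpha/m}/(K^2\eps)$ is already stronger than the lemma's requirement $\eps_{\text{score}}\le 2\sqrt{\alpha/m}/(K^2\eps)$).
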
 




This gives \cref{thm:global_main}.
\globalMain*

\begin{remark}
    The analysis above is restricted to strongly log-concave distributions, where $\grad^2 \log p(x) \prec 0$.  However, this directly implies that we can use our algorithm to perform posterior sampling on log-concave distributions, for which $\grad^2 \log p(x) \preceq 0$.

    Specifically, for any log-concave distribution \(p\), we can define a distribution \(q(x) \propto p(x) \cdot \exp\left(- \frac{\eps^2 \|x - \theta \|^2}{2 m_2^2}\right)\), where \(\theta\) is the mode of \(p\) and \(m_2^2\) is the variance of \(p\). It is straightforward to verify that \(\TV(p, q) \lesssim \eps\), and \(q\) is \((\eps^2 / m_2^2)\)-strongly log-concave. Therefore, by sampling from \(q(x \mid y)\), we can approximate \(p(x \mid y)\), incurring an additional expected TV error of \(\eps\).
\end{remark}

\subsection{Gaussian Measurement}

\label{sec:Gaussian}

In this section, we prove~\cref{thm:gaussian_main}. In~\cref{alg:gaussian_main}, we describe how to make \cref{alg:annealing_estimated_score} work on the Gaussian case.


We first verify that suppose \cref{assumption:m4} holds, we can also have $L^4$-accurate estimates for the smoothed scores of $p_{x_0}$, so this satisfies the requirement of running \cref{alg:annealing_estimated_score}.
We need to use the following lemma, with proof deferred to~\cref{sec:deferred_proof}.

\begin{restatable}{lemma}{twoGaussian}
     \label{lem:two_gaussian_result}
    Let $X$, $Y$, and $Z$ be random vectors in $\mathbb{R}^d$, where $Y = X + N(0, \sigma_1^2 I_d)$ and $Z = X + N(0, \sigma_2^2 I_d)$. The conditional density of $Z$ given $Y$, denoted $p(Z \mid Y)$, is a multivariate normal distribution with mean
\[
\mu_{Z \mid Y} = \sigma_2^2 (\sigma_1^2 + \sigma_2^2)^{-1} Y
\]
and covariance matrix
\[
\Sigma_{Z \mid Y} = \sigma_2^2 (\sigma_1^2 + \sigma_2^2)^{-1} \sigma_1^2.
\]
Then, the gradient of the log-likelihood $\log p(Z \mid Y)$ with respect to $Y$ is given by
\[
\nabla_Y \log p(Z \mid Y) = - \frac{1}{\sigma_1^2} \left( Z - \sigma_2^2 (\sigma_1^2 + \sigma_2^2)^{-1} Y \right).
\]   
\end{restatable}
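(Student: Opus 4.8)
The plan is to treat Lemma~\ref{lem:two_gaussian_result} as a routine Gaussian-conjugacy calculation: recover the conditional law of $Z$ given $Y$ by Bayes' rule and completing the square, then read off the gradient by a single differentiation. First I would write $p(z\mid y)\propto p(y\mid z)\,p(z)$, with $p(y\mid z)$ the $\mathcal N(z,\sigma_1^2 I_d)$ likelihood and $p(z)$ the Gaussian factor of scale $\sigma_2$ (equivalently, integrate $X$ out of the joint law of $(X,Y,Z)$ and collect the terms in $z$ and $y$). Up to an additive term not involving $z$, the exponent of $p(z\mid y)$ is then
\[
-\frac{1}{2\sigma_1^2}\|y-z\|^2-\frac{1}{2\sigma_2^2}\|z\|^2 .
\]

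Next I would merge the two quadratics in $z$ into one square. With $\beta:=\sigma_1^{-2}+\sigma_2^{-2}=(\sigma_1^2+\sigma_2^2)/(\sigma_1^2\sigma_2^2)$,
\[
\frac{\|y-z\|^2}{\sigma_1^2}+\frac{\|z\|^2}{\sigma_2^2}
=\beta\,\Bigl\|z-\beta^{-1}\tfrac{y}{\sigma_1^2}\Bigr\|^2+(\text{terms free of }z),
\]
so, viewing $p(z\mid y)$ as a normalized density in $z$, it is the multivariate normal with covariance $\beta^{-1}I_d=\frac{\sigma_1^2\sigma_2^2}{\sigma_1^2+\sigma_2^2}I_d$ and mean $\beta^{-1}\tfrac{y}{\sigma_1^2}=\frac{\sigma_2^2}{\sigma_1^2+\sigma_2^2}\,y$, matching $\Sigma_{Z\mid Y}$ and $\mu_{Z\mid Y}$; Gaussianity of the conditional (not merely of its log's quadratic part) is immediate from the Gaussian form of the factors above.

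For the gradient, I would differentiate $\log p(z\mid y)$ in $y$. Because $\Sigma_{Z\mid Y}$ does not depend on $y$, we can write $\log p(z\mid y)=-\tfrac12(z-\mu_{Z\mid Y})^\top\Sigma_{Z\mid Y}^{-1}(z-\mu_{Z\mid Y})+c$ with $c$ free of $y$, and $\partial_y\mu_{Z\mid Y}=\frac{\sigma_2^2}{\sigma_1^2+\sigma_2^2}I_d$; by the chain rule $\Sigma_{Z\mid Y}^{-1}=\beta I_d$ collapses against this Jacobian to leave the prefactor $1/\sigma_1^2$, giving the closed form for $\nabla_Y\log p(Z\mid Y)$ recorded in the lemma. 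I do not expect a genuine obstacle: the computation is elementary. The only care points are (i) the completing-the-square bookkeeping — the $z$-free remainder is absorbed into the normalization, which is itself $y$-free once $\Sigma_{Z\mid Y}$ is seen not to depend on $y$ — and (ii) being consistent about which of $\sigma_1,\sigma_2$ is the ``observation'' scale and which the ``prior'' scale, since that matching is exactly what pins the shrinkage coefficient to $\sigma_2^2/(\sigma_1^2+\sigma_2^2)$ rather than its complement.
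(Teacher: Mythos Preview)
Your approach matches the paper's: both write out the Gaussian log-density of $Z\mid Y$, differentiate in $Y$ via the chain rule through $\mu_{Z\mid Y}$, and observe that $\Sigma_{Z\mid Y}^{-1}$ multiplied by the Jacobian $\partial_Y\mu_{Z\mid Y}=\tfrac{\sigma_2^2}{\sigma_1^2+\sigma_2^2}I_d$ collapses to the prefactor $1/\sigma_1^2$. The paper in fact takes the Gaussian form of $p(Z\mid Y)$ as a hypothesis in the statement and only proves the gradient identity, so your completing-the-square derivation of $\mu_{Z\mid Y}$ and $\Sigma_{Z\mid Y}$ is an addition rather than a point of divergence.
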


Using this, we can calculate the \textit{smoothed} conditional score given $x_0$:
\begin{lemma}
For any smoothing level $t \ge 0$, suppose we have score estimate $ \widehat{s}_{t^2}(x) $ of the smoothed distributions $ p_{t^2}(x) = p(x) * \mathcal{N}(0, t^2 I_d)$ that satisfies
    \[
    \E_{ p_{t^2}(x)}[\|\widehat{s}_{t^2}(x) - s_{t^2}(x)\|^4] \le \eps_{score}^4.
    \]
    Then we can calculate a score estimate $ \widehat{s}_{x_0, t^2}(x) $ of the distribution  $ p_{x_0, t^2}(x) = p_{x_0}(x) * \mathcal{N}(0, t^2 I_d)$ such that \[
     \Ex[x_0]{\E_{p_{x_0, t^2}(x)}[\|\widehat{s}_{x_0, t^2}(x) - s_{x_0, t^2}(x)\|^4] }\le \eps_{score}^4.
    \]
\end{lemma}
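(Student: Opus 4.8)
The plan is to write the smoothed conditional score $s_{x_0,t^2}$ in closed form as an affine function of a smoothed \emph{prior} score at a strictly smaller smoothing level, and then to observe that the point at which that prior score is evaluated is itself distributed exactly according to the corresponding smoothed prior, so the $L^4$ bound transfers with no loss. First I would record the elementary Gaussian conjugacy identity: for any $w,x_0\in\R^d$,
\[
\mathcal{N}(x; x_0,\sigma^2 I_d)\,\mathcal{N}(x; w, t^2 I_d)=\mathcal{N}\tuple{x;\ \bar\mu(w,x_0),\ \bar\tau^2 I_d}\cdot\mathcal{N}\tuple{x_0;\ w,\ (\sigma^2+t^2)I_d},
\]
where $\bar\tau^2:=\tfrac{\sigma^2 t^2}{\sigma^2+t^2}$ and $\bar\mu(w,x_0):=\tfrac{\sigma^2 w+t^2 x_0}{\sigma^2+t^2}$. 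Since $p_{x_0,t^2}(w)=p(w\mid x_0)\propto_w\int p(x)\,\mathcal{N}(x;x_0,\sigma^2 I_d)\,\mathcal{N}(x;w,t^2 I_d)\,\d x$, integrating the identity over $x$ shows that $p_{x_0,t^2}(w)$ equals, up to a $w$-independent factor, $p_{\bar\tau^2}\!\bigl(\bar\mu(w,x_0)\bigr)\cdot\mathcal{N}\bigl(x_0;w,(\sigma^2+t^2)I_d\bigr)$. Taking $\grad_w\log$ and using $\partial\bar\mu/\partial w=\tfrac{\sigma^2}{\sigma^2+t^2}I_d$ (the Gaussian-score piece here is exactly of the form given by~\cref{lem:two_gaussian_result}) yields the clean formula
\[
s_{x_0,t^2}(w)=\frac{\sigma^2}{\sigma^2+t^2}\,s_{\bar\tau^2}\!\bigl(\bar\mu(w,x_0)\bigr)-\frac{w-x_0}{\sigma^2+t^2}.
\]
This also covers $t=0$ (then $\bar\tau^2=0$, $\bar\mu=x_0$, and one recovers the usual Bayes formula $s_0(w)-\tfrac{w-x_0}{\sigma^2}$).

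Next I would simply \emph{define} the estimate by the same formula with $\wh s_{\bar\tau^2}$ substituted for $s_{\bar\tau^2}$; since $\bar\tau^2\le t^2$, this is a smoothed prior score for which \cref{assumption:m4} supplies an $L^4$-accurate estimate:
\[
\wh s_{x_0,t^2}(w):=\frac{\sigma^2}{\sigma^2+t^2}\,\wh s_{\bar\tau^2}\!\bigl(\bar\mu(w,x_0)\bigr)-\frac{w-x_0}{\sigma^2+t^2}.
\]
The $w$-affine term cancels exactly in $\wh s_{x_0,t^2}(w)-s_{x_0,t^2}(w)$, and because $\tfrac{\sigma^2}{\sigma^2+t^2}\le 1$ we get the pointwise bound $\norm{\wh s_{x_0,t^2}(w)-s_{x_0,t^2}(w)}^4\le\norm{\wh s_{\bar\tau^2}(\bar\mu(w,x_0))-s_{\bar\tau^2}(\bar\mu(w,x_0))}^4$.

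The crux is then to identify the law of $\bar\mu(w,x_0)$ under the joint distribution in which $x_0$ is drawn from its marginal and $w\sim p_{x_0,t^2}$. This joint law is realized by $x\sim p$, $x_0=x+g_1$, $w=x+g_2$ with $g_1\sim\mathcal{N}(0,\sigma^2 I_d)$ and $g_2\sim\mathcal{N}(0,t^2 I_d)$ independent, since then $p_{x_0,t^2}(\cdot)=p(\cdot\mid x_0)$. Hence $\bar\mu(w,x_0)=x+\tfrac{\sigma^2 g_2+t^2 g_1}{\sigma^2+t^2}$, whose noise term is Gaussian independent of $x$ with covariance $\tfrac{\sigma^4 t^2+t^4\sigma^2}{(\sigma^2+t^2)^2}I_d=\bar\tau^2 I_d$; therefore $\bar\mu(w,x_0)\sim p_{\bar\tau^2}$ exactly. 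Integrating the pointwise bound,
\[
\Ex[x_0]{\Ex[w\sim p_{x_0,t^2}]{\norm{\wh s_{x_0,t^2}(w)-s_{x_0,t^2}(w)}^4}}\le\Ex[z\sim p_{\bar\tau^2}]{\norm{\wh s_{\bar\tau^2}(z)-s_{\bar\tau^2}(z)}^4}\le\eps_{score}^4,
\]
the last inequality being the $L^4$-accuracy hypothesis applied at smoothing level $\bar\tau^2$. I do not expect a serious obstacle; the only real subtlety is getting the Gaussian conjugacy constants right and recognizing that the relevant prior smoothing level is $\bar\tau^2=\sigma^2 t^2/(\sigma^2+t^2)$ rather than $t^2$, which is precisely what makes the evaluation point $\bar\mu$ have marginal $p_{\bar\tau^2}$ and lets the error bound pass through with its constant intact.
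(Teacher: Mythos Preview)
Your argument is correct and takes a genuinely different route from the paper. The paper applies Bayes' rule at smoothing level $t^2$, writing $s_{x_0,t^2}(w)=s_{t^2}(w)+\nabla_w\log p(x_0\mid w)$ and invoking \cref{lem:two_gaussian_result} to declare the likelihood term an explicit Gaussian score; it then plugs in $\wh s_{t^2}$ for $s_{t^2}$ and uses that, after marginalizing $x_0$, the evaluation point satisfies $w\sim p_{t^2}$. You instead use the product-of-Gaussians identity to express $s_{x_0,t^2}(w)$ via the prior score at the \emph{strictly smaller} level $\bar\tau^2=\sigma^2t^2/(\sigma^2+t^2)$, evaluated at the shifted point $\bar\mu(w,x_0)$, plus a transparently affine correction, and then identify the marginal law of $\bar\mu$ as exactly $p_{\bar\tau^2}$. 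The high-level structure (prior smoothed score plus explicit correction, then marginalize to land on the right smoothed prior) is shared, but your decomposition buys two things: the correction is manifestly closed-form---for non-Gaussian $p$ the conditional $p(x_0\mid x^{(t)})$ is \emph{not} Gaussian, so the paper's appeal to \cref{lem:two_gaussian_result} is not actually justified as written---and you pick up a free contraction factor $\sigma^2/(\sigma^2+t^2)\le 1$ on the error. The paper's route, modulo that issue, is nominally shorter since it evaluates $\wh s_{t^2}$ at $w$ itself and avoids the change of variables to $\bar\mu$. One tiny slip in your write-up: at $t=0$ one has $\bar\mu=w$, not $\bar\mu=x_0$; the Bayes formula you give immediately afterward is nonetheless correct.
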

\begin{proof}
Let $x^{(t)} \sim p_{t^2}$. Then, for any value of $x^{(t)}$, we have 
\begin{align*}
      s_{x_0, t^2}(x^{(t)}) &= \grad_{x^{(t)}} \log p(x^{(t)} \mid x_0)\\
      &= \grad_{x^{(t)}} \log p(x^{(t)}) +  \grad_{x^{(t)}} \log p(x_0 \mid x^{(t)}) \\
      &= s_{t^2}(x^{(t)}) + \grad_{x^{(t)}} \log p(x_0 \mid x^{(t)}).
\end{align*}
Note that the second term is exactly in the form of \cref{lem:two_gaussian_result}, so we can calculate this exaclty. For the first term, we use our score estimate $\wh s_{t^2}(x^{(t)})$ for it. In this way, we have that for any $x$, \[
    \| \wh s_{x_0, t^2}(x) - s_{x_0, t^2}(x)\| = 
    \| \wh s_{t^2}(x) - s_{t^2}(x)\|. 
\]
Therefore, \[
\Ex[x_0]{\E_{p_{x_0, t^2}(x)}[\|\widehat{s}_{x_0, t^2}(x) - s_{x_0, t^2}(x)\|^4] } = \E_{p_{t^2}(x)}[\|\widehat{s}_{x_0, t^2}(x) - s_{x_0, t^2}(x)\|^4]  \le \eps_{score}^4.
\]
\end{proof}

Applying Markov's inequality, we have:
\begin{corollary}
    \label{lem:score_accuracy_}
    Suppose~\cref{assumption:m4} holds for our prior distribution $p$. Then with $1 - \delta$ probability over $x_0$: we have smoothed score estimates for $p_{x_0}$ with $L^4$ error bounded by $\eps_{score}^4 / \delta$; in other words, \cref{assumption:m4} holds for $p_{x_0}$, where $\eps_{score}$ is substituted with $\eps_{score} / \delta^{1/4}$.
\end{corollary}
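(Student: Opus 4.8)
The plan is to obtain \cref{lem:score_accuracy_} directly from the preceding lemma by a single application of Markov's inequality. First I would fix a smoothing level $\sigma \ge 0$ and introduce the nonnegative random variable (in $x_0$)
\[
V_\sigma(x_0) \;:=\; \E_{p_{x_0,\sigma^2}(x)}\!\bigl[\|\widehat{s}_{x_0,\sigma^2}(x) - s_{x_0,\sigma^2}(x)\|^4\bigr].
\]
The preceding lemma already computes $\E_{x_0}[V_\sigma(x_0)] = \E_{p_{\sigma^2}(x)}\!\bigl[\|\widehat{s}_{\sigma^2}(x) - s_{\sigma^2}(x)\|^4\bigr] \le \eps_{score}^4$, invoking \cref{assumption:m4} for $p$. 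The reason this is an identity rather than just an inequality is the pointwise equality $\|\widehat{s}_{x_0,\sigma^2}(x) - s_{x_0,\sigma^2}(x)\| = \|\widehat{s}_{\sigma^2}(x) - s_{\sigma^2}(x)\|$ shown there: the likelihood correction $\grad_{x^{(\sigma)}}\log p(x_0 \mid x^{(\sigma)})$ supplied by \cref{lem:two_gaussian_result} is computed exactly and cancels, so the estimator's error as a function of $x$ is the same for every $x_0$, while the $x_0$-marginal of the noised variable $x^{(\sigma)}$ is exactly $p_{\sigma^2}$.

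Next I would apply Markov's inequality to $V_\sigma$, giving $\Pr_{x_0}\!\bigl[V_\sigma(x_0) > \eps_{score}^4/\delta\bigr] < \delta$. On the complementary event, which occurs with probability at least $1-\delta$ over $x_0$, we have $V_\sigma(x_0) \le \eps_{score}^4/\delta = (\eps_{score}/\delta^{1/4})^4$, which is precisely the $L^4$ requirement of \cref{assumption:m4} for the prior $p_{x_0}$ with $\eps_{score}$ replaced by $\eps_{score}/\delta^{1/4}$.

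The only point that requires a moment's care --- and the lone ``obstacle'' in what is otherwise an immediate argument --- is making the bound hold simultaneously for every smoothing level $\sigma$ that \cref{alg:annealing_estimated_score} actually queries when run as in \cref{sec:Gaussian}. The plan here is to run the argument above at each level of the discretized diffusion and annealing schedule and union-bound the (polynomially many) failure events, which merely rescales $\delta$ by a polynomial factor; this is consistent with the convention adopted in \cref{sec:analysis_and_application} of folding diffusion-discretization error into the polynomial bounds, and so it does not affect the statements of the main theorems.
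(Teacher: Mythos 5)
Your proof is correct and essentially identical to the paper's: the paper also obtains this corollary by applying Markov's inequality over $x_0$ to the preceding lemma's bound $\E_{x_0}\bigl[\E_{p_{x_0,\sigma^2}(x)}\|\widehat{s}_{x_0,\sigma^2}(x)-s_{x_0,\sigma^2}(x)\|^4\bigr]\le\eps_{score}^4$, which rests on the same exact cancellation of the Gaussian likelihood term from \cref{lem:two_gaussian_result}. Your extra union bound over the finitely many smoothing levels the algorithm actually queries addresses a uniformity-in-$\sigma$ point the paper leaves implicit, and since it only rescales $\delta$ by a polynomial factor it changes nothing downstream.
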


To capture the behavior of a Gaussian measurement more accurately, we first define a relaxed version of {mode-centered locally well-conditioned distribution}.

\begin{definition}
    \label{def:relaxed_local_well_conditioned}
    For $\delta \in [0, 1)$ and $R, \wt L, \alpha \in (0, +\infty]$, we say that a distribution $p$ is $(\delta, r, R, \wt L, \alpha)$ locally well-conditioned if there exists $\theta$ such that
    \begin{itemize}
        \item $\Prb[x \sim p]{x \in B(\theta, r)} \ge 1 - \delta$.
        \item For $x, y \in B(\theta, R)$, we have that $\|s(x) - s(y)\| \le \wt{L} {\alpha} \norm{x - y}$.
        \item For $x, y \in B(\theta, R)$, we have that $\langle s(y)-s(x) , x - y \rangle \geq \alpha \norm{x - y}^2$.
    \end{itemize}
\end{definition}

Note that this definition can still imply that the distribution is mode-centered local well-conditioned, due to the following fact:

\begin{restatable}{lemma}{stationaryClose}
\label{lem:stationary-close}
Let \(p\) be a probability density on \(\mathbb R^{d}\).
Fix \(0<r<R\) and \(\theta\in\mathbb R^{d}\) such that
\[
\Pr_{x\sim p}[x\in B(\theta,r)]\ge 0.9,
\qquad
\nabla^{2}\!\bigl(-\log p(x)\bigr)\succeq \alpha I_{d}\quad
(x\in B(\theta,R)),\ \alpha>0.
\]
If \(R>4dr\), then there exists \(\theta'\in B(\theta,4dr)\) with
\(\nabla\log p(\theta')=0\).
\end{restatable}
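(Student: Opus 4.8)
The plan is to produce the required stationary point as a maximizer of $p$ over a closed ball strictly inside $B(\theta,R)$, and to show by a self-similar covering argument that this maximizer is forced to lie close to $\theta$ --- for otherwise too little probability mass could sit in $B(\theta,r)$. First I would fix any $R'\in(4dr,R)$ and let $\theta^\star$ be a minimizer of $-\log p$ over the compact set $\overline{B(\theta,R')}$ (it exists because on this set $-\log p$ is twice differentiable with $\nabla^2(-\log p)\succeq\alpha I$, $\alpha>0$, hence continuous and convex). The entire task then reduces to proving $\|\theta^\star-\theta\|<4dr$: once that holds, $\theta^\star$ lies in the interior of $B(\theta,R)$, so it is an unconstrained local minimum of $-\log p$, giving $\nabla\log p(\theta^\star)=0$, and we take $\theta'=\theta^\star$.

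To bound $\|\theta^\star-\theta\|$ I would argue by contradiction, assuming $D:=\|\theta^\star-\theta\|\ge 4dr$. The key device is the one-parameter family of scaled balls $T_s:=\{\theta^\star+s(x-\theta^\star):x\in B(\theta,r)\}=B(c_s,sr)$ with $c_s:=\theta^\star+s(\theta-\theta^\star)$, for $s\in(0,1]$; here $T_1=B(\theta,r)$, each $T_s$ lies in $\overline{B(\theta,R')}$ by convexity of the ball, and $T_s$ is contained in the spherical shell of radii $s(D-r)$ and $s(D+r)$ about $\theta^\star$. Since $-\log p$ is convex along each segment $[\theta^\star,x]$ with $x\in B(\theta,r)$ and attains its minimum at the endpoint $\theta^\star$, it is non-decreasing along that segment, so $p$ does not decrease as a point of $B(\theta,r)$ is moved toward $\theta^\star$; together with the change-of-variables Jacobian $s^d$ this yields $\int_{T_s}p \ge s^d\int_{B(\theta,r)}p \ge 0.9\,s^d$.

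Finally I would pick the geometric sequence $s_j:=\beta^{-j}$, $j\ge 0$, with $\beta:=1+1/d$. The hypothesis $D\ge 4dr$ gives $\frac{D+r}{D-r}\le\frac{4d+1}{4d-1}<\beta$ (the last inequality holds for all $d\ge 1$), which makes the shells carrying $T_{s_0},T_{s_1},\dots$ pairwise disjoint. Summing the mass bound over these disjoint sets then forces $1=\int_{\R^d}p \ge \sum_{j\ge 0}\int_{T_{s_j}}p \ge 0.9\sum_{j\ge0}\beta^{-jd}=\frac{0.9}{1-\beta^{-d}}$, and since $\beta^{-d}=(1+1/d)^{-d}\ge e^{-1}$ the right-hand side is at least $\frac{0.9}{1-e^{-1}}>1$ --- a contradiction. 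Hence $\|\theta^\star-\theta\|<4dr$, which is what remained to prove.

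I expect the main obstacle to be arranging the geometric series to overshoot the total mass $1$: covering the corridor between $\theta^\star$ and $\theta$ with fixed-radius balls only fits about $d$ of them disjointly and the resulting bound stays below $1$, so it is essential to use the self-similar balls $T_s=B(c_s,sr)$, which simultaneously retain a full $0.9\,s^d$ share of the mass and occupy disjoint shells about $\theta^\star$. A secondary thing to verify is that all the segments and scaled balls stay inside $B(\theta,R)$, where $-\log p$ is known to be convex, which follows from convexity of the ball together with the choice $R'<R$; notably, no upper bound on $\nabla^2(-\log p)$ is needed anywhere. (One could instead pass to the global $\alpha$-strongly log-concave extension of \cref{lem:global_extension} and invoke the concentration estimate of \cref{cor:norm_bound_wrt_mode}, but that only yields $\|\theta^\star-\theta\|\lesssim r+\sqrt{d/\alpha}$, which is weaker than $4dr$ when $\alpha$ is small, so I would avoid that route.)
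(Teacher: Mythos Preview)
Your proof is correct and shares the paper's central device --- contracting $B(\theta,r)$ toward the minimizer of $-\log p$ and using convexity along rays to show the contracted copy carries at least a $s^d$-fraction of the original mass --- but the execution differs in two respects. First, the paper invokes \cref{lem:global_extension} to replace $p$ by a globally $\alpha$-strongly log-concave density $q$ whose mode $\mu$ is the candidate stationary point; you instead work directly with the local minimizer of $-\log p$ over $\overline{B(\theta,R')}$, which avoids that auxiliary construction entirely. Second, the paper uses a \emph{single} contracted ball (choosing $\lambda=2r/\|\mu-\theta\|$ so that it is disjoint from $B(\theta,r)$ and still carries mass $\ge(1-\lambda)^d\cdot0.9>0.5$, making the two balls together exceed total mass $1$), whereas you pack an \emph{infinite} geometric family $T_{s_j}$ into disjoint shells about $\theta^\star$ and sum $0.9\sum_j\beta^{-jd}\ge 0.9/(1-e^{-1})>1$. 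Your route is more self-contained and sidesteps the extension lemma; the paper's route is shorter once that lemma is in hand, needing only two balls rather than a series. Both arguments use only the lower Hessian bound, as you note.
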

We defer its proof to \cref{sec:deferred_proof}.
This implies the following lemma:
\begin{lemma}
    Let $p$ be a $(\delta, r, R, \wt L, \alpha)$ locally well conditioned distribution with $R > 9dr$ and $\delta < 0.1$. Then $p$ is $(\delta, (4d + 1)r, R - 4dr, \wt L, \alpha)$ mode-centered locally well conditioned.
\end{lemma}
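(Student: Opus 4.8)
The plan is to exhibit a single point $\theta'$ that serves as the required mode-centered base point, and the only real work is locating it; \cref{lem:stationary-close} does exactly this. Let $\theta$ be the base point witnessing that $p$ is $(\delta, r, R, \wt L, \alpha)$ locally well-conditioned (\cref{def:relaxed_local_well_conditioned}). First I would observe that the one-point monotonicity condition $\langle s(y)-s(x), x-y\rangle \ge \alpha\|x-y\|^2$ on the convex set $B(\theta,R)$ is precisely the statement that $-\log p$ is $\alpha$-strongly convex there, hence $\nabla^2(-\log p(x)) \succeq \alpha I_d$ on $B(\theta,R)$ wherever the Hessian exists. Since $\delta < 0.1$ gives $\Pr_{x\sim p}[x\in B(\theta,r)] \ge 1-\delta > 0.9$, and $R > 9dr$ gives $R > 4dr$, the hypotheses of \cref{lem:stationary-close} are met, so there exists $\theta' \in B(\theta, 4dr)$ with $\nabla\log p(\theta') = 0$. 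This $\theta'$ will be the new base point.

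It remains to check the three requirements of \cref{def:local_well_conditioned} with parameters $(\delta, (4d+1)r, R-4dr, \wt L, \alpha)$ and base point $\theta'$. The first, $\grad\log p(\theta') = 0$, holds by construction. For the second, since $\|\theta'-\theta\| \le 4dr$, the triangle inequality gives $B(\theta, r) \subseteq B(\theta', r+4dr) = B(\theta', (4d+1)r)$, so $\Pr[x\in B(\theta',(4d+1)r)] \ge \Pr[x\in B(\theta,r)] \ge 1-\delta$. For the third and fourth (the Lipschitz and strong-monotonicity conditions), again by the triangle inequality $B(\theta', R-4dr) \subseteq B(\theta, (R-4dr)+4dr) = B(\theta, R)$, so any $x,y \in B(\theta', R-4dr)$ lie in $B(\theta,R)$, and hence inherit $\|s(x)-s(y)\| \le \wt L\alpha\|x-y\|$ and $\langle s(y)-s(x), x-y\rangle \ge \alpha\|x-y\|^2$ directly from the hypothesis on $p$.

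There is essentially no hard step here: the lemma is a bookkeeping corollary of \cref{lem:stationary-close} combined with two ball-containment inclusions. The only point requiring a word of care is the translation between the one-point monotonicity formulation in \cref{def:relaxed_local_well_conditioned} and the Hessian lower bound stated in \cref{lem:stationary-close}; I would handle this by noting that the two are equivalent expressions of $\alpha$-strong convexity of $-\log p$ on the convex region $B(\theta,R)$ (under the standing $C^1$ regularity assumed throughout), so \cref{lem:stationary-close} applies verbatim.
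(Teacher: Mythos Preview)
Your proposal is correct and matches the paper's approach exactly: the paper states this lemma immediately after \cref{lem:stationary-close} with only the remark ``This implies the following lemma,'' and your argument is precisely the intended filling-in of that implication via the two ball inclusions $B(\theta,r)\subseteq B(\theta',(4d+1)r)$ and $B(\theta',R-4dr)\subseteq B(\theta,R)$.
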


This gives a version of \cref{lem:main_lemma} for locally well-conditioned distributions as a corollary:

\begin{lemma}
    \label{lem:main_modified_lemma}
   Let $\rho = \frac{\|A\|}{\eta \sqrt{\alpha}}$. For all $0 < \eps, \delta < 1$, there exists
\[
   K \le \wt O \tuple{\frac{1}{\eps \delta}\tuple{\frac{\rho^2 \tuple{ \tuple{ m^2 \rho^4 + 1} d^2 \wt r^2 + m^{3}\rho^2 + dm} }{\sqrt{m}} + \frac{m}{d} +  \log d}}
\]
such that:
suppose distribution $p$ is a $(\frac{\eps}{K^2}, \wt r / \sqrt{\alpha}, R, \wt L, \alpha)$ mode-centered locally well-conditioned distribution with $
    R \ge \frac{\sqrt{K \sqrt{m} / \alpha }}{\rho}
$, and 
$
    \eps_{score} \le \frac{\sqrt{\alpha / m}}{K^2\delta}
$.
Then~\cref{alg:annealing_estimated_score} samples from a distribution $\wh p(x \mid y)$ such that  \[
\Prb[y]{\TV(\wh p (x \mid y), p(x \mid y)) \le \eps } \ge 1 - \delta. 
\]
Furthermore, the total iteration complexity can be bounded by 
\[
\wt O \tuple{ 
    {K^3 (K^2 m^2 d + m^{3}\rho + m^{1.5}(m\rho^2 + 1) \wt r  ) (\wt L + \rho^2)^2 + K^3  m^2\rho (\wt L + \rho^2)}
}.
\]
\end{lemma}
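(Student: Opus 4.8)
The plan is to derive \cref{lem:main_modified_lemma} as a direct reduction to \cref{lem:main_lemma}: the only gap between the two is that here $p$ is merely \emph{locally well-conditioned} (\cref{def:relaxed_local_well_conditioned}), i.e.\ the distinguished point $\theta$ need not be a stationary point, whereas \cref{lem:main_lemma} requires the \emph{mode-centered} version. I would bridge this with the conversion lemma stated just above (the one built on \cref{lem:stationary-close}), which upgrades a $(\delta,r,R,\wt L,\alpha)$ locally well-conditioned distribution to a $(\delta,(4d+1)r,R-4dr,\wt L,\alpha)$ mode-centered locally well-conditioned one, and then invoke \cref{lem:main_lemma} with the slightly inflated localization radius $\wt r'=(4d+1)\wt r=\Theta(d\,\wt r)$ and the slightly shrunk ball $R'=R-4dr$.

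First I would check the side conditions of the conversion lemma. We are given that $p$ is $(\eps/K^2,\wt r/\sqrt\alpha,R,\wt L,\alpha)$ locally well-conditioned with $R\ge \sqrt{K\sqrt m/\alpha}/\rho$. Taking the bound on $K$ to also be at least a large absolute constant (harmless, since it only enlarges the claimed upper bound) forces $\eps/K^2<1/10$. Writing $r=\wt r/\sqrt\alpha$, the requirement $R>9dr$ amounts to $K\sqrt m>81\,d^2\rho^2\wt r^2$, which is implied by the stated bound on $K$: it contains the summand $\tfrac{1}{\eps\delta}\cdot\tfrac{\rho^2 d^2\wt r^2}{\sqrt m}\ge \tfrac{\rho^2 d^2\wt r^2}{\sqrt m}$ coming from the ``$+1$'' inside $m^2\rho^4+1$. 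Hence the conversion lemma applies and $p$ is $\bigl(\eps/K^2,(4d+1)\wt r/\sqrt\alpha,\,R-4dr,\,\wt L,\alpha\bigr)$ mode-centered locally well-conditioned, and moreover $R'=R-4dr\ge R/2$.

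Next I would apply \cref{lem:main_lemma} to $p$ with $\wt r$ replaced by $\wt r'=(4d+1)\wt r$. The value of $K$ that \cref{lem:main_lemma} produces for this $\wt r'$ is $\wt O\!\bigl(\tfrac1{\eps\delta}\bigl(\tfrac{\rho^2((m^2\rho^4+1)d^2\wt r^2+m^3\rho^2+dm)}{\sqrt m}+\tfrac md+\log d\bigr)\bigr)$, which is exactly the $K$ of the present statement; since mode-centered well-conditionedness with a smaller failure probability implies the same with a larger one, the hypothesis ``$p$ is $(\eps/K^2,\wt r'/\sqrt\alpha,R',\wt L,\alpha)$ mode-centered locally well-conditioned'' is met. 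The requirement $R'\ge\sqrt{K\sqrt m/\alpha}/\rho$ follows from $R'\ge R/2$ and the (constant-adjusted) hypothesis on $R$, and the score-accuracy requirement $\eps_{score}\le\sqrt{\alpha/m}/(K^2\delta)$ is precisely what we assume. \cref{lem:main_lemma} then yields $\Prb[y]{\TV(\wh p(x\mid y),p(x\mid y))\le\eps}\ge1-\delta$ and the iteration-complexity bound; in the latter the $\wt r$-dependent term becomes $m^{1.5}(m\rho^2+1)\wt r'=\Theta\bigl(d\,m^{1.5}(m\rho^2+1)\wt r\bigr)$, but this is dominated by the $K^2m^2 d$ term because $K^2\gg\sqrt m\,\rho^2\wt r$ by the lower bound on $K$, so the stated complexity $\wt O\bigl(K^3(K^2m^2d+m^3\rho+m^{1.5}(m\rho^2+1)\wt r)(\wt L+\rho^2)^2+K^3m^2\rho(\wt L+\rho^2)\bigr)$ holds. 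The main thing to be careful about — the only real obstacle — is keeping the implicit constants in the $R$-hypothesis and in the ``$K\le\wt O(\cdot)$'' consistent across the two applications, and confirming that the $\Theta(d)$ inflation of $\wt r$ is genuinely absorbed in every place it appears; no new analytic ingredient is needed beyond \cref{lem:main_lemma} and \cref{lem:stationary-close}.
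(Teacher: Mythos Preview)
Your approach is exactly the one the paper intends: it states \cref{lem:main_modified_lemma} as an immediate corollary of the conversion lemma (built on \cref{lem:stationary-close}) composed with \cref{lem:main_lemma}, with no further proof given. You have filled in precisely the verification the paper leaves implicit---checking $R>9dr$ and $\delta<0.1$ via the lower bound on $K$, absorbing the $(4d{+}1)$ inflation of $\wt r$ into the $d^2\wt r^2$ term, and adjusting $R'\ge R/2$ by a constant---so your argument is, if anything, more careful than the paper's.
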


\begin{algorithm}[t]
    \caption{Sampling from \( p(x \mid x_0, y) \) given an extra Gaussian measurement $x_0$}
    \label{alg:gaussian_main}
    \begin{algorithmic}[1]
    \Function{GaussianSampler}{$p: \R^d \to \R$, $x_0 \in \R^d$ , $ y \in \mathbb{R}^m $, $ A \in \mathbb{R}^{m \times d} $, $ \eta, \sigma \in \mathbb{R} $}
    \State Let $p_{x_0}(x) := p(x \mid x + \mathcal{N}(0, \sigma^2I_d) = x_0)$.
    \State Use \cref{alg:annealing_estimated_score}, return\[
    \textsc{PosteriorSampler}(p_{x_0}, y, A, \eta).
    \]
    \EndFunction    
    \end{algorithmic}
    \end{algorithm}

The reason we want this relaxed notion of locally well-conditioned is that, this captures the behavior of a Gaussian measurement. First note that:

\begin{restatable}{lemma}{concentrationAroundGaussian}
\label{lem:posterior_concentration_around_measurement}
    Let $p$ be a distribution on $\mathbb R^d$. Let $\wt x = x_{true} + N(0, \sigma^2 I_d)$ be a Gaussian measurement of $x_{true} \sim p$.
    Let $p_{\wt x}(x)$ be the posterior distribution of $x$ given $\wt x$.
    Then, for any $\delta \in (0,1)$ and $\delta' \in (0,1)$, with probability at least $1 - \delta'$ over $\wt x$,
    \[
            \Pr_{x \sim p_{\wt x}}[x \in B(\wt x, r)] \ge 1 - \delta
    \]
    for $r = \sigma(\sqrt{d} + \sqrt{2\log\frac{1}{\delta\delta'}})$.
\end{restatable}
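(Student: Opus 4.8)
\textbf{Proof proposal for \cref{lem:posterior_concentration_around_measurement}.}
The plan is to reduce the high-probability statement over $\wt x$ to a single unconditional Gaussian tail bound via an averaging identity, and then apply Markov's inequality. First I would observe the key consistency identity: if we draw $x_{\text{true}} \sim p$, set $\wt x = x_{\text{true}} + \mathcal{N}(0,\sigma^2 I_d)$, and then resample $x \sim p_{\wt x}$, the pair $(x,\wt x)$ has exactly the same joint law as $(x_{\text{true}}, \wt x)$. Consequently, defining $f(\wt x) := \Pr_{x \sim p_{\wt x}}[\norm{x - \wt x} > r]$, we get
\[
\Ex[\wt x]{f(\wt x)} = \Prb[(x_{\text{true}},\wt x)]{\norm{x_{\text{true}} - \wt x} > r} = \Prb[\xi \sim \mathcal{N}(0,\sigma^2 I_d)]{\norm{\xi} > r},
\]
since $x_{\text{true}} - \wt x = -\xi$ with $\xi \sim \mathcal{N}(0,\sigma^2 I_d)$ (note $\xi$ is independent of $x_{\text{true}}$, so this holds regardless of $p$).

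Next I would bound the right-hand side using the Laurent--Massart concentration inequality (\cref{lem:chi_squared_concentration}). Writing $\xi = \sigma v$ with $v \sim \mathcal{N}(0,I_d)$, the inequality gives $\Pr[\norm{v}^2 \ge d + 2\sqrt{dt} + 2t] \le e^{-t}$ for any $t > 0$, and since $d + 2\sqrt{dt} + 2t \le (\sqrt d + \sqrt{2t})^2$, this yields $\Pr[\norm{v} \ge \sqrt d + \sqrt{2t}] \le e^{-t}$. Choosing $t = \log\frac{1}{\delta\delta'}$ and $r = \sigma(\sqrt d + \sqrt{2\log\frac{1}{\delta\delta'}})$, we obtain $\Prb[\xi]{\norm{\xi} > r} \le \delta\delta'$, hence $\Ex[\wt x]{f(\wt x)} \le \delta\delta'$.

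Finally I would apply Markov's inequality to the nonnegative random variable $f(\wt x)$:
\[
\Prb[\wt x]{f(\wt x) > \delta} \le \frac{\Ex[\wt x]{f(\wt x)}}{\delta} \le \delta'.
\]
Thus with probability at least $1 - \delta'$ over $\wt x$ we have $f(\wt x) \le \delta$, i.e.\ $\Pr_{x \sim p_{\wt x}}[x \in B(\wt x, r)] \ge 1 - \delta$, as claimed. I do not anticipate a serious obstacle here; the only point requiring care is the justification of the averaging identity (that integrating the posterior against the marginal of $\wt x$ recovers the joint law), which is a direct consequence of Bayes' rule, together with bookkeeping the Laurent--Massart constants so that the clean bound $r = \sigma(\sqrt d + \sqrt{2\log\frac{1}{\delta\delta'}})$ comes out.
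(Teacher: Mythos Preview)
Your proposal is correct and follows essentially the same approach as the paper: both compute $\E_{\wt x}[f(\wt x)]$ by reducing it to the unconditional Gaussian tail $\Pr[\|\xi\|>r]$, bound that by $\delta\delta'$ via Gaussian concentration, and finish with Markov's inequality. The only cosmetic difference is that you phrase the averaging step as a ``consistency identity'' for posterior resampling while the paper writes out the double integral and swaps the order of integration; the content is identical.
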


Again, we defer its proof to~\cref{sec:deferred_proof}. This implies the following lemma.

\begin{lemma}
    \label{lem:Gaussian_implies_well}
        For $\delta \in (0, 1)$,
   suppose $p$ is a distribution over $\R^d$ such that \[
            \Prb[x' \sim p]{\forall x \in B(x', R): -L I_d \preceq \grad^2\log p(x) \preceq (\tau^2 / R^2)I_d} \ge 1 - \delta.
   \]
    Given a Gaussian measurement $x_0 = x + \mathcal{N}(0, \sigma^2 I_d)$ of $x \sim p$ with \[
        \sigma \le \frac{R}{2\sqrt{d} + \sqrt{2 \log (1 / \delta)} + 2\tau}.
    \]
   Let $x_0 = x + N(0, \sigma^2I_d)$, where $x \sim p$.
   Then, suppose $R$. with probability at least $1 - 3\delta$ probability over $x_0$, $p_{x_0}$ is $(\delta, \sigma(\sqrt{d}+ \sqrt{4 \log \frac{1}{\delta}}) , R / 2, 2L\sigma^2 + 2, \frac{1}{2\sigma^2})$ locally well-conditioned.
\end{lemma}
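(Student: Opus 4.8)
The plan is to exploit the fact that the posterior under a spherical Gaussian measurement is an exponential tilt of the prior by a Gaussian, so its log-Hessian is that of $\log p$ shifted by $-\sigma^{-2}I_d$. Concretely, Bayes' rule gives $p_{x_0}(x)\propto p(x)\exp\!\big(-\tfrac{1}{2\sigma^2}\|x-x_0\|^2\big)$, hence $\nabla^2\log p_{x_0}(x)=\nabla^2\log p(x)-\sigma^{-2}I_d$ wherever $p$ is twice differentiable. So on any convex set where $-LI_d\preceq\nabla^2\log p\preceq(\tau^2/R^2)I_d$, we get $-(L+\sigma^{-2})I_d\preceq\nabla^2\log p_{x_0}\preceq(\tau^2/R^2-\sigma^{-2})I_d$, and using $\sigma\le R/(2\tau)$ (which is implied by the hypothesis on $\sigma$) the upper bound is $\preceq-\tfrac{1}{2\sigma^2}I_d$. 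Integrating the Hessian along segments inside that convex set then shows $s=\nabla\log p_{x_0}$ is $(L+\sigma^{-2})$-Lipschitz and $\tfrac{1}{2\sigma^2}$-strongly monotone there; writing $\alpha=\tfrac{1}{2\sigma^2}$ this gives exactly the second and third bullets of \cref{def:relaxed_local_well_conditioned} with parameters $\tilde L=(L+\sigma^{-2})/\alpha=2L\sigma^2+2$ and $\alpha=\tfrac{1}{2\sigma^2}$.

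It remains to supply the convex set $B(x_0,R/2)$ on which this Hessian control is valid and to check the concentration bullet. I would condition on three high-probability events. First, the ``good'' event that the true draw $x_{\mathrm{true}}\sim p$ satisfies $-LI_d\preceq\nabla^2\log p\preceq(\tau^2/R^2)I_d$ on all of $B(x_{\mathrm{true}},R)$; by hypothesis this has probability at least $1-\delta$. Second, applying Laurent–Massart (\cref{lem:chi_squared_concentration}) to the measurement noise, $\|x_0-x_{\mathrm{true}}\|\le\sigma(\sqrt d+\sqrt{2\log(1/\delta)})\le R/2$ with probability at least $1-\delta$, where the last inequality uses the assumed bound on $\sigma$; this yields $B(x_0,R/2)\subseteq B(x_{\mathrm{true}},R)$, so the Hessian control above applies on $B(x_0,R/2)$ and we may take $\theta=x_0$. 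Third, \cref{lem:posterior_concentration_around_measurement} applied with $\delta'=\delta$ gives, with probability at least $1-\delta$ over $x_0$, that $\Pr_{x\sim p_{x_0}}[x\in B(x_0,r)]\ge1-\delta$ with $r=\sigma(\sqrt d+\sqrt{4\log(1/\delta)})$, which is the first bullet. A union bound over these three events (failure probability at most $3\delta$) yields the stated ``with probability at least $1-3\delta$'' conclusion.

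I expect the only real subtlety to be the radius bookkeeping: one must choose the constant in the $\sigma$-bound so that the single inequality $\sigma\le R/(2\sqrt d+\sqrt{2\log(1/\delta)}+2\tau)$ simultaneously forces $\|x_0-x_{\mathrm{true}}\|\le R/2$ (for the inclusion $B(x_0,R/2)\subseteq B(x_{\mathrm{true}},R)$) and $\tau^2/R^2\le 1/(2\sigma^2)$ (for $\tfrac{1}{2\sigma^2}$-strong log-concavity of $p_{x_0}$ on the ball), while also keeping the concentration radius $r$ comfortably below $R/2$ so the mass stays inside the controlled region. Everything else — the Hessian-shift identity, the segment-integration step converting Hessian bounds into Lipschitz and strong-monotonicity bounds (which only uses convexity of $B(x_0,R/2)$), and the union bound — is routine, and no machinery beyond \cref{lem:posterior_concentration_around_measurement} and standard Gaussian tail bounds is needed.
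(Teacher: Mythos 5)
Your proposal is correct and follows essentially the same route as the paper's proof: condition on the prior Hessian event, the Gaussian-noise event giving $\|x_0-x\|\le R/2$ (so $B(x_0,R/2)\subseteq B(x,R)$ and one can take $\theta=x_0$), and \cref{lem:posterior_concentration_around_measurement} with $\delta'=\delta$, then use the Bayes tilt $\nabla^2\log p_{x_0}=\nabla^2\log p-\sigma^{-2}I_d$ together with $\sigma\le R/(2\tau)$ to get the $(2L\sigma^2+2,\tfrac{1}{2\sigma^2})$ bounds, and union bound to $3\delta$. Your explicit segment-integration step converting Hessian bounds into score Lipschitzness and strong monotonicity is just a spelled-out version of what the paper treats as a direct calculation, so there is no substantive difference.
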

\begin{proof}
    Let us check the locally well-conditioned conditions with $\theta = x_0$ one by one.
    The concentration follows directly from~\cref{lem:posterior_concentration_around_measurement}, incurring an error probability of $\delta$.
    
    By our choice of $\sigma$, we have that \[
        \Prb{\|x_0 - x\| \le \frac{R}{2}} \ge 1 - \delta.
    \]
    Therefore, \[
        \Prb{\forall x \in B(x_0, R/2): -L I_d \preceq \grad^2\log p(x) \preceq (\tau^2 / R^2)I_d} \ge 1 - 2\delta.
   \]
   By direct calculation, we have that \[
    -L I_d \preceq \grad^2\log p(x) \preceq (\tau^2 / R^2)I_d \implies -(L + 1 / \sigma^2) I_d \preceq \grad^2\log p(x) \preceq (\tau^2 / R^2 - \frac{1}{\sigma^2})I_d
   \]
   By our choice of $\sigma$, we have that whenever $ -L I_d \preceq \grad^2\log p(x) \preceq (\tau^2 / R^2)I_d$, \[
        -(2L\sigma^2 + 2) \frac{1}{2\sigma^2} I_d \preceq \grad^2\log p(x) \preceq  - \frac{1}{2\sigma^2}I_d
   \]
   This satisfies the Lipschitzness and the strong log-concavity condition by giving an additional error probability of $2\delta$.
\end{proof}

This gives us the main lemma for our local log-concavity case:

\begin{lemma}
    \label{lem:gaussian_main}
  For any $\delta, \eps, \tau, \sigma, R, L > 0$, suppose $p(x)$ is a distribution over $\R^d$ such that
    \[
            \Prb[x' \sim p]{\forall x \in B(x', R): -L I_d \preceq \grad^2\log p(x) \preceq (\tau^2 / R^2)I_d} \ge 1 - \delta.
        \]
   Let $\rho = \frac{\|A\|\sigma}{\eta}$. There exists
\[
   K \le \wt O \tuple{\frac{1}{\eps \delta}\tuple{\frac{\rho^2 \tuple{ \tuple{ m^2 \rho^4 + 1} d^3 + m^{3}\rho^2 + dm} }{\sqrt{m}} + \frac{m}{d} +  \log d}}.
\]
such that:
suppose  
$R^2 \ge (\frac{K\sqrt{m}}{\rho^2} + 4\tau)\sigma^2$
and 
$
    \eps_{score} \le \frac{1}{K^2\sqrt{m}\sigma}
$, then \cref{alg:gaussian_main} samples from a distribution $\wh p(x \mid x_0, y)$ such that  \[
\Prb[x_0, y]{\TV(\wh p (x \mid x_0, y), p(x \mid x_0, y)) \le \eps } \ge 1 - O(\delta). 
\]
Furthermore, the total iteration complexity can be bounded by 
\[
\wt O \tuple{ 
    {K^3 (K^2 m^2 d + m^{3}\rho + m^{1.5}(m\rho^2 + 1)\sqrt{d}) (L \sigma^2  + \rho^2 + 1)^2 + K^3  m^2\rho (L \sigma^2 + \rho^2 + 1)}
}.
\]
\end{lemma}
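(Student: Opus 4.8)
The plan is to derive \cref{lem:gaussian_main} as an essentially mechanical consequence of results already in hand: \cref{lem:Gaussian_implies_well}, which converts the Gaussian-measurement hypothesis on $p$ into a local well-conditioning statement about the conditioned prior $p_{x_0}$; \cref{lem:score_accuracy_}, which transfers \cref{assumption:m4} from $p$ to $p_{x_0}$; and \cref{lem:main_modified_lemma}, the sampling guarantee for locally well-conditioned priors. First I would check that \cref{alg:gaussian_main} targets the right distribution: since the measurement noise $\xi$ in $y = Ax+\xi$ and the noise in $x_0 = x + \mathcal{N}(0,\sigma^2 I_d)$ are independent, $y \perp x_0 \mid x$, so $p(x \mid x_0, y) \propto p(x)\,p(x_0\mid x)\,p(y\mid x) \propto p_{x_0}(x)\,p(y\mid x)$, i.e.\ $p(x\mid x_0,y) = p_{x_0}(x\mid y)$. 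Hence it suffices to show that \cref{alg:annealing_estimated_score} run with prior $p_{x_0}$ — which is exactly \cref{alg:gaussian_main} — samples close to $p_{x_0}(x\mid y)$ for most $x_0$.

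Next I would instantiate parameters. Apply \cref{lem:Gaussian_implies_well} with its confidence parameter set to $\delta' := \eps/K^2$; the hypothesis on $R$, together with $K$ being polynomially large in $d$ and $m$ (so $\sqrt{K\sqrt m}/\rho$ dominates both the $\sqrt d$ and the $\tau$ contributions), makes $\sigma$ small enough for that lemma, giving: with probability $1-3\delta'$ over $x_0$, $p_{x_0}$ is $(\delta',\,\sigma(\sqrt d+\sqrt{4\log(1/\delta')}),\,R/2,\,2L\sigma^2+2,\,\tfrac{1}{2\sigma^2})$ locally well-conditioned. In the notation of \cref{lem:main_modified_lemma} this reads $\alpha=\tfrac1{2\sigma^2}$, $\wt L=2L\sigma^2+2$ (so $\wt L\alpha = L+1/\sigma^2$ is the in-ball Lipschitz constant), confidence slot $\eps/K^2$, and $\wt r := \sqrt\alpha\cdot\sigma(\sqrt d+\sqrt{4\log(1/\delta')}) = \wt{O}(\sqrt d)$; note also $\tfrac{\|A\|}{\eta\sqrt\alpha} = \sqrt2\,\rho$, so the two signal-to-noise conventions agree up to a constant. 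By \cref{lem:score_accuracy_}, with probability $1-\delta$ over $x_0$ the prior $p_{x_0}$ satisfies \cref{assumption:m4} with error $\eps_{score}/\delta^{1/4}$; together with the hypothesis $\eps_{score}\le\tfrac1{K^2\sqrt m\,\sigma}$ and $\sqrt{\alpha/m}=\tfrac1{\sqrt{2m}\,\sigma}$, this falls within the $\tfrac{\sqrt{\alpha/m}}{K^2\delta}$ budget demanded by \cref{lem:main_modified_lemma}, up to factors absorbed into $K$.

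Then I would invoke \cref{lem:main_modified_lemma} on $p_{x_0}$: its $R$-requirement $R\ge\tfrac{\sqrt{K\sqrt m/\alpha}}{\rho}=\sigma\sqrt{K\sqrt m}/\rho$ is exactly the dominant term of the hypothesis, and every other hypothesis was verified above. Conditioned on the good $x_0$-event — which holds with probability $1-\delta-3\eps/K^2 = 1-O(\delta)$ since $K\gtrsim 1/(\eps\delta)$ — \cref{lem:main_modified_lemma} produces $\wh p(x\mid y)$ with $\Pr_y[\TV(\wh p(x\mid y),p_{x_0}(x\mid y))\le\eps]\ge 1-\delta$. A union bound over the $x_0$- and $y$-events, combined with $p_{x_0}(x\mid y)=p(x\mid x_0,y)$, gives $\Pr_{x_0,y}[\TV(\wh p(x\mid x_0,y),p(x\mid x_0,y))\le\eps]\ge 1-O(\delta)$. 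Finally, substituting $\alpha=1/(2\sigma^2)$, $\wt L=2L\sigma^2+2$, $\wt r=\wt{O}(\sqrt d)$, and $\rho=\|A\|\sigma/\eta$ into the iteration-complexity bound of \cref{lem:main_modified_lemma}, and using $\wt L+\rho^2\asymp L\sigma^2+\rho^2+1$, reproduces the claimed iteration count; the same substitutions ($d^2\wt r^2=\wt{O}(d^3)$) turn its $K$-bound into the one stated here.

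The substantive work — and the only place anything can go wrong — is the polynomial bookkeeping in the last two steps: one must confirm that the single clean hypothesis $R^2\ge(K\sqrt m/\rho^2+4\tau)\sigma^2$ simultaneously dominates (i) the $R$-requirement of \cref{lem:main_modified_lemma}, (ii) the smallness-of-$\sigma$ requirement of \cref{lem:Gaussian_implies_well} (including its $\sqrt d$, polylog, and $\tau$ terms), and (iii) the $R>9dr$ margin needed so that "locally well-conditioned" upgrades to "mode-centered locally well-conditioned" via \cref{lem:stationary-close}; and one must verify that the $\delta^{1/4}$ loss in transferred score accuracy and the gap between the $\eps/K^2$ confidence slot and the target $\delta$ are swallowed by the slack in $K$. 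None of this needs a new idea — it is careful tracking of how the substitutions $\alpha=1/(2\sigma^2)$ and $\rho=\|A\|\sigma/\eta$ thread through every earlier bound.
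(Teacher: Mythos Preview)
Your proposal is correct and follows exactly the paper's approach: the paper's own proof is the single sentence ``Combining \cref{lem:score_accuracy_} with \cref{lem:Gaussian_implies_well} enables us to apply \cref{lem:main_modified_lemma} and proves the lemma,'' and you have unpacked precisely this chain, including the verification that $p(x\mid x_0,y)=p_{x_0}(x\mid y)$ and the parameter substitutions $\alpha=1/(2\sigma^2)$, $\wt L=2L\sigma^2+2$, $\wt r=\wt O(\sqrt d)$. Your closing caveat about the bookkeeping (in particular reconciling the $\eps/K^2$ concentration slot required by \cref{lem:main_modified_lemma} with the $1-\delta$ Hessian hypothesis on $p$) is well-placed: resolving it cleanly requires decoupling the two roles of $\delta$ inside the proof of \cref{lem:Gaussian_implies_well}---using \cref{lem:posterior_concentration_around_measurement} with its own small parameter $\eps/K^2$ for the concentration slot while retaining the original $\delta$ for the Hessian event---but this is exactly the kind of routine parameter-tracking you anticipated, not a new idea.
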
 
\begin{proof}
    Combining~\cref{lem:score_accuracy_} with \cref{lem:Gaussian_implies_well} enables us to apply~\cref{lem:main_modified_lemma} and proves the lemma.
\end{proof}

Expressing this in expectation, we have the following theorem.
\begin{theorem}[Posterior sampling with local log-concavity]
\label{thm:gaussian_main_extended}
  For any $\eps, \tau, R, L > 0$, suppose $p(x)$ is a distribution over $\R^d$ such that
    \[
            \Prb[x' \sim p]{\forall x \in B(x', R): -L I_d \preceq \grad^2\log p(x) \preceq (\tau^2 / R^2)I_d} \ge 1 - \eps.
        \]
   Let $\rho = \frac{\|A\|\sigma}{\eta}$. There exists
\[
   K \le  \wt O \tuple{\frac{1}{\eps \delta}\tuple{\frac{\rho^2 \tuple{ \tuple{ m^2 \rho^4 + 1} d^3 + m^{3}\rho^2 + dm} }{\sqrt{m}} + \frac{m}{d} +  \log d}}.
\]
such that:
given a Gaussian measurement $x_0 = x + \mathcal{N}({0, \sigma^2 I_d})$ of $x \sim p$ with $R^2 \ge (\frac{K\sqrt{m}}{\rho^2} + 4\tau)\sigma^2$, 
and 
$
    \eps_{score} \le \frac{1}{K^2\sqrt{m}\sigma}
$;
then \cref{alg:gaussian_main} samples from a distribution $\wh p(x \mid x_0, y)$ such that  \[
\Ex[x_0, y]{\TV(\wh p (x \mid x_0, y), p(x \mid x_0, y)) } \lesssim \eps. 
\]
Furthermore, the total iteration complexity can be bounded by 
\[
\wt O \tuple{ 
    {K^3 (K^2 m^2 d + m^{3}\rho + m^{1.5}(m\rho^2 + 1)\sqrt{d}) (L \sigma^2  + \rho^2 + 1)^2 + K^3  m^2\rho (L \sigma^2 + \rho^2 + 1)}
}.
\]
\end{theorem}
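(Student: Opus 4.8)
The target statement is the expectation form of \cref{lem:gaussian_main}, and that lemma has already been reduced to three ingredients established earlier in the paper: \cref{lem:score_accuracy_}, which transfers the $L^4$ score accuracy of the prior $p$ to the Gaussian-conditioned prior $p_{x_0}$ at the cost of a $\delta^{1/4}$ factor; \cref{lem:Gaussian_implies_well}, which shows that a Gaussian measurement $x_0 = x + \mathcal N(0,\sigma^2 I_d)$ with $\sigma \le R/(2\sqrt d + \sqrt{2\log(1/\delta)} + 2\tau)$ makes $p_{x_0}$ locally $\tfrac{1}{2\sigma^2}$-strongly log-concave inside $B(x_0,R/2)$, with Lipschitz parameter $\wt L = 2L\sigma^2+2$ and concentration radius $\wt O(\sqrt d)\,\sigma$; and \cref{lem:main_modified_lemma}, the annealed-Langevin analysis for locally well-conditioned priors. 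The plan is therefore simply to (i) invoke \cref{lem:gaussian_main} with failure probability $\delta$, and (ii) set $\delta = \eps$ and pass from the high-probability guarantee to an expectation guarantee using that $\TV \in [0,1]$.

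For step (ii): \cref{lem:gaussian_main} gives, under the stated hypotheses on $R$, $\sigma$, and $\eps_{score}$ (with a polynomial $K$ depending on $\eps,\delta$ through a $1/(\eps\delta)$ prefactor), that
\[
\Prb[x_0,y]{\TV\bigl(\wh p(x\mid x_0,y),\,p(x\mid x_0,y)\bigr) \le \eps} \ge 1 - O(\delta).
\]
Taking $\delta = \eps \in (0,1)$ and bounding $\TV$ by $1$ on the exceptional event,
\[
\Ex[x_0,y]{\TV\bigl(\wh p(x\mid x_0,y),\,p(x\mid x_0,y)\bigr)} \le \eps + O(\eps) = O(\eps),
\]
which is the conclusion of \cref{thm:gaussian_main_extended}. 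Under $\delta = \eps$ the $1/(\eps\delta)$ prefactor in the bound on $K$ becomes $1/\eps^2$, exactly the expression stated in the theorem, and the radius condition $R^2 \ge (\tfrac{K\sqrt m}{\rho^2} + 4\tau)\sigma^2$, the score condition $\eps_{score} \le \tfrac{1}{K^2\sqrt m\,\sigma}$, and the iteration-complexity bound are unchanged in form. The small TV error from the diffusion-SDE initialization in \cref{alg:annealing_estimated_score} is polynomially small and, as throughout \cref{sec:analysis_and_application}, can be absorbed into the $O(\eps)$ without affecting the polynomial dependencies.

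I do not expect a genuine obstacle in this concluding step — the substantive work lives in the three lemmas it invokes. The only care required is parameter bookkeeping: one must check that, after the substitution $\delta = \eps$, the union bound over the several $O(\delta)$-probability bad events (the score-transfer failure of \cref{lem:score_accuracy_}, the failure of $p_{x_0}$ to concentrate near $x_0$, the failure of the two-sided Hessian bound on $B(x_0,R/2)$, and the $O(\delta)$ failure over $y$ inside \cref{lem:main_modified_lemma}) still collapses to $O(\eps)$, and that the side condition $\sigma \le R/(2\sqrt d + \sqrt{2\log(1/\eps)} + 2\tau)$ required by \cref{lem:Gaussian_implies_well} is implied by the hypothesis $R^2 \ge (\tfrac{K\sqrt m}{\rho^2}+4\tau)\sigma^2$ (it is, since $K$ is chosen large and $\rho^2 = \|A\|^2\sigma^2/\eta^2$ sits in the denominator). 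If one instead wanted to reprove \cref{lem:gaussian_main} from scratch, the real difficulty would be exactly this composition of parameter transformations — the $\alpha = 1/(2\sigma^2)$ and $\wt L = 2L\sigma^2+2$ rescaling, the $\delta^{1/4}$ loss in $\eps_{score}$, and the relaxed-to-mode-centered conversion via \cref{lem:stationary-close} (which inflates the concentration radius by an $O(d)$ factor and shrinks the working ball by $O(dr)$) — all lining up so that every hypothesis of \cref{lem:main_modified_lemma} is met and the advertised $K$ and iteration count come out.
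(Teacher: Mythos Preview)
Your proposal is correct and matches the paper's approach exactly: the paper provides no separate proof for \cref{thm:gaussian_main_extended} beyond the one-line remark ``Expressing this in expectation, we have the following theorem,'' i.e., invoke \cref{lem:gaussian_main} and convert the high-probability bound to an expectation bound via $\TV\in[0,1]$ with $\delta=\eps$. (Note that the theorem statement still literally reads $1/(\eps\delta)$ rather than $1/\eps^2$; this appears to be a typo in the paper, since $\delta$ is not quantified in the theorem—compare with \cref{thm:global_main_extended}, where the analogous substitution yields $1/\eps^2$.)
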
 

This gives us~\cref{thm:gaussian_main}:

\gaussianMain*

\subsection{Compressed Sensing}

\begin{algorithm}[t]
    \caption{Competitive Compressed Sensing Algorithm Given a Rough Estimation}
    \label{alg:compressed_sensing}
    \begin{algorithmic}[1]
    \Function{CompressedSensing}{$p: \R^d \to \R$, $x_0 \in \R^d$ , $ y \in \mathbb{R}^m $, $ A \in \mathbb{R}^{m \times d} $, $ \eta, R \in \mathbb{R} $}
    \State Let $\sigma = R / \delta$.
    \State Sample $x_0' = x_0 + \mathcal{N}(0, \sigma^2 I_d)$.
    \State \label{line:call_Gaussian} Use \cref{alg:gaussian_main}, return\[
    \textsc{GaussianSampler}(p, x_0', y, A, \eta, \sigma)
    \]
    \EndFunction    
    \end{algorithmic}
\end{algorithm}

\label{sec:compressed_sensing}
In this section, we prove~\cref{cor:compressed_sensing}. 
We first describe the sampling procedure in~\cref{alg:compressed_sensing}.
Now we verify its correctness.

\begin{lemma}
    \label{lem:compressed_sampling}
  For any $\delta, \tau, R, R', L > 0$, suppose $p(x)$ is a distribution over $\R^d$ such that
    \[
            \Prb[x' \sim p]{\forall x \in B(x', R'): -L I_d \preceq \grad^2\log p(x) \preceq (\tau / R')^2I_d} \ge 1 - \delta.
        \]
   Let $\rho = \frac{\|A\|R}{\eta}$. There exists
\[
   K \le \wt O \tuple{\frac{1}{\delta^2}\tuple{\frac{\rho^2 \tuple{ \tuple{ m^2 \rho^4 + 1} d^3 + m^{3}\rho^2 + dm} }{\sqrt{m}} + \frac{m}{d} +  \log d}}.
\]
such that:
suppose  
$(R')^2 \ge (\frac{K\sqrt{m}}{\rho^2} + 4\tau)R^2$
and 
$
    \eps_{score} \le \frac{1}{K^2\sqrt{m} R}
$, 
then conditioned on $\|x_0 - x\| \le R$,~\cref{line:call_Gaussian} of \cref{alg:compressed_sensing} samples from a distribution $\wh p$ (depending on $x_0'$ and $y$) such that  \[
\Prb[x_0', y]{\TV(\wh p, p(x \mid x + \mathcal{N}(0, \sigma^2 I_d) = x_0', Ax + \xi = y)) \le \delta } \ge 1 - O(\delta). 
\]
Furthermore, the total iteration complexity can be bounded by 
\[
\wt O \tuple{ 
    {K^3 (K^2 m^2 d + m^{3}\rho + m^{1.5}(m\rho^2 + 1)\sqrt{d}) (L \sigma^2  + \rho^2 + 1)^2 + K^3  m^2\rho (L \sigma^2 + \rho^2 + 1)}
}.
\]
\end{lemma}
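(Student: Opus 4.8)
The plan is to reduce \cref{lem:compressed_sampling} to the genuine Gaussian measurement case already handled by \cref{lem:gaussian_main}, by showing that the perturbed estimate $x_0'$ behaves, up to $O(\delta)$ total variation error, exactly like a true Gaussian measurement $x+\mathcal{N}(0,\sigma^2 I_d)$ of $x$ at noise level $\sigma=R/\delta$. Let $\mathcal G := \{\|x_0 - x\|\le R\}$ be the event that the naive algorithm succeeded, so $\Pr[\mathcal G]\ge 1-\delta$ by hypothesis. Conditioned on $\mathcal G$ and on $(x,x_0,y)$, the variable $x_0' = x_0 + \mathcal{N}(0,\sigma^2 I_d)$ has law $\mathcal{N}(x_0,\sigma^2 I_d)$; since $\|x_0-x\|\le R$ and $\sigma = R/\delta$, the elementary bound $\TV(\mathcal{N}(\mu_1,\sigma^2 I_d),\mathcal{N}(\mu_2,\sigma^2 I_d))\le \|\mu_1-\mu_2\|/(2\sigma)$ shows this law is within $\delta/2$ of $\mathcal{N}(x,\sigma^2 I_d)$. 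Averaging over $(x,x_0,y)$ drawn from the $\mathcal G$-conditional law, then dropping the conditioning on $\mathcal G$ at a further cost of $\le\delta$, I would conclude that the law of $(x,x_0',y)$ conditioned on $\mathcal G$ is within total variation $O(\delta)$ of the law of $(x,\tilde x,y)$, where $\tilde x = x+\mathcal{N}(0,\sigma^2 I_d)$ is a genuine Gaussian measurement of $x\sim p$ and $y = Ax+\xi$, the two noises being conditionally independent given $x$.

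Next I would invoke \cref{lem:gaussian_main} with the distribution $p$, Gaussian noise level $\sigma = R/\delta$, accuracy parameter set to $\eps\leftarrow\delta$, and failure parameter $\delta$. Its local Hessian hypothesis is precisely the one assumed here, with its local radius playing the role of $R'$ and its $\tau$ unchanged; its quantitative condition on that radius becomes the stated bound on $R'$ after substituting $\sigma=R/\delta$ and $\rho = \|A\|\sigma/\eta$; and \cref{assumption:m4} for $p$ at the claimed $\eps_{score}$ suffices, the degradation to $\eps_{score}/\delta^{1/4}$ from \cref{lem:score_accuracy_} being already absorbed inside the proof of \cref{lem:gaussian_main}. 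The remaining polynomial factors in $1/\delta$ created by these substitutions are absorbed into the exponents of the stated bound on $K$ and into $R'$. \cref{lem:gaussian_main} then tells us that, under the law of a genuine Gaussian measurement $\tilde x$, the event that \cref{line:call_Gaussian}'s output distribution $\wh p$ is within $\delta$ in total variation of $p\bigl(x\mid x+\mathcal{N}(0,\sigma^2 I_d)=\tilde x,\ Ax+\xi=y\bigr)$ has probability at least $1-O(\delta)$; its iteration complexity bound is exactly the one claimed here.

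To finish, I would transfer this high probability statement across the total variation bound of the first paragraph. Both the output distribution $\wh p$ and the target ``pretend posterior'' $p\bigl(x\mid x+\mathcal{N}(0,\sigma^2 I_d)=z,\ Ax+\xi=y\bigr)$ depend on the pair $(z,y)$ only, so the event in question is a fixed $\{0,1\}$-valued function of the measurement pair; since total variation is nonincreasing under (randomized) maps and a Boolean test changes in probability by at most the total variation distance between two laws, the $O(\delta)$ closeness above yields, with all probabilities conditioned on $\mathcal G$,
\[
\Prb[x_0',y]{\TV\bigl(\wh p,\ p(x\mid x+\mathcal{N}(0,\sigma^2 I_d)=x_0',\ Ax+\xi=y)\bigr)\le\delta}\ \ge\ 1-O(\delta),
\]
which is exactly the claim.

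The step I expect to be the main obstacle is making the reduction of the first and third paragraphs fully rigorous rather than the calculations themselves: one must track that \cref{lem:gaussian_main}'s guarantee is taken over the randomness of the measurement, that conditioning on $\mathcal G$ (an event depending on $x_0$, hence on $y$ and the naive algorithm's internal coins) commutes correctly with adding the fresh independent noise, and that the substitution $\sigma = R/\delta$ is consistent with every quantitative hypothesis of \cref{lem:gaussian_main} on $R'$, $K$, and $\eps_{score}$; everything past that point is bookkeeping.
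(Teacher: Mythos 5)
Your proposal is correct, but it reaches the conclusion by a different route than the paper. The paper's own proof is a two-line "white-box" argument: it observes that the only change from \cref{lem:gaussian_main} is that the measurement is $x_0+\mathcal{N}(0,\sigma^2 I_d)$ rather than $x+\mathcal{N}(0,\sigma^2 I_d)$, and asserts that since $\|x_0-x\|\le R$ the center $x_0'$ is still close enough to $x$ for the local Hessian condition (i.e.\ the machinery of \cref{lem:Gaussian_implies_well}, \cref{lem:score_accuracy_}, \cref{lem:main_modified_lemma}) to apply, so the proof of \cref{lem:gaussian_main} "carries over verbatim." You instead keep \cref{lem:gaussian_main} as a black box: conditioned on the success event $\mathcal G=\{\|x_0-x\|\le R\}$ you couple $x_0'$ with a genuine Gaussian measurement $\tilde x = x+\mathcal{N}(0,\sigma^2 I_d)$ via $\TV(\mathcal{N}(x_0,\sigma^2 I_d),\mathcal{N}(x,\sigma^2 I_d))\le \|x_0-x\|/(2\sigma)\le \delta/2$ (the same device the paper itself uses in \cref{lem:extra_gaussian_no_hurt} and \cref{lem:first_two_conditions}, but for the estimation-accuracy step, not here), and then transfer the probability of the success event — which is indeed a deterministic function of the measurement pair $(z,y)$ once the algorithm's internal randomness is marginalized into $\wh p$ — across the resulting $O(\delta)$ total-variation gap. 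What your route buys is that you never have to re-open the internals of \cref{lem:gaussian_main} (in particular you do not need to re-verify local well-conditionedness or the $L^4$ score-transfer argument with the shifted center $x_0'$, which the paper's "verbatim" claim silently requires); the cost is the extra $O(\delta)$ TV bookkeeping and the conditioning-on-$\mathcal G$ gymnastics, which you handle correctly. Both your write-up and the paper's are equally loose on the parameter accounting after substituting $\sigma=R/\delta$ (e.g.\ whether $\rho$ should be $\|A\|R/\eta$ or $\|A\|R/(\eta\delta)$, and the induced $\mathrm{poly}(1/\delta)$ factors in $K$, $R'$, and $\eps_{\text{score}}$); this is a looseness of the statement itself rather than a gap in your argument.
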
 
\begin{proof}
This is a direct application of \cref{lem:gaussian_main}. The sole difference is that $x_{0}'$ follows $x_{0}+\mathcal{N}(0,\sigma^{2}I_{d})$ instead of $x+\mathcal{N}(0,\sigma^{2}I_{d})$. Because $\|x_{0}-x\|\le R$, $x_{0}'$ remains sufficiently close to $x$ for the local Hessian condition to hold, so the proof of Lemma~\ref{lem:gaussian_main} carries over verbatim. 
\end{proof}

Now we explain why we want to sample from $p(x \mid x + \mathcal{N}(0, \sigma^2 I_d) = x_0', Ax + \xi = y)$. Essentially, the extra Gaussian measurement won't hurt the concentration of $p(x \mid y)$ itself. We abstract it as the following lemma:

\begin{lemma}
\label{lem:extra_gaussian_no_hurt}
Let \((X,Y)\) be jointly distributed random variables with \(X\in \mathbb{R}^{d}\).
Assume that for some \(r>0\) and \(0<\delta<1\)
\[
\Pr_{\,Y,\;\wh X\sim p(X\mid Y)}\bigl[\lVert X-\wh X\rVert\le r\bigr]\;\ge\;1-\delta.
\]
Define \(Z=X+\varepsilon\) where \(\varepsilon\sim\mathcal{N}(0,\sigma^{2}I_{d})\) is independent of \((X,Y)\).
If
\[
\sigma\;\ge\;\frac{r}{2\delta},
\]
then for \(\wh X\sim p(X\mid Y,Z)\) one has
\[
\Pr_{\,Y,Z,\;\wh X}\bigl[\lVert X-\wh X\rVert\le r\bigr]\;\ge\;1-3\delta.
\]
\end{lemma}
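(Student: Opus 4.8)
Write $\mu_Y:=p(\cdot\mid Y)$, let $\varphi_\sigma$ denote the density of $\mathcal N(0,\sigma^2I_d)$, and let $q_Y:=\mu_Y*\mathcal N(0,\sigma^2I_d)$ be the marginal density of $Z$ given $Y$. The first move is a symmetrization. Since $Z=X+\varepsilon$ with $\varepsilon$ independent of $(X,Y)$, Bayes' rule gives $p(x\mid Y,Z)\propto p(x\mid Y)\,\varphi_\sigma(Z-x)$, so $X\mid(Y,Z)\sim p(\cdot\mid Y,Z)$; and $\wh X\mid(Y,Z)\sim p(\cdot\mid Y,Z)$ by hypothesis, as a fresh draw independent of $X$. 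Hence, conditionally on $(Y,Z)$, the pair $(X,\wh X)$ is i.i.d.\ from $p(\cdot\mid Y,Z)$, and it suffices to bound
$\mathbb E_{Y,Z}\big[\nu(Y,Z)\big]$, where $\nu(Y,Z):=\Pr_{A,B\text{ i.i.d.}\,p(\cdot\mid Y,Z)}[\|A-B\|>r]$. The given assumption is the analogous statement without $Z$: $\mathbb E_{Y}\big[\nu(Y)\big]\le\delta$ with $\nu(Y):=\Pr_{A,B\text{ i.i.d.}\,\mu_Y}[\|A-B\|>r]$.

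Next I split on a ``good'' true point. Call $X$ good (given $Y$) if $\mu_Y\big(B(X,r)^c\big)\le\tfrac12$. Applying Markov's inequality to $\mathbb E_{Y,X\sim\mu_Y}\big[\mu_Y(B(X,r)^c)\big]=\mathbb E_Y[\nu(Y)]\le\delta$ shows $\Pr[X\text{ bad}]\le 2\delta$; on the bad event I bound the target probability trivially by $1$. So it remains to show that for good $X$, $\mathbb E_\varepsilon\big[\Pr_{\wh X\sim p(\cdot\mid Y,Z)}[\|\wh X-X\|>r]\big]=O\big(\mu_Y(B(X,r)^c)\big)$, which after averaging over $Y$ and good $X$ contributes $O(\delta)$, for a total of $3\delta$.

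The heart of the argument is a cancellation that is \emph{dimension-free}. For good $X$ the posterior escape probability is $\int_{B(X,r)^c}\varphi_\sigma(Z-x)\,\mu_Y(dx)\big/q_Y(Z)$. Since $Z\mid(Y,X)$ has density $\varphi_\sigma(\cdot-X)$, Fubini gives
\[
\mathbb E_\varepsilon\!\left[\frac{\int_{B(X,r)^c}\varphi_\sigma(Z-x)\,\mu_Y(dx)}{\varphi_\sigma(Z-X)}\right]
=\int_{B(X,r)^c}\!\Big(\int\varphi_\sigma(z-x)\,dz\Big)\mu_Y(dx)
=\mu_Y\big(B(X,r)^c\big),
\]
with no $d$-dependence. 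So it is enough to lower bound $q_Y(Z)$ by a constant times $\varphi_\sigma(Z-X)$. For a good $X$ the hypothesis forces $\mu_Y$ to put mass $1-o(1)$ in $B(X,r)$, and $r\le 2\delta\sigma\ll\sigma$, so $\varphi_\sigma(Z-\cdot)$ is nearly flat across $B(X,r)$; using $\|Z-X\|^2-\|Z-x\|^2=2\langle\varepsilon,x-X\rangle-\|x-X\|^2$ and $r\ll\sigma$ one gets $q_Y(Z)\ge(1-o(1))\,\varphi_\sigma(Z-X)$ except on a low-probability set of $\varepsilon$. Putting the pieces together, the good case contributes $\approx\frac{1}{1-o(1)}\mathbb E_Y[\nu(Y)]$ plus an $O(\delta)$ error from the atypical-$\varepsilon$ part, and the bound $\sigma\ge r/(2\delta)$ is exactly what makes all of the ``flatness'' factors and $\varepsilon$-tail losses $O(\delta)$ uniformly in $d$.

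The step I expect to be the main obstacle is this last $q_Y(Z)$ lower bound / $\varepsilon$-tail control: a crude estimate---taking $\inf_{x\in B(X,r)}\varphi_\sigma(Z-x)$, or bounding $\langle\varepsilon,x-X\rangle\le\|\varepsilon\|r$---reintroduces a factor $\exp\!\big(\Theta(r\sqrt d/\sigma)\big)$, which is not $O(1)$ under our hypothesis. Avoiding this forces one to keep the $\varepsilon$-average intact (as in the displayed identity) and to split the $x$-integral over $B(X,r)^c$ into a moderate regime, where $\|x-X\|\lesssim\sigma^2/r$ and the perturbation factors are $O(1)$, and a far regime, where $\varphi_\sigma(Z-x)$ is negligible with overwhelming $\varepsilon$-probability; balancing these two regimes so that the combined loss is $O(\delta)$ regardless of $d$ is the delicate part, and is precisely where $\sigma\ge r/(2\delta)$ enters.
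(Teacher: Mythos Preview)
Your direct-computation approach is different from the paper's, and the step you flag as the ``main obstacle'' is a genuine gap: you never actually establish the dimension-free lower bound $q_Y(Z)\gtrsim\varphi_\sigma(Z-X)$, and your proposed moderate/far split is aimed at the \emph{numerator} integral over $B(X,r)^c$, whereas the difficulty sits in the \emph{denominator}, which involves the mass of $\mu_Y$ inside $B(X,r)$. One can rescue the denominator via Jensen (using that $\mu_Y(B(X,r))\ge\tfrac12$ on your ``good'' event and that $\langle\varepsilon,\bar x-X\rangle/\sigma^2$ has standard deviation $\le r/\sigma\le 2\delta$), but this introduces several separate $O(\delta)$ losses---one for bad $X$, one for atypical $\varepsilon$, one for the Jensen slack---and it is not clear you land at exactly $3\delta$ rather than $C\delta$ for some larger $C$.

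The paper bypasses all of this with a one-line coupling. Draw a fresh $\wt X\sim p(X\mid Y)$ independent of $X$, and set $Z'=\wt X+\varepsilon'$ with $\varepsilon'\sim\mathcal N(0,\sigma^2I_d)$ independent of everything. Since $Z'$ is conditionally independent of $X$ given $Y$, one has $p(X\mid Y,Z')=p(X\mid Y)$, so $\Pr_{Y,Z',\wh X\sim p(\cdot\mid Y,Z')}[\|X-\wh X\|>r]\le\delta$ directly from the hypothesis. On the event $\|X-\wt X\|\le r$ (which has probability $\ge 1-\delta$), the laws of $Z$ and $Z'$ given $(X,\wt X,Y)$ are Gaussians with the same covariance and means at distance $\le r$, so by Pinsker $\TV\le r/(2\sigma)\le\delta$; hence $\TV\bigl(\mathcal L(Y,X,Z),\mathcal L(Y,X,Z')\bigr)\le 2\delta$, and transferring the bound from $Z'$ to $Z$ costs exactly this $2\delta$. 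Total: $3\delta$. The dimension-freeness you were fighting for comes for free here, because the only place Gaussian structure enters is the bound $\TV(\mathcal N(\mu_1,\sigma^2I_d),\mathcal N(\mu_2,\sigma^2I_d))\le\|\mu_1-\mu_2\|/(2\sigma)$, which does not depend on $d$.
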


\begin{proof}
Fix \(Y\) and draw an auxiliary point \(\wt X\sim p(X\mid Y)\).
Let \(Z'=\wt X+\varepsilon'\) with \(\varepsilon'\sim\mathcal{N}(0,\sigma^{2}I_{d})\)
independent of everything else.
On the event
\[
E=\{\lVert X-\wt X\rVert\le r\},
\]
\(Z\) and \(Z'\) are Gaussians with the same covariance \(\sigma^{2}I_{d}\) and
means \(X\) and \(\wt X\).
Pinsker’s inequality combined with the KL divergence between the two Gaussians gives
\[
\operatorname{TV}\bigl(\mathcal{N}(X,\sigma^{2}I_{d}),\mathcal{N}(\wt X,\sigma^{2}I_{d})\bigr)
\;\le\;\frac{\lVert X-\wt X\rVert}{2\sigma}\;\le\;\frac{r}{2\sigma}\;\le\;\delta.
\]
Hence
\[
\operatorname{TV}\bigl(\mathcal{L}(Y,X,Z),\mathcal{L}(Y,X,Z')\bigr)
\;\le\;\Pr[E^{c}]+\delta\;\le\;2\delta,
\]
because \(\Pr[E^{c}]\le\delta\) by the hypothesis on \(p(X\mid Y)\).

By construction,
\[
p(X\mid Y)=\mathbb{E}_{Z'\mid Y}\!\bigl[p(X\mid Y,Z')\bigr],
\]
so
\[
\Pr_{\,Y,Z',\;\wh X\sim p(X\mid Y,Z')}\!\bigl[\lVert X-\wh X\rVert\le r\bigr]\;\ge\;1-\delta.
\]
For the set \(A=\{(Y,Z,\wh X):\lVert X-\wh X\rVert>r\}\) the total-variation bound gives
\[
\bigl|\Pr_{Y,Z,\wh X}[A]-\Pr_{Y,Z',\wh X}[A]\bigr|\le 2\delta,
\]
whence
\[
\Pr_{\,Y,Z,\;\wh X}\bigl[\lVert X-\wh X\rVert\le r\bigr]\;\ge\;1-\delta-2\delta
=1-3\delta.\qedhere
\]
\end{proof}

This implies the following lemma:
\begin{lemma}
    \label{lem:first_two_conditions}
    Consider the random variables in~\cref{alg:compressed_sensing}. Suppose that 
    \begin{itemize}
        \item Information theoretically, it is possible to recover $\wh{x}$ from $y$ satisfying $\norm{\wh{x} - x} \leq r$ with probability $1-\delta$ over $x \sim p$ and $y$.
        \item 
        $\Prb{\norm{x_0 - x} \leq R} \ge 1 - \delta$.
    \end{itemize}
    Then drawing sample $\wh x \sim p(x \mid x + \mathcal{N}(0, \sigma^2 I_d) = x_0', Ax + \xi = y)$ would give that \[
        \Prb{\|x - \wh x\| \le 2r} \ge 1 - O(\delta).
    \]
\end{lemma}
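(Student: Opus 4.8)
The plan is to compare the real sampling process of \cref{alg:compressed_sensing} with an idealized version in which the auxiliary point is an honest Gaussian measurement of $x$, and then transfer the conclusion between the two in total variation. Write $\sigma = R/\delta$, let $p(y\mid x) = \phi_{\eta^2 I_m}(y-Ax)$, and let $q(x\mid y,z)\propto p(x)\,p(y\mid x)\,\phi_{\sigma^2 I_d}(z-x)$ denote the ``pretend posterior'' that \cref{alg:gaussian_main} (hence \cref{line:call_Gaussian}) samples from when fed the auxiliary point $z$. I would set up two couplings of the tuple $(x,y,z,\hat x)$: the real process $\mathcal A$, with $x\sim p$, $y=Ax+\xi$, $x_0=x_0(y)$, $z=x_0'=x_0+\mathcal N(0,\sigma^2 I_d)$ and $\hat x\sim q(\cdot\mid y,x_0')$; and an idealized process $\mathcal B$, with the same $(x,y)$ but $z=\tilde x_0' := x + \mathcal N(0,\sigma^2 I_d)$ (noise independent of $(x,y)$) and $\hat x\sim q(\cdot\mid y,\tilde x_0')$. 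The crucial observation is that in $\mathcal B$ Bayes' rule makes $q(\cdot\mid y,\tilde x_0')$ \emph{exactly} the true posterior $p\bigl(x\mid y,\,[\,x+\mathcal N(0,\sigma^2 I_d)=\tilde x_0'\,]\bigr)$, so conditioned on $(y,\tilde x_0')$ the values $x$ and $\hat x$ are i.i.d.\ draws from it; this is precisely what fails in $\mathcal A$, where $x_0'$ is conditionally independent of $x$ given $y$.

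In $\mathcal B$ I would invoke the competitive guarantee for posterior sampling. First, the standard ``two samples'' argument in the no-auxiliary model: if $f(y)$ is the (possibly exponential-time) estimator of the first hypothesis with $\Pr[\|f(y)-x\|\le r]\ge 1-\delta$, then, conditioned on $y$, an independent posterior draw $\hat x\sim p(\cdot\mid y)$ has the same law as $x$, so $\Pr[\|f(y)-\hat x\|>r]=\Pr[\|f(y)-x\|>r]\le\delta$, and the triangle inequality gives $\Pr_{y,\hat x\sim p(x\mid y)}[\|x-\hat x\|>2r]\le 2\delta$. This is exactly the hypothesis of \cref{lem:extra_gaussian_no_hurt} with radius $2r$ and failure probability $2\delta$; since $r\le R$ (the naive recovery is itself an information-theoretic recovery at radius $R$, so the optimum cannot be worse), we have $\sigma=R/\delta\ge r/(2\delta)$, so the lemma applies with $Y=y$ and $Z=\tilde x_0'$ and yields $\Pr_{\mathcal B}[\|x-\hat x\|\le 2r]\ge 1-6\delta$.

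To transfer to $\mathcal A$, I would bound $\TV\bigl(\mathrm{Law}_{\mathcal A}(x,y,x_0'),\mathrm{Law}_{\mathcal B}(x,y,\tilde x_0')\bigr)$. The laws of $(x,y)$ agree, and conditioned on $(x,y)$ the auxiliary point is $\mathcal N(x_0(y),\sigma^2 I_d)$ in $\mathcal A$ versus $\mathcal N(x,\sigma^2 I_d)$ in $\mathcal B$; Pinsker gives $\TV\bigl(\mathcal N(a,\sigma^2 I_d),\mathcal N(b,\sigma^2 I_d)\bigr)\le \|a-b\|/(2\sigma)$. Splitting on $\{\|x_0(y)-x\|\le R\}$ — probability $\ge 1-\delta$ by the second hypothesis, and on it the bound is $\le R/(2\sigma)=\delta/2$ — gives $\TV\le \delta+\delta/2=\tfrac32\delta$. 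Since both processes obtain $\hat x$ by the \emph{same} Markov kernel $(y,z)\mapsto q(\cdot\mid y,z)$, the data-processing inequality preserves this, so the laws of $(x,y,z,\hat x)$ differ by $\TV\le\tfrac32\delta$; evaluating on the event $\{\|x-\hat x\|\le 2r\}$ and combining with the $\mathcal B$ bound gives $\Pr_{\mathcal A}[\|x-\hat x\|\le 2r]\ge 1-6\delta-\tfrac32\delta=1-O(\delta)$, which is the claim, since $\hat x$ in $\mathcal A$ is exactly the sample of \cref{line:call_Gaussian}.

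The main obstacle here is conceptual rather than computational: \cref{alg:compressed_sensing} samples from a posterior under a \emph{misspecified} measurement model (it pretends $x_0'$ is a direct Gaussian measurement of $x$, whereas $x_0'$ was really built from the warm start $x_0$ and carries information about $x$ only through $y$), so the competitive guarantee for posterior sampling does not apply to it directly. The fix is to analyze the correctly-specified model $\mathcal B$, where that guarantee is literally true, and absorb the mismatch into an $O(\delta)$ total-variation cost; the inflated noise level $\sigma=R/\delta$ is exactly what keeps this cost $O(\delta)$. Everything else — the two-samples argument and the Gaussian total-variation estimate — is routine.
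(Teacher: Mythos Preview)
Your proposal is correct and follows essentially the same route as the paper: invoke the two-samples competitive guarantee (the paper cites \cite{jalal2021robust}) to get $\Pr[\|x-\hat x\|\le 2r]\ge 1-2\delta$ under $p(x\mid y)$, apply \cref{lem:extra_gaussian_no_hurt} to pass to $p(x\mid y,\tilde x_0')$ for a true Gaussian measurement $\tilde x_0'$, and then transfer to the actual $x_0'$ via the $O(\delta)$ total-variation bound between $\mathcal N(x,\sigma^2 I_d)$ and $\mathcal N(x_0,\sigma^2 I_d)$ on the event $\|x_0-x\|\le R$. Your write-up is more explicit than the paper's (the $\mathcal A/\mathcal B$ coupling, the data-processing step, and the $r\le R$ justification for the $\sigma\ge r/(2\delta)$ condition are all spelled out), but the argument is the same.
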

\begin{proof}
    By~\cite{jalal2021robust}, the first condition implies that, 
    \[
        \Prb[x, y, \wh x \sim p(x \mid y)]{\|x - \wh x\| \le 2r} \ge 1 - 2\delta.
    \]
    Then by~\cref{lem:extra_gaussian_no_hurt}, suppose we have $x' = x + \mathcal{N}({0, \sigma^2 I_d})$, then \[
        \Prb[x, y, \wh x \sim p(x \mid y, x + \mathcal{N}({0, \sigma^2 I_d}) = x')]{\|x - \wh x\| \le 2r} \ge 1 - 6\delta.
    \]
    Note that whenever $\|x - x_0\| \le r$, we have \[
        \TV(x' \mid x, x_0, x_0' \mid x, x_0) \le \delta.
    \]
    This proves that \[
        \Prb[x, y, \wh x \sim p(x \mid y, x + \mathcal{N}({0, \sigma^2 I_d}) = x_0')]{\|x - \wh x\| \le 2r} \ge 1 - 6\delta.
    \]
\end{proof}

\begin{lemma}
       Consider attempting to accurately reconstruct $x$ from $y = Ax + \xi$.  
    Suppose that:
    \begin{itemize}
        \item Information theoretically, it is possible to recover $\wh{x}$ from $y$ satisfying $\norm{\wh{x} - x} \leq r$ with probability $1-\delta$ over $x \sim p$ and $y$.
        \item We have access to a ``naive'' algorithm that recovers $x_0$ from $y$ satisfying $\norm{x_0 - x} \leq R$ with probability $1-\delta$ over $x \sim p$ and $y$.
    \end{itemize}
    Let $\rho = \frac{\|A\|R}{\eta\delta}$. There exists
\[
   K \le \wt O \tuple{\frac{1}{ \delta^2}\tuple{\frac{\rho^2 \tuple{ \tuple{ m^2 \rho^4 + 1} d^3 + m^{3}\rho^2 + dm} }{\sqrt{m}} + \frac{m}{d} +  \log d}}.
\]
such that: suppose for $R' = (R / \delta) \cdot \sqrt{\frac{K\sqrt{m}}{\rho^2} + 4\tau}$,
        \[
            \Prb[x' \sim p]{\forall x \in B(x', R'): -L I_d \preceq \grad^2\log p(x) \preceq (\tau / R')^2I_d} \ge 1 - \delta.
        \]
    Then we give an algorithm that recovers $\wh{x}$ satisfying $\norm{\wh{x} - x} \leq 2 r$ with probability $1 - O(\delta)$, in $\poly(d,m,\frac{\norm{A}R}{\eta}, \frac{1}{\delta})$ time, under Assumption~\ref{assumption:m4} with $\eps_{score} < \frac{1}{K^2 \sqrt{m} (R / \delta)}$.
\end{lemma}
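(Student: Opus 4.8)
The plan is to run Algorithm~\ref{alg:compressed_sensing} and chain the two pieces already in hand. Set $\sigma = R/\delta$ as the algorithm does, so the effective signal-to-noise quantity for the Gaussian-augmented problem is $\rho = \|A\|\sigma/\eta = \|A\|R/(\eta\delta)$, exactly the $\rho$ in the statement; the hypothesis $R' = (R/\delta)\sqrt{K\sqrt m/\rho^2 + 4\tau}$ then reads $(R')^2 = \sigma^2(K\sqrt m/\rho^2 + 4\tau)$, which is precisely the radius condition required by Lemma~\ref{lem:compressed_sampling}, and $\eps_{\mathrm{score}} < \tfrac{1}{K^2\sqrt m(R/\delta)}$ is the score-accuracy condition it needs (Corollary~\ref{lem:score_accuracy_} converts Assumption~\ref{assumption:m4} for $p$ into $L^4$ accuracy for the shifted prior $p_{x_0'}$ at the cost of a $\poly(1/\delta)$ factor absorbed into $K$). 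First I would run the naive algorithm to obtain $x_0$ and work on the event $\mathcal G := \{\|x_0 - x\|\le R\}$, which has probability $\ge 1-\delta$.

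Conditioned on $\mathcal G$, Lemma~\ref{lem:compressed_sampling} gives that with probability $1-O(\delta)$ over $(x_0',y)$ the returned law $\widehat p$ satisfies $\TV\bigl(\widehat p,\ p^\ast\bigr)\le\delta$, where $p^\ast := p\bigl(x\mid x+\mathcal N(0,\sigma^2 I_d)=x_0',\ Ax+\xi=y\bigr)$. Separately, Lemma~\ref{lem:first_two_conditions}---whose two hypotheses are exactly our information-theoretic recovery and naive-recovery assumptions---gives that $\widehat x\sim p^\ast$ satisfies $\|\widehat x - x\|\le 2r$ with probability $1-O(\delta)$; here one may assume $r\le R$ without loss of generality (the naive guarantee is itself an information-theoretic recovery at confidence $1-\delta$, so if $r>R$ replace $r$ by $R$ and note $2R \le 2r$), which makes $\sigma = R/\delta$ large enough for the instance of Lemma~\ref{lem:extra_gaussian_no_hurt} invoked inside Lemma~\ref{lem:first_two_conditions}. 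A short coupling then combines the two: on the event $\TV(\widehat p,p^\ast)\le\delta$, replacing a draw $\widehat x\sim p^\ast$ by $\widehat x\sim\widehat p$ changes $\Pr[\|\widehat x-x\|\le 2r]$ by at most $\delta$, so a union bound over the three $O(\delta)$-failure events ($\mathcal G^c$, the TV bound failing, the accuracy bound failing) yields $\Pr[\|\widehat x-x\|\le 2r]\ge 1-O(\delta)$. The running time is read off directly from the iteration-complexity bound of Lemma~\ref{lem:compressed_sampling}: with $K$ and $\rho$ both $\poly(d,m,\|A\|R/\eta,1/\delta)$, the total is $\poly(d,m,\|A\|R/\eta,1/\delta)$.

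I do not expect a real obstacle in this final step---all the analytic work (Langevin mixing between nearby posteriors, Girsanov control of the $L^4$ score error, the admissible-schedule construction, and the Gaussian-measurement reduction) is already done. The only things to watch are bookkeeping: making sure the conditioning on $\mathcal G$ is compatible with the "over $x\sim p$ and $y$" probabilities of Lemma~\ref{lem:first_two_conditions} (it is, since $\mathcal G$ depends only on $(x,y)$ and Lemma~\ref{lem:first_two_conditions}'s statement already folds such an event into its $1-O(\delta)$), and checking that the $\sigma=R/\delta$ inflation is both needed for Lemma~\ref{lem:extra_gaussian_no_hurt} and harmless, since it only multiplies $\rho$ by $1/\delta$ and hence keeps the complexity polynomial.
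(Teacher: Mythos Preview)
Your proposal is correct and follows essentially the same approach as the paper: invoke Lemma~\ref{lem:compressed_sampling} to get $\TV(\widehat p, p^\ast)\le\delta$ with probability $1-O(\delta)$, invoke Lemma~\ref{lem:first_two_conditions} to get that a draw from $p^\ast$ is $2r$-accurate with probability $1-O(\delta)$, and combine. The paper's own proof is two sentences; you have fleshed out the bookkeeping (the $r\le R$ WLOG for the $\sigma\ge r/(2\delta)$ condition inside Lemma~\ref{lem:extra_gaussian_no_hurt}, the conditioning on $\mathcal G$, the coupling to pass the TV bound through) more carefully than the paper does.
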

\begin{proof}
    By our assumption and~\cref{lem:compressed_sampling}, we have that we are sampling from $p(x \mid x + \mathcal{N}(0, \sigma^2 I_d) = x_0', Ax + \xi = y)$ with $\delta$ TV error with $1 - O(\delta)$ probability. By~\cref{lem:first_two_conditions}, this would recover $x$ within distance $2r$ with $1 - O(\delta)$ probaility. Combining the two gives the result.
\end{proof}

Setting $\tau = 0$ would give \cref{cor:compressed_sensing} as a corollary.

\compressedSensing*

\subsection{Ring example}
\label{sec:ring}

Let \(w\in(0,0.01)\) and let \(p_0\) be the uniform probability measure on the unit circle
\(S^{1}=\{x\in\mathbb R^{2}:\|x\|=1\}\).
Define the \emph{circle–Gaussian mixture}
\[
   p(x)\;=\;(p_0\ast\mathcal N(0,w^{2}I_2))(x)
   \;=\;\frac1{2\pi}\int_{0}^{2\pi}\frac1{2\pi w^{2}}
          \exp\!\Bigl(-\tfrac{\|x-(\cos\theta,\sin\theta)\|^{2}}{2w^{2}}\Bigr)\,d\theta,
   \qquad x\in\mathbb R^{2}.
\]

\begin{lemma}\label{lem:hessian-global}
For any \(x\in\R^{2}\) with radius \(r=\|x\|>0\), the Hessian of
the log–density satisfies
\[
   \nabla^{2}\log p(x)
   \;\preceq\;
   \begin{cases}
      \bigl(\tfrac1{2 w^{4}}-\tfrac1{w^{2}}\bigr)I_{2}, & 0<r\le w^{2},\\[6pt]
      \bigl(\tfrac1{w^{2}r}-\tfrac1{w^{2}}\bigr)I_{2}, & w^{2}<r\le1,\\[6pt]
      0, & r>1.
   \end{cases}
\]
\end{lemma}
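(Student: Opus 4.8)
The plan is to write $p$ as the Gaussian-convolution mixture $p(x)=\mathbb{E}_{u}[\phi_{w^2}(x-u)]$, where $u$ is uniform on the unit circle $S^1$ and $\phi_{w^2}$ is the $\mathcal{N}(0,w^2I_2)$ density, and then exploit the standard ``Tweedie'' identities for such mixtures. Writing $m(x)=\mathbb{E}[u\mid x]$ for the posterior mean of the latent $u$ under the Gaussian channel $x\mid u\sim\mathcal{N}(u,w^2I_2)$, one has $\nabla\log p(x)=(m(x)-x)/w^2$ and $\nabla_x m(x)=w^{-2}\,\mathrm{Cov}(u\mid x)$, hence
\[
\nabla^2\log p(x)\;=\;\frac1{w^4}\,\mathrm{Cov}(u\mid x)\;-\;\frac1{w^2}I_2 .
\]
So it suffices to upper bound $\lambda_{\max}(\mathrm{Cov}(u\mid x))$. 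The posterior density of $u$ on $S^1$ is $\propto e^{\langle x,u\rangle/w^2}$ (since $\|u\|=1$), and everything is rotationally symmetric, so I would assume without loss of generality that $x=(r,0)$ with $r=\|x\|>0$; then $u=(\cos\Theta,\sin\Theta)$ with $\Theta$ a von Mises variable of concentration $\kappa:=r/w^2$ centered at $0$.

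Next I would compute $\mathrm{Cov}(u\mid x)$ explicitly. It is diagonal in the radial/tangential frame $\{(1,0),(0,1)\}$, because its off-diagonal entry is $\mathbb{E}[\cos\Theta\sin\Theta]=\tfrac12\mathbb{E}[\sin2\Theta]=0$ by symmetry. Using the von Mises moments $\mathbb{E}\cos\Theta=A_1(\kappa)$ and $\mathbb{E}\cos2\Theta=A_2(\kappa)$, where $A_j(\kappa)=I_j(\kappa)/I_0(\kappa)$, together with the Bessel recurrence $I_0-I_2=\tfrac{2}{\kappa}I_1$ and the derivative identity $A_1'(\kappa)=1-A_1(\kappa)/\kappa-A_1(\kappa)^2$ (from $I_0'=I_1$, $I_1'=I_0-\tfrac1\kappa I_1$), the two eigenvalues of $\mathrm{Cov}(u\mid x)$ come out to $\mathrm{Var}(\cos\Theta)=A_1'(\kappa)$ (radial) and $\mathrm{Var}(\sin\Theta)=\tfrac{1-A_2(\kappa)}{2}=\tfrac{A_1(\kappa)}{\kappa}$ (tangential). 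Consequently $\nabla^2\log p(x)$ has the two eigenvalues $\tfrac{A_1'(\kappa)}{w^4}-\tfrac1{w^2}$ and $\tfrac{A_1(\kappa)}{w^2 r}-\tfrac1{w^2}$.

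Then I would show the tangential eigenvalue is the larger one, i.e. $\kappa A_1'(\kappa)\le A_1(\kappa)$. Substituting $A_1'=1-A_1/\kappa-A_1^2$ and $\tfrac{2A_1}{\kappa}=1-A_2$, this is equivalent to $A_2(\kappa)\le A_1(\kappa)^2$, i.e. to the Turán-type inequality $I_1(\kappa)^2\ge I_0(\kappa)I_2(\kappa)$ for modified Bessel functions, which is classical. Hence $\lambda_{\max}(\nabla^2\log p(x))=\tfrac{A_1(r/w^2)}{w^2 r}-\tfrac1{w^2}$ for all $r>0$. Finally I would close with the elementary bound $A_1(\kappa)\le\min(\kappa/2,\,1)$ — the estimate $A_1(\kappa)\le\kappa/2$ follows by comparing the power series of $\tfrac{2}{\kappa}I_1(\kappa)$ and $I_0(\kappa)$ term by term, and $A_1(\kappa)\le1$ from $I_0(\kappa)=\tfrac1\pi\int_0^\pi e^{\kappa\cos\theta}\,d\theta\ge\tfrac1\pi\int_0^\pi e^{\kappa\cos\theta}\cos\theta\,d\theta=I_1(\kappa)$ — and split into cases: for $0<r\le w^2$ we have $\kappa\le1$, so $A_1(\kappa)\le\kappa/2$ yields $\lambda_{\max}\le\tfrac{1}{2w^4}-\tfrac1{w^2}$; for $w^2<r\le1$ the bound $A_1(\kappa)\le1$ yields $\lambda_{\max}\le\tfrac{1}{w^2 r}-\tfrac1{w^2}$; and for $r>1$ the same gives $\lambda_{\max}\le\tfrac{1}{w^2 r}-\tfrac1{w^2}<0$, so $\nabla^2\log p(x)\preceq0$. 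The only non-routine ingredient is the Turán inequality step (equivalently, the fact that $\mathrm{Var}(\cos\Theta)\le\mathrm{Var}(\sin\Theta)$ for a centered von Mises law); it can be cited, or derived from the log-concavity of the sequence $n\mapsto I_n(\kappa)$. Everything else is bookkeeping with Bessel-function identities and the two Tweedie formulas.
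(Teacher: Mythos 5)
Your proof is correct, and it reaches the same final quantities as the paper by a somewhat different route. The paper works directly with the radial form $p(r)\propto e^{-(r^{2}+1)/(2w^{2})}I_{0}(r/w^{2})$, differentiates $f=\log p(r)$ twice, identifies the Hessian eigenvalues as $f''(r)$ (radial) and $f'(r)/r$ (tangential), uses the Tur\'an inequality to argue the tangential one dominates, and then bounds $I_{1}/I_{0}\le\min(z/2,1)$ exactly as you do. You instead go through the mixture identity $\nabla^{2}\log p(x)=w^{-4}\mathrm{Cov}(u\mid x)-w^{-2}I_{2}$ (the same score--covariance identity the paper proves for the lower bound in Lemma~\ref{lem:ring_Hessian_lower}) and compute the posterior covariance of the von Mises law, getting eigenvalues $\tfrac{A_1'(\kappa)}{w^{4}}-\tfrac1{w^{2}}$ and $\tfrac{A_1(\kappa)}{w^{2}r}-\tfrac1{w^{2}}$ with $\kappa=r/w^{2}$, $A_j=I_j/I_0$; these agree with the paper's $f''(r)$ and $f'(r)/r$. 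What your packaging buys: the two eigenvalues appear as variances, so their nonnegativity and the radial-versus-tangential comparison are transparent, and your reduction $\kappa A_1'\le A_1\iff A_2\le A_1^{2}\iff I_1^{2}\ge I_0I_2$ uses Tur\'an in exactly the form needed. This is in fact slightly cleaner than the paper's step: the paper's displayed formula for $f''(r)$ has a small algebraic slip (the correct value is $-\tfrac1{w^{2}}+\tfrac{A_1'(z)}{w^{4}}$ with $A_1'=\tfrac{I_0^{2}+I_0I_2-2I_1^{2}}{2I_0^{2}}$, and since $A_1'\ge 0$ the claim $f''\le-1/w^{2}$ is not what Tur\'an gives), but the conclusion is unaffected because, as in your argument, only the comparison $\lambda_{r}\le\lambda_{t}$ is needed and that is precisely Tur\'an. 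Your elementary proofs of $A_1(\kappa)\le\kappa/2$ (termwise series comparison) and $A_1(\kappa)\le1$ (integral representation) and the ensuing three-case bookkeeping match the paper's conclusion exactly.
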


\begin{proof}
Rotational invariance gives \(p(x)=p(r)\) with
\[
   p(r)=\frac{1}{2\pi w^{2}}
         \exp\!\Bigl(-\frac{r^{2}+1}{2w^{2}}\Bigr)
         I_{0}\!\Bigl(\frac{r}{w^{2}}\Bigr),\qquad r\ge0.
\]
Write \(f(r)=\log p(r)\) and set \(z=r/w^{2}>0\).
Using \(I_{0}'(z)=I_{1}(z)\), we get the first and second derivatives:
\[
   f'(r)=\frac{-r+I_{1}(z)/I_{0}(z)}{w^{2}},\qquad
   f''(r)=-\frac1{w^{2}}
           +\frac{I_{0}(z)I_{2}(z)-I_{1}(z)^{2}}{w^{4}I_{0}(z)^{2}}.
\]

For \(r>0\), the eigenvalues of \(\nabla^{2}\log p\) are
\[
   \lambda_{r}(r)=f''(r),\qquad
   \lambda_{t}(r)=\frac{f'(r)}{r}.
\]

The Turán inequality \(I_{1}(z)^{2}-I_{0}(z)I_{2}(z)\ge0\) implies
\(\lambda_{r}(r)\le -1/w^{2}\); thus, the largest eigenvalue is
\(\lambda_{t}(r)\).

Since \(I_{1}(z)/I_{0}(z)\le1\) for all \(z>0\) and
\(I_{1}(z)/I_{0}(z)\le z/2\) for \(0<z\le1\),
\[
   \lambda_{t}(r)=
     -\frac1{w^{2}}+\frac1{w^{2}r}\,\frac{I_{1}(z)}{I_{0}(z)}
     \;\le\;
     \begin{cases}
       -\dfrac1{w^{2}}+\dfrac1{2w^{4}}, & 0<r\le w^{2},\\[6pt]
       -\dfrac1{w^{2}}+\dfrac1{w^{2}r}, & w^{2}<r\le1,\\[6pt]
       0, & r>1.
     \end{cases}
\]
\end{proof}

\newcommand{\Cov}{\mathrm{Cov}}
\begin{lemma}
\label{lem:ring_Hessian_lower}
For every \(x\in\R^2\), we have
\[
\nabla^2\log p(x)\;\succeq\;-\,\frac1{w^2}\,I_2.
\]
\end{lemma}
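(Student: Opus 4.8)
The plan is to derive Lemma~\ref{lem:ring_Hessian_lower} from a general fact that holds for \emph{any} Gaussian convolution: if $q$ is a probability measure on $\R^d$ and $p := q * \mathcal N(0,\sigma^2 I_d)$, then $\nabla^2\log p(x)\succeq -\sigma^{-2}I_d$ for every $x$. The lemma is then the instance $d=2$, $\sigma=w$, $q=p_0$. So there is no need to touch the Bessel-function computation of Lemma~\ref{lem:hessian-global}; the lower bound is a soft, structure-free statement.

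First I would set up the standard denoising (Tweedie-type) identity. Let $(Y,X)$ be jointly distributed with $Y\sim q$ and $X=Y+\mathcal N(0,\sigma^2 I_d)$, so $X$ has density $p(x)=\int \phi_{\sigma^2}(x-y)\,q(dy)$, where $\phi_{\sigma^2}$ denotes the centered Gaussian density of covariance $\sigma^2 I_d$. Differentiating under the integral sign (legitimate here since $q=p_0$ has compact support and the Gaussian kernel is smooth with rapidly decaying derivatives) and using $\nabla_x\phi_{\sigma^2}(x-y)=-\sigma^{-2}(x-y)\phi_{\sigma^2}(x-y)$ gives
\[
\nabla\log p(x)=\sigma^{-2}\bigl(\E[Y\mid X=x]-x\bigr),
\]
where the conditional law of $Y$ given $X=x$ has density proportional to $\phi_{\sigma^2}(x-y)\,q(dy)$.

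Next I would differentiate once more. Writing $\E[Y\mid X=x]=m(x)/p(x)$ with $m(x)=\int y\,\phi_{\sigma^2}(x-y)\,q(dy)$, the quotient rule together with the same expression for $\nabla_x\phi_{\sigma^2}$ yields, after the two terms containing $\E[Y\mid X=x]\,x^\top$ cancel, the clean Jacobian identity $\nabla_x\E[Y\mid X=x]=\sigma^{-2}\,\Cov(Y\mid X=x)$. Substituting back,
\[
\nabla^2\log p(x)=\sigma^{-2}\Bigl(\sigma^{-2}\,\Cov(Y\mid X=x)-I_d\Bigr)\;\succeq\;-\sigma^{-2}I_d,
\]
since any conditional covariance matrix is positive semidefinite. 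Taking $d=2$, $\sigma=w$, $q=p_0$ proves Lemma~\ref{lem:ring_Hessian_lower}.

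The computation is entirely routine; the only point requiring care is the matrix bookkeeping in the second differentiation — in particular verifying the cancellation of the $\E[Y\mid X=x]\,x^\top$ terms — together with a one-line justification of differentiation under the integral, which is immediate because $p_0$ is supported on the compact set $S^1$. (Alternatively, one could invoke the heat-flow characterization: $u=\log p$ solves $\partial_t u=\tfrac12(\Delta u+\|\nabla u\|^2)$ and $\lambda_{\max}(\nabla^2 u)$ obeys a maximum-principle ODE bound; but the direct identity above is shorter and self-contained.)
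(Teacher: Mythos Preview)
Your proof is correct and takes essentially the same approach as the paper: both derive Tweedie's formula $\nabla\log p(x)=\sigma^{-2}(\E[Y\mid X=x]-x)$, differentiate once more to obtain $\nabla^2\log p(x)=\sigma^{-4}\Cov(Y\mid X=x)-\sigma^{-2}I_d$, and conclude from positive semidefiniteness of the conditional covariance. The only difference is that you state the argument in the general form for arbitrary $q*\mathcal N(0,\sigma^2 I_d)$ and are slightly more explicit about the cancellation in the quotient-rule computation and the justification for differentiating under the integral.
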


\begin{proof}
Write \(u=(\cos\theta,\sin\theta)\) and
\[
p(x)
=\frac1{2\pi}\int_0^{2\pi}\frac1{2\pi w^2}\,e^{-\|x-u\|^2/(2w^2)}\,d\theta.
\]
Differentiating under the integral gives
\[
\nabla p(x)
=\int\Bigl(-\tfrac{x-u}{w^2}\Bigr)\,\frac{1}{2\pi}\frac{1}{2\pi w^2}
   e^{-\|x-u\|^2/(2w^2)}\,d\theta
=-\tfrac1{w^2}\,p(x)\,\bigl(x-\E[u\mid x]\bigr),
\]
so
\[
\nabla\log p(x)
=-\frac{x-\E[u\mid x]}{w^2}.
\]
Differentiating once more,
\[
\nabla^2\log p(x)
=-\frac{I_2}{w^2}+\frac1{w^2}\,\nabla\E[u\mid x].
\]
A standard score–covariance identity shows
\[
\nabla\E[u\mid x]
=\Cov_{\,u\mid x}\!\bigl(u,\tfrac{x-u}{w^2}\bigr)
=\frac1{w^2}\,\Cov_{\,u\mid x}(u),
\]
hence
\[
\nabla^2\log p(x)
=\frac{\Cov_{\,u\mid x}(u)}{w^4}-\frac{I_2}{w^2}.
\]
Since \(\Cov_{\,u\mid x}(u)\succeq0\), it follows that
\[
\nabla^2\log p(x)
\succeq-\,\frac1{w^2}\,I_2,
\]
as claimed.
\end{proof}

\begin{lemma}
    For any $w \in (0, 1/2)$, we have that    
    \[
            \Prb[x' \sim p]{\forall x \in B(x', 1/2): - \frac{1}{w^2} I_d \preceq \grad^2\log p(x) \preceq \frac{1}{2w^2}I_d} \ge 1 - e^{-\Omega(1 / w^2)}.
    \]
\end{lemma}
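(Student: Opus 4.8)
The plan is to treat the two one-sided bounds separately, since the left inequality is deterministic while the right one carries all of the randomness over $x'$.

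\emph{The lower bound is free.} By \cref{lem:ring_Hessian_lower} we have $\grad^2\log p(x)\succeq-\tfrac1{w^2}I_2$ for \emph{every} $x\in\R^2$, so the constraint $-\tfrac1{w^2}I_d\preceq\grad^2\log p(x)$ holds pointwise throughout $B(x',1/2)$ and contributes nothing to the failure probability. It remains to control the upper bound $\grad^2\log p(x)\preceq\tfrac1{2w^2}I_d$.

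\emph{Reduce the upper bound to a statement about $\|x'\|$.} \cref{lem:hessian-global} shows that $\lambda_{\max}(\grad^2\log p(x))$ is a function of $r=\|x\|$ alone, is non-increasing in $r$, equals $\tfrac1{w^2}\!\bigl(\tfrac1r-1\bigr)$ on $(w^2,1]$, and is non-positive for $r>1$; in particular it stays at most $\tfrac1{2w^2}$ once $r$ is bounded below by an absolute constant. Since every $x\in B(x',1/2)$ satisfies $\|x\|\ge\|x'\|-\tfrac12$, it suffices to show that with probability $1-e^{-\Omega(1/w^2)}$ the sample $x'$ lies in a thin annulus $\{\,\bigl|\|x'\|-1\bigr|\le c\,\}$ about the unit circle for a small absolute constant $c$; on that event the whole ball sits in the regime $\|x\|>w^2$ (using $w<1/2$), so only the middle branch of \cref{lem:hessian-global} is ever triggered, and plugging the lower bound on $\|x\|$ into it yields the desired Hessian bound at every $x\in B(x',1/2)$. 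For the annulus estimate, write $x'=u+g$ with $u$ uniform on $S^1$ and $g\sim\mathcal N(0,w^2I_2)$ independent; the triangle inequality gives $\bigl|\|x'\|-1\bigr|\le\|g\|$, and $\Pr[\|g\|>c]=\Pr[\|z\|^2>c^2/w^2]$ for $z\sim\mathcal N(0,I_2)$, which is $e^{-\Omega(c^2/w^2)}=e^{-\Omega(1/w^2)}$ by the chi-squared tail (\cref{lem:chi_squared_concentration}, or directly since $\|z\|^2$ is exponential with mean $2$). Combining this with \cref{lem:ring_Hessian_lower} on the complementary event gives exactly the claimed two-sided bound.

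The only step that needs genuine care — the main obstacle — is the bookkeeping of constants in this reduction: the curvature bound $\tfrac1{w^2}\!\bigl(\tfrac1r-1\bigr)$ only drops below the target $\tfrac1{2w^2}$ once $r\ge\tfrac23$, so the annulus width $c$ (and the effective ball radius) must be chosen small enough that $\|x\|\ge\tfrac23$ holds throughout the ball around a typical $x'$; verifying that these constants are consistent with the stated bound $\tfrac1{2w^2}I_d$ is where the quantitative content of the lemma lies, while everything else is a direct appeal to \cref{lem:ring_Hessian_lower}, \cref{lem:hessian-global}, and a one-line Gaussian tail bound.
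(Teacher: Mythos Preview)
Your proposal is correct and follows exactly the paper's approach: the lower bound is pointwise from \cref{lem:ring_Hessian_lower}, and for the upper bound one shows $\|x'\|$ is bounded away from the origin with probability $1-e^{-\Omega(1/w^2)}$ (the paper simply states $\Pr[\|x'\|>3/4]\ge 1-e^{-\Omega(1/w^2)}$) and then applies \cref{lem:hessian-global}. You are right that the constant bookkeeping is delicate --- with ball radius $1/2$ the worst-case $\|x\|$ in the ball is only about $1/2$, which via the middle branch of \cref{lem:hessian-global} gives $\lambda_{\max}\le 1/w^2$ rather than the stated $1/(2w^2)$ --- and the paper's two-line proof shares this same tension; the lemma is evidently meant up to absolute constants.
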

\begin{proof}
    Note that \[
     \Prb[x \sim p]{\|x\| > 3/4} \ge 1 - e^{-\Omega(1/w^2)}.
    \]
    The rest follows by combining~\cref{lem:hessian-global} and~\cref{lem:ring_Hessian_lower}.
\end{proof}

Hence, we can apply~\cref{thm:gaussian_main} on our ring distribution $p$ and get the following corollary:

\begin{corollary}
    Let $A \in \R^{C \times 2}$ be a matrix for some constant $C > 0$.
    Consider $x \sim p$ with two measurements given by \[
        x_0 = x + N(0, \sigma^2I_2) \quad \text{and} \quad y = Ax + N(0, \eta^2 I_2).
        \]
        Suppose $\|A\| w / \eta = O(1)$.
    Then, if $\sigma \le c w$ and $\eps_{score} \le c w^{-1}$ for sufficiently small constant $c > 0$, \cref{alg:gaussian_main} takes a constant number of iterations to sample from a distribution $\wh p (x \mid x_0, y)$ such that \[
     \Ex[x_0, y]{\TV(\wh p (x \mid x_0, y), p(x \mid {x_0}, y)) } < 0.01. 
     \]
\end{corollary}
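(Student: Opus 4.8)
The plan is to obtain the corollary as a direct instantiation of \cref{thm:gaussian_main} on the ring distribution $p$, with the Hessian bounds of the preceding lemma (which combines \cref{lem:hessian-global} and \cref{lem:ring_Hessian_lower}) supplying the required local structure. Concretely, I would take $R := 1/2$; the preceding lemma then gives
\[
\Pr_{x'\sim p}\Bigl[\forall x\in B(x',1/2):\ -\tfrac{1}{w^2}I_2\preceq\nabla^2\log p(x)\preceq\tfrac{1}{2w^2}I_2\Bigr]\ \ge\ 1-e^{-\Omega(1/w^2)},
\]
so the hypothesis of \cref{thm:gaussian_main} holds with $R=1/2$, $L=1/w^2$, and $\tau=\tfrac{1}{2\sqrt2\,w}$ (so that $\tau^2/R^2=1/(2w^2)$), with failure probability $e^{-\Omega(1/w^2)}$. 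I would fix the accuracy parameter $\eps$ of \cref{thm:gaussian_main} to be a small absolute constant: small enough that the implicit constant in its $\lesssim\eps$ conclusion keeps the final TV error below $0.01$, and (since $w<1/100$ by the definition of the ring distribution) so that $e^{-\Omega(1/w^2)}\le\eps$ automatically. With $\eps$ fixed, $1/\eps=O(1)$; note also that the hypothesis ``$\eps_{\text{score}}\le cw^{-1}$'' is precisely the $L^4$ score-accuracy bound of \cref{assumption:m4} for $p$, so it feeds directly into \cref{thm:gaussian_main}.

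The key step is to check that every $\poly(\cdot)$ factor in \cref{thm:gaussian_main} degenerates to an absolute constant. Here $d=2$ and $m=C=O(1)$; the effective SNR parameter satisfies $\rho:=\tfrac{\|A\|\sigma}{\eta}\le c\,\tfrac{\|A\|w}{\eta}=O(1)$ by the hypotheses $\sigma\le cw$ and $\|A\|w/\eta=O(1)$; and $L\sigma^2=\sigma^2/w^2\le c^2=O(1)$; while $1/\eps=O(1)$ by our choice of $\eps$. Consequently, for any $c\le 1$, the quantities $K_1=\poly(d,m,\rho,1/\eps)$, $K_2=\poly(d,m,\rho,1/\eps,L\sigma^2)$, and the iteration count $K_3=\poly(d,m,\rho,1/\eps,L\sigma^2)$ appearing in \cref{thm:gaussian_main} are each bounded by an absolute constant, independent of $c$ and $w$. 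In particular $K_3=O(1)$, which yields the claimed constant number of iterations.

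It remains to verify the two quantitative hypotheses of \cref{thm:gaussian_main}, and this is where the choice of $c$ finally enters --- non-circularly, since $K_1,K_2$ have already been bounded independently of $c$. For the measurement-noise condition $\sigma\le R/(K_1+2\tau)$, write $R/(K_1+2\tau)=\tfrac{1/2}{K_1+(\sqrt2\,w)^{-1}}$; since $w<1/100$ and $K_1$ is an absolute constant, the inequality $cw\le\tfrac{1/2}{K_1+(\sqrt2\,w)^{-1}}$ is equivalent to $c\bigl(wK_1+\tfrac1{\sqrt2}\bigr)\le\tfrac12$, which holds for every such $w$ once $c$ is a small enough absolute constant. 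For the score condition $\eps_{\text{score}}\le\tfrac{1}{K_2\sigma}$, since $\sigma\le cw$ we have $\tfrac1{K_2\sigma}\ge\tfrac1{K_2cw}\ge\tfrac{c}{w}$ as soon as $c\le K_2^{-1/2}$, so $\eps_{\text{score}}\le cw^{-1}$ suffices. Fixing $c$ to be a small enough absolute constant meeting both requirements, every hypothesis of \cref{thm:gaussian_main} holds, and \cref{alg:gaussian_main} outputs a sampler for $\wh p(x\mid x_0,y)$ with $\E_{x_0,y}[\TV(\wh p(x\mid x_0,y),p(x\mid x_0,y))]\lesssim\eps<0.01$ using $K_3=O(1)$ iterations. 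I do not expect a substantive mathematical obstacle here; the only real work is the bookkeeping above --- confirming $L\sigma^2$ and $\|A\|\sigma/\eta$ are $O(1)$ so that all the polynomial factors collapse to constants, and ordering the constants so that $c$ depends on $K_1,K_2$ rather than the reverse. The corollary is essentially a worked example showing that the local-log-concavity framework of \cref{thm:gaussian_main} handles a distribution --- the smoothed ring --- that is wildly non-log-concave on a global scale.
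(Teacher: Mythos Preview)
Your proposal is correct and follows essentially the same approach as the paper: both simply instantiate \cref{thm:gaussian_main} on the ring distribution, using the Hessian bounds from the preceding lemma. Your write-up is in fact considerably more detailed than the paper's own one-line justification, carefully verifying that all the polynomial factors collapse to absolute constants and that the ordering of constants (choosing $c$ after $K_1,K_2$ are fixed) is non-circular.
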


\subsection{Deferred Proof}
\label{sec:deferred_proof}

\twoGaussian*
\begin{proof}      
Since $Z \mid Y \sim \mathcal{N}(\mu_{Z \mid Y}, \Sigma_{Z \mid Y})$, the log-likelihood function is
\[
\log p(Z \mid Y) = -\frac{1}{2} \left( (Z - \mu_{Z \mid Y})^T \Sigma_{Z \mid Y}^{-1} (Z - \mu_{Z \mid Y}) + \log \det(\Sigma_{Z \mid Y}) + d \log(2\pi) \right).
\]

To compute the gradient with respect to $Y$, we focus on the term involving $\mu_{Z \mid Y}$:
\[
-\frac{1}{2} \left( (Z - \mu_{Z \mid Y})^T \Sigma_{Z \mid Y}^{-1} (Z - \mu_{Z \mid Y}) \right).
\]
Differentiating with respect to $Y$ gives:
\[
\nabla_Y \left[ (Z - \mu_{Z \mid Y})^T \Sigma_{Z \mid Y}^{-1} (Z - \mu_{Z \mid Y}) \right] = -2 \Sigma_{Z \mid Y}^{-1} (Z - \mu_{Z \mid Y}) \cdot \nabla_Y \mu_{Z \mid Y}.
\]
Since $\mu_{Z \mid Y} = \sigma_2^2 (\sigma_1^2 + \sigma_2^2)^{-1} Y$, we have
\[
\nabla_Y \mu_{Z \mid Y} = \sigma_2^2 (\sigma_1^2 + \sigma_2^2)^{-1}.
\]
Thus, the gradient becomes
\[
\nabla_Y \log p(Z \mid Y) = - \Sigma_{Z \mid Y}^{-1} (Z - \mu_{Z \mid Y}) \cdot \sigma_2^2 (\sigma_1^2 + \sigma_2^2)^{-1}.
\]

Substituting the inverse of the covariance matrix $\Sigma_{Z \mid Y}$, we get
\[
\Sigma_{Z \mid Y}^{-1} = \frac{1}{\sigma_1^2} \left( \sigma_1^2 + \sigma_2^2 \right),
\]
and the final expression for the gradient is
\[
\nabla_Y \log p(Z \mid Y) = - \frac{1}{\sigma_1^2} \left( Z - \sigma_2^2 (\sigma_1^2 + \sigma_2^2)^{-1} Y \right).
\]
\end{proof}

\stationaryClose*
\begin{proof}
By~\cref{lem:global_extension}, there is a normalised density \(q\) satisfying  
\(\nabla\log q=\nabla\log p\) on \(B(\theta,R)\) and such that \(\log q\) is \(\alpha\)-strongly concave on \(\mathbb R^{d}\).
The difference \(\log p-\log q\) is therefore constant on \(B(\theta,R)\); hence
\[
p(x)=C\,q(x)\qquad(x\in B(\theta,R))
\]
for some \(C>0\).

Let \(\mu=\arg\max q\); strong concavity gives \(\nabla\log q(\mu)=0\) and uniqueness of \(\mu\).
Assume for contradiction that \(\|\mu-\theta\|\ge4dr\).
Set \(\lambda=2r/\|\mu-\theta\|\le1/(2d)\) and define
\[
\tau(x)=(1-\lambda)x+\lambda\mu.
\]
Then \(\det D\tau=(1-\lambda)^{d}\) and \(\tau\bigl(B(\theta,r)\bigr)=B(\theta',(1-\lambda)r)\) with \(\theta'=\tau(\theta)\subset B(\theta,R)\).
Along any ray starting at \(\mu\) the function \(t\mapsto\log q(\mu+t(x-\mu))\) is strictly decreasing for \(t\ge0\); hence \(q(\tau(x))\ge q(x)\) for every \(x\).

A change of variables yields
\[
\Pr_{q}[B(\theta',(1-\lambda)r)]
= \int_{B(\theta,r)}q(\tau(x))(1-\lambda)^{d}\,dx
\ge (1-\lambda)^{d}\Pr_{q}[B(\theta,r)].
\]
Because \(\lambda\le1/(2d)\), \((1-\lambda)^{d}\ge e^{-1/2}>0.6\).
Multiplying by \(C\) and using \(p=Cq\) on \(B(\theta,R)\) gives
\[
\Pr_{p}[B(\theta',(1-\lambda)r)]
\ge0.6\,\Pr_{p}[B(\theta,r)]
\ge0.54.
\]
The two balls \(B(\theta,r)\) and \(B(\theta',(1-\lambda)r)\) are disjoint, so
\(1\ge0.9+0.54\), a contradiction.
Thus \(\|\mu-\theta\|<4dr\).

Because \(4dr<R\) we have \(\mu\in B(\theta,R)\) and here \(\nabla\log p=\nabla\log q\); consequently \(\nabla\log p(\mu)=0\).
Putting \(\theta'=\mu\) completes the proof.
\end{proof}

\concentrationAroundGaussian*
\begin{proof}
    Let $Q(\wt x) = \Pr_{x \sim p_{\wt x}}[\|x - \wt x\| > r]$. We want to show that with probability at least $1-\delta'$ over $\wt x$, $Q(\wt x) \le \delta$.
    This is equivalent to showing that $\Pr_{\wt x}[Q(\wt x) > \delta] \le \delta'$.

    We use Markov's inequality. For any $\delta > 0$:
    \[
            \Pr_{\wt x}[Q(\wt x) > \delta] \le \frac{\E_{\wt x}[Q(\wt x)]}{\delta}.
    \]
    Thus, it suffices to show that $\E_{\wt x}[Q(\wt x)] \le \delta\delta'$.

    Let's compute $\E_{\wt x}[Q(\wt x)]$:
    \begin{align*}
            \E_{\wt x}[Q(\wt x)] &= \E_{\wt x}\left[ \int_{\|x_1 - \wt x\| > r} p(x_1 \mid \wt x) dx_1 \right] \\
            &= \int p(\wt x) \left( \int_{\|x_1 - \wt x\| > r} p(x_1 \mid \wt x) dx_1 \right) d\wt x \\
            &= \int \left( \int_{\|x_1 - \wt x\| > r} p(x_1, \wt x) dx_1 \right) d\wt x.
    \end{align*}
    Using $p(x_1, \wt x) = p(\wt x \mid x_1) p(x_1)$, we can change the order of integration:
    \begin{align*}
            \E_{\wt x}[Q(\wt x)] &= \int p(x_1) \left( \int_{\|\wt x - x_1\| > r} p(\wt x \mid x_1) d\wt x \right) dx_1.
    \end{align*}
    Given $x_1$, the distribution of $\wt x$ is $N(x_1, \sigma^2 I_d)$. Let $Z = \wt x - x_1$. Then $Z \sim N(0, \sigma^2 I_d)$.
    The inner integral is $\Pr_{Z \sim N(0, \sigma^2 I_d)}[\|Z\| > r]$.
    Let $W = Z/\sigma$. Then $W \sim N(0, I_d)$. The inner integral becomes $P_G(r/\sigma) = \Pr_{W \sim N(0, I_d)}[\|W\| > r/\sigma]$.
    So, $\E_{\wt x}[Q(\wt x)] = \int p(x_1) P_G(r/\sigma) dx_1 = P_G(r/\sigma)$.

    We need to show $P_G(r/\sigma) \le \delta\delta'$.
    We use the standard Gaussian concentration inequality: for $W \sim N(0, I_d)$ and $t \ge 0$,
    \[
            \Pr[\|W\| \ge \sqrt{d} + t] \le e^{-t^2/2}.
    \]
    We want $P_G(r/\sigma) \le \delta\delta'$. So we set $e^{-t^2/2} = \delta\delta'$.
    This implies $t^2/2 = \log(1/(\delta\delta'))$, so $t = \sqrt{2\log(1/(\delta\delta'))}$.
    This choice of $t$ is real and non-negative since $\delta, \delta' \in (0,1)$ implies $\delta\delta' \in (0,1)$, so $\log(1/(\delta\delta')) \ge 0$.
    We set $r/\sigma = \sqrt{d} + t = \sqrt{d} + \sqrt{2\log(1/(\delta\delta'))}$.
    Thus, for $r = \sigma\left(\sqrt{d} + \sqrt{2\log\frac{1}{\delta\delta'}}\right)$, we have $P_G(r/\sigma) \le \delta\delta'$.

    With this choice of $r$, we have $\E_{\wt x}[Q(\wt x)] \le \delta\delta'$.
    By Markov's inequality,
    \[
            \Pr_{\wt x}[Q(\wt x) > \delta] \le \frac{\E_{\wt x}[Q(\wt x)]}{\delta} \le \frac{\delta\delta'}{\delta} = \delta'.
    \]
    This means that $\Pr_{\wt x}[Q(\wt x) \le \delta] \ge 1-\delta'$, which is the desired statement:
    \[
            \Pr_{\wt x}\left[\Pr_{x \sim p_{\wt x}}[\|x - \wt x\| \le r] \ge 1 - \delta\right] \ge 1 - \delta'.
    \]
\end{proof}

\section{Why Standard Langevin Dynamics Fails}
\label{sec:plain_langevin}
As discussed in~\cref{sec:techniques}, after we get an initial sample $X_0 \sim p$ on the manifold, a natural attempt to get a sample from \(p_{y}\) is to simply run vanilla Langevin SDE starting from $X_0$:
\begin{equation}
\label{eq:plain_SDE}
    \d X_t
    \;=\;
        \Bigl(
            \wh{s}(X_t)
            +\eta^{-2}A^{\mathsf T}(y-AX_t)
        \Bigr)\d t
        +\sqrt{2}\,\d B_t,
    \qquad
    X_0\sim p
\end{equation}
where $\wh{s}(x)$ is an approximation to the true score $\grad \log p(x)$.  We now show that under any $L^p$ score accuracy assumption, the score error could get exponentially large as the dynamics evolves.

\paragraph{Averaging over $y$ does not preserve the prior law.}
We first consider the simplest one–dimensional Gaussian case of \eqref{eq:plain_SDE}.
Suppose \(p=\mathcal N(0,1)\), \(A=1\), and noise \(\xi=\mathcal N(0,\eta^{2})\); so \(y\sim\mathcal N(0,1+\eta^{2})\).
Then with the perfect score estimator $\wh{s}(X_t) =  \grad \log p(X_t) = -X_t$, \eqref{eq:plain_SDE} reduces to
\begin{equation}
\label{eq:plain_example}
    \d X_t
    \;=\;
        \Bigl(
            -X_t+\eta^{-2}(y-X_t)
        \Bigr)\d t
        +\sqrt{2}\,\d B_t,
    \qquad
    X_0\sim\mathcal N(0,1).
\end{equation}

Recall that the hope of guaranteeing the robustness using only an $L^p$ guarantee is that at any time $t$, averaging $X_t$ over \(y\) 
will preserve the original law \(p\).  We now show that this hope is unfounded even in this simplest case.

\begin{restatable}{lemma}{varOneD}\label{lem:var-one-d}
Let \(X_t\) follow \eqref{eq:plain_example}.  Averaging over \(y\sim\mathcal N(0,1+\eta^{2})\),
\(X_t\) is Gaussian with mean \(0\) and variance
\[
    \Var(X_t)
    =
        e^{-2\alpha t}
        +\frac{1-e^{-2\alpha t}}{\alpha}
        +\frac{(1-e^{-\alpha t})^{2}}{1+\eta^{2}} \le 1,
\]
where $\alpha:=\frac{1+\eta^{2}}{\eta^{2}}>1$.
In particular,
\(
    \Var(X_t)
        =1-\frac1{2(1+\eta^{2})}
\)
at  time
\(t^{\star}:=\tfrac{\eta^{2}\ln 2}{1+\eta^{2}}\).
    
\end{restatable}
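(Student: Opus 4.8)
The plan is to solve the linear SDE~\eqref{eq:plain_example} in closed form and then take expectations over its three independent sources of randomness: the initial condition $X_0\sim\mathcal N(0,1)$, the measurement $y\sim\mathcal N(0,1+\eta^2)$ (drawn independently of $X_0$ and fixed before the SDE is run), and the driving Brownian motion $B$. First I would rewrite the drift as $-\alpha X_t+\eta^{-2}y$ with $\alpha=\frac{1+\eta^2}{\eta^2}>1$, so that~\eqref{eq:plain_example} is an Ornstein--Uhlenbeck process with constant forcing $\eta^{-2}y$. Variation of constants then gives
\[
X_t \;=\; e^{-\alpha t}X_0 \;+\; \frac{1-e^{-\alpha t}}{\alpha\eta^2}\,y \;+\; \sqrt{2}\int_0^t e^{-\alpha(t-s)}\,dB_s,
\]
and since $\alpha\eta^2=1+\eta^2$ the forcing coefficient simplifies to $\frac{1-e^{-\alpha t}}{1+\eta^2}$. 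As an affine function of jointly Gaussian mean-zero quantities, $X_t$ is Gaussian with mean $0$, which settles that part of the claim.

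Next I would compute $\Var(X_t)$ by summing the variances of the three independent terms, the only subtlety being to keep the independence of $y$ from $(X_0,B)$ explicit so that there are no cross terms. The contributions are $e^{-2\alpha t}$ from $X_0$; $\frac{(1-e^{-\alpha t})^2}{(1+\eta^2)^2}\cdot(1+\eta^2)=\frac{(1-e^{-\alpha t})^2}{1+\eta^2}$ from $y$; and, by the It\^o isometry, $2\int_0^t e^{-2\alpha(t-s)}\,ds=\frac{1-e^{-2\alpha t}}{\alpha}$ from the stochastic integral. Adding these gives exactly the stated formula for $\Var(X_t)$.

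It remains to verify $\Var(X_t)\le 1$ and to identify the minimizing time. Setting $u:=e^{-\alpha t}\in(0,1]$ and $c:=\frac{1}{1+\eta^2}$, so that $\frac1\alpha=1-c$, the formula becomes $\Var(X_t)=u^2+(1-c)(1-u^2)+c(1-u)^2$, which simplifies via $(1-u)^2-(1-u^2)=-2u(1-u)$ to
\[
\Var(X_t) \;=\; 1-2c\,u(1-u).
\]
Since $u(1-u)\ge 0$ on $[0,1]$, this is at most $1$. The quadratic $u\mapsto u(1-u)$ attains its maximum $\tfrac14$ at $u=\tfrac12$, i.e.\ at $e^{-\alpha t}=\tfrac12$, i.e.\ at $t=\frac{\ln 2}{\alpha}=\frac{\eta^2\ln 2}{1+\eta^2}=t^\star$, where $\Var(X_{t^\star})=1-\frac{c}{2}=1-\frac{1}{2(1+\eta^2)}$, as claimed. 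The whole argument is routine OU algebra, so I do not anticipate a genuine obstacle; the only places to be careful are the independence bookkeeping when adding variances and the sign in the final algebraic simplification.
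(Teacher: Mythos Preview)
Your proposal is correct and follows essentially the same approach as the paper: solve the linear OU SDE in closed form via variation of constants, compute the variance from the three independent pieces, and simplify to $1-2cu(1-u)$ with $u=e^{-\alpha t}$. The only cosmetic difference is that the paper organizes the variance computation via the law of total variance (conditioning on $y$) rather than directly summing independent contributions, but the arithmetic and the identification of $t^\star$ are identical.
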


\begin{proof}
Write the mild solution of \eqref{eq:plain_example}:
\[
    X_t
    =X_0e^{-\alpha t}
     +\eta^{-2}y\!\int_{0}^{t}e^{-\alpha(t-s)}\,\d s
     +\sqrt{2}\!\int_{0}^{t}e^{-\alpha(t-s)}\,\d B_s
    =X_0e^{-\alpha t}
     +\frac{y}{\eta^{2}}\frac{1-e^{-\alpha t}}{\alpha}
     +\sqrt{2}\!\int_{0}^{t}e^{-\alpha(t-s)}\,\d B_s.
\]

Because \(X_0,B\) are independent of \(y\), conditional moments are
\[
    \E[X_t\mid y]=\frac{y}{\eta^{2}}\frac{1-e^{-\alpha t}}{\alpha},
\quad
    \Var(X_t\mid y)=e^{-2\alpha t}+\frac{1-e^{-2\alpha t}}{\alpha}.
\]
Applying the law of total variance with \(\Var(y)=1+\eta^{2}\) gives the stated formula.

Since \(X_0\) and \(B\) are independent of \(y\), conditioning on \(y\) gives
\[
    \mathbb E[X_t\mid y] \;=\; \frac{y}{\eta^{2}}\,\frac{1-e^{-\alpha t}}{\alpha},
    \qquad
    \mathrm{Var}(X_t\mid y) \;=\; e^{-2\alpha t} + \frac{1-e^{-2\alpha t}}{\alpha}.
\]
By the law of total variance and \(\mathrm{Var}(y)=1+\eta^{2}\),
\[
    \mathrm{Var}(X_t)
    \;=\; e^{-2\alpha t} + \frac{1-e^{-2\alpha t}}{\alpha}
        + \frac{\bigl(1-e^{-\alpha t}\bigr)^{2}}{1+\eta^{2}} .
\]
Using \(\alpha=(1+\eta^{2})/\eta^{2}\) and simple algebra, this simplifies to
\[
    \mathrm{Var}(X_t)
    \;=\; 1 - \frac{2\,e^{-\alpha t}\bigl(1-e^{-\alpha t}\bigr)}{1+\eta^{2}},
\]
which is at most \(1\) and attains \(1 - 1/[2(1+\eta^{2})]\) when \(e^{-\alpha t}=1/2\), that is at \(t^{\star}\).
\end{proof}

Thus \(\Var(X_t)\) first \emph{shrinks} below \(1\) (by a constant factor bounded away from \(1\) when \(\eta\) is small) before relaxing back to equilibrium.  The phenomenon is harmless in one dimension but is catastrophic in high dimension.

\paragraph{High-dimensional amplification.}
Let $p=\mathcal N(0,I_d)$, take $A=I_d$, and set $\eta^2=0.1$.
Then with the perfect score estimator, \eqref{eq:plain_SDE} reduces to
\begin{equation}
\label{eq:hd_example}
    \d X_t
    \;=\;
        \Bigl(
            -X_t+\eta^{-2}(y-X_t)
        \Bigr)\d t
        +\sqrt{2}\,\d B_t,
    \qquad
    X_0\sim\mathcal N(0,I_d).
\end{equation}

By Lemma~\ref{lem:var-one-d} applied coordinatewise, at time
$t^\star:=\tfrac{\eta^{2}\ln 2}{1+\eta^{2}}$, averaging over $y$ yields
\[
X_{t^\star} \sim p_{t^\star}:=\mathcal N(0,\sigma^2 I_d)
\quad\text{with}\quad
\sigma^2 = 1-\frac{1}{2(1+\eta^2)} = \frac{6}{11}.
\]
Hence $X_{t^\star}$ is exponentially more concentrated in high dimension.
We next show that this concentration amplifies score-estimation errors exponentially with the dimension.
\begin{restatable}{lemma}{hdPlainLangevin}
\label{lem:plain_langevin}
Let $p=\mathcal N(0,I_d)$ and let $p_{t^\star}=\mathcal N(0,\sigma^2 I_d)$ with $\sigma^2=\tfrac{6}{11}$.
For any finite $k>1$ and $0<\varepsilon<1$, there exists a score estimate $\widehat s:\mathbb R^d\to\mathbb R^d$ such that
\[
\mathbb E_{x\sim p}\!\left[\|\widehat s(x)-\nabla\log p(x)\|^k\right]\le \varepsilon^k,
\]
yet
\[
\Pr_{x\sim p_{t^\star}}\!\left(\|\widehat s(x)-\nabla\log p(x)\|\ge e^{c d}\,\varepsilon\right)\ \ge\ 1 - 2e^{-cd}
\]
for some constant $c>0$ depending only on $k$.
\end{restatable}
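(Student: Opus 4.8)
The plan is to exhibit a single adversarial error function $e(x) := \widehat s(x) - \nabla\log p(x) = \widehat s(x) + x$ (recall $\nabla\log p(x) = -x$) supported on a Euclidean ball whose radius separates the typical scale of $p = \mathcal N(0,I_d)$ from that of $p_{t^\star} = \mathcal N(0,\sigma^2 I_d)$; this is possible precisely because $\sigma^2 = 6/11 < 1$. Under $p$ the squared norm $\norm{x}^2$ is a $\chi^2_d$ variable concentrating at $d$, while under $p_{t^\star}$ it is $\sigma^2\chi^2_d$ concentrating at $\tfrac{6}{11}d$. Choosing any threshold strictly in between — say $r^2 = \tfrac45 d$ — the ball $B(0,r)$ carries almost all of the $p_{t^\star}$ mass but only exponentially little $p$ mass. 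Concretely, I would invoke the Laurent--Massart bounds (\cref{lem:chi_squared_concentration}) to get $\Prb[x\sim p]{\norm{x}\le r}\le e^{-d/100}$ and $\Prb[x\sim p_{t^\star}]{\norm{x}> r}\le e^{-d/100}$; the rate $\tfrac1{100}$ is crude, but any fixed positive rate suffices since $\tfrac45$ lies strictly between $\sigma^2$ and $1$.

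Given this, set $q := \Prb[x\sim p]{\norm{x}\le r}>0$ and define $e(x) := M\,\1\!\left[\norm{x}\le r\right] u$ for a fixed unit vector $u$ and magnitude $M := \varepsilon\, q^{-1/k}$, i.e. $\widehat s(x) := -x + e(x)$. The $L^k$ error under $p$ is then exactly $\Ex[x\sim p]{\norm{e(x)}^k} = M^k q = \varepsilon^k$, so the $L^k$ accuracy bound holds. On the other hand $q\le e^{-d/100}$ forces $M\ge\varepsilon\, e^{d/(100k)}$, and on the event $\{\norm{x}\le r\}$ — of $p_{t^\star}$-probability at least $1-e^{-d/100}$ — the error equals $M$. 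Taking $c := \tfrac1{100k}$, which is at most $\tfrac1{100}$ because $k>1$, gives $\norm{\widehat s(x) - \nabla\log p(x)}\ge\varepsilon\, e^{cd}$ on that event and $1-e^{-d/100}\ge 1-e^{-cd}\ge 1-2e^{-cd}$, which is exactly the claimed conclusion; note $c$ depends only on $k$.

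There is no serious obstacle here: the entire content is the separation of scales, so the only thing to get right is choosing the threshold strictly inside the interval $(\sigma^2,1)$ and then matching a single constant $c$ to both the blow-up factor $e^{cd}$ and the failure probability $2e^{-cd}$. If one wanted $\widehat s$ to be continuous (or Lipschitz) rather than merely measurable, I would replace the hard indicator by a mollified one supported on a slightly larger ball, changing only constants; but the statement asks only for a score function $\widehat s:\mathbb R^d\to\mathbb R^d$, so this refinement is unnecessary.
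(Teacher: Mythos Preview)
Your proposal is correct and takes essentially the same approach as the paper: place an indicator-type error on a region that is typical for $p_{t^\star}$ but exponentially rare for $p$, calibrate its magnitude to exactly meet the $L^k$ budget, and read off the blow-up from the probability ratio. The only cosmetic difference is that the paper uses an annular shell $S_\rho=\{|\|x\|^2-\sigma^2 d|\le\rho\sigma^2 d\}$ and a Chernoff left-tail bound, whereas you use the ball $\{\|x\|\le r\}$ with $r^2=\tfrac45 d$ and the Laurent--Massart inequalities; the resulting constants differ but the argument is identical.
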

\begin{proof}
Fix $k>1$ and $0<\varepsilon<1$. Let $\sigma^2=\tfrac{6}{11}\in(0,1)$ and choose $\rho\in\bigl(0,\min\{1/2,\,1/\sigma^2-1\}\bigr)$. Define the shell
\[
S_\rho:=\Bigl\{x\in\mathbb R^d:\ \bigl|\,\|x\|^2-\sigma^2 d\,\bigr|\le \rho\,\sigma^2 d\Bigr\}.
\]
Write $m:=\Pr_{x\sim p}\!\left[x\in S_\rho\right]$ and $q:=\Pr_{x\sim p_{t^\star}}\!\left[x\in S_\rho\right]$. Since $\|X\|^2/\sigma^2\sim\chi^2_d$ under $p_{t^\star}$, the chi-square concentration inequality~\cref{lem:chi_squared_concentration} gives
\[
q\ \ge\ 1-2\exp\!\left(-\tfrac{\rho^2 d}{8}\right).
\]
Since $(1+\rho)\sigma^2<1$, the Chernoff left-tail bound for $\chi^2_d$ yields
\[
m\ \le\ \Pr_{x\sim p}\!\left[\|x\|^2\le (1+\rho)\sigma^2 d\right]
\ \le\ \exp\!\left(-I d\right),
\qquad
I:=\tfrac12\Bigl((1+\rho)\sigma^2-1-\ln\bigl((1+\rho)\sigma^2\bigr)\Bigr)>0.
\]
Choose any unit vector $u$ and set
\[
e(x):=M\,\mathbf 1_{S_\rho}(x)\,u,\qquad \widehat s(x):=\nabla\log p(x)+e(x),\qquad M:=\varepsilon\,m^{-1/k}.
\]
Then
\[
\E_{x\sim p}\!\left[\|\widehat s(x)-\nabla\log p(x)\|^k\right]
=\E_{x\sim p}\!\left[\|e(x)\|^k\right]
=M^k m
=\varepsilon^k.
\]
Moreover $\|e(x)\|\equiv M$ on $S_\rho$, hence
\[
\Pr_{x\sim p_{t^\star}}\!\left[\|\widehat s(x)-\nabla\log p(x)\|\ge M\right]
=\Pr_{x\sim p_{t^\star}}\!\left[x\in S_\rho\right]
\ge 1-2e^{-\rho^2 d/8}.
\]
Using $m\le e^{-I d}$ we have $M=\varepsilon\,m^{-1/k}\ge \varepsilon\,e^{(I/k)d}$. Setting
\[
c:=\min\Bigl\{\tfrac{I}{k},\ \tfrac{\rho^2}{8}\Bigr\}>0,
\]
which depends only on $\sigma$ and $k$, gives
\[
\Pr_{x\sim p_{t^\star}}\!\left[\|\widehat s(x)-\nabla\log p(x)\|\ge e^{c d}\,\varepsilon\right]\ \ge\ 1-2e^{-c d}.
\]
This completes the proof.
\end{proof}




\label{sec:plain_langevin_app}




\end{document}